\documentclass[twoside]{article}
% if you need to pass options to natbib, use, e.g.:
%     \PassOptionsToPackage{numbers, compress}{natbib}
% \PassOptionsToPackage{square}{natbib}
% before loading neurips_2020

% ready for submission
%\usepackage{neurips_2020}

% to compile a preprint version, e.g., for submission to arXiv, add add the
% [preprint] option:
%\usepackage[preprint]{neurips_2020}

% to compile a camera-ready version, add the [final] option, e.g.:

\usepackage[round]{natbib}

%
% %% For NeurIPS
% \usepackage[utf8]{inputenc} % allow utf-8 input
% \usepackage[T1]{fontenc}    % use 8-bit T1 fonts
% %\usepackage{hyperref}       % hyperlinks
% \usepackage{url}            % simple URL typesetting
% \usepackage{booktabs}       % professional-quality tables
% %\usepackage{amsfonts}       % blackboard math symbols
% \usepackage{nicefrac}       % compact symbols for 1/2, etc.
% \usepackage{microtype}      % microtypography

%XXX
%\documentclass[]{article}

\newif\ifICML
% \ICMLfalse
\ICMLtrue

%\ifICML
%% \usepackage{icml2020}
%\usepackage{neurips_2020}
%
%% If your paper is accepted, change the options for the package
%% aistats2020 as follows:
%%
%%\usepackage[accepted]{aistats2020}
%%
%% This option will print headings for the title of your paper and
%% headings for the authors names, plus a copyright note at the end of
%% the first column of the first page.
%
%% If you set papersize explicitly, activate the following three lines:
%%\special{papersize = 8.5in, 11in}
%%\setlength{\pdfpageheight}{11in}
%%\setlength{\pdfpagewidth}{8.5in}
%
%% If you use natbib package, activate the following three lines:
%% \usepackage[round]{natbib}
%% \renewcommand{\bibname}{References}
%% \renewcommand{\bibsection}{\subsubsection*{\bibname}}
%
%
%% If you use BibTeX in apalike style, activate the following line:
%%\bibliographystyle{apalike}
%
%\else
%\usepackage[round]{natbib}
%\usepackage{geometry}
%\fi
%XXX

%\usepackage[utf8]{inputenc} % allow utf-8 input
%\usepackage[T1]{fontenc}    % use 8-bit T1 fonts
%%\usepackage{hyperref}       % hyperlinks
%\usepackage{url}            % simple URL typesetting
%\usepackage{booktabs}       % professional-quality tables
%%\usepackage{amsfonts}       % blackboard math symbols
%\usepackage{microtype}      % microtypography

% Commonly used packages
\usepackage{graphicx}
\usepackage{amsmath}
\usepackage{amssymb}
\usepackage{amsfonts}
\usepackage{amsthm}
\usepackage{algorithmic}
\usepackage{algorithm}
\usepackage{cases}
\usepackage{array}
\usepackage{tabularx}
\usepackage{enumitem}

\usepackage{caption}
\usepackage{subcaption}

\usepackage[bookmarks=false,colorlinks=true,linkcolor=blue,urlcolor=magenta,citecolor=red,pdfstartview=FitH,pagebackref]{hyperref}

\newtheorem{theorem}{Theorem}% \ifDissertation[chapter]\fi
\newtheorem{lemma}[theorem]{Lemma}
\newtheorem{definition}{Definition}%\ifDissertation[chapter]\fi

\newtheorem{corollary}[theorem]{Corollary}
\newtheorem{proposition}[theorem]{Proposition}
\theoremstyle{remark}
% \ifDissertation[chapter]\fi

%\newtheorem{thm}{Theorem}
%\newtheorem{cor}[thm]{Corollary}
%\newtheorem{lem}[thm]{Lemma}
%\newtheorem{prop}[thm]{Proposition}
%%\newtheorem{definition}[thm]{Definition}
%%\theoremstyle{definition}
%%\newtheorem{defn}{Definition}
%%\theoremstyle{remark}
%%\newtheorem{rem}[thm]{Remark}
\newcounter{assumption}%[section]
\renewcommand{\theassumption}{A\arabic{assumption}}

\newenvironment{assumption}[1][]{\begin{trivlist}\item[] \refstepcounter{assumption}%
 \textbf{Assumption\ \theassumption\ #1} }{%\par\nobreak\noindent\sl\ignorespaces}{%
 \ifvmode\smallskip\fi\end{trivlist}}

\renewcommand{\AA}{{\mathcal{A}}}

\newcommand{\XX}{{\mathcal{X}}}

\newcommand{\XA}{\XX\times\AA}

% Used for RFQI part
% \newcommand{\kfun}{\mathrm{k}}

\newcommand{\beq}{\begin{equation}}
\newcommand{\eeq}{\end{equation}}
\newcommand{\beqa}{\begin{eqnarray}}
\newcommand{\eeqa}{\end{eqnarray}}
\newcommand{\beqan}{\begin{eqnarray*}}
\newcommand{\eeqan}{\end{eqnarray*}}
\newcommand{\ben}{\begin{eqnarray*}}
\newcommand{\een}{\end{eqnarray*}}

\newcommand{\norm}[1]{\left\Vert#1\right\Vert}
\newcommand{\smallnorm}[1]{\Vert#1\Vert}

\newcommand{\Real}{\mathbb R}

\newcommand{\EE}[1]{{\mathbb E}\left[#1\right]}

\newcommand{\eps}{\varepsilon}

\newcommand{\ra}{\rightarrow}

\newcommand{\FF}{{\mathcal{F}}}
\newcommand{\GG}{{\mathcal{G}}}

\newcommand{\argmin}{\mathop{\textrm{argmin}}}
\newcommand{\argmax}{\mathop{\textrm{argmax}}}

\newcommand{\Qmax}{Q_{\textrm{max}}} 
\newcommand{\Rmax}{R_{\textrm{max}}}

% defined by remi

% \newcommand{\R}{{\mathbb{R}}} % I commented it (AMF! I want to use \mathcal{R})
%\newcommand{\eqdef}{\stackrel{\mbox{\text{\tiny def}}}{=}}
\newcommand{\eqdef}{\triangleq}

 %LEGYEN INDEXE ??

%\newcommand{\dimX}{{d_{\XX}}} ?

\newcommand{\ip}[2]{\left \langle\,#1\,,\,#2\, \right \rangle}

\newcommand{\actionnum}{{|\AA|}}

\newcommand{\MM}{\mathcal{M}}
 %{\mathrm{Pen}}}

%romina
\newcommand{\gradtheta}{\nabla_\theta}
\newcommand{\KL}{\textsf{KL} }
\newcommand{\PKernelhat}{\hat{\PKernel} }
\newcommand{\PKernelTrue}{\PKernel^*}

\newcommand{\AlgPlan}{{{\textsf{Planner}}}}

% Defined by Amir massoud
%\newcommand{\hstar}{h^*} % Previously used!
 % New

\newcommand{\Qpi}{Q^\pi}

\newcommand{\Qhat}{\hat{Q}}

% \newcommand{\todo}[1]{{\textbf{[#1]}}}
%\usepackage{todonotes}

% New for My Thesis Proposal

\newcommand{\Dn}{\mathcal{D}_n}

\newcommand{\Id}{\mathbf{I}}

\newcommand{\EEX}[2]{{\mathbb E}_{#1}\left[#2\right]}

\newcommand{\Qopt}{Q^*}
\newcommand{\Vpi}{V^\pi}

% These have been defined somewhere else by someone else!

%%%%%%%%%%%%%%%%%%%%%%%%%%%%%%%%%%%%%%%%
%%%%%%%%%%%%%%%%%%%%%%%%%%%%%%%%%%%%%%%%
%%%%%%%%%%%%%%%%%%%%%%%%%%%%%%%%%%%%%%%%
% Defined for the Model Selection paper

%

%\newcommand{\Ckdef}{\frac{20K\tau\,\ln k}{3\,a (1-a)\,n}}

%
%\newif \ifold
%\oldfalse
%%\oldtrue

% Some names
%\newcommand{\RLMSProc}{\ensuremath{\mbox{\sc BEMoS}}}

%\newcommand{\cA}{{\mathcal{A}}}
%\newcommand{\cA}{{\textsf{A}}}
\newcommand{\cset}[2]{\left\{\,#1\,:\,#2\,\right\}}

% For Excess Error Estimation section

 % was 3

%%%%%%%%%%%%%%%%%%%%%%%%%%%%%%%%%%%%%%%%
%%%%%%%%%%%%%%%%%%%%%%%%%%%%%%%%%%%%%%%%
%%%%%%%%%%%%%%%%%%%%%%%%%%%%%%%%%%%%%%%%

% MDP-related definitions

\newcommand{\Actions}{\mathcal{A}}

\newcommand{\PKernel}{\mathcal{P}}
\newcommand{\RKernel}{\mathcal{R}}

%\newcommand{\Return}{\mathcal{R}}

%\newcommand{\nextstate}{y}
%\newcommand{\Nextstate}{Y}

%%%%%%%%%%%%%%%%%%%%%%%%%%%%%%%%%%%%%%%%
%%%%%%%%%%%%%%%%%%%%%%%%%%%%%%%%%%%%%%%%
%%%%%%%%%%%%%%%%%%%%%%%%%%%%%%%%%%%%%%%%

% For Error Propagation Paper

% For API
%\newcommand{\CrnuPI}{c_{1,\rho,\nu}}
%\newcommand{\CrnuPITwo}{c_{2,\rho,\nu}}
%\newcommand{\CrstarnuPI}{c'_{1,\rho^*,\nu}}

% For AVI
%\newcommand{\CrnuVI}{c_{4,\rho,\nu}} % Used to be this.

%\newcommand{\Popt}{P^*}

%%%%%%%%%%%%%%%%%%%%%%%%%%%%%%%%%%%%%%%%
%%%%%%%%%%%%%%%%%%%%%%%%%%%%%%%%%%%%%%%%
%%%%%%%%%%%%%%%%%%%%%%%%%%%%%%%%%%%%%%%%

% For Regularized regression with mixing data

%\newcommand{\Residual}{{R_{\text{es}}}}

%\newcommand{\BlockResidual}{{R_{\text{es},l}}}

% These are only used in the proof

%%%%%%%%%%%%%%%%%%%%%%%%%%%%%%%%%%%%%%%%
%%%%%%%%%%%%%%%%%%%%%%%%%%%%%%%%%%%%%%%%
%%%%%%%%%%%%%%%%%%%%%%%%%%%%%%%%%%%%%%%%

% For RFQI journal paper

%\newcommand{\Dm}[1]{\mathcal{D}_{m_#1}}
%\newcommand{\Dm}[1]{\mathcal{D}^{(#1)}}

% Used in RFQI-ApproxErrorPropagation

 % Remove this! 

% Used in RFQI-SmoothnessPropagation

% Some other notations

%%%%%%%%%%%%%%%%%%%%%%%%%%%%%%%%%%%%%%%%
%%%%%%%%%%%%%%%%%%%%%%%%%%%%%%%%%%%%%%%%
%%%%%%%%%%%%%%%%%%%%%%%%%%%%%%%%%%%%%%%%

% RPI Journal paper

%%% Used in the proof

%%%%%%%%%%%%%%%%%%%%%%%%%%%%%%%%%%%%%%%%
%%%%%%%%%%%%%%%%%%%%%%%%%%%%%%%%%%%%%%%%
%%%%%%%%%%%%%%%%%%%%%%%%%%%%%%%%%%%%%%%%

% For Action-Gap Phenomenon in RL paper

%\newcommand{\gapQ}{{\mathbf{g}_Q}}

\newcommand{\One}[1]{{\mathbb I}{\{#1\}}}

%%%%%%%%%%%%%%%%%%%%%%%%%%%%%%%%%%%%%%%%
%%%%%%%%%%%%%%%%%%%%%%%%%%%%%%%%%%%%%%%%
%%%%%%%%%%%%%%%%%%%%%%%%%%%%%%%%%%%%%%%%

% For CAPI paper

%%%%%%%%%%%%%%%%%%%%%%%%%%%%%%%%%%%%%%%%
%%%%%%%%%%%%%%%%%%%%%%%%%%%%%%%%%%%%%%%%
%%%%%%%%%%%%%%%%%%%%%%%%%%%%%%%%%%%%%%%%

%% For CAPI paper
%\newcommand{\QpiPrime}{{Q^{\pi'}}}
%\newcommand{\QpiPrimehat}{{\hat{Q}^{\pi'}}}
%\newcommand{\gapQPrime}{{\mathbf{g}_\QpiPrime}}
%\newcommand{\gapQPrimehat}{{\mathbf{g}_\QpiPrimehat}}
%
%\newcommand{\pioptInPi}{ {\pi^*_\Pi} }

%%%%%%%%%%%%%%%%%%%%%%%%%%%%%%%%%%%%%%%%
%%%%%%%%%%%%%%%%%%%%%%%%%%%%%%%%%%%%%%%%
%%%%%%%%%%%%%%%%%%%%%%%%%%%%%%%%%%%%%%%%

% For Value Pursuit Iteration (VPI) paper

%\newcommand{\COneStepAveraged}{C_\nu}

% New commands. Eventually will be moved to Commands.tex
\newcommand{\ToptP}[1]{{T^*_{#1}}}
\newcommand{\PKernelpiTrue}{{\PKernel^*}^{\pi}}

\newcommand{\Qopthat}{\hat{Q}^*}
\newcommand{\Qhatpi}{\hat{Q}^\pi}
\newcommand{\dx}{\mathrm{d}x}
\newcommand{\dy}{\mathrm{d}y}
\newcommand{\dmu}{\mathrm{d}\mu}
\newcommand{\dnu}{\mathrm{d}\nu}
\newcommand{\drho}{\mathrm{d}\rho}
\newcommand{\da}{\mathrm{d}a}
\newcommand{\Cov}[1]{{\bf Cov}\left(#1\right)}
\newcommand{\CovX}[2]{{\bf Cov}_{#1}\left(#2\right)}
\newcommand{\VarX}[2]{{\mathrm{Var}_{#1}}\left[#2\right]}
\DeclareMathOperator{\Tr}{Tr}
%
%\newcommand{\lhat}{\hat{l}}
%\newcommand{\Lhat}{\hat{L}}
%
%\newcommand{\Ln}{L_n}
%\newcommand{\Lhatn}{\hat{L}_n}
%
%\newcommand{\PVAML}{P_\textsc{VAML} }
%\newcommand{\PMLE}{P_\text{MLE} }
%\newcommand{\cVAML}{c_{\text{VAML}}}
%
%\newcommand{\PKernelVAML}{\PKernel_\textsc{VAML} }
%\newcommand{\PKernelMLE}{\PKernel_\text{MLE} }
%
%\newcommand{\ProbOpt}[1]{{\mathbb P^*}\left\{#1\right\}}
%
%% New concentrability coefficients
%\newcommand{\cPolEval}{c^\pi_{\text{VI},\rho_\XX,\nu_\XX}}
%\newcommand{\cVI}{\bar{c}_{\text{VI},\rho,\nu}}
%\newcommand{\cVIAvg}{\bar{C}(\rho,\nu) }

%%%%%%%%%%%%%%%%%%%%%%%
%%%% New definitions in this paper %%%%
%%%%%%%%%%%%%%%%%%%%%%%
\newcommand{\PKernelpihat}{\hat{\PKernel}^\pi }
\newcommand{\PKernelpi}{\PKernel^\pi}

\newcommand{\DeltaPKernelpi}{\Delta \PKernel^\pi}

\newcommand{\cPG}{c_{\text{PG}}}

\newcommand{\cPGrhopinu}{c_{\text{PG}}(\rho,\nu;\pi)}

\newcommand{\pDiscounted}[1]{#1_\gamma}

\newcommand{\rhoDiscounted}{\pDiscounted{\rho}}
\newcommand{\rhohatDiscounted}{\pDiscounted{\hat{\rho}}}

\newcommand{\muDiscounted}{\pDiscounted{\mu}}
\newcommand{\muhatDiscounted}{\pDiscounted{\hat{\mu}}}

\newcommand{\ProjTheta}{\textsf{Proj}_\Theta}

\newcommand{\piBestrho}{\bar{\pi}_\rho}
\newcommand{\piBest}{ {\bar{\pi}} }

\newcommand{\LossPolicyError}{ L_\text{PAE} }

\newif\ifconsiderlater
\considerlaterfalse
% \considerlatertrue

% Todos 
\ifconsiderlater
	\newcommand{\todo}[1]{{\color{cyan} \textbf{XXX [#1] XXX}}}
\else
\newcommand{\todo}[1]{}
\fi

\newcommand{\todoNew}[1]{}

\newif\ifSupp
% \Suppfalse
\Supptrue

%\newcommand{\revised}[1]{#1}

%\ifSupp \usepackage[final]{neurips_2020} \else \usepackage{aistats2021} \fi
\ifSupp \usepackage{geometry} \else \usepackage{aistats2021} \fi

\begin{document}
\ifSupp
\title{Policy-Aware Model Learning for Policy Gradient Methods}
\author{Romina Abachi\footnote{Department of Computer Science, University of Toronto \& the Vector Institute, Toronto, Canada}
\and
Mohammad Ghavamzadeh\footnote{Google Research, CA, USA}
\and
Amir-massoud Farahmand\footnote{The Vector Institute \& the Department of Computer Science and the Department of Mechanical and Industrial Engineering, University of Toronto, Toronto, Canada} }
\maketitle
\else
\twocolumn[
\aistatstitle{Policy-Aware Model Learning for Policy Gradient Methods}

\aistatsauthor{ Romina Abachi \And Mohammad Ghavamzadeh \And Amir-massoud Farahmand}

\aistatsaddress{ University of Toronto \& the Vector Institute, Toronto, Canada \And  Institution 2 \And Institution 3 } ]
\fi
\begin{abstract}
This paper considers the problem of learning a model in model-based reinforcement learning (MBRL).
We examine how the planning module of an MBRL algorithm uses the model, and propose that model learning should incorporate the way the planner is going to use the model.
This is in contrast to conventional model learning approaches, such as those based on maximum likelihood estimation, that learn a predictive model of the environment without explicitly considering the interaction of the model and the planner.
We focus on policy gradient planning algorithms and derive new loss functions for model learning that incorporate how the planner uses the model.
We call this approach Policy-Aware Model Learning (PAML).
We theoretically analyze a model-based policy gradient algorithm and provide a convergence guarantee for the optimized policy.
%We prove some theoretical properties of PAML.
We also empirically evaluate PAML on some benchmark problems, showing promising results.
\todo{Revise!}
\end{abstract}
%\ifSupp \else \fi
% \vspace{-0.25in} \fi
%%%%%%%%%%%%%%%%%%%%%%%%%%%%%%%%%%%%%%%%%%%%%%%
\section{Introduction}
\label{sec:PAML-Introduction}

\ifconsiderlater
Possible references:

\cite{RacaniereWeberReichertelal2017}
\cite{ParrLiTaylorPainterLittman08,OrmoneitSen2002,BarretoPrecupPineau2011}

% \cite{DeisenrothFoxRasmussan2015}

\cite{LevineAbbeel2014}

\cite{OhGuoLeeLewisSingh2015}

\cite{WatterSpringenbergBoedeckerRiedmiller2015}

% \cite{HaSchmidhuber2018NIPS}

%DAML: \citet{JosephGeramifardRobertsHowRoy2013}
%Predictron by~\citep{SilvervanHasseltetal2017} and Value Prediction Network (VPN)~\citep{OhSinghLee2017}
%\todo{Already cited. Shall we explain more?}
% A closely related method to VPN is TreeQN of \citet{FarquharRocktaeschelIglWhiteson2018}.

\todo{Might be relevant? Combined Reinforcement Learning via Abstract Representations (Doina and others);
Goal-Driven Dynamics Learning via Bayesian Optimization}

Work with Yangchen

XXX

XXX

List from IterVAML paper that I partially went through:

- Value Iteration Network (?)

- Universal Planning Networks (?)

- Yangchen and Martha's recent work?

- Ben Recht's work on LQR. Yasin and Csaba's work on LQR.

- Geffner, Model-free, Model-based, and General Intelligence (?!)

- Heess et al., Learning Continuous Control Policies by Stochastic Value Gradients [Not super relevant!]

- Hester and Stone, Intrinsically motivated model learning for developing curious robots

- Romeres, Jha, et al., Semiparametrical Gaussian Processes Learning of Forward Dynamical Models for Navigating in a Circular Maze (is it suitable?)

\noindent \textbf{Learning the model that incorporates some aspects of the decision problem:}
%RL with Misspecified Model Class~\citep{JosephGeramifardRobertsHowRoy2013}

%Predictron~\citep{SilvervanHasseltetal2017}

%Value Prediction Network~\citep{OhSinghLee2017}

% Task-based End-to-end Model Learning in Stochastic Optimization [Maybe?]

% TreeQN and ATreeC: Differentiable Tree-Structured Models for Deep Reinforcement Learning [Maybe!]

\noindent \textbf{Hallucinated Replay and Learning multi-step models:}
%Model Regularization for Stable Sample Rollout~\citep{Talvite2014}

%Self-Correcting Models \citep{Talvite2017}

%Arun and Drew's work (DaD)~\citep{VenkatramanMartialBagnell2015}

\noindent \textbf{DRL approaches:}
% Action-Conditional Video Prediction~\citep{OhGuoLeeLewisSingh2015}

Continuous DQL with Model-based Acceleration (?) [Haven't read yet.]

There is also "Algorithmic framework for model-based deep reinforcement learning with theoretical guarantees" that derives a very similar loss to VAML 

Imagination-Augmented~\citep{RacaniereWeberReichertelal2017}

RL with Unsupervised Auxiliary Tasks (?) [I don't think it is very relevant.]

Temporal Difference Models: Model-Free Deep RL for Model-Based Control

\todo{Mention the relation to Krylov basis. Maybe cite Ron Parr and Marek Petrik's papers.}

\todo{Anything from Marc Bellemare?}

%\todo{Csaba and Bernardo's COLT paper. I did, and I believe I should expand.} 

%\todo{Henshuai Yao and Linear Action Model.}

% \todo{Kernel embedded MDP.}

% \todo{Parr's recent work?} [Not very relevant.]

% \todo{Cynthia Rudin's work?} [I did!]

\todo{Do I want to cite anything from psychology and/or neuroscience perspective? Daw, Niv, etc.}

\noindent \textbf{Learning to Plan}
Learning model-based planning from scratch [Maybe]

Look at this: Learning and Querying Fast Generative Models for Reinforcement Learning

XXX

XXX

XXX

\fi

A model-based reinforcement learning (MBRL) agent gradually learns a model of the environment as it interacts with it, and uses the learned model to plan and find a good policy.
This can be done by planning with samples coming from the model, instead of or in addition to the samples from the environment, e.g.,~\citet{Sutton1990,PengWilliams1993,SuttonSzepesvariGeramifardBowling08,DeisenrothFoxRasmussan2015,Talvite2017,HaSchmidhuber2018NIPS}.
%
%
%In MBRL, an agent gradually learns the model of the environment as interacts with it. The learned model can then be used  in an internal simulator to plan a good policy. This is done by planning with samples coming from the model, instead of the true environment. 
If learning a model is easier than learning the policy or value function in a model-free manner, MBRL will lead to a reduction in the number of required interactions with the real-world and will improve the sample complexity of the agent. However, this is contingent on the ability of the agent to learn an accurate model of the real environment.
 \todo{Mention the work of Tu and Recht et al. on comparison of MBRL and model-free one, at least in LQR setting.}
Thus, the problem of learning a good model of the environment is of paramount importance in the success of MBRL. This paper addresses the question of how to approach the problem of learning a model of the environment, and proposes a method called \emph{policy-aware model learning} (PAML).

The conventional approach to model learning in MBRL is to learn a model that is a good predictor of the environment. If the learned model is accurate enough, this leads to a value function or a policy that is close to the optimal one.
Learning a good predictive model can be achieved by minimizing some form of a probabilistic loss. A common choice is to minimize the $\KL$-divergence between the empirical data and the model, which leads to the Maximum Likelihood Estimator (MLE).

The often-unnoticed fact, however, is that no model can be completely accurate, and there are always differences between the model and the real-world. An important source of inaccuracy/error is the choice of model space, i.e.,~the space of predictors, such as a particular class of deep neural networks. We suffer an error if the model space does not contain the true model of the physical system.% The real-world is sometimes too complex to be accurately modelled.

The \emph{decision-aware model learning} (DAML) viewpoint suggests that instead of trying to learn a model that is a good predictor of the environment, which may not be possible as just argued, one should learn only those aspects of the environment that are relevant to the decision problem. Trying to learn the complex dynamics that are irrelevant to the underlying decision problem is pointless, e.g.,~in a self-driving car, the agent does not need to model the movement of the leaves on trees when the decision problem is simply to decide whether or not to stop at a red light. The conventional model learning approach cannot distinguish between decision-relevant and irrelevant aspects of the environment, and may waste the ``capacity'' of the model on unnecessary details. In order to focus the model on the decision-relevant aspects, we shall incorporate certain aspects of the decision problem into the model learning process. 

There are several relatively recent works that can be interpreted as doing DAML, even though they do not always explicitly express their goal as such.
Some methods such as \citet{JosephGeramifardRobertsHowRoy2013, SilvervanHasseltetal2017, OhSinghLee2017,FarquharRocktaeschelIglWhiteson2018} learn a model implicitly, in an end-to-end fashion. This can be interpreted as DAML because the model is learned in service of improving policy performance. 
Other methods incorporate the value function in model-learning. 
For example, Value-Aware Model Learning (VAML) is an instantiation of DAML that incorporates the information about the value function in learning the model of the environment~\citep{FarahmandVAML2017,Farahmand2018}. Recent similar works include \citet{ayoub2020model,luo2018algorithmic}. In the latter, the loss is defined to only include the value function learned on the model, whereas ~\citet{FarahmandVAML2017, Farahmand2018, ayoub2020model} require the inclusion of the true value function in their loss as well. \todoNew{Any comment how they are relevant or different? -AMF}
% Value-Aware Model Learning (VAML) is an instantiation of DAML that incorporates the information about the value function in learning the model of the environment~\citep{FarahmandVAML2017,Farahmand2018}.
%
\ifSupp
The formulation by~\citet{FarahmandVAML2017} incorporates the knowledge about the \emph{value function space} in learning the model, and the formulation by~\citet{Farahmand2018} benefits from how the value functions are generated within an approximate value iteration (AVI)-based MBRL agent. We explain the VAML framework in more details in Section~\ref{sec:PAML-Background-VAML}.
\fi
%
% There are a few other relatively recent works that can be interpreted as doing DAML, even though they do not always explicitly express their goal as such. Some examples are~\citet{JosephGeramifardRobertsHowRoy2013,SilvervanHasseltetal2017,OhSinghLee2017,FarquharRocktaeschelIglWhiteson2018,luo2018algorithmic,DOroMetelliTirinzoniPapiniRestelli2020,schrittwieser2019mastering}.

\todo{RA: expand on some of the related works above? some of them DO use the policy in model-learning} Designing a decision-aware model learning approach, however, is not limited to methods that benefit from the structure of the value function. Policy is another main component of RL that can be exploited for learning a model. 
Using the policy in model learning is done concurrently by \citet{DOroMetelliTirinzoniPapiniRestelli2020} and \citet{schrittwieser2019mastering}. 
We briefly compare to \citet{DOroMetelliTirinzoniPapiniRestelli2020} in Section \ref{sec:PAML-Algorithm}. 
MuZero \citep{schrittwieser2019mastering} takes into account both the value function and policy. 
The model in MuZero includes separate functions for predicting next ``internal'' states, policies and values. 
The policy prediction function learns to predict policies that would be obtained by the MCTS planner that is used to train it.  
On the other hand, in this work we consider policy gradient planners and instead of predicting policies directly, consider the interaction of the policy and value function in obtaining policy gradients.
\todoNew{Do we compare in any more detail with muZero one?} 
The high-level idea is simple: If we are using a policy gradient (PG) method to search for a good policy, we only need to learn a model that provides accurate estimates of the PG. All other details of the environment that do not contribute to estimating PG are irrelevant. Formalizing this intuition is the main algorithmic contribution of this paper (Section~\ref{sec:PAML-Algorithm}).
\todo{Maybe a bit more discussion? Especially if we talk about the compatible features and the Fisher Information Matrix.}

\ifSupp
\todo{Maybe add some vignettes of the theoretical result and the algorithm. Something like the loss is $\norm{\nabla_\theta J(\pi_\theta) - \hat{J}(\pi_\theta)}$ and alike.}
The first theoretical contribution of this work is a result that shows how the error in the model, in terms of total variation, affects the quality of the PG estimate\ifSupp~(Theorem~\ref{thm:PAML-PolicyGradientError} in Section~\ref{sec:PAML-Theory-PG-Error})\fi. 
This is reassuring as it shows that a good model leads to an accurate estimate of the PG. 
\ifSupp \else We report this result in the supplementary material. \fi It might seem natural to assume that an accurate gradient estimate leads to convergence to a good policy. 
However, we could not quantify the quality of the converged policy in a PG procedure beyond stating that it is a local optimum until recently, when~\citet{AgarwalKakadeLeeMahajar2019} provided quantitative guarantees on the convergence of PG methods beyond convergence to a local optimum. Our second theoretical contribution is Theorem~\ifSupp\ref{thm:PAML-MBPG-Convergence} \else\ref{thm:PAML-MBPG-Convergence-Compact} \fi (Section~\ref{sec:PAML-Theory}) that extends one of the results in~\citet{AgarwalKakadeLeeMahajar2019} to model-based PG and shows the effect of PG error on the quality of the converged policy, as compared to the best policy in the policy class. We also have a subtle, but perhaps important, technical contribution in the definition of the policy approximation error, which holds even in the original model-free setting.
\else
Our theoretical contribution is a global convergence guarantee for model-based policy gradient (MBPG).
The result shows the effect of policy approximation error, the model error, the number of optimization steps, and a few properties of the MDP and sampling distribution.
This is an extension of the recent work by~\citet{AgarwalKakadeLeeMahajar2019} to model-based PG.
Moreover, our result introduces a new definition of the policy approximation error, which is perhaps a more accurate way to characterize this error.
In the supplementary materials, we extend our results further and consider the error introduced by an imperfect critic. 
\todoNew{Do we mention anywhere that we also have results that allows critic error? -AMF, RA: added}
\fi 
Our empirical contributions are demonstrating that PAML can easily be formulated for \ifSupp two commonly-used PG algorithms \else a commonly-used PG algorithm \fi and showing its performance in benchmark environments (Section~\ref{sec:PAML-Empirical}), for which the code is made available at \url{https://github.com/rabachi/paml}. 
In addition, our results in a finite-state environment show that PAML outperforms conventional methods when the model capacity is limited. 

% and in certain cases performs better than MLE 

%%%%%%%%%%%%%%%%%%%%%%%%%%%%%%%%%%%%%%%%%%%%%%%
%\section{Background on Value-Aware Model Learning (VAML)}
\section{Background on Decision-Aware Model Learning (DAML)}
\label{sec:PAML-Background-VAML}

A MBRL agent interacts with an environment, collects data, improves its internal model, and uses the internal model, perhaps alongside the real-data, to improve its policy.
To formalize, we consider a (discounted) Markov Decision Process (MDP) $(\XX, \AA, \RKernel^*, \PKernelTrue, \gamma)$~\citep{SzepesvariBook10}. We denote the state space by $\XX$, the action space by $\AA$,  the reward distribution by $\RKernel^*$, the transition probability kernel by $\PKernelTrue$, and the discount factor by $ 0 \leq \gamma \leq 1$.
In general, the true transition model $\PKernelTrue$ and the reward distribution $\RKernel^*$ are not known to an RL agent.
The agent instead can interact with the environment to collect samples from these distributions. The collected data is in the form of 
%
%{\small
\begin{align}
\label{eq:PAML-Dataset-Generic}
	\Dn  =  \{(X_i, A_i, R_i, X'_i)\}_{i=1}^n,
\end{align}
%}
with the current state-action being distributed according to $Z_i = (X_i, A_i) \sim \nu(\XA) \in \bar{\MM}(\XA)$, the reward $R_i \sim \RKernel^*(\cdot|X_i, A_i)$, and the next-state $X'_i \sim \PKernelTrue(\cdot|X_i, A_i)$. Note that $\bar{\MM}$ refers to the set of all probability distributions defined over $\XX$ and $\AA$.
In many cases, an RL agent might follow a trajectory $X_1, X_2, \dotsc$ in the state space (and similar for actions and rewards), that is, $X_{i+1} = X'_i$.
We denote the expected reward by $r^*(x,a) = \EE{\RKernel^*(\cdot|x, a)}$.%
\ifSupp \footnote{Given a set $\Omega$ and its $\sigma$-algebra $\sigma_\Omega$, $\bar{\MM}(\Omega)$ refers to the set of all probability distributions defined over $\sigma_\Omega$.
As we do not get involved in the measure theoretic issues in this paper, we do not explicitly define the $\sigma$-algebra, and simply use a well-defined and ``standard'' one, e.g., Borel sets defined for metric spaces.}\fi

A MBRL agent uses the interaction data to learn an estimate $\PKernelhat$ of the true model $\PKernelTrue$ and $\hat{\RKernel}$ (or simply $\hat{r}$) of the true reward distribution $\RKernel^*$ (or $r^*$). This is called \emph{model learning}. These models are then used by a planning algorithm $\AlgPlan$ to find a close-to-optimal policy. The policy may be used by the agent to collect more data and improve the estimates $\PKernelhat$ and $\hat{\RKernel}$. This generic Dyna-style~\citep{Sutton1990} MBRL algorithm is shown in Algorithm~\ref{alg:PAML}.
\newlength{\textfloatsepsave}
\setlength{\textfloatsepsave}{\textfloatsep}

\setlength{\textfloatsep}{0pt}
\begin{algorithm}[tb]
%\footnotesize
\small
\caption[Generic MBRL Algorithm]{Generic MBRL Algorithm}
\begin{algorithmic}[0]
%\STATE // MDP $(\XX, \AA, \RKernel^*, \PKernelTrue, \gamma)$
%\STATE // $K$: Number of interaction episodes
%\STATE // $\MM$: Space of transition probability kernels
%\STATE // $\GG$: Space of reward functions
\STATE Initialize a policy $\pi_0$
\FOR{$k=0, 1, \dotsc,K$}
\STATE Generate training set $\Dn^{(k)} = \{(X_i, A_i, R_i, X'_i)\}_{i=1}^n$ by interacting with the true environment (potentially using $\pi_k$), i.e., $(X_i, A_i) \sim \nu_k$ with $X'_i \sim \PKernelTrue(\cdot|X_i, A_i)$ and $R_i \sim \RKernel^*(\cdot|X_i, A_i)$. 
\STATE $\PKernelhat^{(k+1)} \leftarrow \argmin_{\PKernel \in \MM} \text{Loss}_\PKernel(\PKernel;\cup_{i=0}^k \Dn^{(i)})$ 
\COMMENT{{\bf PAML:} $\text{Loss}_\PKernel = \smallnorm{                \gradtheta J(\mu_\theta^k) 
        - \gradtheta \hat{J}(\mu_\theta^k)
		}_{\cup_{i=0}^k \Dn^{(i)}}^2$
}
\STATE $\hat{r} \leftarrow \argmin_{r \in \GG} \text{Loss}_\RKernel(r;\cup_{i=0}^k \Dn^{(i)})$
\STATE $\pi_\theta^{k+1} \leftarrow \AlgPlan(\PKernelhat,\hat{\RKernel})$ \COMMENT{{\bf PAML:} PG-based (e.g.,~REINFORCE or DDPG); $\theta_{k+1} \leftarrow \theta_k + \eta \gradtheta \hat{J}(\pi^k_\theta)$.}
\ENDFOR, \COMMENT {Return $\pi_K$}
\label{alg:PAML}
\end{algorithmic}
\end{algorithm}
\setlength{\textfloatsep}{\textfloatsepsave}

How should we learn a model $\PKernelhat$ that is most suitable for a particular $\AlgPlan$?
This is the fundamental question in model learning. The conventional approach in model learning ignores how $\AlgPlan$ is going to use the model and instead focuses on learning a good predictor of the environment. This can be realized by using a probabilistic loss, such as $\KL$-divergence, which leads to the maximum likelihood estimate (MLE), or similar approaches. Ignoring how the planner uses the model, however, might not be a good idea, especially if the model class $\MM$, from which we select our estimate $\PKernelhat$, does not contain the true model $\PKernelTrue$, i.e., $\PKernelTrue \notin \MM$. This is the model approximation error and its consequence is that we cannot capture all aspects of the dynamics.
%
% When we have a model approximation error, any choice of $\PKernelhat \in \MM$ entails that some aspects of $\PKernelTrue$ is captured better than others.
The thesis behind DAML is that instead of being oblivious to how $\AlgPlan$ uses the model, the model learner should pay more attention to those aspects of the model that affect the decision problem the most. 
%we should define a model learning procedure can try to learn a model that captures those aspects of the model that affect the decision problem the most, and ignore or pay less attention to those that do not have much effect.
%We would like to capture those aspects that affect the decision problem, and ignore those that do not have any effect on the decision problem.
A purely probabilistic loss ignores the underlying decision problem and how $\AlgPlan$ uses the learned model, whereas a DAML method incorporates the decision problem and $\AlgPlan$.

\ifSupp

Value-Aware Model Learning (VAML) is a class of DAML methods~\citep{FarahmandVAMLEWRL2016,FarahmandVAML2017,Farahmand2018}. It is a model learning approach that is designed for a value-based type of $\AlgPlan$, i.e.,~a planner that finds a good policy by approximating the optimal value function $\Qopt$ by $\Qopthat$, and then computes the greedy policy w.r.t.~$\Qopthat$. %which results in a good policy if $\Qopthat$ is close to $\Qopt$.
In particular, the suggested formulations of VAML so far focus on value-based methods that use the Bellman optimality operator to find the optimal value function (as opposed to a Monte Carlo-based solution).
The use of the Bellman [optimality] operator, or a sample-based approximation thereof, is a key component of many value-based approaches, such as the family of (Approximate) Value Iteration~\citep{Gordon1995,SzepesvariSmart2004,Ernst05,Munos08JMLR,FarahmandACC09,FarahmandVPI2012,MnihKavukcuogluSilveretal2015,TosatooPirottaDEramoRestelli2017,ChenJiang2019} or (Approximate) Policy Iteration (API) algorithms~\citep{LagoudakisParr03,AntosSzepesvariML08,BertsekasAPI2011,LazaricGhavamzadehMunosLSPI2012,ScherrerGhavamzadehGabillonGeist2012,FarahmandGhavamzadehSzepesvariMannor2016}.

To be more concrete in the description of VAML, let us first recall that the Bellman optimality operator w.r.t.~the transition kernel $\PKernel$ is defined as
\begin{align}
\label{eq:PAML-BellmanOperator}
	\ToptP{\PKernel}: Q \mapsto r + \gamma \PKernel \max_a Q.
\end{align}
%\begin{align}
%\label{eq:PAML-BellmanOperator-Phat}
%	\ToptP{\PKernelhat}: Q \mapsto r + \gamma \PKernelhat \max_a Q.
%\end{align}
%
%For brevity, we sometimes use $\Topthat$ instead of $\ToptP{\PKernelhat}$, and $\Topt$ instead of $\ToptP{\PKernelTrue}$.\todo{Do we actually use these notations?}
%
VAML attempts to find $\PKernelhat$ such that applying the Bellman operator $\ToptP{\PKernelhat}$ according to the model $\PKernelhat$ on a value function $Q$ has a similar effect as applying the true Bellman operator $\ToptP{\PKernelTrue}$ on the same function, i.e., $\ToptP{\PKernelhat} Q \approx \ToptP{\PKernelTrue} Q$.
This ensures that one can replace the true dynamics with the model without (much) affecting the internal mechanism of a Bellman operator-based $\AlgPlan$.
How this might be achieved is described in the original papers, which are summarized in \ifSupp Appendix~\ref{sec:PAML-VAML} \else an appendix of the supplementary material\fi.

The VAML framework is an instantiation of DAML when $\AlgPlan$ benefits from extra knowledge available about the value function, either in the form of the value function space (as in the original VAML formulation) or particular value functions generated by AVI (as in the IterVAML formulation), to learn a model $\PKernelhat$. The value function and our knowledge about it, however, are not the only extra information that we might have about the decision problem. Another source of information is the policy. The goal of the next section is to develop a model learning framework that benefits from the properties of the policy.

\else
Value-Aware Model Learning (VAML) is a class of DAML methods~\citep{FarahmandVAMLEWRL2016,FarahmandVAML2017,Farahmand2018}. It is a model learning approach that is designed for a value-based type of $\AlgPlan$, i.e.,~a planner that finds a good policy by approximating the optimal value function $\Qopt$ by $\Qopthat$, and then computes the greedy policy w.r.t.~$\Qopthat$. %which results in a good policy if $\Qopthat$ is close to $\Qopt$.
In particular, the suggested formulations of VAML so far focus on value-based methods that use the Bellman optimality operator to find the optimal value function. More detail about VAML can be found in the supplementary material.
The value function and our knowledge about it, however, are not the only extra information that we might have about the decision problem. Another source of information is the policy. The goal of the next section is to develop a model learning framework that benefits from the properties of the policy.
\fi

%%%%%%%%%%%%%%%%%%%%%%%%%%%%%%%%%%%%%%%%%%%%%%%
\section{Policy-Aware Model Learning}
\label{sec:PAML-Algorithm}

The policy gradient (PG) algorithm and its several variants are important tools to solve RL problems~\citep{Williams1992,SuttonMcAleesterSinghMansour2000,BaxterBartlett01,MarbachTsitsiklis2001,Kakade01,PetersVijayakumarSchaal03,Cao2005,GhavamzadehEngel07PG,PetersSchaalNAC2008,BhatnagarSuttonGhavamzadehLee2009,DeisenrothNeumannPeters2013,SchulmanLevineAbbeelJordanMoritz2015}.
These algorithms parameterize the policy and compute the gradient of the performance (cf.~\eqref{eq:PAML-PerformanceMeasure}) w.r.t.~the parameters. Model-free PG algorithms use the environment to estimate the gradient, but model-based ones use an estimated $\PKernelhat$ to generate ``virtual'' samples to estimate the gradient. 
% Refer to~\citet{DeisenrothNeumannPeters2013} and references therein for more information about PG algorithms. 
In this section, we derive a loss function\todo{XXX Unify objective vs. loss XXX} for model learning that is designed for model-based PG estimation. We specialize the derivation to discounted MDPs, but the changes for the episodic, finite-horizon, or average reward MDPs are straightforward.

%%%%%%%%%%%%%%%%%%%%%%%%%%%%%%%%%%%%%%%%%%%%%%%
%%%%%%%%%%%%%%%%%%%%%%%%%%%%%%%%%%%%%%%%%%%%%%%
%%%%%%%%%%%%%%%%%%%%%%%%%%%%%%%%%%%%%%%%%%%%%%%

A PG method relies on accurate estimation of the gradient. Intuitively, a model-based PG method would perform well if the gradient of the performance evaluated according to the model $\PKernelhat$ is close to the one computed from the true dynamics $\PKernelTrue$. In this case, one may use the learned model instead of the true environment to compute the PG. To formalize this intuition, we first introduce some notations.

%for a PG method is a model in which the gradient of the performance evaluated according to the model 

Given a transition probability kernel $\PKernelpi$, we denote by $\PKernel^\pi (\cdot|x;k)$ the future-state distribution of following policy $\pi$ from state $x$ for $k$ steps, i.e., $\PKernel^\pi (\cdot|x;k) \eqdef (\PKernel^\pi)^k(\cdot|x)$, with the understanding that $(\PKernel^\pi)^0(\cdot|x) = \Id$ is the identity map. 
For an initial probability distribution $\rho \in \bar{\MM}(\XX)$, $\int \rho(\dx) \PKernelpi(\cdot|x;k)$ is the distribution of selecting the initial distribution according to $\rho$ and following $\PKernelpi$ for $k$ steps.
We define a discounted future-state distribution of starting from $\rho$ and following $\PKernelpi$ as
\ifSupp
\begin{align}
\label{eq:PAML-DiscountedFutureStateDistribution}
	\rhoDiscounted^\pi(\cdot) = \rhoDiscounted(\cdot; \PKernelpi) \eqdef 
	(1-\gamma) \sum_{k \geq 0} \gamma^k \int \drho(x) \PKernelpi(\cdot|x;k).
\end{align}
\else
$\rhoDiscounted^\pi(\cdot) = \rhoDiscounted(\cdot; \PKernelpi) \eqdef 
	(1-\gamma) \sum_{k \geq 0} \gamma^k \int \drho(x) \PKernelpi(\cdot|x;k)$.
\fi
%
%\ifICML
%\vspace{-0.2in}
%\begin{small}
%\fi
%\begin{align}
%\label{eq:PAML-DiscountedFutureStateDistribution}
%	\rhoDiscounted^\pi(\cdot) = \rhoDiscounted(\cdot; \PKernelpi) \eqdef 
%	(1-\gamma) \sum_{k \geq 0} \gamma^k \int \drho(x) \PKernelpi(\cdot|x;k).
%\end{align}
%\ifICML
%\end{small}
%\vspace{-0.2in}
%\fi
%
We may drop the dependence on $\pi$, if it is clear from the context. We use a shorthand notation $\rhohatDiscounted^\pi = \rhoDiscounted(\cdot;\PKernelpihat)$, and a similar notation for other distributions, e.g.,~$\muDiscounted^\pi$ and $\muhatDiscounted^\pi$.

%We may then define a discounted future-state distribution of starting from $\rho$ and following $\PKernelpi$ as
%%
%\begin{align}
%\label{eq:PAML-DiscountedFutureStateDistribution}
%	\rhoDiscounted(\cdot) = \rhoDiscounted(\cdot; \PKernelpi) \eqdef 
%	(1-\gamma) \sum_{k \geq 0} \gamma^k \int \drho(x) \PKernelpi(\cdot|x;k).
%\end{align}
%%
%We use a shorthand notation $\rhohatDiscounted = \rhoDiscounted(\cdot;\PKernelpihat)$.
%We use a similar notation for other distributions, e.g., $\muDiscounted$ and $\muhatDiscounted$.
%For the formal definition, refer to~\citep{AntosSzepesvariML08}.

For an MDP $(\XX, \AA, \RKernel^*, \PKernel, \gamma)$, we use the subscript $\PKernel$ in the definition of value function $\Vpi_\PKernel$ and $\Qpi_\PKernel$, if we want to emphasize its dependence on the transition probability kernel. We reserve the use of $\Vpi$ and $\Qpi$ for $\Vpi_{\PKernelTrue}$ and $\Qpi_{\PKernelTrue}$, the value functions of the true dynamics.

The performance of an agent starting from a user-defined initial probability distribution $\rho \in \bar{\MM}(\XX)$, following policy $\pi$ in the true MDP $(\XX, \AA, \RKernel^*, \PKernelTrue, \gamma)$ is
%
%{\small
\begin{align}
%\vspace{-0.2in}
\label{eq:PAML-PerformanceMeasure}
	J(\pi) = J_\rho(\pi) = \int \mathrm{d}\rho(x) \Vpi(x).
\end{align}
%\vspace{-0.2in}} 
%
%\todo{We should be careful that $Q$ depends on $\PKernelTrue$ or $\PKernelhat$. Depending on it, we may get different formulations. The theory is for one, the empirical results cover both cases more or less.}

When the policy $\pi = \pi_\theta$ is parameterized by $\theta \in \Theta$, from the derivation of the PG theorem (cf. proof of Theorem 1 by~\citealt{SuttonMcAleesterSinghMansour2000}), we have that%
\ifconsiderlater\footnote{XXX We need some regularity conditions to prove this. The original proof is for countable spaces, but here we have integrals and we need to exchange the order of differentiation and integration. Also we may need to exchange the order of integrals (Fubini's theorem). 
I think the conditions would be that 1) $\Vpi(x')$ is integrable w.r.t. $\PKernel(\cdot|x,a;k)$ (which is the case for bounded value functions), 2) $\frac{\partial \Vpi}{\partial \theta}(x')$ exists for almost all $x'$, and there exists an integrable (w.r.t. $\PKernel(\cdot|x,a;k)$) such that $\left| \frac{\partial \Vpi}{\partial \theta}(x) \right| \leq g(x')$ for all $\theta$ and almost all $x'$. Moreover, we need that $\frac{\partial \pi_\theta}{\partial \theta}$ exists (for the existence of $\frac{\partial \Vpi(x)}{\partial \theta}$ or it almost always exists w.r.t. the initial distribution $\rho$ for the existence of $\int \drho \frac{\partial \Vpi(x)}{\partial \theta}$. Also we may need Fubini's theorem too. Take a look at Silver's Deterministic PG. Their conditions are stronger (continuity), but may give us a clue.}\fi

\ifSupp
\begin{align*}
	\frac	{\partial V^{\pi_\theta}(x)}
		{\partial \theta}
	& =
	\sum_{k \geq 0} \gamma^k 
		\int		
			{\PKernelTrue}^{\pi_\theta} (\dx'|x;k)
			\sum_{a' \in \AA} \frac{\partial \pi_\theta(a'|x')}{\partial \theta} Q^{\pi_\theta}_{\PKernelTrue} (x',a'),
\end{align*}
\else
%
%\vspace{-0.2in}
\begin{small}
\begin{align*}
% 	&
	\frac{\partial V^{\pi_\theta}(x)}{\partial \theta} = 
% 	\\ &
	\sum_{k \geq 0} \gamma^k \int	{\PKernelTrue}^{\pi_\theta}(\dx'|x;k)\sum_{a' \in \AA} \frac{\partial \pi_\theta(a'|x')}{\partial \theta} 	Q^{\pi_\theta}_{\PKernelTrue} (x',a').
\end{align*}
\end{small}
%\vspace{-0.175in}
%
\fi

%\begin{align*}
%	\frac	{\partial V^{\pi_\theta}(x)}
%		{\partial \theta}
%	& =
%	\sum_{k \geq 0} \gamma^k 
%		\int		
%			\PKernel^{\pi_\theta}(\dx'|x;k)
%			\sum_{a' \in \AA} \frac{\partial \pi_\theta(a'|x')}{\partial \theta} Q^{\pi_\theta}_\PKernel (x',a')
%	\\
%	& = 
%	\sum_{k \geq 0} \gamma^k 
%		\int		
%			\PKernel^{\pi_\theta}(\dx'|x;k)
%			\sum_{a' \in \AA} \pi_\theta(a'|x') \nabla_\theta \log \pi_\theta(a'|x') Q^{\pi_\theta}_\PKernel (x',a')	
%\end{align*}
%
%where $Q^{\pi_\theta}_{\PKernelTrue}$ emphasizes that the action-value function is computed according to the model $\PKernelTrue$ (or $\PKernelpiTrue$, to be more accurate).
If the dependence of $Q$ on the transition kernel is clear, we may omit it and simply use $Q^{\pi_\theta}$. 
We also use $\PKernel^{\pi_\theta}$ instead of ${\PKernelTrue}^{\pi_\theta}$ to simplify the notation.
The gradient of the performance $J(\pi_\theta)$~\eqref{eq:PAML-PerformanceMeasure} w.r.t. $\theta$ is then
\ifSupp
\begin{align}
\label{eq:PAML-GradientOfPerformance}
\nonumber
	\nabla_\theta J(\pi_\theta) = 
	\frac{\partial{J(\pi_\theta)}}{\partial \theta}
	& = 
	\sum_{k \geq 0} \gamma^k 
		\int \drho(x)
		\int		
			\PKernel^{\pi_\theta}(\dx'|x;k)
			\sum_{a' \in \AA} \frac{\partial \pi_\theta(a'|x')}{\partial \theta} Q^{\pi_\theta} (x',a')
	\\
	& =
	\frac{1}{1 - \gamma}	
	\int \rhoDiscounted(\dx;\PKernel^{\pi_\theta})
	\sum_{a \in \AA} \pi_\theta(a|x) \frac{\partial \log \pi_\theta(a|x)}{\partial \theta} Q^{\pi_\theta} (x,a).
\end{align}
\else
%
%\vspace{-0.1in}
\begin{small}
\begin{align}
\label{eq:PAML-GradientOfPerformance}
	& 
	\nabla_\theta J(\pi_\theta) = 
	\frac{\partial{J(\pi_\theta)}}{\partial \theta}
	=
	\\
	&
	\sum_{k \geq 0} \gamma^k 
		\int \drho(x)
		\int		
			\PKernel^{\pi_\theta}(\dx'|x;k)
			\sum_{a' \in \AA} \frac{\partial \pi_\theta(a'|x')}{\partial \theta} Q^{\pi_\theta} (x',a'). \nonumber
\end{align}
\end{small}
%\vspace{-0.175in}
%
\fi

%$\frac{\partial J(\pi_\theta)}{\partial \theta}$~\eqref{eq:PAML-GradientOfPerformance} is
%\begin{align*}
%	\frac{\partial{J(\pi_\theta)}}{\partial \theta} & =
%	\frac{1}{1 - \gamma}
%	\int \rhoDiscounted(\dx;\PKernelpi)
%	\sum_{a \in \AA} \frac{\partial \pi_\theta(a|x)}{\partial \theta} Q^{\pi_\theta} (x,a) = 
%	\\
%	& =
%	\frac{1}{1 - \gamma}	
%	\int \rhoDiscounted(\dx;\PKernelpi)
%	\sum_{a \in \AA} \pi_\theta(a|x) \frac{\partial \log \pi_\theta(a|x)}{\partial \theta} Q^{\pi_\theta} (x,a)
%	\\
%	& =
%	\frac{1}{1 - \gamma}	
%	\EEX{X \sim \rhoDiscounted(\cdot;\PKernelpi)}
%		{ \EEX{A \sim \pi_\theta(\cdot|X)} {\nabla_\theta \log \pi_\theta(A|X) Q^{\pi_\theta}(X,A)}
%	}.
%\end{align*}
%

%XXX
%
%Note that the performance gradient is
%$\frac{\partial J(\pi_\theta)}{\partial \theta}$~\eqref{eq:PAML-GradientOfPerformance} is
%\begin{align*}
%	\frac{\partial{J(\pi_\theta)}}{\partial \theta} & =
%	\frac{1}{1 - \gamma}
%	\int \rhoDiscounted(\dx;\PKernelpi)
%	\sum_{a \in \AA} \frac{\partial \pi_\theta(a|x)}{\partial \theta} Q^{\pi_\theta} (x,a) = 
%	\\
%	& =
%	\frac{1}{1 - \gamma}	
%	\int \rhoDiscounted(\dx;\PKernelpi)
%	\sum_{a \in \AA} \pi_\theta(a|x) \frac{\partial \log \pi_\theta(a|x)}{\partial \theta} Q^{\pi_\theta} (x,a)
%	\\
%	& =
%	\frac{1}{1 - \gamma}	
%	\EEX{\rhoDiscounted(\cdot;\PKernelpi)}
%		{ \EEX{A \sim \pi_\theta(\cdot|X)} {\nabla_\theta \log \pi_\theta(A|X) Q^{\pi_\theta}(X,A)}
%	}.
%\end{align*}
%
%XXX

%In order to define the PG w.r.t. a model $\PKernelhat$,
% Let us define an auxiliary function $g$, which shall help us easily describe several ways a model-based PG method can be devised. 
Let us expand the definition of $\nabla_\theta J$, which shall help us easily describe several ways a model-based PG method can be devised. 
For two transition probability kernels $\PKernel_1$ and $\PKernel_2$, and a policy $\pi_\theta$, we define

\ifSupp
\begin{align}
\label{eq:PAML-GradientOfPerformance-G}
	\nabla_\theta J(\pi_\theta;\PKernel_1, \PKernel_2) = 
	\sum_{k \geq 0} \gamma^k 
		\int \drho(x)
		\int		
			\PKernel_1^{\pi_\theta}(\dx'|x;k)
			\sum_{a' \in \AA} \frac{\partial \pi_\theta(a'|x')}{\partial \theta} Q^{\pi_\theta}_{\PKernel_2} (x',a').
\end{align}
\else
%
%\vspace{-0.2in}
% \begin{small}
% \begin{align}
% \label{eq:PAML-GradientOfPerformance-G}
% \nonumber
% 	g(\pi_\theta;\PKernel_1, \PKernel_2) = 
% 	&
% 	\sum_{k \geq 0} \gamma^k 
% 		\int \drho(x)
% 		\int		
% 			\PKernel_1^{\pi_\theta}(\dx'|x;k)
% 			\\ &
% 			\sum_{a' \in \AA} \frac{\partial \pi_\theta(a'|x')}{\partial \theta} Q^{\pi_\theta}_{\PKernel_2} (x',a'). %\nonumber
% \end{align}
% \end{small}
%\begin{small}
\begin{align}
\label{eq:PAML-GradientOfPerformance-G}
\nonumber
	\nabla_\theta J(\pi_\theta;\PKernel_1, \PKernel_2) = 
	&
	\sum_{k \geq 0} \gamma^k 
		\int \drho(x)
		\int		
			\PKernel_1^{\pi_\theta}(\dx'|x;k)
			\\ &
			\sum_{a' \in \AA} \frac{\partial \pi_\theta(a'|x')}{\partial \theta} Q^{\pi_\theta}_{\PKernel_2} (x',a'). %\nonumber
\end{align}
%\end{small}
%\vspace{-0.1in}

\fi
This vector-valued function can be seen as the PG of following $\pi_\theta$ according to $\PKernel_1$, and using a critic that is the value function in an MDP with $\PKernel_2$ as the transition kernel. 
% It is clear that $\nabla_\theta J(\pi_\theta) = g(\pi_\theta; \PKernelTrue, \PKernelTrue)$.

We have several choices to design a model learning loss function that is suitable for a PG method. 
% The overall goal is to match the true PG,
% i.e.,~$\frac{\partial{J(\pi)}}{\partial \theta} = g(\pi_\theta; \PKernelTrue, \PKernelTrue)$, or an empirical estimate thereof, 
% i.e.,~$\frac{\partial{J(\pi)}}{\partial \theta} = \nabla_\theta J(\pi_\theta; \PKernelTrue, \PKernelTrue)$, or an empirical estimate thereof, with a PG that is somehow computed by the model $\PKernelhat$.
The overall goal is to match the true PG,
i.e.,~$\frac{\partial{J(\pi_\theta)}}{\partial \theta} = \nabla_\theta J(\pi_\theta; \PKernelTrue, \PKernelTrue)$, or an empirical estimate thereof, with a PG that is somehow computed by the model $\PKernelhat$.
% Let us define $\frac{\partial{\hat{J}(\pi)}}{\partial \theta} \eqdef g(\pi_\theta; \PKernelhat, \PKernelTrue)$ and set the goal of model learning to
Let us define $\frac{\partial{\hat{J}(\pi)}}{\partial \theta} \eqdef \nabla_\theta J(\pi_\theta; \PKernelhat, \PKernelTrue)$ and set the goal of model learning to
%
%As the notation $g(\pi_\theta;\PKernel_1, \PKernel_2)$ shows, we 
%
%To design a model learning approach for a PG method, we shall try to match the gradient $\frac{\partial{J(\pi)}}{\partial \theta} = g(\pi_\theta; \PKernelTrue, \PKernelTrue)$ with $\nabla_\theta \hat{J}_\rho(\pi_\theta) \eqdef g(\pi_\theta; \PKernelhat, \PKernelTrue)$, that is,
%
% \vspace{-0.2in}
\begin{align}
\label{eq:PAML-GradientsBeingEqual}
	\frac{\partial{J(\pi_\theta)}}{\partial \theta} \approx \frac{\partial{\hat{J}(\pi_\theta)}}{\partial \theta}.
\end{align}
This ensures that the gradient estimate based on following the learned model $\PKernelhat^{\pi_\theta}$ and computed using the true action-value function $\Qpi_{\PKernelTrue}$ is close to the true gradient. 
There are various ways to quantify the error between the gradient vectors. 
We choose the $\ell_2$-norm of their difference.
When the gradient w.r.t.~the model is close to the true gradient, we may use the model to perform PG.

Subtracting the gradient of the performances under two different transition probability kernels and taking the $\ell_2$-norm, we get a loss function between the true and model PGs, i.e.,

\ifSupp
\begin{align}
\label{eq:PAML-DifferenceInGradient}
\nonumber
c_\rho(\PKernel^{\pi_\theta},\PKernelhat^{\pi_\theta}) & = 
	\norm{
	\frac{\partial{J(\pi_\theta)}}{\partial \theta} - \frac{\partial{\hat{J}(\pi_\theta)}}{\partial \theta}}_2
	\\
&	=
	\norm{\sum_{k \geq 0} \gamma^k 
		\int \drho(x)
		\int		
			\left( \PKernel^{\pi_\theta}(\dx'|x;k) -  \PKernelhat^{\pi_\theta}(\dx'|x;k) \right)
			\sum_{a' \in \AA} \frac{\partial \pi_\theta(a|x')}{\partial \theta} Q^{\pi_\theta} (x',a')}_2.			
\end{align}
\else
%
%\vspace{-0.2in}
%\begin{small}
\begin{align}
\label{eq:PAML-DifferenceInGradient}
\nonumber
& c_\rho(\PKernel^{\pi_\theta},\PKernelhat^{\pi_\theta}) = 
	\Big \Vert
	\frac{\partial{J(\pi_\theta)}}{\partial \theta} - \frac{\partial{\hat{J}(\pi_\theta)}}{\partial \theta} \Big \Vert_2
	=
	\\
\nonumber	
&
	\Big \Vert \sum_{k \geq 0} \gamma^k 
		\int \drho(x)
		\int		
			\left( \PKernel^{\pi_\theta}(\dx'|x;k) -  
			\PKernelhat^{\pi_\theta}(\dx'|x;k) \right) 
			\\
			&
			\qquad \qquad \qquad \qquad
			\sum_{a' \in \AA} \frac{\partial \pi_\theta(a|x')}{\partial \theta} Q^{\pi_\theta} (x',a')
	\Big \Vert_2.
\end{align}
%\end{small}
% \vspace{-0.in}
\fi
Note that the summation (or integral) over actions can be done using any of the commonly-known Monte Carlo gradient estimators such as the score function (REINFORCE) or pathwise gradient estimators \citep{mohamed2019monte}, as we demonstrate in the Empirical Studies.
% Note that the summation (or integral) over actions $\sum_{a' \in \AA} \frac{\partial \pi_\theta(a|x')}{\partial \theta} Q^{\pi_\theta} (x',a')$ is identical to \allowbreak  $\EEX{A' \sim \pi_\theta(\cdot|x')}{\nabla_\theta \log \pi_\theta(A'|x) Q^{\pi_\theta} (x',A') }$.

Several comments are in order.
This is a population loss function, in the sense that $\PKernel$ appears in it. To make this loss practical, we need to use its empirical version. % This can be done by using data $\Dn  =  \{(X_i, A_i, R_i, X'_i)\}_{i=1}^n$~\eqref{eq:PAML-Dataset-Generic}.
Moreover, this formulation requires us to know the action-value function $Q^{\pi_\theta} = Q^{\pi_\theta}_{\PKernelTrue}$, which is w.r.t.~the true dynamics. 
This can be estimated using a model-free critic that only uses the real transition data (and not the data obtained by the model $\PKernelpihat$) \ifSupp and provides $\Qhat^{\pi_\theta} \approx Q^{\pi_\theta}_{\PKernelTrue}$. \else. \fi
\ifSupp

To be concrete, let us assume that we are given $n$ episodes with length $T$ of following policy $\PKernel^{\pi_\theta}$ starting from an initial state distribution $\rho$. That is, $X^{(i)}_1 \sim \rho$, $A^{(i)}_k \sim \pi_\theta(\cdot|X^{(i)}_k)$ and $X^{(i)}_{k+1} \sim \PKernelTrue(\cdot|X^{(i)}_{k}, A^{(i)}_k)$ for $k = 0, \dotsc, T-1$ and $i = 1, \dotsc, n$.
In order to compute the expectation w.r.t. $\rhohatDiscounted$, we generate samples from $\PKernelhat^{\pi_\theta}$ as follows:
For each $i = 1, \dotsc, n$ and $j = 1, \dotsc, m$, we set $\tilde{X}^{(i)}_{1, j} = X^{(i)}_1$ (the same initial states as in the real data).
And for the next steps, we let
$\tilde{A}^{(i)}_{k,j} \sim \pi_\theta(\cdot|X^{(i)}_k)$ and
 $\tilde{X}^{(i)}_{k+1, j} \sim \PKernelhat(\cdot|\tilde{X}^{(i)}_{k, j}, \tilde{A}^{(i)}_{k,j})$ (which should be interpreted as the same model as $\PKernelhat^{\pi_\theta}$)
 for  $i = 1, \dotsc, n$, $j = 1, \dotsc, m$, and $k = 0, \dotsc, T - 1$.
The empirical loss can then be defined as
\begin{align}\label{eq:empiricalloss}
\nonumber
c_n(\PKernel^{\pi_\theta},\PKernelhat^{\pi_\theta}) = 
	\Bigg\Vert
	\frac{1}{n}
	\sum_{i=1}^n
		\sum_{k=1}^T \gamma^k
			\Bigg[
			&
			\nabla_\theta \log \pi_\theta (A_k^{(i)} |X_k^{(i)} ) \Qhat^{\pi_\theta} (X_k^{(i)},A_k^{(i)})
			-
			\\
			&
			\frac{1}{m} \sum_{j=1}^m \nabla_\theta \log \pi_\theta (\tilde{A}_{k,j}^{(i)} |X_k^{(i)} ) \Qhat^{\pi_\theta} (X_{k,j}^{(i)},\tilde{A}_{k,j}^{(i)})			
			\Bigg]
	\Bigg\Vert_2.
\end{align}
\else
We provide the empirical version of this loss function in the supplementary material.
\fi

\todo{Add the compatible feature and FIM formulation.}
\ifconsiderlater 
XXX Another alternative is to use a compatible function approximation for $Q$, such that it would not appear in the loss. This case is expanded upon in the appendix. XXX
\fi

%XXX OLD XXX
%\begin{align}
%& c_n(\PKernel^{\pi_\theta},\PKernelhat^{\pi_\theta}) = 
%\\
%&
%	\norm{
%	\frac{1}{n}
%	\sum_{i=1}^n
%		\sum_{k=1}^T \gamma^k
%			\left(
%			\sum_{a \in \AA} \nabla_\theta \pi_\theta (a|X_k^{(i)} ) \Qhat^{\pi_\theta} (X_k^{(i)},a)
%			-
%			\frac{1}{m} \sum_{j=1}^m \sum_{a \in \AA} \nabla_\theta \pi_\theta (a|\tilde{X}_{k,j}^{(i)} ) \Qhat^{\pi_\theta} (X_{k,j}^{(i)},a)			
%			\right)
%	}_2
%\\
%&
%	\norm{\sum_{k \geq 0} \gamma^k 
%		\int \drho(x)
%		\int		
%			\left( \PKernel^{\pi_\theta}(\dx'|x;k) -  \PKernelhat^{\pi_\theta}(\dx'|x;k) \right)
%			\sum_{a' \in \AA} \frac{\partial \pi_\theta(a|x')}{\partial \theta} Q^{\pi_\theta} (x',a')}_2.
%\end{align}

\todo{To add discussion: Finite $T$ introduces approximation. Finite $m$ instead of computing expectation. Deterministic models requires $m = 1$.}

Also note the loss function $c_\rho(\PKernel^{\pi_\theta},\PKernelhat^{\pi_\theta})$~\eqref{eq:PAML-DifferenceInGradient} is defined for a particular policy $\pi_\theta$. 
However, policy $\pi_\theta$ gradually changes during the run of a PG algorithm.
To deal with this change, we should regularly update $\PKernelhat^{\pi_\theta}$ based on data collected by the most recent $\pi_\theta$.
\ifSupp
In Section~\ref{sec:PAML-Theory-PolicyChange},
\else
In the supplementary material, 
\fi
we show that under certain conditions, the error in the PG estimation of a new policy $\pi_{\theta_\text{new}}$ using a model that was learned for an old policy $\pi_{\theta_\text{old}}$ is $O(\norm{\theta_\text{new} - \theta_\text{old}})$.
Our empirical studies (including in the Supp) show that the model does not expire quickly with policy changes.
%
%
%We empirically study this issue in the supplementary materials. 
% In this approach, the model is valid only for the current $\pi$ and a neighbourhood around it, whose size we do not have explicit control. This model is sufficient for one or maybe even a few steps of PG. This is similar to using the recent past data in the experience replay framework~\citep{SchaulQuanetal2016}.

Setting $\nabla_\theta J(\pi_\theta; \PKernelTrue, \PKernelTrue) \approx \nabla_\theta J(\pi_\theta; \PKernelhat, \PKernelTrue)$ is only one way to define a model learning objective for a PG method. 
We can opt instead to find a $\PKernelhat$ such that
%
% {\small
% \begin{subnumcases}
% {
% \vspace{-0.25in}
% \label{eq:gradients-cases}
% \frac{\partial{J(\pi)}}{\partial \theta} = g(\pi_\theta; \PKernelTrue, \PKernelTrue) \approx
% }
% 		g(\pi_\theta; \PKernelhat, \PKernelTrue), \label{eq:gradients-cases-a} \\
% 		g(\pi_\theta; \PKernelTrue, \PKernelhat),  \label{eq:gradients-cases-b} \\
% 		g(\pi_\theta; \PKernelhat, \PKernelhat).  \label{eq:gradients-cases-c}
% %	\end{cases}
% \end{subnumcases}}
{\small
\begin{subnumcases}
{
%\vspace{-0.25in}
\label{eq:gradients-cases}
\nabla_\theta J(\pi_\theta; \PKernelTrue, \PKernelTrue) \approx
}
		\nabla_\theta J(\pi_\theta; \PKernelhat, \PKernelTrue), \label{eq:gradients-cases-a} \\
		\nabla_\theta J(\pi_\theta; \PKernelTrue, \PKernelhat),  \label{eq:gradients-cases-b} \\
		\nabla_\theta J(\pi_\theta; \PKernelhat, \PKernelhat).  \label{eq:gradients-cases-c}
%	\end{cases}
\end{subnumcases}}
The difference between these cases is in whether the discounted future-state distribution is computed according to the true dynamics $\PKernelTrue$ or the learned dynamics $\PKernelhat$, and whether the critic $Q^{\pi_\theta}_{\PKernel}$ is computed according to the true dynamics or the learned dynamics.
Case~\eqref{eq:gradients-cases-a} is the same as~\eqref{eq:PAML-GradientsBeingEqual}.
%We may refer to it as the \emph{visitation matching} approach to PAML.
%Note that, however, this loss is not trying to match the probability distribution in a conventional approach (e.g., by minimizing the $\KL$-divergence), but considers the policy and the critic as well. \todo{Maybe the name is not really good!}
Case~\eqref{eq:gradients-cases-b} uses the model only to train the critic, but not to compute the future-state distribution $\rhohatDiscounted$. 
%We may refer to it as the \emph{critic matching} approach.
Having $\PKernelhat$, the critic can be estimated using Monte Carlo estimates or any other method for estimating the value function given a model.
This is similar to how~\citet{DOroMetelliTirinzoniPapiniRestelli2020} use their model, though their loss function is a weighted log-likelihood, and the model-learning step does not take the action-value function into account.\todo{Verify! And explain how different.}
Case~\eqref{eq:gradients-cases-c} corresponds to calculating the whole PG according to the model $\PKernelpihat$. This requires us to estimate both the future-state distribution and the critic according to the model.
%We may refer to it as \emph{visitation and critic matching}.
In this paper, we theoretically analyze~\eqref{eq:gradients-cases-a} and provide empirical results for approximations of~\eqref{eq:gradients-cases-a} and~\eqref{eq:gradients-cases-b}.
\todo{Doesn't the new finite MDP case with exact value and policy correspond to~\eqref{eq:gradients-cases-c}? -AMF}

\ifSupp If the policy is deterministic~\citep{silver2014deterministic}, the formulation would be almost the same with the difference that instead of terms in the form of $\sum_{a \in \AA} \nabla_\theta \pi_\theta(a|x) Q^{\pi_\theta}(x,a)$, we would have $\nabla_\theta \pi_\theta(x) \frac{\partial Q^{\pi_\theta}(x,a)}{\partial a}  \vert_{a = \pi_\theta(x)}$. \fi
%
%In our experiments in Section~\ref{sec:PAML-Empirical}, we show results for the planners that use deep deterministic policy gradient (DDPG)~\citep{lillicrap2015continuous} and REINFORCE~\citep{Williams1992} algorithms. Other planners, such as TRPO~\citep{SchulmanLevineAbbeelJordanMoritz2015}, may also be used with a corresponding PAML loss. We leave studying them for future work.

\ifconsiderlater
XXX
In Appendix \ref{sec:PAML-Appendix}, we show that  based on a critic with compatible features to the policy \cite{Sutton1990}. This formulation has the desirable property that the critic does not explicitly appear in the model-learning objective. In fact, as is shown in the appendix, the objective is only dependent on the Fisher Information of the policy, highlighting the fact that the model only learns what is relevant to the policy. \todo{this can be said more nicely ... perhaps using the property of Fisher?}
\fi

%%%%%%%%%%%%%%%%%%%%%%%%%%%%%%%%%%%%%%%%%%%%%%%
%%%%%%%%%%%%%%%%%%%%%%%%%%%%%%%%%%%%%%%%%%%%%%%
%%%%%%%%%%%%%%%%%%%%%%%%%%%%%%%%%%%%%%%%%%%%%%%

%%%%%%%%%%%%%%%%%%%%%%%%%%%%%%%%%%%%%%%%%%%%%%%
\section{Theoretical Analysis of PAML}
\label{sec:PAML-Theory}
\ifSupp

%%%%%%%%%%%%%%%%%%%%%%%%%%%%%%%%%%%%%%%%%%%%%%%
\ifSupp
We theoretically study some aspects of a generic model-based PG (MBPG) method.
Theorem~\ref{thm:PAML-PolicyGradientError} in Section~\ref{sec:PAML-Theory-PG-Error} quantifies the error between the PG $\nabla_\theta J(\pi_\theta)$ obtained by following the true model $\PKernelTrue$ and the PG $\nabla_\theta \hat{J}(\pi_\theta)$ obtained by following $\PKernelhat$, and relates it to the error between the models.
Even though having a small PG error might intuitively suggest that using $\nabla_\theta \hat{J}(\pi_\theta)$ instead of $\nabla_\theta J(\pi_\theta)$ should lead to a good policy, it does not show the quality of the converged policy.
Theorem~\ref{thm:PAML-MBPG-Convergence} in Section~\ref{sec:PAML-Theory-PG-Convergence} (for exact critic)
and Theorem~\ref{thm:PAML-MBPG-Convergence-InexactCritic} in Section~\ref{sec:PAML-Theory-PG-Convergence-Inexact-Critic} (for inexact critic) provide such convergence guarantees for a MBPG and shows that having a small PG error indeed leads to a better solution.
%The results of this section are for a generic MBPG method, but we can draw conclusions about PAML and why it might be a better approach compared to a conventional MLE-based model learning.
\todo{Make sure to add a comparison after Theorem~\ref{thm:PAML-PolicyGradientError}.} 

\else
We theoretically study some aspects of a generic model-based PG (MBPG) method, and provide justification for using our proposed loss as an alternative to MLE.
One of our contributions is a PG error result that quantifies the error between $\nabla_\theta J(\pi_\theta)$ obtained following the true model $\PKernelTrue$ and $\nabla_\theta \hat{J}(\pi_\theta)$ obtained following $\PKernelhat$, and relates it to the error between the models.
We only briefly report this result here and defer its detailed description to the supplementary material.

Suppose that the policy $\pi_\theta: \XX \ra \bar{\MM}(\AA)$ is from the exponential family with features $\phi = \phi(a|x): \XA \ra \Real^d$, parameters $\theta \in \Theta \subset \Real^d$, and has the density 
\begin{equation}
\label{eq:PAML-PolicyParametrization}
	\pi_\theta(a | x) = 
		\frac	{\exp \left( \phi^\top(a|x) \theta \right)}
			{\int \exp \left( \phi^\top(a'|x) \theta \right) \da'}.
\end{equation}
The result states that the PG error $\smallnorm{
		\nabla_\theta J(\pi_\theta) - 
		\nabla_\theta \hat{J}(\pi_\theta)
		}_2$
can be upper-bounded by
\begin{align}\label{eq:PAML-Exp-UpperBound}
	\frac{\gamma \Qmax B_2}{(1-\gamma)^2}
	\times
	\begin{cases}
		\cPG(\rho, \nu; \pi_\theta) \norm{\Delta \PKernel^{\pi_\theta}}_{1,1(\nu)}, \\
		2 \norm{\Delta \PKernel^{\pi_\theta}}_{1,\infty},
	\end{cases}
\end{align}
where $B_2$ is the $\ell_2$-norm of features used in the definition of the policy, 
$\norm{\Delta \PKernel^{\pi_\theta}}_{1,1(\nu)}$ and 
$\norm{\Delta \PKernel^{\pi_\theta}}_{1,\infty}$ are total variation-based norms of the model error
$\DeltaPKernelpi = \PKernelpi - \PKernelpihat$,
and the concentrability coefficient
$
	\cPG(\rho, \nu; \pi_\theta) 
	\eqdef
	\smallnorm{
		\frac	{\mathrm{d} \rhoDiscounted^{\pi_\theta} }
			{\dnu	}
		}_\infty
$ is the supremum of the Radon-Nikodym derivative of distribution $\rhoDiscounted^{\pi_\theta}$ w.r.t.~$\nu$. Moreover, through an application of Pinsker's inequality, we show in the supplementary material that the upper-bound in \eqref{eq:PAML-Exp-UpperBound} can be further upper-bounded by the $\KL$-divergence between $\PKernelpi$ and $\PKernelpihat$. Recall that the MLE is the minimizer of the $\KL$-divergence between the empirical distribution of samples generated from $\PKernelpi$ and $\PKernelpihat$. This upper-bound suggests why PAML might be a more suitable approach in learning a model. An MLE-based approach tries to minimize an upper-bound of an upper-bound for the quantity that we care about, i.e.,~PG error. This consecutive upper-bounding might be quite loose. On the other hand, the population version of PAML's loss \eqref{eq:PAML-DifferenceInGradient} is exactly the error in the PG estimates that we care about. A question that may arise is that although these two losses are different, are their minimizers the same? In Figures~\ref{fig:PAML-GMM-visualization} and~\ref{fig:PAML-GMM-contour} we show through a simple visualization that the minimizers of PAML and $\KL$ could indeed be different. We then report Theorem~\ref{thm:PAML-MBPG-Convergence-Compact} that provides a convergence guarantee for a MBPG method.

\ifSupp
\begin{figure}[t]
    \centering
    \includegraphics[width=1.0\textwidth]{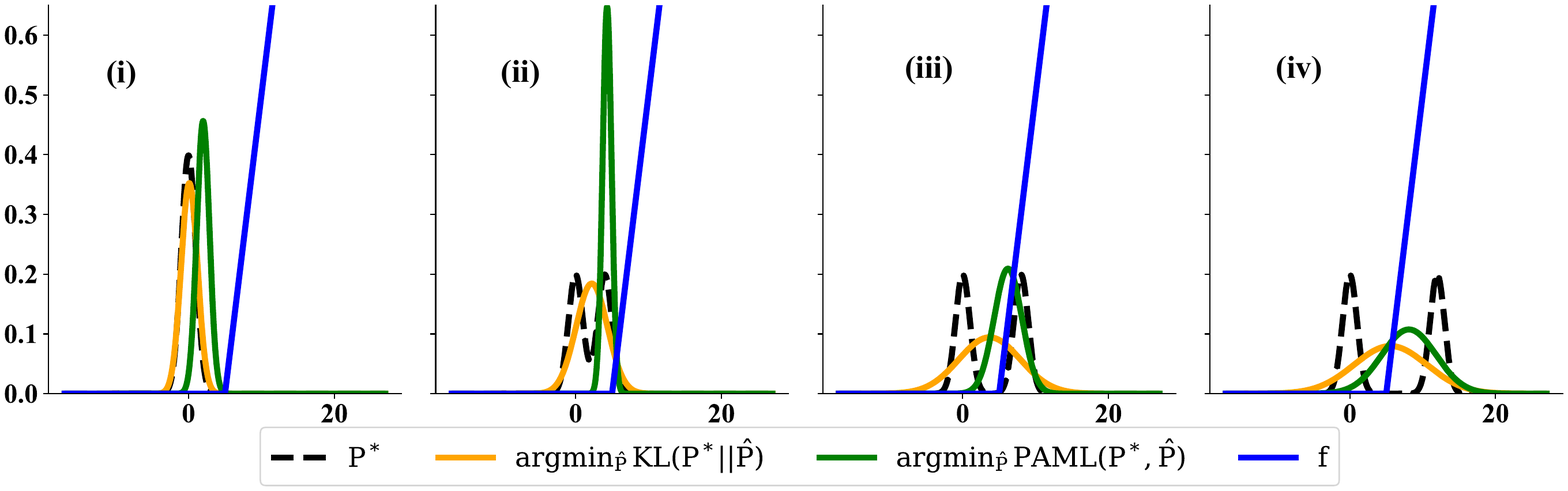}
    \includegraphics[width=1.0\textwidth]{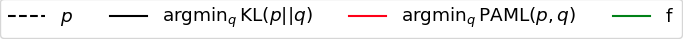}
\end{figure}
\else
\begin{figure}[t]
    \centering
    \begin{subfigure}[b]{0.55\linewidth}
     \centering
        \includegraphics[width = \linewidth]{plots/annotated_reverse_forward_kl.pdf}
% \ifSupp      \else   \vspace{-0.25in} \fi
        \caption{Visualization of minimizers for PAML and MLE. $\PKernelTrue$, denoted by $p$, is a Gaussian mixture model with 2 modes and the learned model $\PKernelhat$, denoted by $q$, is a single Gaussian. The loss minimized by PAML for this simple case is: $|\sum_x (\PKernelTrue - \PKernelhat)f(x)|^2$, where f is an arbitrary function.}
        \label{fig:PAML-GMM-visualization}%
    \end{subfigure} 
    \hspace{1em}
    % \vspace{0.15in}
    \begin{subfigure}[b]{0.4\textwidth}
        \centering
        \includegraphics[width = \linewidth]{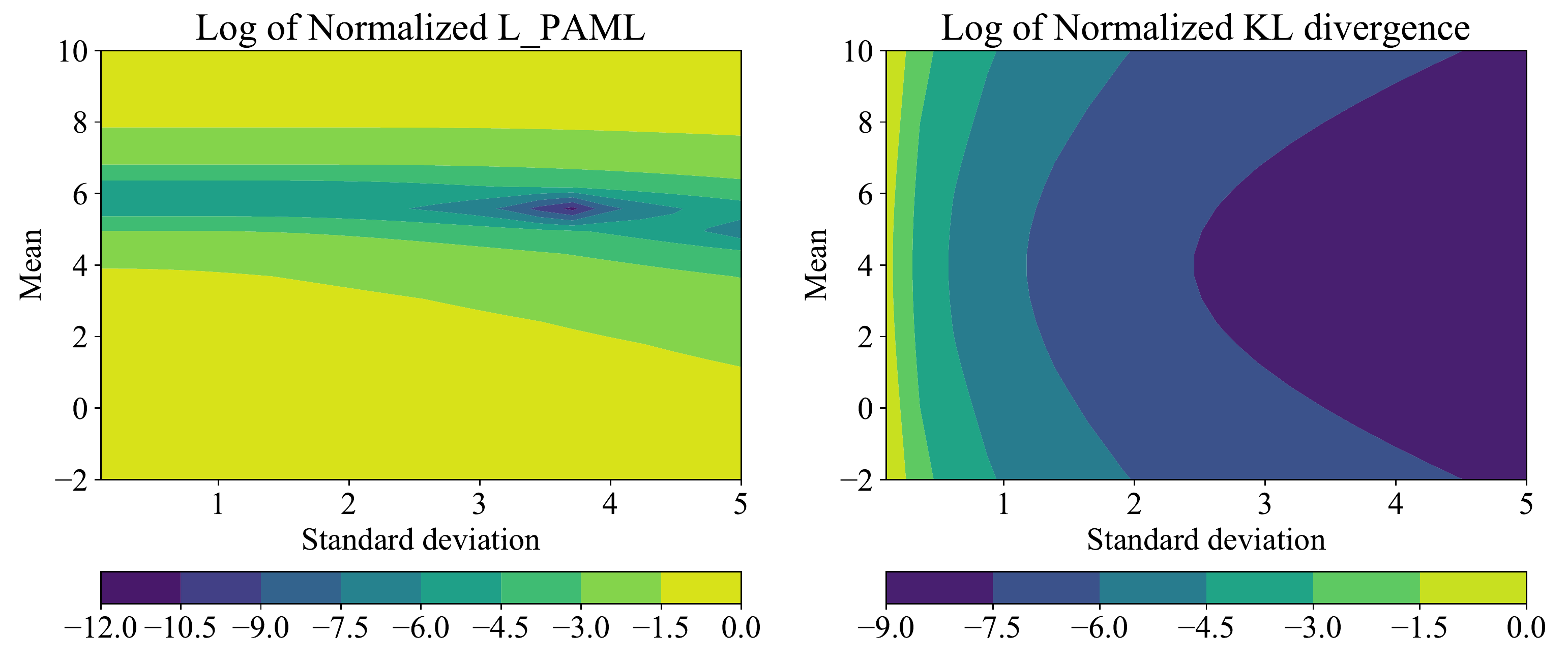}
        % \vspace{-0.25in} 
        \caption{Contours of the two loss surfaces for Figure \ref{fig:PAML-GMM-visualization}(iii). Note the locations of the minimizers for each. (The losses were log-normalized for better visual contrast in this figure.)}
        \label{fig:PAML-GMM-contour}%
    \end{subfigure}
    % \label{fig:GMM}%
    % \vspace{-0.22in}
\end{figure}%
\fi
%

%We only report this result in the supplementary material, due to the space limitation.
%Here we only report Theorem~\ref{thm:PAML-MBPG-Convergence-Compact}, which provides a convergence guarantee for a MB PG. 
%The results of this section are for a generic MBPG method, but we can draw conclusions about PAML and why it might be a better approach compared to a conventional MLE-based model learning.
\fi
%%%%%%%%%%%%%%%%%%%%%%%%%%%%%%%%%%%%%%%%%%%%%%%

%%%%%%%%%%%%%%%%%%%%%%%%%%%%%%%%%%%%%%%%%%%%%%%
\ifSupp

\subsection{Policy Gradient Error}
\label{sec:PAML-Theory-PG-Error}

\todo{Mention that $\PKernel^\pi$ is really ${\PKernelTrue}^\pi$.}

Given the true $\PKernel = \PKernelTrue$ and estimated $\PKernelhat$ transition probability kernel, a policy $\pi_\theta$, and their induced discounted future-state distributions $\rhoDiscounted^{\pi_\theta} = \rhoDiscounted(\cdot;\PKernel^{\pi_\theta})$ and $\rhohatDiscounted^{\pi_\theta} =  \rhoDiscounted(\cdot;\PKernelhat^{\pi_\theta})$, the performance gradients $\frac{\partial J(\pi_\theta)}{\partial \theta}$ (according to $\PKernel^{\pi_\theta}$) and
$\frac{\partial \hat{J}(\pi_\theta)}{\partial \theta}$  (according to $\PKernelhat^{\pi_\theta}$) are
\begin{align}
\label{eq:PAML-J-and-Jhat}
\nonumber
	& \frac{\partial{J(\pi_\theta)}}{\partial \theta} 
%	& =
%	\frac{1}{1 - \gamma}
%	\int \rhoDiscounted(\dx;\PKernelpi)
%	\sum_{a \in \AA} \frac{\partial \pi_\theta(a|x)}{\partial \theta} Q^{\pi_\theta} (x,a) = 
%	\\
%	& =
%	\frac{1}{1 - \gamma}	
%	\int \rhoDiscounted(\dx;\PKernelpi)
%	\sum_{a \in \AA} \pi_\theta(a|x) \frac{\partial \log \pi_\theta(a|x)}{\partial \theta} Q^{\pi_\theta} (x,a)
%	\\
%	&
	=
	\frac{1}{1 - \gamma}	
	\EEX{X \sim \rhoDiscounted(\cdot;\PKernel^{\pi_\theta})}
		{ \EEX{A \sim \pi_\theta(\cdot|X)} {\nabla_\theta \log \pi_\theta(A|X) Q^{\pi_\theta}(X,A)}
	},
	\\
	& \frac{\partial{\hat{J}(\pi_\theta)}}{\partial \theta} 
	=
	\frac{1}{1 - \gamma}	
	\EEX{X \sim \rhoDiscounted(\cdot;\PKernelhat^{\pi_\theta})}
		{ \EEX{A \sim \pi_\theta(\cdot|X)} {\nabla_\theta \log \pi_\theta(A|X) Q^{\pi_\theta}(X,A)}
	}.
\end{align}

%Recall that given a transition probability kernel $\PKernelpi = \PKernelpiTrue$, we use 
%$\PKernelpi(\cdot|x;k)$ to denote the distribution induced by starting from state $x$ and following $\PKernelpi$ for $k = 0, 1, \dotsc$ steps.
%Likewise, $\int \rho(\dx) \PKernelpi(\cdot|x;k)$ is the distribution of choosing the initial distribution according to $\rho \in \bar{\MM}(\XX)$ and following $\PKernelpi$ for $k$ steps.
%We may then define a discounted future-state distribution of starting from $\rho$ and following $\PKernelpi$ as
%%
%\begin{align}
%\label{eq:PAML-DiscountedFutureStateDistribution}
%	\rhoDiscounted(\cdot) = \rhoDiscounted(\cdot; \PKernelpi) \eqdef 
%	(1-\gamma) \sum_{k \geq 0} \gamma^k \int \drho(x) \PKernelpi(\cdot|x;k).
%\end{align}
%%
%We use a shorthand notation $\rhohatDiscounted = \rhoDiscounted(\cdot;\PKernelpihat)$.
%We use a similar notation for other distributions, e.g., $\muDiscounted$ and $\muhatDiscounted$.
%
%\todo{We use $\rhoDiscounted^\piBest$ later, so it makes sense to have $\rhoDiscounted^\pi$ here too.}

We want to compare the difference of these two PG.
Recall that this is the case when the same critic $Q^{\pi_\theta} = Q^{\pi_\theta}_\PKernel$ is used for both PG calculation, e.g., a critic is learned in a model-free way, and not based on $\PKernelhat$.
At first we assume that the critic is exact, and we do not consider that one should learn it based on data, which brings in considerations on the approximation and estimation errors for a policy evaluation method. Theoretical analyses on how well we might learn a critic have been studied before.\todo{Cite relevant papers!}
We study the effect of having an inexact critic in Section~\ref{sec:PAML-Theory-PG-Convergence-Inexact-Critic}.

We first introduce some notations and definitions.
Let us denote $\DeltaPKernelpi = \PKernelpi - \PKernelpihat$.
For the error in transition kernel $\DeltaPKernelpi(\cdot|x)$, which is a signed measure, we use 
$\norm{\DeltaPKernelpi(\cdot|x)}_1$ to denote its total variation (TV) distance, i.e.,
$\norm{\DeltaPKernelpi(\cdot|x)}_1 = 2 \sup_{A \in \XX} |\int \DeltaPKernelpi(\dy|x) \One{y \in A}|$, where the supremum is over the measurable subsets of $\XX$.
We have
\begin{align}
\label{eq:PAML-TV-Distance}
	\norm{\DeltaPKernelpi(\cdot|x)}_1 = \sup_{\norm{f}_\infty \leq 1} \left| \int \DeltaPKernelpi(\dy|x) f(y) \right|,	
\end{align}
where the supremum is over $1$-bounded measurable functions on $\XX$.
When $\DeltaPKernelpi(\cdot|x)$ has a density w.r.t. to some countably additive non-negative measure, it holds that
$\norm{\DeltaPKernelpi(\cdot|x)}_1 = \int |\DeltaPKernelpi(\dy|x)|$.

We define the following two norms on $\DeltaPKernelpi$: 
Consider a probability distribution $\nu \in \bar{\MM}(\XX)$.
We define
\begin{align*}
%\label{eq:PAML-PKernel-Norm-TV}
	& \norm{\DeltaPKernelpi}_{1, \infty} = \sup_{x \in \XX} \norm{\DeltaPKernelpi(\cdot|x)}_{1},
%	\\
	& \norm{\DeltaPKernelpi}_{1, 1(\nu)} = \int \dnu(x) \norm{\DeltaPKernelpi(\cdot|x)}_{1}.
\end{align*}
%
% The first is the supremum value of the total variation distance over the state space $\XX$
%
%The first is the supremum value of the total variation distance over the state space $\XX$
%\[
%	\norm{\DeltaPKernelpi}_{1, \infty} = \sup_{x \in \XX} \norm{\DeltaPKernelpi(\cdot|x)}_{1},
%\]
%Given a probability distribution $\nu \in \bar{\MM}(\XX)$, we also define
%\[
%	\norm{\DeltaPKernelpi}_{1, 1(\nu)} = \int \dnu(x) \norm{\DeltaPKernelpi(\cdot|x)}_{1}.
%\]
\todo{Come back to this and verify, especially whether I need a 2 factor in the definition of TV or not.}
\todo{Add intuition. -AMF}

%Given two transition probability kernels $\PKernelpi_1, \PKernelpi2$ and for a fixed state $x$, the $\KL$ distance is defined in a usual way, i.e.,  $\KL(\PKernelpi_1(\cdot|x) || \PKernelpi_2(\cdot|x) = \int \PKernelpi_1(\dx'|x) \frac{\PKernelpi_1(\dx'|x)}{\PKernelpi_2(\dx'|x)}$.

We also define
\begin{align*}
%\label{eq:PAML-PKernel-Norm-KL}
	& \KL_{\infty}(\PKernelpi_1 || \PKernelpi_2) = \sup_{x \in \XX} \KL(\PKernelpi_1(\cdot|x) || \PKernelpi_2 (\cdot|x) ),
%	\\
	& \KL_{1(\nu)}(\PKernelpi_1 || \PKernelpi_2) = \int \dnu(x) \KL(\PKernelpi_1(\cdot|x) || \PKernelpi_2 (\cdot|x) ).
\end{align*}

\todo{Add intuition. -AMF}

For an action-value function $Q: \XA \ra \Real$, a probability distribution $\nu \in \bar{\MM}(\XX)$ and a policy $\pi: \XX \ra \bar{\MM}(\AA)$, and $p \in [1, \infty)$, we define the following norm:
\begin{align}
\label{eq:PAML-Norm-Q}
	\norm{Q}_{p(\nu;\pi)} = \sqrt[p]{\int \dnu(x) \pi(\da|x) |Q(x,a)|^p}.
\end{align}

When the action-space is finite $\actionnum< \infty$, we also define the following norm:
\begin{align}
\label{eq:PAML-Norm-Q-UniformOverActions}
	\norm{ Q(x,\cdot) }_2
	=
	\sqrt{
		\frac{1}{\actionnum}
		\sum_{a \in \Actions}
		\left| Q(x,a) \right|^2
		}.
\end{align}
This can be seen as similar to the norm~\eqref{eq:PAML-Norm-Q} with the choice of a policy $\pi$ that assigns a uniform probability $\frac{1}{\actionnum}$ to all actions in the finite action space (but it is not exactly the same, because here we are only concerned of a specific state $x$, instead of a distribution $\nu$ over the states).
These norms will be used later when we analyze the effect of the critic error.

\ifSupp
For a vector-valued function $f: \XX \ra \Real^d$ (for some $d \geq 1$), we define its mixed $p,\infty$-norm as
\begin{align}
\label{eq:PAML-Norm-f}
	\norm{f}_{p,\infty} = \sup_{x \in \XX} \norm{f(x)}_p.
\end{align}
\fi

We shall see that the error in gradient depends on the discounted future-state distribution~\eqref{eq:PAML-DiscountedFutureStateDistribution}. Sometimes we may want to express the errors w.r.t. another distribution $\nu \in \bar{\MM}(\XX)$, which is, for example but not necessarily, the distribution used to collect data to train the model.
This requires a change of measure argument.
Recall that for a measurable function $f: \XX \ra \Real$ and two probability measures $\mu_1, \mu_2 \in \bar{\MM}(\XX)$, if $\mu_1$ is absolutely continuous w.r.t. $\mu_2$ ($\mu_1 \ll \mu_2$), the Radon-Nikydom (R-N) derivative $\frac{\dmu_1}{\dmu_2}$ exists, and we have
\begin{align}
\label{eq:PAML-Change-of-Measure-Argument}
	\int f(x) \dmu_1 = \int f(x) \frac{\dmu_1}{\dmu_2} \dmu_2
	\leq
	\norm{\frac{\dmu_1}{\dmu_2}}_\infty \int |f(x)| \dmu_2,
\end{align}
where %
\[
\norm{\frac{\dmu_1}{\dmu_2}}_\infty = \sup_{x} \left| \frac{\dmu_1}{\dmu_2}(x) \right|.
\]

The supremum of the R-N derivative of $\rhoDiscounted^\pi$ w.r.t. $\nu$ plays an important role in our results. It is called the Discounted Concentrability Coefficient. We formally define it next.

%%%%%%%%%%%%%%%%%%%%%%%%%%%%%%%%%%%%%%%%%%%%%%%
\begin{definition}[Discounted Concentrability Coefficient]
Given two distributions $\rho, \nu \in \bar{\MM}(\XX)$ and a policy $\pi$, define
\begin{align*}
	\cPGrhopinu 
	\eqdef
	\norm{
		\frac	{\mathrm{d} \rhoDiscounted^\pi}
			{\dnu	}
		}_\infty.
\end{align*}
If the discounted future-state distribution is not absolutely continuous w.r.t. $\nu$, we set $\cPGrhopinu = \infty$.
\end{definition}
%%%%%%%%%%%%%%%%%%%%%%%%%%%%%%%%%%%%%%%%%%%%%%%
\todo{Some discussion!}

\ifSupp
The models $\PKernelpi$ and $\PKernelpihat$ induce discounted future-state distributions $\rhoDiscounted^\pi$ and $\rhohatDiscounted^\pi$. We can compare the expectation of a given (vector-valued) function under these two distributions. The next lemma upper bounds the difference in the $\ell_p$-norm of these expectations, and relates it to the error in the models, and some MDP-related quantities.
\todo{Maybe mention that this is for the TV type of error. Not what matters for PAML. -AMF}

%%%%%%%%%%%%%%%%%%%%%%%%%%%%%%%%%%%%%%%%%%%%%%%
\begin{lemma}
\label{lem:PAML-ExpectedDiscountedError}
Consider a vector-valued function $f: \XX \ra \Real^d$, two distributions $\rho, \nu \in \bar{\MM}(\XX)$, and
two transition probability kernels $\PKernelpi$ and $\PKernelpihat$.
For any $0 \leq \gamma < 1$, and $1 \leq p \leq \infty$, we have
\begin{align*}
	\norm{
			\EEX{X \sim \rhoDiscounted(\cdot;\PKernelpi)}{f(X)} - 
		\EEX{ X \sim \rhoDiscounted(\cdot;\PKernelpihat)}{f(X)}			
		}_p
%	\norm{
%			\int \left( \rhoDiscounted(\dx;\PKernelpi) - \rhoDiscounted(\dx;\PKernelpihat) \right) f(x)
%		}_p
	\leq
	\frac{\gamma}{1-\gamma}
	\norm{f}_{p, \infty} \times
	\begin{cases}
		\cPGrhopinu \norm{\Delta \PKernelpi}_{1,1(\nu)},
		 \\
		\norm{\Delta \PKernelpi}_{1,\infty}.
	\end{cases}
\end{align*}
\end{lemma}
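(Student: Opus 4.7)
The plan is to expand the difference using the definition of $\rhoDiscounted$, telescope the kernel powers in terms of $\DeltaPKernelpi$, bound each resulting inner integral using the total-variation characterisation of $\norm{\DeltaPKernelpi(\cdot|x)}_1$, and finally either take a supremum (second case) or apply a change of measure to $\nu$ (first case) to extract the $\cPGrhopinu$ factor.

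First I would write
\[
	\EEX{X \sim \rhoDiscounted(\cdot;\PKernelpi)}{f(X)} - \EEX{X \sim \rhoDiscounted(\cdot;\PKernelpihat)}{f(X)} = (1-\gamma) \sum_{k \geq 1} \gamma^k \int \drho(x) \left[ (\PKernelpi)^k - (\PKernelpihat)^k \right] f(x),
\]
where the $k=0$ term vanishes. Applying the standard telescoping identity
\[
(\PKernelpi)^k - (\PKernelpihat)^k = \sum_{j=0}^{k-1} (\PKernelpi)^j \, \DeltaPKernelpi \, (\PKernelpihat)^{k-1-j},
\]
and writing $\mu_j := \int \drho(x) (\PKernelpi)^j(\cdot|x)$ and $g_\ell := (\PKernelpihat)^\ell f$, each inner summand takes the form $\int \mu_j(\mathrm{d}z) \int \DeltaPKernelpi(\dy|z) g_{k-1-j}(y)$.

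Next I would establish two scalar bounds, both obtained from $\ell_p/\ell_q$ duality $\norm{v}_p = \sup_{\norm{u}_q \leq 1} u^\top v$ applied componentwise. (i)~Since $\PKernelpihat$ is a Markov kernel, Jensen's inequality gives $\norm{g_\ell}_{p,\infty} \leq \norm{f}_{p,\infty}$ for every $\ell \geq 0$. (ii)~The TV characterisation~\eqref{eq:PAML-TV-Distance} yields, for every state $z$,
\[
\norm{\int \DeltaPKernelpi(\dy|z) g_{k-1-j}(y)}_p \leq \norm{\DeltaPKernelpi(\cdot|z)}_1 \, \norm{f}_{p,\infty}.
\]
After integrating against $\mu_j$ and using the triangle inequality in $\ell_p$, I would re-index the double sum with $\ell = k-1-j$, so that the geometric factors decouple as $\sum_{j\geq 0} \gamma^j \sum_{\ell\geq 0} \gamma^\ell$; the inner sum contributes $1/(1-\gamma)$, which cancels the $(1-\gamma)$ prefactor and leaves an overall $\gamma$. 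Recognising that $\sum_{j\geq 0} \gamma^j \mu_j = \rhoDiscounted(\cdot;\PKernelpi)/(1-\gamma)$, the bound collapses to
\[
\frac{\gamma}{1-\gamma}\, \norm{f}_{p,\infty} \int \rhoDiscounted(\mathrm{d}z;\PKernelpi) \, \norm{\DeltaPKernelpi(\cdot|z)}_1.
\]
The second case of the lemma now follows by pulling $\sup_z$ out of the integral; the first case follows by applying~\eqref{eq:PAML-Change-of-Measure-Argument} with $\mu_1 = \rhoDiscounted(\cdot;\PKernelpi)$ and $\mu_2 = \nu$ to the nonnegative integrand $z \mapsto \norm{\DeltaPKernelpi(\cdot|z)}_1$.

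The main obstacle I anticipate is the vector-valued bookkeeping: the TV bound and the kernel non-expansion are most naturally phrased for scalar functions, so they have to be lifted to the mixed norm $\norm{\cdot}_{p,\infty}$ either via the $\ell_p/\ell_q$ duality trick sketched above or componentwise. Beyond this, the proof is a standard simulation-lemma style telescoping argument; the only genuinely new ingredient is the clean separation of the $\gamma/(1-\gamma)$ horizon factor, the $\norm{f}_{p,\infty}$ magnitude factor, and the concentrability factor $\cPGrhopinu$ at the end.
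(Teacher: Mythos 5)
Your proposal is correct and follows essentially the same route as the paper: your telescoping identity $(\PKernelpi)^k - (\PKernelpihat)^k = \sum_{j=0}^{k-1}(\PKernelpi)^j\,\DeltaPKernelpi\,(\PKernelpihat)^{k-1-j}$ is exactly the unrolled form of the paper's recursion $E_k = \PKernelpi E_{k-1} + \DeltaPKernelpi(\PKernelpihat)^{k-1}f$, and the geometric resummation recovering $\rhoDiscounted^\pi$, followed by either the supremum or the change-of-measure step, matches the paper's conclusion verbatim. One minor remark: of the two options you mention for lifting the scalar TV bound to $\norm{\cdot}_{p,\infty}$, the $\ell_p/\ell_q$ duality (or, as the paper does, Jensen against the total-variation measure $|\DeltaPKernelpi|$) is the right one, since the purely componentwise route would incur a spurious $d^{1/p}$ factor.
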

%%%%%%%%%%%%%%%%%%%%%%
\begin{proof}
For $k = 0, 1, \dotsc$, define the vector-valued function $E_k: \XX \ra \Real^d$ by
\begin{align*}
	E_k(x) = 
	\int \left( \PKernelpi(\dy|x;k) - \PKernelpihat(\dy|x;k) \right) f(y),
\end{align*}
for any $x \in \XX$. Note that $E_0(x) = 0$.
So we can write
\begin{align}
\label{eq:PAML-ExpectedDiscountedError-Proof-MainTerm}
\nonumber
	\frac{1}{1 - \gamma}
	\int \left( \rhoDiscounted(\dx;\PKernelpi) - \rhoDiscounted(\dx;\PKernelpihat) \right) f(x)
	& =
	\sum_{k \geq 0} \gamma^k \int \drho(x) \left( \PKernelpi(\dy|x;k) - \PKernelpihat(\dy|x;k) \right) f(y)
	\\ 
	& =
	\sum_{k \geq 0} \gamma^k \int \drho(x) E_k(x).
\end{align}

In order to upper bound the norm of~\eqref{eq:PAML-ExpectedDiscountedError-Proof-MainTerm}, we provide an upper bound on the norm of each $E_k$. We start by inductively expressing $E_{k}$ as a function of $E_{k-1}$, $\DeltaPKernelpi$, and other quantities.
For $k \geq 1$, and for any $x \in \XX$, we write
\begin{align*}
	E_{k}(x) & = 
	\int \left( \PKernelpi(\dy|x;k) - \PKernelpihat(\dy|x;k) \right) f(y)
	\\
	& =
	\int \left( \PKernelpi(\dx'|x) \PKernelpi(\dy|x';k-1) - \PKernelpihat(\dx'|x) \PKernelpihat(\dy|x';k-1) \right) f(y)
	\\
	& =
	\int \left( \PKernelpi(\dx'|x) \PKernelpi(\dy|x';k-1) -
		\left[ \PKernelpi(\dx'|x) - \DeltaPKernelpi(\dx'|x) \right] \PKernelpihat(\dy|x';k-1) \right) f(y)
	\\
	& =
	\int \PKernelpi(\dx'|x) \left[ \PKernelpi(\dy|x';k-1) - \PKernelpihat(\dy|x';k-1) \right] f(y) +
		\DeltaPKernelpi(\dx'|x) \PKernelpihat(\dy|x';k-1) f(y)
	\\
	& =
	\int \PKernelpi(\dx'|x) E_{k-1}(x') +
		\DeltaPKernelpi(\dx'|x) \PKernelpihat(\dy|x';k-1) f(y).
\end{align*}

To simplify the notation, we denote the $\ell_p$-norm (for any $1 \leq p \leq \infty$) of $E_k(x)$ by $e_k(x)$, i.e., 
$
	e_k(x) = \norm{E_k(x)}_p
$.
Moreover, we denote $\eps(x) = \int |\DeltaPKernelpi(\dx'|x)|$.
By the convexity of the $\ell_p$-norm for $1 \leq p \leq \infty$ and the application of the Jensen's inequality, we get
\begin{align*}
	e_k(x) = \norm{E_k(x)}_p & = 
	\norm{
		\int \PKernelpi(\dx'|x) E_{k-1}(x') +
		\DeltaPKernelpi(\dx'|x) \PKernelpihat(\dy|x';k-1) f(y)
		}_p
	\\
	& \leq
	\int \PKernelpi(\dx'|x) \norm{E_{k-1}(x')}_p +
	\int \norm{ \DeltaPKernelpi(\dx'|x) \PKernelpihat(\dy|x';k-1) f(y) }_p
	\\
	& \leq
	\int \PKernelpi(\dx'|x) e_{k-1}(x') +
	\sup_{y \in \XX} \norm{f(y)}_p \int \left| \DeltaPKernelpi(\dx'|x) \right|
	\\
	& = 
	\int \PKernelpi(\dx'|x) e_{k-1}(x') + \norm{f}_{p,\infty} \eps(x).
\end{align*}
This relates $e_k$ to $e_{k-1}$ and $\eps$ and $\PKernelpi$. By unrolling $e_{k-1}, e_{k-2}, \dotsc$, we get
\begin{align*}
	e_k(x) & \leq
	\norm{f}_{p,\infty} \eps(x)
	+
	\int \PKernelpi(\dx'|x) \left[ \norm{f}_{p,\infty} \eps(x') + \int \PKernelpi(\dx'|x) e_{k-2}(x') \right]
	\leq \cdots
	\\
	&
%	\leq \cdots
	\leq
	\norm{f}_{p,\infty}
	\sum_{i=0}^{k-1} \int \PKernelpi(\dx'|x;i) \eps(x').
\end{align*}

As a result, the norm of~\eqref{eq:PAML-ExpectedDiscountedError-Proof-MainTerm} can be upper bounded as
\begin{align}
\label{eq:PAML-ExpectedDiscountedError-Proof-NormOfSumofEkTerm}
\nonumber
	\norm{
		\sum_{k \geq 0} \gamma^k \int \drho(x) E_k(x)
		}_p
	& \leq
	\sum_{k\geq0} \gamma^k \int \drho(x) e_k(x)
	\\
\nonumber	
	& \leq
	\norm{f}_{p,\infty}
	\int \drho(x) \sum_{k \geq 0} \gamma^k \sum_{i=0}^{k-1} \int \PKernelpi(\dx'|x;i) \eps(x')
	\\
\nonumber	
	& \leq
	\frac{\gamma}{1 - \gamma} \norm{f}_{p,\infty}
	\sum_{k \geq 0} \gamma^k \int \drho(x) \PKernelpi(\dx'|x;k) \eps(x')
	\\
	& =
	\frac{\gamma}{(1 - \gamma)^2} \norm{f}_{p,\infty}	
	\int \rhoDiscounted^\pi(\dx) \eps(x),
\end{align}
where we used the definition of $\rhoDiscounted^\pi$~\eqref{eq:PAML-DiscountedFutureStateDistribution} in the last equality.
By the change of measure argument, we have
\[
	\int \rhoDiscounted^\pi(\dx) \eps(x) = 
	\int \frac{ \mathrm{d} \rhoDiscounted^\pi }{\dnu }(x) \dnu(x) \eps(x)
	\leq
	\sup_{x \in \XX} \frac{ \mathrm{d} \rhoDiscounted^\pi }{\dnu }(x) \int \dnu(x) \eps(x)
	=
	\cPGrhopinu \norm{\eps}_{1(\nu)}.
\]

By~\eqref{eq:PAML-ExpectedDiscountedError-Proof-MainTerm} and~\eqref{eq:PAML-ExpectedDiscountedError-Proof-NormOfSumofEkTerm}, we get that
\begin{align*}
%\label{eq:PAML-ExpectedDiscountedError-Proof-MainTerm}
%\nonumber
	\norm{
	\int \left( \rhoDiscounted(\dx;\PKernelpi) - \rhoDiscounted(\dx;\PKernelpihat) \right) f(x)
	}_p
	\leq
	\frac{\gamma}{1 - \gamma} \cPGrhopinu \norm{f}_{p,\infty} \norm{\eps}_{1(\nu)}.
\end{align*}
This leads to the first statement of the lemma.

Alternatively, as $\int \rhoDiscounted^\pi(\dx') \eps(x') \leq \sup_{x \in \XX} \eps(x)$, we can also upper bound~\eqref{eq:PAML-ExpectedDiscountedError-Proof-NormOfSumofEkTerm} by
\begin{align*}
	\frac{\gamma}{(1 - \gamma)^2} \norm{f}_{p,\infty}
	\norm{\eps}_{\infty},
\end{align*}
which leads to the second part of the result.
\end{proof}
%%%%%%%%%%%%%%%%%%%%%%%%%%%%%%%%%%%%%%%%%%%%%%%

Lemma~\ref{lem:PAML-ExpectedDiscountedError} is for a general function $f$.
By choosing 
$f(x) = f(x;\theta) = \EEX{A \sim \pi_\theta(\cdot|x)}{\nabla_\theta \log \pi_\theta(A|x) Q^{\pi_\theta}(x,A)}$,
one can provide an upper bound on the error in the PG~\eqref{eq:PAML-DifferenceInGradient}.
To be concrete, we consider a policy from the exponential family.
\todo{Any comment on how general the exponential family is? I think they can be dense in the space of probability distributions, but I should find the paper. -AMF}

\fi

Suppose that the policy $\pi_\theta: \XX \ra \bar{\MM}(\AA)$ is from the exponential family with features $\phi = \phi(a|x): \XA \ra \Real^d$ and parameterized by $\theta \in \Theta \subset \Real^d$, and has the probability (or density) of
\begin{align}
\label{eq:PAML-PolicyParametrization}
	\pi_\theta(a | x) = 
		\frac	{\exp \left( \phi^\top(a|x) \theta \right)}
			{\int \exp \left( \phi^\top(a'|x) \theta \right) \da'}.
\end{align}
If the dependence of $\pi_\theta$ on $\theta$ is clear from the context, we may simply refer to it as $\pi$.

%XXX By comparing~\eqref{eq:PAML-DifferenceInGradient} and Lemma~\ref{lem:PAML-ExpectedDiscountedError}, one can see that by choosing
%$f(x) = f(x;\theta) = \EEX{A \sim \pi_\theta(\cdot|x)}{\nabla_\theta \log \pi_\theta(A|x) Q^{\pi_\theta}(x,A)}$, XXX

%%%%%%%%%%%%%%%%%%%%%%%%%%%%%%%%%%%%%%%%%%%%%%%
\begin{theorem}
\label{thm:PAML-PolicyGradientError}
Consider the policy parametrization~\eqref{eq:PAML-PolicyParametrization}, the initial state distribution $\rho \in \bar{\MM}(\XX)$, and the discount factor $0 \leq \gamma < 1$.
The policy gradients w.r.t. the true model $\PKernel^{\pi_\theta}$ and the learned model $\PKernelhat^{\pi_\theta}$ are denoted by $\frac{\partial{J(\pi_\theta)}}{\partial \theta}$ and $\frac{\partial{\hat{J}(\pi_\theta)}}{\partial \theta}$, respectively~\eqref{eq:PAML-J-and-Jhat}.
Consider an arbitrary distribution $\nu \in \bar{\MM}(\XX)$.
Assume that $\norm{Q^{\pi_\theta}}_\infty \leq \Qmax$.
For $p \in \{2, \infty\}$, let $B_p = \sup_{(x,a) \in \XA} \norm{\phi(a|x)}_p$, and assume that $B_p < \infty$.
We have
\begin{align*}
	\norm{
		\frac{\partial{J(\pi_\theta)}}{\partial \theta} - \frac{\partial{\hat{J}(\pi_\theta)}}{\partial \theta}
		}_p
	\leq
	\frac{\gamma}{(1-\gamma)^2} \Qmax B_p
	\times
	\begin{cases}
		\cPG(\rho, \nu; \pi_\theta) \norm{\Delta \PKernel^{\pi_\theta}}_{1,1(\nu)}, \\
		2 \norm{\Delta \PKernel^{\pi_\theta}}_{1,\infty}.
	\end{cases}
\end{align*}
\end{theorem}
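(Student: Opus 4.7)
The plan is to reduce the bound to Lemma~\ref{lem:PAML-ExpectedDiscountedError} by identifying the common integrand of the two policy gradients, and then to control its uniform $\ell_p$-norm using the exponential-family structure.

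First, using the compact forms in \eqref{eq:PAML-J-and-Jhat}, I would introduce the vector-valued function
\[
f(x) \eqdef \EEX{A \sim \pi_\theta(\cdot|x)}{\nabla_\theta \log \pi_\theta(A|x)\, Q^{\pi_\theta}(x,A)}.
\]
The same true critic $Q^{\pi_\theta}=Q^{\pi_\theta}_{\PKernelTrue}$ appears in both PG expressions, so $f$ is common to both and the only source of discrepancy is the discounted future-state distribution. Therefore
\[
\frac{\partial J(\pi_\theta)}{\partial\theta} - \frac{\partial \hat{J}(\pi_\theta)}{\partial\theta} = \frac{1}{1-\gamma}\Bigl(\EEX{X\sim\rhoDiscounted^{\pi_\theta}}{f(X)} - \EEX{X\sim\rhohatDiscounted^{\pi_\theta}}{f(X)}\Bigr).
\]
Applying Lemma~\ref{lem:PAML-ExpectedDiscountedError} to $f$ then yields both cases of the theorem, each carrying the prefactor $\frac{1}{1-\gamma}\cdot\frac{\gamma}{1-\gamma}\norm{f}_{p,\infty} = \frac{\gamma}{(1-\gamma)^2}\norm{f}_{p,\infty}$.

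Second, and the decisive step, I would bound $\norm{f}_{p,\infty}\leq \Qmax B_p$ for $p\in\{2,\infty\}$ using the exponential-family form \eqref{eq:PAML-PolicyParametrization}. Direct differentiation gives the centered score $\nabla_\theta\log\pi_\theta(a|x) = \phi(a|x) - g(x)$ with $g(x)\eqdef\EEX{A\sim\pi_\theta(\cdot|x)}{\phi(A|x)}$, so $\EEX{A\sim\pi_\theta(\cdot|x)}{\nabla_\theta\log\pi_\theta(A|x)}=0$. Exploiting this baseline invariance, I would subtract $V^{\pi_\theta}(x)=\EEX{A\sim\pi_\theta(\cdot|x)}{Q^{\pi_\theta}(x,A)}$ to rewrite
\[
f(x) = \EEX{A\sim\pi_\theta(\cdot|x)}{\bigl(\phi(A|x)-g(x)\bigr)\bigl(Q^{\pi_\theta}(x,A)-V^{\pi_\theta}(x)\bigr)}.
\]
A coordinate-wise Cauchy--Schwarz then bounds each component by $|f_i(x)|\leq\sqrt{\mathbb{E}_A[(\phi_i(A|x)-g_i(x))^2]}\sqrt{\mathrm{Var}_A[Q^{\pi_\theta}(x,A)]}$. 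Since $\mathrm{Var}_A[Q^{\pi_\theta}]\leq \Qmax^2$ and $\mathbb{E}_A[(\phi_i-g_i)^2]\leq\mathbb{E}_A[\phi_i^2]$, summing (resp.\ taking the max) over coordinates yields $\norm{f(x)}_p\leq B_p\,\Qmax$ for $p=2$ (resp.\ $p=\infty$).

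The main technical obstacle is this last step: a naive triangle-inequality / Jensen bound on $\norm{\phi(a|x)-g(x)}_p$ gives only $\norm{f}_{p,\infty}\leq 2\Qmax B_p$, losing a factor of $2$. Recovering the tighter $\Qmax B_p$ appearing in the first case of the theorem requires exploiting the mean-zero property of the score via baseline subtraction combined with coordinate-wise Cauchy--Schwarz. Once this is in place, assembling the prefactors from Lemma~\ref{lem:PAML-ExpectedDiscountedError} and the $\frac{1}{1-\gamma}$ from the PG formula yields the theorem; the factor of $2$ retained in the uniform case reflects that the looser naive bound is adequate there.
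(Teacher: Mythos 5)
Your proposal is correct and follows essentially the same route as the paper: identify the common integrand $f(x)=\EEX{A\sim\pi_\theta(\cdot|x)}{\nabla_\theta\log\pi_\theta(A|x)\,Q^{\pi_\theta}(x,A)}$, invoke Lemma~\ref{lem:PAML-ExpectedDiscountedError}, and then bound $\norm{f}_{p,\infty}$ using the exponential-family score $\phi(a|x)-\bar\phi(x)$. The only divergence is in the last step: the paper first pulls out $\Qmax$ via $|Q^{\pi_\theta}|\leq\Qmax$ and then bounds $\EEX{A\sim\pi_\theta(\cdot|x)}{\norm{\nabla_\theta\log\pi_\theta(A|x)}_p}$ (Lemma~\ref{lem:PAML-ExponentialFamilyPolicy-Boundedness}), which already yields the tight $B_2$ for $p=2$ by the variance argument on the centered features --- no baseline subtraction of $V^{\pi_\theta}$ is needed --- while the $p=\infty$ case is handled by the triangle inequality and incurs the factor $2B_\infty$; in the paper's logic the $2$ in the second case of the bound is thus tied to $p=\infty$, not to the choice of the $\norm{\Delta\PKernel^{\pi_\theta}}_{1,\infty}$ norm as you suggest. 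Your double-centering plus coordinate-wise Cauchy--Schwarz is a valid alternative that in fact removes the factor of $2$ even for $p=\infty$, so it proves a marginally stronger statement.
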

%%%%%%%%%%%%%%%%%%%%%%
\ifSupp
\begin{proof}
\todo{Re-read it as it has recently been changed significantly. -AMF}
The policy gradient is 
\[
	\frac{\partial{J(\pi_\theta)}}{\partial \theta} = \frac{1}{1 - \gamma} \EEX{X \sim \rhoDiscounted(\cdot;\PKernelpi)}{f(X)}
\]
with the choice of $f(x) = f(x;\theta) = \EEX{A \sim \pi_\theta(\cdot|x)}{\nabla_\theta \log \pi_\theta(A|x) Q^{\pi_\theta}(x,A)}$ (and similar for $\frac{\partial{\hat{J}(\pi_\theta)}}{\partial \theta}$).
We can use Lemma~\ref{lem:PAML-ExpectedDiscountedError} to upper bound the difference between 
$\frac{\partial{J(\pi_\theta)}}{\partial \theta}$ and $\frac{\partial{\hat{J}(\pi_\theta)}}{\partial \theta}$.
To apply that lemma, we require to have upper bounds on the $\ell_2$ and $\ell_\infty$ norms of $f(x)$.
As $\norm{Q^{\pi_\theta}}_\infty \leq \Qmax$, for any $1 \leq p \leq \infty$ we have
\begin{align}
\label{eq:PAML-PolicyGradientError-Proof-Generic-Norm-of-f-Bound}
	\norm{f(x;\theta)}_p
	\leq
	\EEX{A \sim \pi_\theta(\cdot|x)} { \norm{ \nabla_\theta \log \pi_\theta(A|x) Q^{\pi_\theta}(x,A)  }_p }
	\leq
	\Qmax 
	\EEX{A \sim \pi_\theta(\cdot|x)} { \norm{ \nabla_\theta \log \pi_\theta(A|x)  }_p }.
\end{align}

By Lemma~\ref{lem:PAML-ExponentialFamilyPolicy-Boundedness}, we have
\begin{align*}
	\EEX{A \sim \pi_\theta(\cdot|x) }
	{\norm{\nabla_\theta \log \pi_\theta(A|x)}_p} \leq
	\begin{cases}
		B_2,	& p = 2 \\
		2 B_\infty. & p = \infty
	\end{cases}
\end{align*}%
Therefore by~\eqref{eq:PAML-PolicyGradientError-Proof-Generic-Norm-of-f-Bound},
we have the upper bounds
$\norm{f(x;\theta)}_2 \leq \Qmax B_2$
and
$\norm{f(x;\theta)}_\infty \leq 2 \Qmax B_\infty$.
This finishes the proof.
\end{proof}
\fi
%%%%%%%%%%%%%%%%%%%%%%%%%%%%%%%%%%%%%%%%%%%%%%%

This theorem shows the effect of the model error, quantified in the total variation-based norms 
$\norm{\Delta \PKernel^{\pi_\theta}}_{1,\infty}$ or 
$\norm{\Delta \PKernel^{\pi_\theta}}_{1,1(\nu)}$, on the PG estimate. 
The norms measure how different the distribution of the true dynamics $\PKernel^{\pi_\theta} = {\PKernelTrue}^{\pi_\theta}$ is from the distribution of the estimate $\PKernelhat^{\pi_\theta}$, according to the difference in the total variation distance between their next-state distributions, i.e., $\smallnorm{\PKernel^{\pi_\theta}(\cdot|x) - \PKernelhat^{\pi_\theta}(\cdot|x)}_1$.
The difference between them is on whether we take the supremum over the state space $\XX$ or average (according to $\nu$) over it.
Clearly, $\norm{\Delta \PKernel^{\pi_\theta}}_{1,\infty}$ is a more strict norm compared to $\norm{\Delta \PKernel^{\pi_\theta}}_{1,1(\nu)}$.

For the average norm, a concentrability coefficient $\cPG(\rho, \nu; \pi_\theta)$ appears in the bound. This coefficient measures how different the discounted future-state distribution $\rhoDiscounted^{\pi_\theta}$ is from the distribution $\nu$, used for taking average over the total variation errors.
If $\nu$ is selected to be $\rhoDiscounted^{\pi_\theta}$,  the coefficient $\cPG(\rho, \nu; \pi_\theta)$ would be equal to $1$.
Moreover, if we choose $\nu$ to be equal to the initial state distribution $\rho$, one can show that the coefficient $\cPG(\rho, \rho; \pi_\theta) \leq \frac{1}{1 - \gamma}$ (we show this in the proof of Theorem~\ref{thm:PAML-PolicyError-at-Stationarity}).

We can use this upper bound to relate the quality of an MLE to the quality of the PGs.
By Pinsker's inequality, the TV distance of two distributions can be upper bounded by their $\KL$-divergence:
\[
	\norm{\Delta \PKernel^{\pi_\theta}(\cdot|x)}_1
	\leq
	\sqrt{ 2 \KL \left( \PKernel^{\pi_\theta} (\cdot|x) || \PKernelhat_{\pi_\theta}(\cdot|x) \right)
	}.
\]
Therefore, we also have
$\norm{\Delta \PKernel^{\pi_\theta}}_{1,\infty} \leq \sqrt{ 2 \KL_{\infty}(\PKernel^{\pi_\theta} || \PKernelhat_{\pi_\theta})}$ too.
Moreover, as
\begin{align*}
	\EEX{\nu}{ \norm{ \Delta \PKernel^{\pi_\theta}(\cdot|X)}_1}^2
	\leq
	\EEX{\nu}{ \norm{ \Delta \PKernel^{\pi_\theta}(\cdot|X)}_1^2}
	& \leq
	\EEX{\nu}{ 2 \KL \left( \PKernel^{\pi_\theta} (\cdot|X) || \PKernelhat_{\pi_\theta}(\cdot|X) \right) }
	\\
	& =
	2 
	\KL_{1(\nu)} \left( \PKernel^{\pi_\theta}  || \PKernelhat_{\pi_\theta}  \right),
\end{align*}
we get
$
\norm{\Delta \PKernel^{\pi_\theta}}_{1,1(\nu)} \leq \sqrt{ 2 \KL_{1(\nu)}(\PKernel^{\pi_\theta} || \PKernelhat_{\pi_\theta})}
$.
Combined with the upper bound of Theorem~\ref{thm:PAML-PolicyGradientError}, we get that
\begin{align}
\label{eq:PAML-GradientError-KL}
	\norm{
		\nabla_\theta J(\pi_\theta)- \nabla_\theta \hat{J}(\pi_\theta)
		}_p
	\leq
	\frac{\gamma}{(1-\gamma)^2} \Qmax B_p
	\times
	\begin{cases}
		\cPG(\rho, \nu; \pi_\theta) \sqrt{ 2 \KL_{1(\nu)}(\PKernel^{\pi_\theta} || \PKernelhat_{\pi_\theta})}, \\
		2 \sqrt{ 2 \KL_{\infty}(\PKernel^{\pi_\theta} || \PKernelhat_{\pi_\theta})}.
	\end{cases}
\end{align}

This is an upper bound on the PG error for conventional model learning procedures.
Recall that the MLE is the minimizer of the $\KL$-divergence between the empirical distribution of samples generated from $\PKernel^{\pi_\theta}$ and $\PKernelhat^{\pi_\theta}$.
There would be some statistical deviation between its minimizer and the minimizer of 
$
\min_{\PKernel \in \MM} \KL(\PKernel^{\pi_\theta} || \PKernel )
$,
but if the model space is chosen properly, the difference between the minimizer decreases as the number of samples increases.

%
%%Let us focus on a single state $x \in \XX$.
%The MLE is the minimizer of the $\KL$-divergence between the empirical distribution of samples generated from $\PKernel^{\pi_\theta}$ and $\PKernelhat^{\pi_\theta}$.
%There would be some statistical deviation between its minimizer and the minimizer of 
%$
%\min_{\PKernel \in \MM} \KL(\PKernel^{\pi_\theta} || \PKernel )
%$,
%but if the model space is chosen properly, the difference between the minimizer decreases as the number of samples increases.
%By the Pinsker's inequality, this provides an upper bound on the TV distance, which in turn, using the theorem stated above, provides an upper bound on the error in the PG estimates.
%As a result, we can obtain a guarantee on the PG error for a conventional model learning procedure.

%This upper bound also suggests why PAML might be a more suitable approach in learning a model.
%An MLE-based approach tries to minimize an upper bound of an upper bound for the quantity that we care about (PG error). This consecutive upper bounding might be quite loose.
%\todo{Give some examples.}
%On the other hand, the population version of PAML's loss~\eqref{eq:PAML-DifferenceInGradient} is exactly the 
%error in the PG estimates that we care about.
%\todo{We may also want to show that the minimizer of KL wouldn't lead to a small PAML objective. -AMF}

\todo{Revise!} 
This upper bound suggests why PAML might be a more suitable approach in learning a model. An MLE-based approach tries to minimize an upper bound of an upper bound for the quantity that we care about (PG error). This consecutive upper bounding might be quite loose. On the other hand, the population version of PAML's loss \eqref{eq:PAML-DifferenceInGradient} is exactly the error in the PG estimates that we care about.
A question that may arise is that although these two losses are different, are their minimizers the same? In Figures~\ref{fig:PAML-GMM-visualization} and~\ref{fig:PAML-GMM-contour} we show through a simple visualization that the minimizers of PAML and $\KL$ could indeed be different. 
\todo{More discussions. -AMF}

\begin{figure}[t]
    \centering
    \begin{subfigure}{1.0\linewidth}
     \centering
        \includegraphics[width=1.0\textwidth]{plots/annotated_reverse_forward_kl.pdf}
        \caption{Visualization of minimizing models for PAML and MLE. $\PKernelTrue$ is a Gaussian mixture model and the learned model is a single Gaussian. The loss minimized by PAML for this simple case is: $|\sum_x (\PKernelTrue - \PKernelhat)(x) f(x)|^2$.}
        \label{fig:PAML-GMM-visualization}%
    \end{subfigure} 
    \begin{subfigure}{1.0\linewidth}
        \centering
        \includegraphics[width=0.8\textwidth]{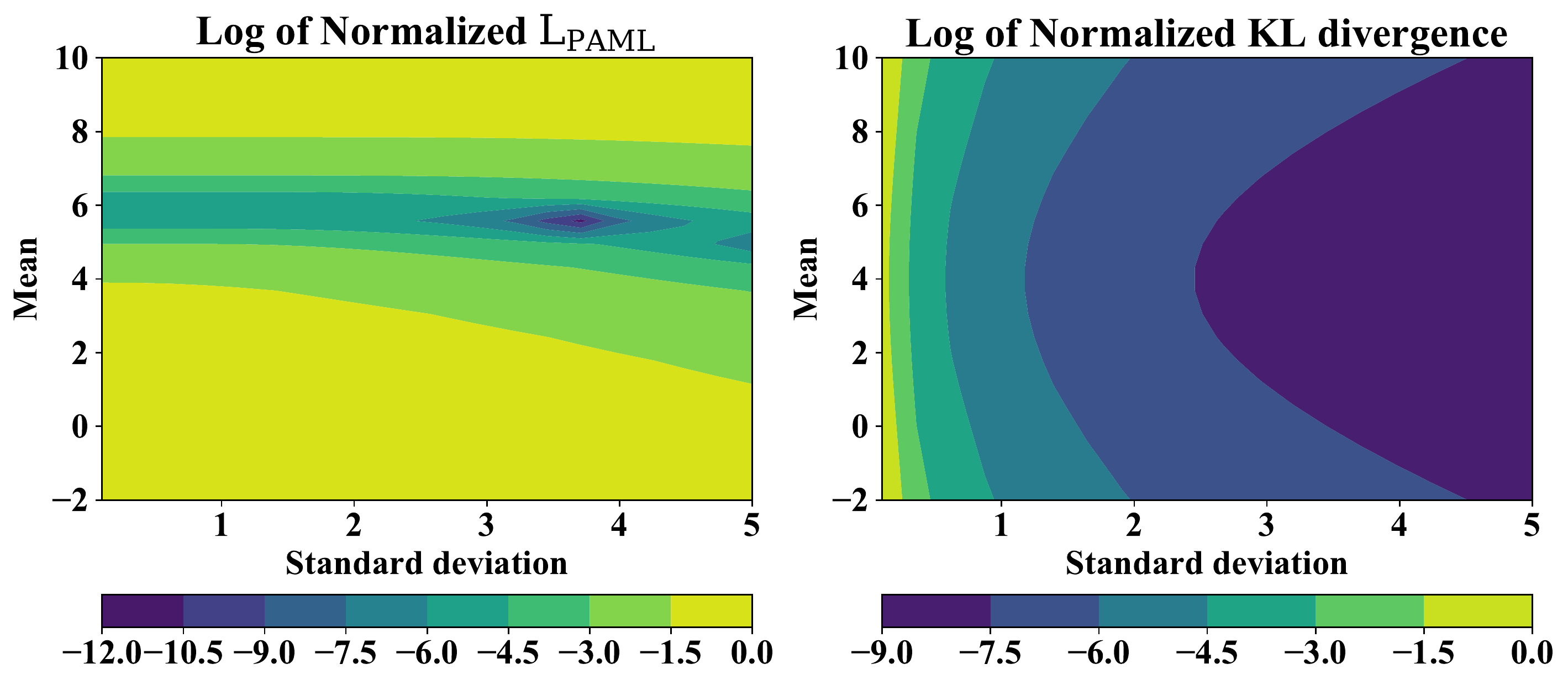}
        \caption{Contours of the two loss surfaces for case (c) above, demonstrating the locations of the minimizers for each. Note that the losses were log-normalized for better visual contrast in this figure.}
        \label{fig:PAML-GMM-contour}%
    \end{subfigure}
    % \label{fig:GMM}%
    \caption{Visualization of minimizers of PAML and KL for fitting a simple distribution.}
\end{figure}

\todo{We need to compare this result with Restelli's. -AMF}

\fi

%%%%%%%%%%%%%%%%%%%%%%%%%%%%%%%%%%%%%%%%%%%%%%%

\newcommand{\pitheta}{{\pi_\theta}}

%%%%%%%%%%%%%%%%%%%%%%%%%%%%%%%%%%%%%%%%%%%%%%%
%\subsection{Model-Free and Model-Based Policy Gradient Convergence}
\subsection{Convergence of Model-Based PG}
\label{sec:PAML-Theory-PG-Convergence}

\ifconsiderlater
\todo{Some possibly relevant papers. I haven't read them carefully.}

Fazel, Kakade, et al., ``Global Convergence of Policy Gradient Methods for the Linear Quadratic Regulator'', ICML, 2018

Jalaj Bhandari, and Daniel Russo, ``Global Optimality Guarantees For Policy Gradient Methods," 2019 June.
 
Lingxiao Wang, Qi Cai, Zhuoran Yang,  Zhaoran Wang, ``Neural Policy Gradient Methods: Global Optimality and Rates of Convergence,'', 2019 August 29.

Boyi Liu, Zhaoran Wang, et al., ``Neural Proximal/Trust Region Policy Optimization Attains Globally Optimal Policy,'' 2019 September.

Kaiqing Zhang, Alec Koppel, Hao Zhu Tamer Basar, ``Global Convergence of Policy Gradient Methods to (Almost) Locally Optimal Policies,'' 2019 September.

Shie Mannor and others: ``Adaptive Trust Region Policy Optimization: Global Convergence and Faster Rates for Regularized MDPs

XXX
\fi

We provide a convergence guarantee for a MBPG method.
The guarantee applies for the restricted policy space, and shows that the obtained policy is not much worse than the best policy in the class.
The error depends on the number of PG iterations, the error in the PG computation, and some properties of the MDP and the sampling distributions.
One factor that determines the PG error is the error in the model $\PKernelhat$.
Another factor is the error in the critic $\Qhat$.
We first consider the case when there is no error in the critic (Section~\ref{sec:PAML-Theory-PG-Convergence-Exact-Critic}).
We then let the critic have some errors too and analyze its effect (Section~\ref{sec:PAML-Theory-PG-Convergence-Inexact-Critic}).
Our focus here is the analysis of the PG as an optimization procedure. Even though we consider model and critic errors, we do not relate those errors to the number of interactions with the environment, the capacity and expressiveness of the model space and the value function space, i.e., the learning aspects of analyzing a complete model-based actor-critic algorithm.
Moreover, we suppose that given the model, the PG is calculated exactly, so there is no error in estimation of the gradient.

%We provide a convergence guarantee for a MBPG method.
%The guarantee applies for the restricted policy space, and shows that the obtained policy is not much worse than the best policy in the class.
%The error depends on the number of PG iterations, the error in the PG computation, and some other properties of the MDP and the sampling distributions.
%The error in the PG computation depends on two factors: (1) the error in the model, with respect to which we compute the PG, and (2) the error in the critic.
%%
%Our focus here is on analyzing the PG as an optimization procedure. Even though we consider the critic error, we do not analyze how the critic error changes as we have more interaction with the environment. Moreover, we suppose that given the model, the PG is calculated exactly.
%%We do not take into account the errors cause
%%finite-sample analysis, such as the error in the critic changes as we have more interaction with the environment, or how accurate the PG is calculated given the model.
%%The analysis is for when the PG can be calculated exactly, i.e., we do not analyze the error caused by sampling.
%

Even though our promised analysis seems somewhat restrictive, we would like to note that until very recently there had not been much theoretical work on the convergence of the PG algorithm, including the actor-critic variants, beyond proving its convergence to a local optimum~\citep{KondaTsitsiklis01,BaxterBartlett01,MarbachTsitsiklis2001,SuttonMcAleesterSinghMansour2000,BhatnagarSuttonGhavamzadehLee2009,TadicDoucet2017}.
Nevertheless, there has been a recent surge of interest in providing global convergence guarantees for PG methods and variants~\citep{AgarwalKakadeLeeMahajar2019,BhandariRusso2019,LiuCaiYangWang2019,WangCaiYangWang2019,ShaniEfroniMannor2020,XuWangLiang2020}.
This section is based on the recent work by~\citet{AgarwalKakadeLeeMahajar2019}, who have provided convergence results for several variations of the PG method. Their result is for a model-free setting, where the gradients are computed according to the true dynamics $\PKernelpi$ of the policy. We modify their result to show the convergence of MBPG. In addition to this difference, we introduce a new notion of policy approximation error, which is perhaps a better characterization of the approximation error of the policy space. We also explicitly consider the critic error in Section~\ref{sec:PAML-Theory-PG-Convergence-Inexact-Critic}.

Instead of extending~\citeauthor{AgarwalKakadeLeeMahajar2019}'s result to be suitable for the model-based setting, we provide a slightly, but crucially, different result for the convergence of a PG algorithm. In particular, we consider the same setting as in Section 6.2 (Projected Policy Gradient for Constrained Policy Classes) of~\citet{AgarwalKakadeLeeMahajar2019} and prove a result similar to their Theorem 6.11. We briefly mention that the main difference with their result is that our new notation of \emph{policy approximation error}, to be defined shortly, considers {\bf 1)} how well one can approximate the best policy in the policy class $\Pi$, instead of how well one can approximate the greedy policy w.r.t.~the action-value function of the current policy, in their result, and {\bf 2)} the interaction of the value function and the policy, as opposed to the error in only approximating the policy in their result.
We explain this in more detail \ifSupp \else in the same section of the supplementary material \fi after we describe all the relevant quantities.
This result, in turn, can be used to prove a convergence guarantee, as in their Corollary 6.14.
Before continuing, we mention that we liberally use the groundwork provided by~\citet{AgarwalKakadeLeeMahajar2019}.

We analyze a projected PG with the assumption that the PGs are computed exactly. We consider a setup where the performance is evaluated according to a distribution $\rho \in \bar{\MM}(\XX)$, but the PG is computed according to a possibly different distribution $\mu \in \bar{\MM}(\XX)$. To be concrete, let us consider a policy space $\Pi = \cset{\pi_\theta}{\theta \in \Theta}$ with $\Theta$ being a convex subset of $\Real^d$ and $\ProjTheta$ be the projection operator onto $\Theta$.
%
%\ifSupp
Consider the projected policy gradient procedure
\begin{align*}
	\theta_{t+1} \leftarrow \ProjTheta \left [ \theta_t + \eta \nabla_\theta J_\mu(\pi_{\theta_t}) \right ],
\end{align*}
with a learning rate $\eta > 0$, to be specified.
% We may sometimes use $\nabla_\theta J_\mu$ instead of $\frac{\partial J_\mu(\pi_{\theta_t})}{\partial \theta}$ to save space.
%\fi
%

A policy $\pi_\theta$ is called $\eps$-stationary if for all $\theta + \delta \in \Theta$ and with the constraint that $\norm{\delta}_2 \leq 1$, we have
\begin{align}
\label{eq:PAML-Stationarity}
	\delta^\top \nabla_\theta J_\mu(\pi_\theta) \leq \eps.
\end{align}
Let us denote the best policy in the policy class $\Pi$ according to the initial distribution $\rho$ by $\piBestrho$ (or simply $\piBest$, if it is clear from the context), i.e.,
\begin{align}
\label{eq:PAML-piBest}
	\piBest \leftarrow \argmax_{\pi \in \Pi} J_\rho(\pi).
\end{align}
%
%We assume that the action space is finite, i.e., $\actionnum < \infty$. \todo{Move it to somewhere else. We don't need it for the first theorem, but we need it for the corollary as the constants depends on the number of actions.}
%
We define a function called \emph{Policy Approximation Error (PAE)}. Given a policy parameter $\theta$ and $w \in \Real^d$, and for a probability distribution $\nu \in \bar{\MM}(\XX)$, it is defined as
\begin{align*}
	\LossPolicyError(\theta, w; \nu) \eqdef 
	\EEX{X \sim \nu}
		{\left|
			\sum_{a \in \Actions} \left( \piBest(a|X) - \pi_\theta(a|X) - w^\top \nabla_\theta \pi_\theta(a|X) \right) Q^{\pi_\theta}(X,a) 
		\right|}.
\end{align*}

This can be roughly interpreted as the error in approximating the improvement in the value from the current policy $\pi_\theta$ to the best policy in the class, $\piBest$, i.e.,~$\sum_{a \in \Actions} ( \piBest(a|X) - \pi_\theta(a|X) ) Q^{\pi_\theta}(X,a)$, by a linear model 
$\sum_{a \in \Actions} w^\top \nabla_\theta \pi_\theta(a|X) Q^{\pi_\theta}(X,a) = 
w^\top \EEX{A \sim \pi_\theta(\cdot|X)}{\nabla_\theta \log \pi_\theta(a|X) Q^{\pi_\theta}(X,a) }$.
\todo{Maybe revise this interpretation. Is this clear enough? And this is ``roughly'' true because the improvement interpretation only holds when $\nu = \rhoDiscounted^\piBest$. -AMF}

For any $\theta \in \Theta$, we can define the best $w^*(\theta) = w^*(\theta;\nu)$ that minimizes $\LossPolicyError(\theta, w; \nu)$ as
\begin{align}
\label{eq:PAML-Loss-PAE-wstar}
	w^*(\theta;\nu) \leftarrow \argmin_{w + \theta \in \Theta} \LossPolicyError (\theta, w; \nu).
\end{align}
We use $\LossPolicyError(\theta;\nu)$ to represent $\LossPolicyError(\theta, w^*(\theta); \nu)$. We may drop the distribution $\nu$ whenever it is clear from the context.

%%%%%%%%%%%%%%%%%%%%%%%%%%%%%%%%%%%%%%%%%%%%%%%
%%%%%%%%%%%%%%%%%%%%%%%%%%%%%%%%%%%%%%%%%%%%%%%
%%%%%%%%%%%%%%%%%%%%%%%%%%%%%%%%%%%%%%%%%%%%%%%

The following result relates the performance loss of a policy compared to the best policy in the class (i.e., $J_\rho(\piBest) - J_\rho(\pi_\theta)$) to its $\eps$-stationarity, the policy approximation error, and some other quantities.
As we shall see, one can show the $\eps$-stationarity of projected PG using tools from the optimization literature (e.g., Theorem 10.15 of~\citealt{Beck2017}, quoted with slight modification as Lemma~\ref{lem:PAML-Beck-Expectation} in Appendix~\ref{sec:PAML-Theory-Background}), hence providing a performance guarantee.

%%%%%%%%%%%%%%%%%%%%%%%%%%%%%%%%%%%%%%%%%%%%%%%
\begin{theorem}
\label{thm:PAML-PolicyError-at-Stationarity}
Consider any initial distributions $\rho, \mu \in \bar{\MM}(\XX)$ and a policy $\pi_\theta$ with $\theta \in \Theta$, a convex set.
Suppose that $\pi_\theta$ is an $\eps$-stationary w.r.t. distribution $\mu$~\eqref{eq:PAML-Stationarity}.
Let $\piBest$ be defined as~\eqref{eq:PAML-piBest}
and
$w^*(\theta;\rhoDiscounted^\piBest)$
as~\eqref{eq:PAML-Loss-PAE-wstar}.
Assume that $\rhoDiscounted^\piBest$ is absolutely continuous w.r.t. $\mu$, and $0 \leq \gamma < 1$.
We then have
\begin{align*}
	J_\rho(\piBest) - J_\rho(\pi_\theta)
	\leq
	\frac{1}{1 - \gamma}
	\left[
		\LossPolicyError(\theta; \rhoDiscounted^\piBest) +
		\norm{	\frac	{\mathrm{d}\rhoDiscounted^\piBest}
					{\mathrm{d} \mu }
			}_\infty
	(1 \vee \norm{w^*(\theta;\rhoDiscounted^\piBest)} ) \eps			
	\right].
\end{align*}
\end{theorem}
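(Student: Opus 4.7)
The plan is to combine the Performance Difference Lemma (PDL) with the first-order stationarity condition, in the spirit of the corresponding model-free result of \citet{AgarwalKakadeLeeMahajar2019} but adapted to the new PAE definition. First, by the PDL and standard manipulation (using $\sum_a \pi_\theta(a|x) A^{\pi_\theta}(x,a) = 0$ together with $\sum_a (\piBest(a|x) - \pi_\theta(a|x)) = 0$ to rewrite the advantage form in terms of $Q^{\pi_\theta}$),
\begin{align*}
J_\rho(\piBest) - J_\rho(\pi_\theta) \;=\; \frac{1}{1-\gamma} \int \rhoDiscounted^\piBest(\dx) \sum_{a \in \Actions} \bigl(\piBest(a|x) - \pi_\theta(a|x)\bigr) Q^{\pi_\theta}(x,a).
\end{align*}

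Next, for any $w \in \real^d$ with $\theta + w \in \Theta$, I add and subtract $w^\top \nabla_\theta \pi_\theta(a|x)$ inside the sum over $a$ to split the integrand into a nonlinear residual $\sum_a \bigl(\piBest - \pi_\theta - w^\top \nabla_\theta \pi_\theta\bigr) Q^{\pi_\theta}$ and a linear piece $w^\top f(x)$, where $f(x) \eqdef \sum_a \nabla_\theta \pi_\theta(a|x) Q^{\pi_\theta}(x,a)$. Applying the triangle inequality and the definition of $\LossPolicyError$ bounds the $\rhoDiscounted^\piBest$-expectation of the residual by $\LossPolicyError(\theta, w; \rhoDiscounted^\piBest)$, which, upon taking $w = w^*(\theta; \rhoDiscounted^\piBest)$, becomes the first term $\tfrac{1}{1-\gamma}\LossPolicyError(\theta; \rhoDiscounted^\piBest)$ of the claim. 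For the linear piece I change measure from $\rhoDiscounted^\piBest$ to $\muDiscounted^{\pi_\theta}$, the distribution appearing in the PG identity $\nabla_\theta J_\mu(\pi_\theta) = \tfrac{1}{1-\gamma}\int \muDiscounted^{\pi_\theta}(\dx) f(x)$. Since $\muDiscounted^{\pi_\theta} \ge (1-\gamma)\mu$ by the very definition of the discounted future-state distribution and $\rhoDiscounted^\piBest \ll \mu$ by assumption, the Radon--Nikodym derivative satisfies $\bigl\|d\rhoDiscounted^\piBest/d\muDiscounted^{\pi_\theta}\bigr\|_\infty \le \tfrac{1}{1-\gamma}\bigl\|d\rhoDiscounted^\piBest/d\mu\bigr\|_\infty$. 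I then apply $\eps$-stationarity along the rescaled direction $\delta = w^*/(1 \vee \|w^*\|)$, which lies in the unit ball and satisfies $\theta + \delta \in \Theta$ by convexity of $\Theta$; rescaling back gives $(w^*)^\top \nabla_\theta J_\mu(\pi_\theta) \le (1 \vee \|w^*\|)\,\eps$. The two $1/(1-\gamma)$ factors, one from the PG identity and one from the Radon--Nikodym bound, combine to produce exactly the claimed second term $\tfrac{1}{1-\gamma}\bigl\|d\rhoDiscounted^\piBest/d\mu\bigr\|_\infty (1 \vee \|w^*\|)\,\eps$.

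The main technical delicacy is the change-of-measure step: the scalar $(w^*)^\top f(x)$ is signed, so the naive Radon--Nikodym estimate $\int g\,d\nu_1 \le \|d\nu_1/d\nu_2\|_\infty \int g\,d\nu_2$ (valid only for $g \ge 0$) does not on its own convert the $\rhoDiscounted^\piBest$-integral into $(w^*)^\top \nabla_\theta J_\mu(\pi_\theta)$. The resolution is to transfer the bound on the Radon--Nikodym derivative onto the one-sided directional inequality supplied by $\eps$-stationarity, which controls $\delta^\top \nabla_\theta J_\mu$ from above in precisely the relevant direction and is thus compatible with the sign structure of the integrand, without the need to insert an absolute value inside the expectation. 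The final bound is obtained by summing the two contributions.
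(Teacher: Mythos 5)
Your proposal follows essentially the same route as the paper's proof: the performance difference lemma, the advantage-to-$Q^{\pi_\theta}$ rewriting, adding and subtracting the linear term $w^\top\nabla_\theta\pi_\theta$, bounding the residual by $\LossPolicyError(\theta;\rhoDiscounted^{\piBest})$, changing measure to $\muDiscounted^{\pi_\theta}$, and applying $\eps$-stationarity along the rescaled direction $\delta = w^*/(1\vee\norm{w^*})$, with the Radon--Nikodym chain rule and $\smallnorm{\mathrm{d}\mu/\mathrm{d}\muDiscounted^{\pi_\theta}}_\infty\leq 1/(1-\gamma)$ supplying the final bound. The sign subtlety you flag in the change-of-measure step is present in the paper's own proof and is treated there in the same implicit way, so your reconstruction matches the paper's argument.
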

%%%%%%%%%%%%%%%%%%%%%%
\begin{proof}
By the performance difference lemma (Lemma 6.1 of~\citet{KakadeLangfordCPI2002} or Lemma 3.2 of~\citealt{AgarwalKakadeLeeMahajar2019}) for any policy $\pi_\theta$ and the best policy in class $\piBest = \piBestrho$, we have that \todo{Maybe quote the performance difference lemma in an appendix?}
\begin{align*}
	J_\rho(\piBest) - J_\rho(\pi_\theta)
	& =
	\frac{1}{1 - \gamma}
	\EEX{X \sim \rhoDiscounted^\piBest}
		{\sum_{a \in \Actions} \piBest(a|X) A^{\pi_\theta}(X,a) }
	\\
	& \stackrel{(a)}{=}
	\frac{1}{1 - \gamma}
	\EEX{X \sim \rhoDiscounted^\piBest}
		{\sum_{a \in \Actions} \left( \piBest(a|X) - \pi_\theta(a|X) \right) A^{\pi_\theta}(X,a) }
	\\
	& =
	\frac{1}{1 - \gamma}
	\EEX{X \sim \rhoDiscounted^\piBest}
		{\sum_{a \in \Actions} \left( \piBest(a|X) - \pi_\theta(a|X) \right) \left(Q^{\pi_\theta}(X,a) - V^{\pi_\theta}(X) \right) }	\\
	& \stackrel{(b)}{=}
	\frac{1}{1 - \gamma}
	\EEX{X \sim \rhoDiscounted^\piBest}
		{\sum_{a \in \Actions} \left( \piBest(a|X) - \pi_\theta(a|X) \right) Q^{\pi_\theta}(X,a) },
\end{align*}
where (a) is because $\sum_{a} \pi_\theta(a|x) A^{\pi_\theta}(x,a) = \sum_{a} \pi_\theta(a|x) ( Q^{\pi_\theta}(x,a) - V^{\pi_\theta}(x) ) = 0$ by the definition of the state-value function,
and (b) is because $\sum_a ( \piBest(a|x) - \pi_\theta(a|x) ) V^{\pi_\theta}(x) = V^{\pi_\theta}(x) \sum_a \piBest(a|x) - \pi_\theta(a|x)   = V^{\pi_\theta}(x) (1 - 1)  = 0$.

Let $w \in \Real^d$ be an arbitrary vector. By adding and subtracting the scalar $w^\top \nabla_\theta \pi_\theta Q^{\pi_\theta}(X,a)$, we obtain
\begin{align*}
	J_\rho(\piBest) - J_\rho(\pi_\theta)
	= &
	\frac{1}{1 - \gamma}
	\EEX{X \sim \rhoDiscounted^\piBest}
		{\sum_{a \in \Actions} \left( \piBest(a|X) - \pi_\theta(a|X) - w^\top \nabla_\theta \pi_\theta \right) Q^{\pi_\theta}(X,a) }
	+
	\\
	&
	\frac{1}{1 - \gamma}
	\EEX{X \sim \rhoDiscounted^\piBest}
		{\sum_{a \in \Actions} w^\top \nabla_\theta \pi_\theta(a|X) Q^{\pi_\theta}(X,a) }
	\\
	\leq &
	\frac{1}{1 - \gamma} \LossPolicyError(\theta, w; \rhoDiscounted^\piBest)
	+
	w^\top 
	\frac{1}{1 - \gamma}
	\EEX{X \sim \rhoDiscounted^\piBest}
		{\sum_{a \in \Actions} \pi_\theta(a|X) \nabla_\theta \log \pi_\theta(a|X) Q^{\pi_\theta}(X,a) }.
\end{align*}
%Here we use $\sum_{a \in \Actions} \nabla_\theta \pi_\theta(a|x) Q^{\pi_\theta}(x,a) = \sum_{a \in \Actions} \pi_\theta(a|x) \nabla_\theta \log \pi_\theta (a|x) Q^{\pi_\theta}(x,a)$ and the definition of $\nabla_\theta J_{\rhoDiscounted^\piBest} (\pi_\theta)$.

We make two observations.
The first is that as this inequality holds for any $w$, it holds for $w^*(\theta)$ too, so we can substitute $\LossPolicyError(\theta, w; \rhoDiscounted^\piBest)$ with $\LossPolicyError(\theta; \rhoDiscounted^\piBest) = \LossPolicyError(\theta, w^*(\theta); \rhoDiscounted^\piBest)$.
The second is that the expectation 
$
	\EEX{X \sim \rhoDiscounted^\piBest}
		{\sum_{a \in \Actions} \pi_\theta(a|X) \nabla_\theta \log \pi_\theta(a|X) Q^{\pi_\theta}(X,a) }	
$
is of the same general form of a policy gradient $\nabla_\theta J(\pi_\theta)$, with the difference that the state distribution is w.r.t. the discounted future-state distribution of starting from $\rho$ and following the best policy in class $\piBest$, 
as opposed to the discounted future-state distribution of starting from $\mu$ and following policy $\pi_\theta$, cf.~\eqref{eq:PAML-J-and-Jhat}.
We use a change of measure argument, similar to~\eqref{eq:PAML-Change-of-Measure-Argument}, to convert the expectation to the desired form.
Based on these two observations, we obtain
\begin{align}
\label{eq:PAML-PolicyError-at-Stationarity-Theorem-Proof-PerfDiffUpperBound-1}
	J_\rho(\piBest) - J_\rho(\pi_\theta)
	\leq
	\frac{1}{1 - \gamma} \LossPolicyError(\theta; \rhoDiscounted^\piBest) +
	\norm{	\frac	{\mathrm{d}\rhoDiscounted^\piBest}
				{\mathrm{d} \muDiscounted^{\pi_\theta} }
		}_\infty
	w^*(\theta)^\top \nabla_\theta J_{\mu } (\pi_\theta).
\end{align}

We would like to use the $\eps$-stationary of the policy in order to upper bound the right-hand side (RHS).
Define $\delta = \frac{w^*(\theta)}{1 \vee \norm{w^*(\theta)} }$. It is clear that $\norm{\delta} \leq 1$.
As both $\theta$ and $\theta + w^*(\theta)$ belong to the set $\Theta$ and $\Theta$ is convex, the line segment connecting them is within $\Theta$ too. The point $\theta + \delta$ is on that line segment, so it is within $\Theta$.
By the $\eps$-stationarity, we obtain that
\[
	w^*(\theta) \nabla_\theta J_\mu(\pi_\theta)
	=
	(1 \vee \norm{w^*(\theta)} ) \delta^\top \nabla_\theta J_\mu(\pi_\theta)
	\leq
	(1 \vee \norm{w^*(\theta)} ) \eps.
\]
We plug-in this result in~\eqref{eq:PAML-PolicyError-at-Stationarity-Theorem-Proof-PerfDiffUpperBound-1} to get
\begin{align*}
	J_\rho(\piBest) - J_\rho(\pi_\theta)
	& \leq
	\frac{1}{1 - \gamma} \LossPolicyError(\theta; \rhoDiscounted^\piBest) +
	\norm{	\frac	{\mathrm{d}\rhoDiscounted^\piBest}
				{\mathrm{d} \muDiscounted^{\pi_\theta} }
		}_\infty
	(1 \vee \norm{w^*(\theta)} ) \eps
	\\
	&
	\leq
	\frac{1}{1 - \gamma}
	\left[
		\LossPolicyError(\theta; \rhoDiscounted^\piBest) +
		\norm{	\frac	{\mathrm{d}\rhoDiscounted^\piBest}
					{\mathrm{d} \mu }
			}_\infty
	(1 \vee \norm{w^*(\theta)} ) \eps			
	\right].
\end{align*}
The second inequality is because of the property of the Radon-Nikodym derivative that states that
if $\rhoDiscounted^\piBest \ll \mu \ll \muDiscounted^{\pi_\theta}$, we have
\[
	\frac{\mathrm{d} \rhoDiscounted^\piBest }{\mathrm{d}  \muDiscounted^{\pi_\theta} }
	= 
	\frac{\mathrm{d} \rhoDiscounted^\piBest }{\dmu }
	\frac{\dmu }{\mathrm{d}  \muDiscounted^{\pi_\theta} },
\]
and the fact that 
$\smallnorm{\frac{\dmu }{\mathrm{d}  \muDiscounted^{\pi_\theta} } }_\infty \leq \frac{1}{1 - \gamma}$.
To see the truth of the latter claim, consider any measurable set $\XX_0 \subset \XX$ and any policy $\pi$.
The probability of $\XX_0$ according to $\muDiscounted^\pi$ is greater or equal to $(1-\gamma)$ times of its probability according to $\mu$, that is,  
$\muDiscounted^\pi(\XX_0) = (1-\gamma) [ \mu(\XX_0) + (\mu \PKernelpi)(\XX_0) + (\mu (\PKernelpi)^2 (\XX_0) + \cdots] \geq (1 - \gamma)\mu(\XX_0)$.

To verify the conditions $\rhoDiscounted^\piBest \ll \mu \ll \muDiscounted^{\pi_\theta}$, notice that 
the condition
$\rhoDiscounted^\piBest \ll \mu$ is satisfied by assumption;
the condition 
$\mu \ll \muDiscounted^{\pi_\theta}$ is satisfied as we just show that $\muDiscounted^\pi(\XX_0) \geq (1 - \gamma)\mu(\XX_0)$ for any policy;
and 
if $\rhoDiscounted^\piBest \ll \mu$ is satisfied, $\rhoDiscounted^\piBest \ll \muDiscounted^{\pi_\theta}$ is satisfied too.
\end{proof}

%%%%%%%%%%%%%%%%%%%%%%%%%%%%%%%%%%%%%%%%%%%%%%%
%%%%%%%%%%%%%%%%%%%%%%%%%%%%%%%%%%%%%%%%%%%%%%%
%%%%%%%%%%%%%%%%%%%%%%%%%%%%%%%%%%%%%%%%%%%%%%%

This result is similar to Theorem 6.11 of~\citet{AgarwalKakadeLeeMahajar2019} with one small, but perhaps important difference.
The difference is in the policy approximation error term. Instead of $\LossPolicyError(\theta, w; \nu)$, they have a term called \emph{Bellman Policy Error}, which is defined as
\begin{align*}
	L_\text{BPE}(\theta, w; \nu) \eqdef 
	\EEX{X \sim \nu}
		{
			\sum_{a \in \Actions} 
			\left|
			\argmax_{a \in \AA} Q^{\pi_\theta}(X,a) - \pi_\theta(a|X) - w^\top \nabla_\theta \pi_\theta(a|X)
			\right| 
		}.
\end{align*}
The minimizer of this function over $w$, that is $L_\text{BPE}(\theta, \rhoDiscounted^\piBest) = \min_{w + \theta \in \Theta} L_\text{BPE}(\theta, w; \rhoDiscounted^\piBest)$, appears instead of $\LossPolicyError(\theta; \rhoDiscounted^\piBest)$ in the upper bound of Theorem~\ref{thm:PAML-PolicyError-at-Stationarity}.
The Bellman Policy Error measures the error in approximating the 1-step greedy policy improvement relative to a policy in the class.

Both BPE and PAE are equal to zero for a finite state and action space with a direct parametrization of the policy, i.e., $\pi_\theta(a|x) = \theta_{x,a}$ for $\theta \in \Real^{\XA}$ with appropriate constraints on $\theta$ to make $\pi_\theta(\cdot|x)$ a valid probability distribution. So both definitions pass the sanity check that they are not showing a non-zero value for policy approximation error when it should be zero (for a particular class of MDPs).
To see this concretely, note that for the direct parametrization, $\frac{\partial \pi_\theta(a|x)}{\partial \theta_{x',a'}} = 1$ when $(x,a) = (x',a')$, and $0$ otherwise. So if we choose ${w^*_\text{BPE}}(x,a) = \argmax_{a' \in \AA} Q^{\pi_\theta}(x,a) - \pi_\theta(a|x)$, the BPE loss $L_\text{BPE}(\theta, w^*_\text{BPE}; \nu) = 0$ (Section 6.2 of~\citet{AgarwalKakadeLeeMahajar2019}).
Likewise, if we choose 
${w^*_\text{PAE}}(x,a) = \piBest(a|x) - \pi_\theta(a|x)$, we get that the PAE loss
$L_\text{PAE}(\theta, w^*_\text{PAE}; \nu) = 0$.
\todo{This is a new paragraph. Come back to this and re-read. -AMF}

It is curious to know which of $L_\text{BPE}$ and $\LossPolicyError$ is a better characterizer of the policy approximation error. We do not have a definite answer to this question so far, as the properties of neither of them are well-understood yet, but we make two observations that show that $\LossPolicyError$ is better (smaller) at least in some circumstances.

The first observation is that $L_\text{BPE}$ ignores the value function $\Qpi$ and its interaction with the policy error, whereas 
$\LossPolicyError$ does not. 
As an example, if the reward function is constant everywhere, the action-value function  $\Qpi$ for any policy would be constant too. In this case, $\LossPolicyError(\theta;\nu)$ is zero (simply choose $w = 0$ as the minimizer), but $L_\text{BPE}$ may not be.

Weighting the error in policies with a value function is reminiscent of the loss function appearing in some classification-based approximate policy iteration methods such as the work by
\citet{LazaricGhavamzadehMunosDPI2010,FarahmandCAPI2015,LazaricGhavamzadehMunos2016} (and different from
the original formulation by \citet{LagoudakisParrICML2003} and more recent instantiation by \citet{SilverSchrittwieserSimonyanetal2017} whose policy loss does not incorporate the value functions), 
Policy Search by Dynamic Programming \citep{BagnellKakadeNgSchneider2003}, and Conservative Policy Iteration~\citep{KakadeLangfordCPI2002}.
\todo{Any paper that is missed? -AMF}

The other observation is that if the policy is parameterized such that there is only one policy in the policy class  $\Pi = \cset{\pi_\theta}{\theta \in \Theta}$ (but still $\Theta$ is a subset of $\Real^d$, so we can define PG), any policy $\pi_\theta \in \Pi$ is the same as the best policy $\piBest$, i.e., $\pi_\theta = \piBest$.
In that case,
\[
	\LossPolicyError(\theta, 0;\nu) = \EEX{X \sim \nu}
		{\left|
			\sum_{a \in \Actions} \left( \piBest(a|X) - \piBest(a|X) - 0^\top \nabla_\theta \pi_\theta(a|X) \right) Q^{\pi_\theta}(X,a) \right|} = 0.
\]
On the other hand, it may not be possible to make 
\[
L_\text{BPE}(\theta, w; \nu) = 
	\EEX{X \sim \nu}
		{
			\sum_{a \in \Actions} 
			\left|
			\argmax_{a \in \AA} Q^{\piBest}(X,a) - \piBest(a|X) - w^\top \nabla_\theta \pi_\theta(a|X)
			\right|
		}
\]
equal to zero for any choice of $w$, as it requires the policy space to approximate the greedy policy, which is possibly outside the policy space.
We leave further study of these two policy approximation errors to a future work.

%%%%%%%%%%%%%%%%%%%%%%%%%%%%%%%%%%%%%%%%%%%%%%%
%%%%%%%%%%%%%%%%%%%%%%%%%%%%%%%%%%%%%%%%%%%%%%%
%%%%%%%%%%%%%%%%%%%%%%%%%%%%%%%%%%%%%%%%%%%%%%%

To provide a convergence rate, we require some extra assumptions on the smoothness of the policy.

\begin{assumption}[(Assumption 6.12 of~\citet{AgarwalKakadeLeeMahajar2019})]
\label{asm:PolicySmoothness}
Assume that there exist finite constants $\beta_1, \beta_2 \geq 0$ such that for all $\theta_1, \theta_2 \in \Theta$, and for all $(x,a) \in \XA$, we have
\begin{align*}
	& | \pi_{\theta_1}(a|x) - \pi_{\theta_2}(a|x) | \leq \beta_1 \norm{\theta_1 - \theta_2}_2,
	\\
	&
	\norm{ \nabla_\theta \pi_{\theta_1}(a|x) - \nabla_\theta \pi_{\theta_1}(a|x)  }_2
	\leq \beta_2 \norm{\theta_1 - \theta_2}_2.
\end{align*}
\end{assumption}
%%%%%%%%%%%%%%%%%%%%%%

As an example, this assumption holds for the exponential family~\eqref{eq:PAML-PolicyParametrization}
with bounded features $\norm{\phi(a|x)}_2 \leq B$.
In that case,  $\beta_1 = 2B$ and $\beta_2 = 6B^2$ (Lemma~\ref{lem:PAML-ExponentialFamilyPolicy-Smoothness} in Appendix~\ref{sec:PAML-Theory-Auxiliary}).

% \citet{AgarwalKakadeLeeMahajar2019} provide several examples of policy spaces that satisfy this assumptions, e.g., linear policy. \todo{What else do we have? Boltzmann?}

\citet{AgarwalKakadeLeeMahajar2019} assume that the reward function is in $[0,1]$. Here we consider the reward to be $\Rmax$-bounded, which leads to having the value function being $\Qmax$-bounded with $\Qmax = \frac{\Rmax}{1-\gamma}$.
Some results should be slightly modified (particularly, Lemma E.2 and E.5 of that paper). We report the modifications in Appendix~\ref{sec:PAML-Theory-Auxiliary}. Here we just mention that the difference is that the upper bounds in those result should be multiplied by $(1 -  \gamma) \Qmax$.

%%%%%%%%%%%%%%%%%%%%%%%%%%%%%%%%%%%%%%%%%%%%%%%
\subsection{Exact Critic}
\label{sec:PAML-Theory-PG-Convergence-Exact-Critic}
We are ready to analyze the convergence behaviour of a model-based PG algorithm with exact critic.
We consider a projected PG algorithm that uses the model $\PKernelhat^{\pi_{\theta_k}}$ to compute the gradient, i.e., 
\begin{align}
\label{eq:PAML-ProjectedPGUpdate-WithModel}
	\theta_{t+1} \leftarrow \ProjTheta \left [ \theta_t + \eta \nabla_\theta \hat{J}_\mu (\theta_k) \right ],
\end{align}
with a learning rate $\eta > 0$ to be specified.
The following theorem is the main result of this section.

%\ifSupp
%\else
%In this short version of the paper, we report a result that is specialized to the exponential family parametrization of the policy, i.e.,~we suppose that the policy $\pi_\theta: \XX \ra \bar{\MM}(\AA)$ is from the exponential family with features $\phi = \phi(a|x): \XA \ra \Real^d$ and parameterized by $\theta \in \Theta \subset \Real^d$, and has the density 
%%
%\begin{align}
%\label{eq:PAML-PolicyParametrization}
%	\pi_\theta(a | x) = 
%		\frac	{\exp \left( \phi^\top(a|x) \theta \right)}
%			{\int \exp \left( \phi^\top(a'|x) \theta \right) \da'}.
%\end{align}
%%
%
%%More general result is available in the full version of the paper (available in the same section of the supplementary material).
%\fi

%%%%%%%%%%%%%%%%%%%%%%%%%%%%%%%%%%%%%%%%%%%%%%%
\begin{theorem}
\label{thm:PAML-MBPG-Convergence}
Consider any initial distributions $\rho, \mu \in \bar{\MM}(\XX)$ and a policy space $\Pi$ parameterized by $\theta \in \Theta$ with $\Theta$ being a convex subset of $\Real^d$.
Assume that all policies $\pi_\theta \in \Pi$ satisfy Assumption~\ref{asm:PolicySmoothness}.
Furthermore, suppose that the value function is bounded by $\Qmax$, the MDP has a finite number of actions $\actionnum$, and $0 \leq \gamma < 1$.
Let
\begin{align}
\label{eq:PAML-MBPG-Convergence-Theorem-beta}
	\beta =
	\Qmax
	\left[
		\frac{2 \gamma \beta_1^2 \actionnum^2 }{(1-\gamma)^2} + 
		\frac{\beta_2 \actionnum }{1 - \gamma}
	\right].
\end{align}
Let $T$ be an integer number. Starting from a $\pi_{\theta_0} \in \Pi$, consider the sequence of policies $\pi_{\theta_1}, \dotsc, \pi_{\theta_T}$ generated by the projected model-based PG algorithm~\eqref{eq:PAML-ProjectedPGUpdate-WithModel} with step-size $\eta = \frac{1}{\beta}$.
Let $W = \sup_{\theta \in \Theta} \norm{w^*(\theta;\rhoDiscounted^\piBest)}_2$, and assume that $W < \infty$.
Assume that for any policy $\pi_\theta \in \{ \pi_{\theta_0}, \dotsc, \pi_{\theta_{T-1}} \}$, there exist constants 
$\eps_{\text{PAE}}$ and $\eps_{\text{model}}$ 
such that
\begin{align*}
	& \LossPolicyError(\theta; \rhoDiscounted^\piBest) \leq \eps_{\text{PAE}},
		& \text{(policy approximation error)}
	\\
	&
	\norm{ \nabla_\theta J_\mu(\pi_\theta) - \nabla_\theta \hat{J}_\mu(\pi_\theta) }_2 \leq \eps_{\text{model}}.
		& \text{(model error)}
\end{align*}
We then have
\begin{align*}
	\EEX{t \sim \text{Unif}(1,\dotsc, T)}{
	J_\rho(\piBest) - J_\rho(\pi_{\theta_t})}
%	\min_{t = 0, \dotsc, T}
%	J_\rho(\piBest) - J_\rho(\pi_{\theta_t})
	\leq
	\frac{1}{1 - \gamma}
	\left[
		\eps_{\text{PAE}} +
%		\sup_{\theta \in \Theta} \LossPolicyError(\theta; \rhoDiscounted^\piBest) +
		\norm{	\frac	{\mathrm{d}\rhoDiscounted^\piBest}
					{\mathrm{d} \mu }
			}_\infty
	(1 \vee W )
	\left(
		4 \sqrt{ \frac{\Qmax \beta}{T} } +
		\eps_\text{model}
	\right)
	\right].	
\end{align*}
\end{theorem}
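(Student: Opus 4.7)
The strategy is to combine three ingredients: (i) Theorem~\ref{thm:PAML-PolicyError-at-Stationarity}, which converts $\eps$-stationarity of $\pi_\theta$ w.r.t.\ $J_\mu$ into a bound on $J_\rho(\piBest)-J_\rho(\pi_\theta)$; (ii) a smoothness bound on the map $\theta\mapsto J_\mu(\pi_\theta)$; and (iii) a standard projected-gradient analysis applied to the biased gradients $\nabla_\theta \hat J_\mu(\pi_\theta)$, which gives near-stationarity of the iterates w.r.t.\ $\hat J_\mu$ on average, from which near-stationarity w.r.t.\ $J_\mu$ follows by the model-error hypothesis.

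First, I would establish that $\theta\mapsto J_\mu(\pi_\theta)$ is $\beta$-smooth with $\beta$ as in~\eqref{eq:PAML-MBPG-Convergence-Theorem-beta}. This is the analogue of Lemma E.2/E.5 of \citet{AgarwalKakadeLeeMahajar2019} with rewards rescaled by $(1-\gamma)\Qmax$, under Assumption~\ref{asm:PolicySmoothness}, the finiteness of $\actionnum$, and boundedness of $Q^{\pi_\theta}$; the two terms in $\beta$ come respectively from differentiating the discounted state distribution (yielding the $\beta_1^2/(1-\gamma)^2$ factor) and from the second-derivative of $\pi_\theta$ (the $\beta_2/(1-\gamma)$ factor). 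I will invoke this as a preparatory lemma.

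Next, I would analyze the projected ascent iteration~\eqref{eq:PAML-ProjectedPGUpdate-WithModel} using the $\beta$-smoothness of $J_\mu$. Define the gradient mapping $G_t\eqdef \beta\bigl(\theta_{t+1}-\theta_t\bigr)$. The model error gives $\nabla_\theta \hat J_\mu(\pi_{\theta_t}) = \nabla_\theta J_\mu(\pi_{\theta_t})+e_t$ with $\|e_t\|_2\le \eps_{\text{model}}$. A standard descent-lemma calculation (the inexact-gradient analogue of Theorem 10.15 of \citet{Beck2017}, cited in the appendix as Lemma~\ref{lem:PAML-Beck-Expectation}) then gives, for $\eta=1/\beta$,
\begin{align*}
J_\mu(\pi_{\theta_{t+1}})-J_\mu(\pi_{\theta_t}) \;\ge\; \tfrac{1}{2\beta}\|G_t\|_2^2 - \eps_{\text{model}}\|G_t\|_2/\beta.
\end{align*}
Summing over $t=0,\dots,T-1$, using the trivial upper bound $J_\mu(\pi_\theta)\le \Qmax$, and applying the Cauchy--Schwarz/AM--GM inequality to decouple the cross term would yield
\begin{align*}
\EEX{t\sim \text{Unif}}{\|G_t\|_2^2} \;\le\; \tfrac{2\beta \Qmax}{T}+O(\eps_{\text{model}}^2),
\end{align*}
hence $\EEX{t}{\|G_t\|_2}\le 2\sqrt{\beta\Qmax/T}+O(\eps_{\text{model}})$ by Jensen. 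The standard projection inequality then shows that for any admissible $\delta$ with $\|\delta\|_2\le 1$, $\delta^\top \nabla_\theta \hat J_\mu(\pi_{\theta_t})\le \|G_t\|_2$, so $\pi_{\theta_t}$ is $\|G_t\|_2$-stationary w.r.t.\ $\hat J_\mu$; adding the model error gives $\eps_t$-stationarity w.r.t.\ $J_\mu$ with $\eps_t \le \|G_t\|_2+\eps_{\text{model}}$. I would tune the constants so that the average of $\eps_t$ is at most $4\sqrt{\Qmax\beta/T}+\eps_{\text{model}}$.

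Finally, I would apply Theorem~\ref{thm:PAML-PolicyError-at-Stationarity} at each iterate $\pi_{\theta_t}$ with its own $\eps_t$, upper bound $(1\vee\|w^*(\theta_t;\rhoDiscounted^\piBest)\|)\le 1\vee W$, and take expectation over $t\sim\text{Unif}(1,\dots,T)$; using $\EEX{t}{\eps_t}\le 4\sqrt{\Qmax\beta/T}+\eps_{\text{model}}$ together with the assumed PAE bound $\LossPolicyError(\theta_t;\rhoDiscounted^\piBest)\le \eps_{\text{PAE}}$ yields the claimed inequality. The main technical obstacle is the inexact projected-gradient analysis: carefully handling the bias $e_t$ so that the cross term produces only an additive $\eps_{\text{model}}$ (not $\eps_{\text{model}}\sqrt T$) in the final bound, and cleanly translating bounds on $\|G_t\|_2$ into $\eps_t$-stationarity under the constraint $\theta+\delta\in\Theta$, rather than under the unconstrained projection.
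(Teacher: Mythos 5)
Your overall architecture is correct and matches the paper's: smoothness of the performance, a projected-gradient/gradient-mapping analysis giving average near-stationarity, then Theorem~\ref{thm:PAML-PolicyError-at-Stationarity} applied at each iterate with $1\vee\norm{w^*(\theta_t)}\le 1\vee W$. The one genuine divergence is where the model error enters the optimization analysis. You treat the iteration as \emph{biased} gradient ascent on the true objective $J_\mu$, writing $\nabla_\theta \hat J_\mu = \nabla_\theta J_\mu + e_t$ and absorbing the cross term $\eps_{\text{model}}\norm{G_t}_2/\beta$ via AM--GM; the paper instead treats the iteration as \emph{exact} projected gradient ascent on the surrogate $\hat J_\mu$ (which is $\beta$-smooth by Lemma~\ref{lem:PAML-PolicySmoothness} applied to the model MDP), obtains $\EEX{t}{\norm{G^\eta(\theta_t)}_2}\le\sqrt{4\Qmax\beta/T}$ directly from Lemma~\ref{lem:PAML-Beck-Expectation}, and only introduces $\eps_{\text{model}}$ afterwards through the elementary triangle inequality $\max_{\norm{\delta}\le 1}|\delta^\top\nabla_\theta J_\mu|\le \max_{\norm{\delta}\le 1}|\delta^\top\nabla_\theta \hat J_\mu| + \norm{\nabla_\theta J_\mu-\nabla_\theta\hat J_\mu}_2$. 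The paper's route is cleaner: it avoids the AM--GM step entirely, the model error appears exactly once with constant $1$, and no smoothness of $J_\mu$ itself is needed. Your route works too (the cross term does only contribute an additive $O(\eps_{\text{model}})$, as you argue), but will give a slightly larger constant on $\eps_{\text{model}}$. One point to fix: your claim that the projection inequality gives $\delta^\top\nabla_\theta\hat J_\mu(\pi_{\theta_t})\le\norm{G_t}_2$ is not the right form in the constrained case. The correct statement (Proposition~\ref{prop:PAML-SmallGradientMapping-to-epsStationarity}, i.e.\ Proposition D.1 of \citet{AgarwalKakadeLeeMahajar2019}) bounds the stationarity of the \emph{next} iterate: $\max_{\theta+\delta\in\Theta,\norm{\delta}_2\le1}\delta^\top\nabla_\theta\hat J_\mu(\pi_{\theta_{t+1}})\le(\eta\beta+1)\norm{G^\eta(\theta_t)}_2=2\norm{G^\eta(\theta_t)}_2$, which is why the final expectation runs over $t\in\{1,\dotsc,T\}$ rather than $\{0,\dotsc,T-1\}$ and where the factor $4$ in $4\sqrt{\Qmax\beta/T}$ comes from. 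You flag this as the main obstacle; the resolution is simply to use that proposition rather than attempting stationarity at $\theta_t$ itself.
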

%%%%%%%%%%%%%%%%%%%%%%
\begin{proof}
By Lemma~\ref{lem:PAML-PolicySmoothness} in Appendix~\ref{sec:PAML-Theory-Auxiliary}, $V^{\pi_\theta}$ is $\beta$-smooth for all states $x$ with $\beta$ specified in~\eqref{eq:PAML-MBPG-Convergence-Theorem-beta}.
Hence $\hat{J}_\mu(\pi_\theta)$ is also $\beta$-smooth.

Let the gradient mapping for $\theta$ be defined as
\[
	G^\eta(\theta) = \frac{1}{\eta} \left( \ProjTheta \left [ \theta + \eta \nabla_\theta \hat{J}_\mu \right ] - \theta \right).
\]
For a projected gradient ascent on a $\beta$-smooth function over a convex set with a step-size of $\eta = \frac{1}{\beta}$, 
Lemma~\ref{lem:PAML-Beck-Expectation}, which is a slight modification of Theorem 10.15 of~\citet{Beck2017}, shows that
\begin{align}
\label{eq:PAML-MBPGConvergence-Theorem-Proof-GradientMappingUpperBound}
	\EEX{t \sim \text{Unif}(0, \dotsc, T-1)}{
	\norm{G^\eta(\theta_t)}_2}
%	\min_{t=0, \dotsc, T-1}
%	\norm{G^\eta(\theta_t)}_2
	\leq
	\sqrt{
		\frac	{2 \beta \left( \max_{\pi \in \Pi} \hat{J}_\mu(\pi) - \hat{J}_\mu(\pi_{\theta_0}) \right) }
			{T}
	}
	\leq
	\sqrt{
		\frac	{4 \Qmax \beta  }
		{T}.
	}
\end{align}

By Proposition~\ref{prop:PAML-SmallGradientMapping-to-epsStationarity} in Appendix~\ref{sec:PAML-Theory-Auxiliary} (originally Proposition D.1 of~\citealt{AgarwalKakadeLeeMahajar2019}), if we let $\theta' = \theta + \eta G^\eta$, we have that
\begin{align*}
	\max_{\theta + \delta \in \Theta, \norm{\delta}_2 \leq 1}
	\delta^\top \nabla_\theta \hat{J}_\mu(\pi_{\theta'})
	\leq
	(\eta \beta + 1) \norm{G^\eta(\theta)}_2.
\end{align*}
This upper bound along~\eqref{eq:PAML-MBPGConvergence-Theorem-Proof-GradientMappingUpperBound} and $\beta \eta + 1 = 2$ show that the sequence $\theta_0, \theta_1, \dotsc, \theta_T$ generated by~\eqref{eq:PAML-ProjectedPGUpdate-WithModel} satisfies
\begin{align}
\label{eq:PAML-MBPGConvergence-Theorem-Proof-epsStationarity-forModel-UpperBound}
%	\min_{t=0, \dotsc, T}
	\frac{1}{T} \sum_{t = 1}^T
	\max_{\theta_t + \delta \in \Theta, \norm{\delta}_2 \leq 1}
	\delta^\top \nabla_\theta \hat{J}_\mu(\pi_{\theta_t})
	\leq
	(\eta \beta + 1)
%	\min_{t = 0, \dotsc, T-1} 
	\frac{1}{T}
	\sum_{t=0}^{T-1}
	\norm{G^\eta(\theta_t)}_2
	\leq
	4 \sqrt{ \frac{\Qmax \beta}{T} }.
\end{align}

%XXX
%By Proposition~\ref{prop:PAML-SmallGradientMapping-to-epsStationarity} in Appendix~\ref{sec:PAML-Theory-Auxiliary} (originally Proposition D.1 of~\citealt{AgarwalKakadeLeeMahajar2019}) if we let $\theta' = \theta + \eta G^\eta$, we have that
%\begin{align*}
%	\max_{\theta + \delta \in \Theta, \norm{\delta}_2 \leq 1}
%	\delta^\top \nabla_\theta J_\mu(\pi_{\theta'}, \hat{\PKernel}^{\pi_{\theta'}}, \Qhat^{\pi_{\theta'} } )
%	\leq
%	(\eta \beta + 1) \norm{G^\eta(\theta)}_2.
%\end{align*}
%This upper bound along~\eqref{eq:PAML-MBPGConvergence-Theorem-Proof-GradientMappingUpperBound} and $\beta \eta + 1 = 2$ show that the sequence $\theta_0, \theta_1, \dotsc, \theta_T$ generated by~\eqref{eq:PAML-ProjectedPGUpdate-WithModel} satisfies
%%
%\begin{align}
%\label{eq:PAML-MBPGConvergence-Theorem-Proof-epsStationarity-forModel-UpperBound}
%	\EEX{t}{
%	\max_{\theta_t + \delta \in \Theta, \norm{\delta}_2 \leq 1}
%	\delta^\top \nabla_\theta J_\mu(\pi_{\theta'}, \hat{\PKernel}^{\pi_{\theta'}}, \Qhatpi)}
%	\leq
%	(\eta \beta + 1) \EEX{t}{ \norm{G^\eta(\theta_t)}_2}
%	\leq
%	4 \sqrt{ \frac{\Qmax \beta}{T} }.
%\end{align}
%XXX

Note that this is a guarantee on $\delta^\top \nabla_\theta \hat{J}_\mu(\pi_{\theta_t})$, the inner product of a direction $\delta$ with the PG according to $\PKernelhat^{\pi_{\theta_t}}$, and not on
$\delta^\top \nabla_\theta J_\mu(\pi_{\theta_t})$, which has the PG according to $\PKernel^{\pi_{\theta_t}}$ and is what we need in order to compare the performance.
We can relate them, however.
For any $\theta$, including $\theta_0, \theta_1, \dotsc, \theta_{T}$, we have
\begin{align}
\label{eq:PAML-MBPGConvergence-Theorem-Proof-ModelBasedDirectionalGradient-To-ModelFree}
\nonumber
	\max_{ \norm{\delta}_2 \leq 1}
	\left|
		\delta^\top \nabla_\theta J_\mu(\pi_\theta)
	\right|
	& =
	\max_{ \norm{\delta}_2 \leq 1}
	\left|
		\delta^\top \left( \nabla_\theta J_\mu(\pi_\theta) 
		- \nabla_\theta \hat{J}_\mu(\pi_\theta) + \nabla_\theta \hat{J}_\mu(\pi_\theta) \right)
	\right|
	\\
\nonumber	
	& \leq
	\max_{ \norm{\delta}_2 \leq 1}
	\left|
		\delta^\top \nabla_\theta \hat{J}_\mu(\pi_\theta)
	\right|
	+
	\max_{ \norm{\delta}_2 \leq 1}
	\left|
		\delta^\top \left( \nabla_\theta J_\mu(\pi_\theta) 
		- \nabla_\theta \hat{J}_\mu(\pi_\theta) \right)
	\right|
	\\
	& =
	\max_{ \norm{\delta}_2 \leq 1}
	\left|
		\delta^\top \nabla_\theta \hat{J}_\mu(\pi_\theta)
	\right|
	+
	\norm{ \nabla_\theta J_\mu(\pi_\theta) - \nabla_\theta \hat{J}_\mu(\pi_\theta) }_2.
\end{align}

This inequality together with~\eqref{eq:PAML-MBPGConvergence-Theorem-Proof-epsStationarity-forModel-UpperBound} and the assumption on the model error provide an upper bound on the average of $\eps$-stationarities:
\begin{align}
\label{eq:PAML-MBPGConvergence-Theorem-Proof-AverageStationarity}
	\frac{1}{T} \sum_{t=1}^T
	\max_{ \norm{\delta}_2 \leq 1}
		\left|
			\delta^\top \nabla_\theta J_\mu(\pi_{\theta_t})
		\right|
		\leq \eps_\text{model} + 
		4 \sqrt{ \frac{\Qmax \beta}{T} }
%	\frac{1}{T} \sum_{t=1}^T 
\end{align}

%\[
%XXX REMOVE ? XXX
%\min_{t=0, \dotsc, T}
%\max_{ \norm{\delta}_2 \leq 1}
%	\left|
%		\delta^\top \nabla_\theta J_\mu(\pi_{\theta_t})
%	\right|.
%\]
%

We can now evoke Theorem~\ref{thm:PAML-PolicyError-at-Stationarity} for each $\pi_{\theta_t}$ and take a summation over both sides of the inequality.
Suppose that $\pi_t$ is $\eps_t$-stationary, i.e., $\delta^\top \nabla_\theta J_\mu(\pi_{\theta_t}) \leq \eps_t$  for any valid $\delta$ (cf.~\eqref{eq:PAML-Stationarity}).
Also recall that $W = \sup_{\theta \in \Theta} \norm{w^*(\theta;\rhoDiscounted^\piBest)}_2$.
So we get
%
%
%
%XXX By the assumption that $\smallnorm{ \nabla_\theta J_\mu(\pi_{\theta_t}) - \nabla_\theta \hat{J}_\mu(\pi_{\theta_t}) }_2 \leq \eps_{\text{model}}$ and $W = \sup_{\theta \in \Theta} \norm{w^*(\theta;\rhoDiscounted^\piBest)}_2$, we get that
%%
%%
%
\begin{align*}
	\frac{1}{T} \sum_{t=1}^T J_\rho(\piBest) - J_\rho(\pi_{\theta_t} )
	& \leq
	\frac{1}{1 - \gamma}
	\frac{1}{T} \sum_{t=1}^T
	\left[
		\LossPolicyError(\theta_t; \rhoDiscounted^\piBest) +
		\norm{	\frac	{\mathrm{d}\rhoDiscounted^\piBest}
					{\mathrm{d} \mu }
			}_\infty
	(1 \vee \norm{w^*(\theta_t;\rhoDiscounted^\piBest)} ) \eps_t
	\right]
	\\
	& \leq
	\frac{1}{1 - \gamma}
	\left[
		\eps_{\text{PAE}} +
%		\sup_{\theta \in \Theta} \LossPolicyError(\theta; \rhoDiscounted^\piBest) +
		\norm{	\frac	{\mathrm{d}\rhoDiscounted^\piBest}
					{\mathrm{d} \mu }
			}_\infty
	(1 \vee W )
	\frac{1}{T} \sum_{t=1}^T \eps_t
	\right],
	\\
	& \leq
	\frac{1}{1 - \gamma}
	\left[
		\eps_{\text{PAE}} +
%		\sup_{\theta \in \Theta} \LossPolicyError(\theta; \rhoDiscounted^\piBest) +
		\norm{	\frac	{\mathrm{d}\rhoDiscounted^\piBest}
					{\mathrm{d} \mu }
			}_\infty
	(1 \vee W )
	\left(
		4 \sqrt{ \frac{\Qmax \beta}{T} } +
		\eps_\text{model}
	\right)
	\right],
\end{align*}
where we used~\eqref{eq:PAML-MBPGConvergence-Theorem-Proof-AverageStationarity} in the last inequality.
This is the desired result.
\end{proof}

This result shows the effect of the policy approximation error $\eps_\text{PAE}$, the model error $\eps_\text{model}$, and the number of iterations $T$.
We observe that the error due to optimization decreases as $O(\frac{1}{\sqrt{T}})$.

The policy approximation error is similar to the function approximation term (or bias) in supervised learning, and depends on how expressive the policy space is. This term may not go to zero, which means that the projected PG method may not find the best policy in the class, even if $T \ra \infty$. This means that the convergence would not be to the global optimum within the policy class.
What we know about the properties of Policy Approximation Error of this work or Bellman Policy Error of~\citet{AgarwalKakadeLeeMahajar2019} are rather limited at the moment, and studying them is an interesting future research direction.
What we know so far, however, is that for finite state and action spaces with direct parameterization of the policy, both PAE and BPE are zero, as expected. This is discussed after Theorem~\ref{thm:PAML-PolicyError-at-Stationarity}.
We also know that there are certain situations where PAE is zero, but PBE is not, suggesting that PAE might be a better way to quantify the policy approximation error.\footnote{Though it might be possible to find examples where PBE is zero, but PAE is not; we are not aware of such an example.}

As mentioned after Theorem~\ref{thm:PAML-PolicyError-at-Stationarity}, 
The model error $\eps_\text{model}$ captures how well one can replace the PG computed according to the true dynamics $\PKernelpi$ with the learned dynamics $\PKernelpihat$, i.e., 
\[
	\norm{ \nabla_\theta J_\mu(\pi_\theta) - \nabla_\theta \hat{J}_\mu(\pi_\theta) }_2 \leq \eps_\text{model}.
\]
This is the error in the PG estimation between following the true model and the estimated model.
If this error is small, the effect of using the model on the policies obtained from this MBPG procedure is small too.
This norm is exactly what PAML tries to minimize (through its empirical version).
Similar to the discussion after Theorem~\ref{thm:PAML-PolicyGradientError}, this suggests that PAML's objective is more relevant for having a good MBPG method than a conventional model learning method that is based on MLE or similar criteria.
The magnitude of this error depends on how expressive the model class is, the number of samples used in minimizing the loss, etc.

The distribution mismatch between the discounted future-state distribution of following $\piBest$ with an initial state distribution of $\rho$ and the initial distribution $\mu$, used for the computation of PG shows itself in 
the Radon-Nikodym derivative 
$\smallnorm{	\frac	{\mathrm{d}\rhoDiscounted^\piBest}
					{\mathrm{d} \mu }
			}_\infty$.
\todo{Maybe more discussion for this? -AMF}

\todo{minor: this paragraph seems redundant after the discussion above -RA}
This result can be compared to Corollary 6.14 of~\citet{AgarwalKakadeLeeMahajar2019}, whose proof we followed closely.
There are several differences that are noteworthy.
The major difference is that this result provides a guarantee for MBPG, whereas \citeauthor{AgarwalKakadeLeeMahajar2019}'s result is for model-free PG.
The other difference is that we have the policy approximation error $\eps_{\text{PAE}}$, instead of the Bellman Policy Error of~\citet{AgarwalKakadeLeeMahajar2019}. This is due to using Theorem~\ref{thm:PAML-PolicyError-at-Stationarity}, which we already have discussed.
The other difference is that the guarantee of this theorem is for the average over iterations of the performance loss 
$\EEX{t \sim \text{Unif}(1,\dotsc, T)}{J_\rho(\piBest) - J_\rho(\pi_{\theta_t})}$ instead of the minimum over iterations of the performance loss 
$\min_{t < T} J_\rho(\piBest) - J_\rho(\pi_{\theta_t})$, as in~\citet{AgarwalKakadeLeeMahajar2019}.
This is a minor difference, and the current result holds for the minimum over $t$ as well.

We use this theorem along Theorem~\ref{thm:PAML-PolicyGradientError} on PG error estimate in order to provide the following convergence rate for the class of exponentially parameterized policies.

%%%%%%%%%%%%%%%%%%%%%%%%%%%%%%%%%%%%%%%%%%%%%%%
\begin{corollary}
\label{cor:PAML-MBPG-Convergence-ExponentialFamilyPolicy}
Consider any distributions $\rho, \mu, \nu \in \bar{\MM}(\XX)$.
Let the policy $\pi_\theta$ be in the exponential family~\eqref{eq:PAML-PolicyParametrization} with $\theta \in \Theta$ and $\Theta$ being a convex subset of $\Real^d $.
Assume that $B = \sup_{x,a \in \XA} \norm{\phi(a|x)}_2 < \infty$.
Suppose that the value function is bounded by $\Qmax$, the MDP has a finite number of actions $\actionnum$, and $\frac{1}{2} \leq \gamma < 1$.
Let
$
%\label{eq:PAML-MBPG-Convergence-Theorem-beta-XXX}
	\beta =
	B^2 \Qmax
	\left[
		\frac{8 \gamma \actionnum^2 }{(1-\gamma)^2} + 
		\frac{6 \actionnum }{1 - \gamma}
	\right]
$.
%\end{align*}
%
%
Let $T$ be an integer number. Starting from a $\pi_{\theta_0} \in \Pi$, consider the sequence of policies $\pi_{\theta_1}, \dotsc, \pi_{\theta_T}$ generated by the projected model-based PG algorithm~\eqref{eq:PAML-ProjectedPGUpdate-WithModel} with step-size $\eta = \frac{1}{\beta}$.
Let $W = \sup_{\theta \in \Theta} \norm{w^*(\theta;\rhoDiscounted^\piBest)}_2$, and assume that $W < \infty$.
Assume that for any policy $\pi_\theta \in \{ \pi_{\theta_0}, \dotsc, \pi_{\theta_{T-1}} \}$, there exists constant $\eps_\text{PAE}$ such that $\LossPolicyError(\theta; \rhoDiscounted^\piBest) \leq \eps_{\text{PAE}}$.
We then have
\begin{align*}
	&
	\EEX{t \sim \text{Unif}(1,\dotsc, T)}{
	J_\rho(\piBest) - J_\rho(\pi_{\theta_t})}
	\leq
	\\
	&
	\frac{1}{1 - \gamma}
	\Bigg[
		\eps_\text{PAE} + 
		\frac{B \Qmax}{1 - \gamma}
		\cPG(\rho,\mu;\piBest)
	(1 \vee W ) 
	\left(
		4 \actionnum
		\sqrt{ \frac{14 \gamma }{T} } +
		\frac{\gamma }{1-\gamma} e_\text{model}
	\right)
	\Bigg].	
\end{align*}
%
%XXX
%\begin{align*}
%	&
%	% \min_{t = 0, \dotsc, T}
%	\EEX{t \sim \text{Unif}(1,\dotsc, T)}
%	{J_\rho(\piBest) - J_\rho(\pi_{\theta_t})}
%	\leq
%	\\
%	&
%	\frac{1}{1 - \gamma}
%	\left[
%		\sup_{\theta \in \Theta} \LossPolicyError(\theta; \rhoDiscounted^\piBest) +
%		\frac{4 B \Qmax}{1 - \gamma}
%		\cPG(\rho,\mu;\piBest)
%	(1 \vee W ) 
%	\left(
%		4 \actionnum
%		\sqrt{ \frac{14 \gamma }{T} } +
%		\frac{\gamma }{1-\gamma} e_\text{model}
%	\right)
%	\right].	
%\end{align*}
%XXX
%
with
\begin{align*}
	e_\text{model}
	=
	\begin{cases}
		\sup_{\theta \in \Theta} \cPG(\mu,\nu;\pi_\theta) \norm{\Delta \PKernel^{\pi_\theta}}_{1,1(\nu)}, \\
		2 \sup_{\theta \in \Theta} \norm{\Delta \PKernel^{\pi_\theta}}_{1,\infty}.
	\end{cases}
\end{align*}
\end{corollary}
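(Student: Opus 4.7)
The plan is to specialize Theorem~\ref{thm:PAML-MBPG-Convergence} to the exponential-family policy~\eqref{eq:PAML-PolicyParametrization} and then substitute Theorem~\ref{thm:PAML-PolicyGradientError} to convert the abstract model-PG error $\eps_\text{model}$ into a total-variation (or weighted) norm of $\Delta\PKernel^{\pi_\theta}$. Nothing beyond routine algebra and bookkeeping should be needed; the structure of Theorem~\ref{thm:PAML-MBPG-Convergence} already separates the policy-approximation, optimization, and model contributions, so the task is simply to instantiate each.

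First, I would invoke Lemma~\ref{lem:PAML-ExponentialFamilyPolicy-Smoothness} (noted after Assumption~\ref{asm:PolicySmoothness}) to certify that~\eqref{eq:PAML-PolicyParametrization} with $\sup\smallnorm{\phi(a|x)}_2\le B$ satisfies Assumption~\ref{asm:PolicySmoothness} with $\beta_1=2B$, $\beta_2=6B^2$. Plugging these constants into~\eqref{eq:PAML-MBPG-Convergence-Theorem-beta} immediately yields the stated $\beta=B^2\Qmax\bigl[\tfrac{8\gamma\actionnum^2}{(1-\gamma)^2}+\tfrac{6\actionnum}{1-\gamma}\bigr]$. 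To get the cleaner factor $\sqrt{14\gamma/T}$ inside the optimization term, I would use the hypothesis $\gamma\ge\tfrac12$ to absorb the $\tfrac{6\actionnum}{1-\gamma}$ summand: since $1-\gamma\le\tfrac12$ one has $\tfrac{1}{1-\gamma}\le\tfrac{1}{2(1-\gamma)^2}$, and since $1\le 2\gamma$ one has $\actionnum\le 2\gamma\actionnum^2$, so $\tfrac{6\actionnum}{1-\gamma}\le\tfrac{6\gamma\actionnum^2}{(1-\gamma)^2}$ and hence $\beta\le\tfrac{14\gamma B^2\Qmax\actionnum^2}{(1-\gamma)^2}$. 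Substituting into $4\sqrt{\Qmax\beta/T}$ and pulling out $\tfrac{B\Qmax}{1-\gamma}$ produces exactly the $\tfrac{B\Qmax}{1-\gamma}\cdot 4\actionnum\sqrt{14\gamma/T}$ term of the corollary.

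Second, I would bound $\eps_\text{model}$ by applying Theorem~\ref{thm:PAML-PolicyGradientError} with the gradient reference distribution taken to be $\mu$ (the distribution used in the projected-PG update~\eqref{eq:PAML-ProjectedPGUpdate-WithModel}) in place of $\rho$, with the auxiliary distribution $\nu$ from the corollary, and with $p=2$ so that $B_p=B$. This gives, for each $\theta\in\Theta$,
\begin{align*}
\norm{\nabla_\theta J_\mu(\pi_\theta)-\nabla_\theta\hat J_\mu(\pi_\theta)}_2
\le \frac{\gamma\Qmax B}{(1-\gamma)^2}\times
\begin{cases}\cPG(\mu,\nu;\pi_\theta)\norm{\Delta\PKernel^{\pi_\theta}}_{1,1(\nu)},\\ 2\norm{\Delta\PKernel^{\pi_\theta}}_{1,\infty},\end{cases}
\end{align*}
and taking a supremum over the trajectory's parameter iterates bounds the right-hand side by $\tfrac{\gamma\Qmax B}{(1-\gamma)^2}\,e_\text{model}$ with $e_\text{model}$ as defined in the statement. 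Finally, I would identify the Radon-Nikodym factor $\smallnorm{\mathrm{d}\rhoDiscounted^\piBest/\mathrm{d}\mu}_\infty$ appearing in Theorem~\ref{thm:PAML-MBPG-Convergence} with $\cPG(\rho,\mu;\piBest)$, plug both the $\beta$-bound and the $\eps_\text{model}$-bound into the theorem's conclusion, and factor $\tfrac{B\Qmax}{1-\gamma}$ out of the sum of optimization and model contributions.

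I do not foresee a genuine obstacle: each ingredient is already available. The only place to be careful is in the $\gamma\ge\tfrac12$ simplification that yields the constant $14$, and in making sure the concentrability coefficients are indexed by the right pair of distributions ($\rho$ vs.\ $\mu$ for the performance mismatch, $\mu$ vs.\ $\nu$ for the data mismatch feeding the model-error bound) so that the two change-of-measure factors do not get conflated.
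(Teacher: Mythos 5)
Your proposal is correct and follows essentially the same route as the paper's own (very terse) proof: invoke Lemma~\ref{lem:PAML-ExponentialFamilyPolicy-Smoothness} to get $\beta_1=2B$, $\beta_2=6B^2$, apply Theorem~\ref{thm:PAML-MBPG-Convergence}, bound $\eps_\text{model}$ via Theorem~\ref{thm:PAML-PolicyGradientError} with initial distribution $\mu$, and simplify using $\gamma\ge\tfrac12$. Your explicit algebra for absorbing $\tfrac{6\actionnum}{1-\gamma}$ into the $\tfrac{14\gamma\actionnum^2}{(1-\gamma)^2}$ bound, and your care in distinguishing $\cPG(\rho,\mu;\piBest)$ from $\cPG(\mu,\nu;\pi_\theta)$, correctly fill in the details the paper leaves implicit.
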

%%%%%%%%%%%%%%%%%%%%%%
%\ifSupp
\begin{proof}
For the exponential family policy parameterization~\eqref{eq:PAML-PolicyParametrization}, Lemma~\ref{lem:PAML-ExponentialFamilyPolicy-Smoothness} shows that Assumption~\ref{asm:PolicySmoothness} is satisfied with the choice of $\beta_1 = 2B$ and $\beta_2 = 6B^2$.
We may now apply Theorem~\ref{thm:PAML-MBPG-Convergence}.
To provide an upper bound for $\eps_{\text{model}}$ in that theorem, we apply Theorem~\ref{thm:PAML-PolicyGradientError}.
After some simplifications applicable for $\gamma \geq 1/2$, we obtain the desired result.
\end{proof}
%\fi
%%%%%%%%%%%%%%%%%%%%%%%%%%%%%%%%%%%%%%%%%%%%%%%

Notice that the choice of $\gamma \geq 1/2$ is only to simply the upper bound, and a similar result holds for any $\gamma \in [0,1)$.

This result relates the performance of the best policy obtained as a result of $T$ iterations of the PG algorithm to the number of iterations $T$, the distribution mismatch between $\cPG(\rho,\mu;\piBest)$, some quantities related to the MDP and policy space, and in particular to the model error $e_\text{model}$.
The model error is expressed in terms of the TV error $\norm{\Delta \PKernel^{\pi_\theta}}_{1,\infty}$, and not in terms of PAML-like objective, as in Theorem~\ref{thm:PAML-MBPG-Convergence}.
This result shows how a conventional model learning approach, which can provide a guarantee on the TV error (or the $\KL$-divergence), leads to a reasonable MBPG method.
As discussed after Theorem~\ref{thm:PAML-PolicyGradientError}, however, TV and $\KL$ might provide loose upper bounds.

%%%%%%%%%%%%%%%%%%%%%%%%%%%%%%%%%%%%%%%%%%%%%%%
\subsection{Inexact Critic}
\label{sec:PAML-Theory-PG-Convergence-Inexact-Critic}

We now focus on the case that the true value function $\Qpi$ is unknown, and instead we have a critic $\Qhatpi \approx \Qpi$.
Rather than analyze the problem of how accurate we can estimate the critic given a number of samples, we only suppose that we have an upper bound on the error of the critic and analyze the effect of this error on the performance of the resulting policy. 
We assume that
\[
	\norm{\Qhatpi - \Qpi}_{2(\muhatDiscounted^\pi;\pi)} \leq \eps_\text{critic}.
\]
where the norm is defined as~\eqref{eq:PAML-Norm-Q}, and 
$\muhatDiscounted^\pi = \muDiscounted(\cdot;\PKernelpihat)$ is the discounted future-state probability of following model $\PKernelpihat$ (cf.~\eqref{eq:PAML-DiscountedFutureStateDistribution}).
We also need to assume that $\Qhatpi$ is $\Qmax$-bounded, which is easy to enforce by truncating the output of any estimator at the known threshold of $\Qmax$.

%In addition, we assume that the true value function $Q^{\pi}$ is not known, and has to be estimated, resulting in an estimation error $\epsilon_q$ that we define as:
%\todo{Make it more clear. -AMF}
%\begin{equation}
%    \left\Vert Q^\pi - \hat{Q}^\pi \right\Vert_2 := \sqrt{\EEX{X \sim \muDiscounted(.;\PKernelpi)}{\EEX{A \sim \pi(.|X)}{\big|Q^\pi(X,A) - \hat{Q}^\pi(X,A)\big|^2}}} \leq \eps_q,
%\end{equation}
%where $\hat{Q}^\pi$ is the estimate of the value function.\todo{We need it for all $\pi_t$. Make it clear at some point. -AMF}
%\todo{Make sure that $\Qhat$ is $\Qmax$-bounded. -AMF}

To make our discussion more clear, we introduce a new notation.
%Only for the next two theorems on convergence behaviour, we introduce the following notation to make our treatment more precise.
For a transition probability kernel $\PKernel^\pitheta$ and a value function estimate $\Qhat^\pitheta$, we define
\begin{align}
\label{eq:PAML-GradientOfPerformance-InexactCritic}
	\nabla_\theta J_\mu(\pi_\theta, \PKernel^\pitheta, \Qhat^\pitheta) =
	\frac{1}{1-\gamma}
			\EEX{X \sim \muDiscounted(.;\PKernel^\pitheta)}
			    	{\EEX{A \sim \pitheta(.|X)} {\nabla_\theta \log \pi_\theta (A|X) \Qhat^\pitheta(X,A) } }.
\end{align}
This is the PG computed when the distribution is generated according to $\PKernel^\pitheta$ and the critic is $\Qhat^\pitheta$.
Other combinations such as $\nabla_\theta J_\mu(\pi_\theta, \PKernelhat^\pitheta, \Qhat^\pitheta)$ and $\nabla_\theta J_\mu(\pi_\theta, \PKernel^\pitheta, Q^\pitheta)$ follow the similar definition.

We need to make another assumption about the critic.
%%%%%%%%%%%%%%%%%%%%%%
\begin{assumption}
\label{asm:Critic-Smoothness}
The critic $\Qhat^{\pi_\theta}$ is such that for any $\theta_1, \theta_2 \in \Theta$ and for any $x \in \XX$, there exists a constant $L \geq 0$ such that
\begin{align*}
	\norm{ \Qhat^{\pi_{\theta_1}}(x,\cdot) - \Qhat^{\pi_{\theta_2}}(x,\cdot)
	}_2
%	=
%	\sqrt{
%		\frac{1}{\actionnum}
%		\sum_{a \in \Actions}
%		\left| \Qhat^{\pi_{\theta_1}}(x,a) - \Qhat^{\pi_{\theta_2}}(x,a) \right|^2
%		}
	\leq
	L \norm{\theta_1 - \theta_2}_2.
\end{align*}
%XXX
%\begin{align*}
%	\norm{ \Qhat^{\pi_{\theta_1}}(x,\cdot) - \Qhat^{\pi_{\theta_2}}(x,\cdot)
%	}_2
%	=
%	\sqrt{
%		\int_{\AA} \left| \Qhat^{\pi_{\theta_1}}(x,a) - \Qhat^{\pi_{\theta_2}}(x,a) \right|^2 \da
%		}
%	\leq
%	L \norm{\theta_1 - \theta_2}_2.
%\end{align*}
%
\end{assumption}
%%%%%%%%%%%%%%%%%%%%%%
This is an assumption on how much the critic changes as the policy $\pi_\theta$ changes.
We require a Lipschitzness as a function of the policy parameter $\theta$.
The intuition of why we need this assumption is that if the critic changes too much as we change the policy, the performance according to this critic would not be smooth enough, hence making the optimization difficult.
We did not require this assumption in the exact critic case in Section~\ref{sec:PAML-Theory-PG-Convergence-Exact-Critic}, because the exact value function $Q^\pitheta$ is smooth, under the smoothness assumption on the policy space (Assumption~\ref{asm:PolicySmoothness}), as stated in Lemma~\ref{lem:PAML-PolicySmoothness} in Appendix~\ref{sec:PAML-Theory-Background}.

We consider a projected PG that uses the model $\PKernelhat^{\pi_{\theta_t}}$, similar to~\eqref{eq:PAML-ProjectedPGUpdate-WithModel} of Section~\ref{sec:PAML-Theory-PG-Convergence-Exact-Critic}, but with a value function $\hat{Q}^{\pi_{\theta_t} }$ that might have some errors, i.e., 
\begin{align}
\label{eq:PAML-ProjectedPGUpdate-WithModel-InexactCritic}
	\theta_{t+1} \leftarrow \ProjTheta \left [ \theta_t + \eta \nabla_\theta J_\mu (\pi_{\theta_t}, \PKernelhat^{\pi_{\theta_t}}, \hat{Q}^{\pi_{\theta_t} }) \right ],
\end{align}
with a learning rate $\eta > 0$, to be specified.
We particularly focus on the exponential policy parameterization~\eqref{eq:PAML-PolicyParametrization} for finite action $\actionnum < \infty$, as opposed to the general policy class of Theorem~\ref{thm:PAML-MBPG-Convergence}. This is mainly to simplify some derivations, and potentially can be relaxed.
The following theorem is the main result of this section.

%\ifSupp
%\else
%In this short version of the paper, we report a result that is specialized to the exponential family parametrization of the policy, i.e.,~we suppose that the policy $\pi_\theta: \XX \ra \bar{\MM}(\AA)$ is from the exponential family with features $\phi = \phi(a|x): \XA \ra \Real^d$ and parameterized by $\theta \in \Theta \subset \Real^d$, and has the density 
%%
%\begin{align}
%\label{eq:PAML-PolicyParametrization}
%	\pi_\theta(a | x) = 
%		\frac	{\exp \left( \phi^\top(a|x) \theta \right)}
%			{\int \exp \left( \phi^\top(a'|x) \theta \right) \da'}.
%\end{align}
%%
%
%%More general result is available in the full version of the paper (available in the same section of the supplementary material).
%\fi

%%%%%%%%%%%%%%%%%%%%%%%%%%%%%%%%%%%%%%%%%%%%%%%
\begin{theorem}
\label{thm:PAML-MBPG-Convergence-InexactCritic}
Consider any initial distributions $\rho, \mu \in \bar{\MM}(\XX)$. Let the policy space $\Pi$ consists of policies $\pi_\theta$ in the exponential family~\eqref{eq:PAML-PolicyParametrization} with $\theta \in \Theta$ and $\Theta$ being a convex subset of $\Real^d $. We assume that $B = \sup_{x,a \in \XA} \norm{\phi(a|x)}_2 < \infty$,
the MDP has a finite number of actions $\actionnum$, and the discount factor $0 \leq \gamma < 1$.
Furthermore, suppose that the critic satisfies Assumption~\ref{asm:Critic-Smoothness} and is $\Qmax$-bounded.
Let
\begin{align}
\label{eq:PAML-MBPG-Convergence-Theorem-InexactCritic-beta}
	\beta =
	\frac{B}{1 - \gamma}
	\left[
		\sqrt{2 \actionnum} L + 
		\frac{\gamma B \Qmax}{1 - \gamma}
	\right].
\end{align}
Let $T$ be an integer number. Starting from a $\pi_{\theta_0} \in \Pi$, consider the sequence of policies $\pi_{\theta_1}, \dotsc, \pi_{\theta_T}$ generated by the projected model-based PG algorithm~\eqref{eq:PAML-ProjectedPGUpdate-WithModel-InexactCritic} with step-size $\eta = \frac{1}{\beta}$.
Let $W = \sup_{\theta \in \Theta} \norm{w^*(\theta;\rhoDiscounted^\piBest)}_2$, and assume that $W < \infty$.
Assume that for any policy $\pi_\theta \in \{ \pi_{\theta_0}, \dotsc, \pi_{\theta_{T-1}} \}$, there exist constants $\eps_{\text{PAE}}$, $\eps_{\text{model}}$, and $\eps_\text{critic}$ such that 
\begin{align*}
	&
	\LossPolicyError(\theta; \rhoDiscounted^\piBest) \leq \eps_{\text{PAE}},
	& \text{(policy approximation error)}
	\\
	&
	\norm{ \nabla_\theta J_\mu(\pi_{\theta}, \PKernel^\pitheta, \Qhat^\pitheta) - \nabla_\theta J_\mu(\pi_{\theta}, \PKernelhat^\pitheta, \Qhat^\pitheta) }_2 \leq \eps_{\text{model}},
	& \text{(model error)}
	\\
	&
	\norm{Q^\pitheta - \Qhat^\pitheta}_{2(\muDiscounted^\pitheta;\pitheta)} \leq \eps_\text{critic}.
	& \text{(critic error)}
\end{align*}
We then have
\begin{align*}
	\EEX{t \sim \text{Unif}(1,\dotsc, T)}{
	J_\rho(\piBest) - J_\rho(\pi_{\theta_t})}
	\leq
	\frac{1}{1 - \gamma}
	\left[
		\eps_{\text{PAE}} +
%		\sup_{\theta \in \Theta} \LossPolicyError(\theta; \rhoDiscounted^\piBest) +
		\norm{	\frac	{\mathrm{d}\rhoDiscounted^\piBest}
					{\mathrm{d} \mu }
			}_\infty
	(1 \vee W )
	\left(
		4 \sqrt{ \frac{\Qmax \beta}{T} } +
		\frac{B \, \eps_\text{critic} }{1-\gamma}
		+
		\eps_\text{model}
	\right)
	\right].	
\end{align*}
\end{theorem}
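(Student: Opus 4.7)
The plan is to mimic the structure of the proof of Theorem~\ref{thm:PAML-MBPG-Convergence} and introduce the critic error as a third source of inexactness, alongside the model error, that enters the final $\eps$-stationarity bound. The novel content is (i) a modified smoothness constant $\beta$ that incorporates Assumption~\ref{asm:Critic-Smoothness} in place of the smoothness of the exact value function, and (ii) a careful triangle-inequality decomposition of the pseudo-gradient error into a model piece and a critic piece.

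First, I would show that as a function of $\theta$, the update direction $g(\theta) = \nabla_\theta J_\mu(\pi_\theta, \PKernelhat^{\pi_\theta}, \Qhat^{\pi_\theta})$ defined in~\eqref{eq:PAML-GradientOfPerformance-InexactCritic} is Lipschitz with constant $\beta$ given by~\eqref{eq:PAML-MBPG-Convergence-Theorem-InexactCritic-beta}. The two terms in $\beta$ match the two structural sources of non-smoothness: differentiating through $\Qhat^{\pi_\theta}$ contributes $\sqrt{2\actionnum}\,B\,L/(1-\gamma)$ via Assumption~\ref{asm:Critic-Smoothness} combined with the uniform bound $\sup_x \EEX{A \sim \pi_\theta(\cdot|X)}{\norm{\phi(A|x)}_2^2}^{1/2} \leq B$ and a summation over $\actionnum$ actions, while differentiating through $\muhatDiscounted^{\pi_\theta}$ and $\pi_\theta$ contributes $\gamma B^2 \Qmax/(1-\gamma)^2$ via Lemma~\ref{lem:PAML-ExpectedDiscountedError} applied to the exponential family and the analogue of Lemma~\ref{lem:PAML-PolicySmoothness} for the bounded critic. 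This is the main technical obstacle, since the clean argument for the exact-critic case (``$V^{\pi_\theta}$ is smooth, hence $\hat{J}_\mu$ is smooth'') is unavailable and one must track each term by hand. Treating the update as (projected) gradient ascent on the scalar potential whose gradient is $g$, I can then invoke Lemma~\ref{lem:PAML-Beck-Expectation} with $\eta = 1/\beta$ to get $\EEX{t}{\norm{G^\eta(\theta_t)}_2} \leq \sqrt{4\Qmax \beta/T}$, and Proposition~\ref{prop:PAML-SmallGradientMapping-to-epsStationarity} to conclude
\begin{align*}
\frac{1}{T}\sum_{t=1}^T \max_{\norm{\delta}_2 \leq 1}
\delta^\top g(\theta_t) \leq 4\sqrt{\frac{\Qmax \beta}{T}}.
\end{align*}

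Next, I would bridge between $g(\theta)$ and the true PG $\nabla_\theta J_\mu(\pi_\theta) = \nabla_\theta J_\mu(\pi_\theta, \PKernel^{\pi_\theta}, Q^{\pi_\theta})$ by the two-step decomposition
\begin{align*}
\norm{\nabla_\theta J_\mu(\pi_\theta) - g(\theta)}_2 \leq
\norm{\nabla_\theta J_\mu(\pi_\theta, \PKernel^{\pi_\theta}, Q^{\pi_\theta}) - \nabla_\theta J_\mu(\pi_\theta, \PKernel^{\pi_\theta}, \Qhat^{\pi_\theta})}_2 + \eps_\text{model}.
\end{align*}
The critic piece equals $\frac{1}{1-\gamma}\EEX{X \sim \muDiscounted^{\pi_\theta}}{\EEX{A \sim \pi_\theta}{\nabla_\theta \log \pi_\theta(A|X)\bigl(Q^{\pi_\theta} - \Qhat^{\pi_\theta}\bigr)(X,A)}}$. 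Applying Jensen and Cauchy--Schwarz together with the exponential-family identity $\nabla_\theta \log \pi_\theta(a|x) = \phi(a|x) - \EEX{A'\sim \pi_\theta}{\phi(A'|x)}$, which implies $\EEX{A \sim \pi_\theta(\cdot|x)}{\norm{\nabla_\theta \log \pi_\theta(A|x)}_2^2} \leq B^2$, bounds this piece by $B\eps_\text{critic}/(1-\gamma)$. Combined with the previous display, the analogue of~\eqref{eq:PAML-MBPGConvergence-Theorem-Proof-AverageStationarity} becomes
\begin{align*}
\frac{1}{T}\sum_{t=1}^T \max_{\norm{\delta}_2 \leq 1}
\bigl|\delta^\top \nabla_\theta J_\mu(\pi_{\theta_t})\bigr|
\leq 4\sqrt{\frac{\Qmax \beta}{T}} + \eps_\text{model} + \frac{B\,\eps_\text{critic}}{1-\gamma}.
\end{align*}

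Finally, I would invoke Theorem~\ref{thm:PAML-PolicyError-at-Stationarity} for each $\pi_{\theta_t}$, substitute the averaged $\eps$-stationarity bound above, pull the uniform policy approximation bound $\eps_\text{PAE}$ and the uniform bound $W$ on $\norm{w^*(\theta_t;\rhoDiscounted^\piBest)}$ out of the average, and arrive at the stated inequality. The key difficulty, as flagged above, is verifying the precise Lipschitz constant $\beta$ of the pseudo-gradient $g$; the remaining steps are algebraic rearrangements and direct applications of the tools already developed for the exact-critic case.
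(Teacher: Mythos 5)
Your proposal follows essentially the same route as the paper's proof: the smoothness constant $\beta$ is established exactly as in Proposition~\ref{prop:PAML-Smoothness-of-Performance-with-Inexact-Critic} (the $\sqrt{2\actionnum}\,B L/(1-\gamma)$ term from Assumption~\ref{asm:Critic-Smoothness} and the $\gamma B^2 \Qmax/(1-\gamma)^2$ term from Lemma~\ref{lem:PAML-ExpectedDiscountedError}), the optimization step uses Lemma~\ref{lem:PAML-Beck-Expectation} and Proposition~\ref{prop:PAML-SmallGradientMapping-to-epsStationarity} identically, the three-way triangle decomposition into optimization, critic, and model errors with the Cauchy--Schwarz bound $B\,\eps_\text{critic}/(1-\gamma)$ matches~\eqref{eq:PAML-MBPGConvergence-InexactCritic-Theorem-Proof-ModelBasedDirectionalGradient-To-ModelFree}--\eqref{eq:PAML-MBPGConvergence-InexactCritic-Theorem-Proof-CriticError-Term}, and the conclusion via Theorem~\ref{thm:PAML-PolicyError-at-Stationarity} is the same. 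The proposal is correct.
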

%%%%%%%%%%%%%%%%%%%%%%
\begin{proof}
By Proposition~\ref{prop:PAML-Smoothness-of-Performance-with-Inexact-Critic} in Appendix~\ref{sec:PAML-Theory-Auxiliary}, the performance according to $\PKernelhat^{\pitheta}$ and with an inexact critic $\Qhat^\pitheta$, that is 
$J_\mu(\pi_{\theta}, \PKernelhat^{\pi_\theta}, \Qhat^{\pi_\theta})$, is $\beta$-smooth w.r.t. $\theta$.
%
%
%By Lemma~\ref{lem:PAML-PolicySmoothness} in Appendix~\ref{sec:PAML-Theory-Auxiliary}, $V^{\pi_\theta}$ is $\beta$-smooth for all states $x$ with $\beta$ specified in~\eqref{eq:PAML-MBPG-Convergence-Theorem-beta}.
%%
%As a result, the estimate of $V^{\pi_\theta}$ is also $\beta$-smooth. \todo{This is not true, and is the source of difficulty in extending the work to with critic error case. -AMF}
%Hence $J_\mu(\pi_\theta, \PKernelpihat, \Qhatpi)$ is also $\beta$-smooth.
%%
%
Let the gradient mapping for $\theta$ be defined as
\[
	G^\eta(\theta) = \frac{1}{\eta} \left( \ProjTheta \left [ \theta + \eta \nabla_\theta J_\mu(\pi_\theta, \PKernelhat^{\pi_\theta}, \Qhat^{\pi_\theta}) \right ] - \theta \right).
\]
For a projected gradient ascent on a $\beta$-smooth function over a convex set with a step-size of $\eta = \frac{1}{\beta}$, Lemma~\ref{lem:PAML-Beck-Expectation}, which is a slight modification of
Theorem 10.15 of~\citet{Beck2017}, shows that
\begin{align}
\label{eq:PAML-MBPGConvergence-InexactCritic-Theorem-Proof-GradientMappingUpperBound}
	\EEX{t \sim \text{Unif}(0, \dotsc, T-1)}{
	\norm{G^\eta(\theta_t)}_2}
	\leq
	\sqrt{
		\frac	{2 \beta \left( \max_{\pi \in \Pi} J_\mu(\pi, \PKernelhat^{\pi_\theta}, \Qhat^{\pi_\theta}) - J_\mu(\pi_{\theta_0}, \PKernelhat^{\pi_\theta}, \Qhat^{\pi_\theta}) \right) }
			{T}
	}
	\leq
	\sqrt{
		\frac	{4 \Qmax \beta  }
		{T}
	}.
\end{align}
By Proposition~\ref{prop:PAML-SmallGradientMapping-to-epsStationarity} in Appendix~\ref{sec:PAML-Theory-Auxiliary} (originally Proposition D.1 of~\citealt{AgarwalKakadeLeeMahajar2019}), if we let $\theta' = \theta + \eta G^\eta$, we have that
\begin{align*}
	\max_{\theta + \delta \in \Theta, \norm{\delta}_2 \leq 1}
	\delta^\top \nabla_\theta J_\mu(\pi_{\theta'}, \hat{\PKernel}^{\pi_{\theta'}}, \Qhat^{\pi_{\theta'} } )
	\leq
	(\eta \beta + 1) \norm{G^\eta(\theta)}_2.
\end{align*}
This upper bound along~\eqref{eq:PAML-MBPGConvergence-InexactCritic-Theorem-Proof-GradientMappingUpperBound} and $\beta \eta + 1 = 2$ show that the sequence $\theta_0, \theta_1, \dotsc, \theta_T$ generated by~\eqref{eq:PAML-ProjectedPGUpdate-WithModel-InexactCritic} satisfies
%
%\begin{align}
%\label{eq:PAML-MBPGConvergence-Theorem-Proof-epsStationarity-forModel-UpperBound}
%	\EEX{t}{
%	\max_{\theta_t + \delta \in \Theta, \norm{\delta}_2 \leq 1}
%	\delta^\top \nabla_\theta J_\mu(\pi_{\theta'}, \hat{\PKernel}^{\pi_{\theta'}}, \Qhatpi)}
%	\leq
%	(\eta \beta + 1) \EEX{t}{ \norm{G^\eta(\theta_t)}_2}
%	\leq
%	4 \sqrt{ \frac{\Qmax \beta}{T} }.
%\end{align}
%
%XXX
%
\begin{align}
\label{eq:PAML-MBPGConvergence-InexactCritic-Theorem-Proof-epsStationarity-forModel-UpperBound}
%	\min_{t=0, \dotsc, T}
	\frac{1}{T} \sum_{t = 1}^T
	\max_{\theta_t + \delta \in \Theta, \norm{\delta}_2 \leq 1}
	\delta^\top \nabla_\theta J_\mu(\pi_{\theta_t}, \hat{\PKernel}^{\pi_{\theta_t}}, \Qhat^{\pi_{\theta_t} })
	\leq
	(\eta \beta + 1)
%	\min_{t = 0, \dotsc, T-1} 
	\frac{1}{T}
	\sum_{t=0}^{T-1}
	\norm{G^\eta(\theta_t)}_2
	\leq
	4 \sqrt{ \frac{\Qmax \beta}{T} }.
\end{align}

Note that this is a guarantee on $\delta^\top \nabla_\theta J_\mu(\pi_{\theta_t}, \PKernelhat^{\pi_{\theta_t}}, \Qhat^{\pi_{\theta_t}})$, the inner product of a direction $\delta$ with the PG according to the model $\PKernelhat^{\pi_{\theta_t}}$ and the inexact critic $\Qhat^{\pi_{\theta_t}}$, and not on
$\delta^\top \nabla_\theta J_\mu(\pi_{\theta_t}, \PKernel^{\pi_{\theta_t}}, Q^{\pi_{\theta_t}})$, which has the PG according to $\PKernel^{\pi_{\theta_t}}$ and the exact critic $Q^{\pi_{\theta_t}}$. 
The latter is what we need in order to compare the performance.
We can relate them, however, using a series of inequalities.
For any $\theta$, including $\theta_0, \theta_1, \dotsc, \theta_{T}$, we have
\begin{align}
\label{eq:PAML-MBPGConvergence-InexactCritic-Theorem-Proof-ModelBasedDirectionalGradient-To-ModelFree}
\nonumber
	\begin{split}
	\max_{ \norm{\delta}_2 \leq 1}
	\left|
		\delta^\top \nabla_\theta J_\mu(\pi_\theta, \PKernel^\pitheta, Q^\pitheta)
	\right|
%	\nonumber
    = & \max_{ \norm{\delta}_2 \leq 1}
	\Bigg|
		\delta^\top 
		\Big[ 
		\nabla_\theta J_\mu(\pi_\theta, \PKernelhat^\pitheta,  \Qhat^\pitheta) + 
		\\
		&
		\qquad \qquad \quad
		\left( 
		\nabla_\theta J_\mu(\pi_\theta, \PKernel^\pitheta, Q^\pitheta) - \nabla_\theta J_\mu(\pi_\theta, \PKernel^\pitheta, \Qhat^\pitheta) 
		\right) + {}
		\\ 
		& 
		\qquad \qquad \quad
		\left( \nabla_\theta J_\mu(\pi_\theta, \PKernel^\pitheta, \Qhat^\pitheta) - \nabla_\theta J_\mu(\pi_\theta, \PKernelhat^\pitheta, \Qhat^\pitheta) 
		\right)
		\Big]
	\Bigg|
    \end{split}
	\\ \nonumber
	\begin{split}
	\leq &
	\max_{ \norm{\delta}_2 \leq 1}
	\left|
		\delta^\top \nabla_\theta J_\mu(\pi_\theta, \PKernelhat^\pitheta,  \Qhat^\pitheta)
	\right| + \\
	&
	\max_{ \norm{\delta}_2 \leq 1}
	\left|
		\delta^\top \left( \nabla_\theta J_\mu(\pi_\theta, \PKernel^\pitheta, Q^\pitheta) - \nabla_\theta J_\mu(\pi_\theta, \PKernel^\pitheta, \Qhat^\pitheta) \right)
	\right| + 
	\\ \nonumber
	&
	\max_{ \norm{\delta}_2 \leq 1}
	\left|
		\delta^\top \left( \nabla_\theta J_\mu(\pi_\theta, \PKernel^\pitheta, \Qhat^\pitheta) - \nabla_\theta J_\mu(\pi_\theta, \PKernelhat^\pitheta, \Qhat^\pitheta) \right)
	\right|
	\end{split}
	\\ \nonumber
	= &
	\max_{ \norm{\delta}_2 \leq 1}
	\left|
		\delta^\top \nabla_\theta J_\mu(\pi_\theta, \PKernelhat^\pitheta,  \Qhat^\pitheta)
	\right| +
	\\
\nonumber	
	&
	\norm{ \nabla_\theta J_\mu(\pi_\theta, \PKernel^\pitheta, Q^\pitheta) - \nabla_\theta J_\mu(\pi_\theta, \PKernel^\pitheta, \Qhat^\pitheta) }_2 
	+ \\
	&
	\norm{ \nabla_\theta J_\mu(\pi_\theta, \PKernel^\pitheta, \Qhat^\pitheta) - \nabla_\theta J_\mu(\pi_\theta, \PKernelhat^\pitheta, \Qhat^\pitheta) }_2.
\end{align}

These terms represent error due to the optimization process ($\max_{ \norm{\delta}_2 \leq 1}
	|\delta^\top \nabla_\theta J_\mu(\pi_\theta, \PKernelhat^\pitheta,  \Qhat^\pitheta)|$),
	error due to having an inexact critic ($\smallnorm{ \nabla_\theta J_\mu(\pi_\theta, \PKernel^\pitheta, Q^\pitheta) - \nabla_\theta J_\mu(\pi_\theta, \PKernel^\pitheta, \Qhat^\pitheta) }_2$),
	and model error ($\smallnorm{ \nabla_\theta J_\mu(\pi_\theta, \PKernel^\pitheta, \Qhat^\pitheta) - \nabla_\theta J_\mu(\pi_\theta, \PKernelhat^\pitheta, \Qhat^\pitheta) }_2$).
We provide an upper bound for each of them.

The model error is upper bounded by assumption:
\begin{align}
\label{eq:PAML-MBPGConvergence-InexactCritic-Theorem-Proof-ModelError-Term}
	\norm{ \nabla_\theta J_\mu(\pi_\theta, \PKernel^\pitheta, \Qhat^\pitheta) - \nabla_\theta J_\mu(\pi_\theta, \PKernelhat^\pitheta, \Qhat^\pitheta) }_2
	\leq
	\eps_{\text{model}}.
\end{align}

The error due to the inexact critic can be upper bounded after an application of the Cauchy-Schwarz inequality, using
Lemma~\ref{lem:PAML-ExponentialFamilyPolicy-Boundedness}, and our assumption on the critic error as follows
\begin{align}
\label{eq:PAML-MBPGConvergence-InexactCritic-Theorem-Proof-CriticError-Term}
    \nonumber
    &
    \norm{ \nabla_\theta J_\mu(\pi_\theta, \PKernel^\pitheta, Q^\pitheta) - \nabla_\theta J_\mu(\pi_\theta, \PKernel^\pitheta, \Qhat^\pitheta) }_2
    \\
    \nonumber
    &= \norm{ \frac{1}{1-\gamma} \EEX{X \sim \muDiscounted(.;\PKernel^\pitheta)}
    {\EEX{A \sim \pitheta(.|X)} {\nabla_\theta \log \pi_\theta (A|X) \left( Q^\pitheta(X,A) - \Qhat^\pitheta(X,A) \right)
    }}
    }_2
    \\ 
    \nonumber
    &\leq
    \frac{1}{1-\gamma} \EEX{X \sim \muDiscounted(.;\PKernel^\pitheta)}
    {\EEX{A \sim \pitheta(.|X)} {\norm{\nabla_\theta \log \pi_\theta (A|X) \left( Q^\pitheta(X,A) - \Qhat^\pitheta(X,A) \right)
    }_2}}
    \\ \nonumber
    &
    \leq
       \frac{1}{1-\gamma} 
	\sqrt{\EEX{X \sim \muDiscounted(.;\PKernel^\pitheta)}{\EEX{A \sim \pitheta(.|X)}{\norm{\nabla_\theta \log \pitheta (A|X)}^2_2}}}
    \times
    \\
    \nonumber
    & \qquad \qquad
	\sqrt{\EEX{X \sim \muDiscounted(.;\PKernel^\pitheta)}{\EEX{A \sim \pitheta(.|X)}{\left|Q^\pitheta(X,A) - \Qhat^\pitheta(X,A) \right|^2}}}    
     \\
    & \leq 
    \frac{B}{1 - \gamma} \norm{Q^\pitheta - \Qhat^\pitheta}_{2(\muDiscounted^\pitheta; \pitheta)}
    \leq 
    \frac{B}{1 - \gamma} \eps_{\text{critic}}.
\end{align}

%XXX PREVIOUS VERSION. CAN PROBABLY GO AWAY. -AMF XXX
%Note that the second term on the RHS of the inequality is the error in PG's as a result of the error of using $\Qhatpi$ instead of $\Qpi$. This term can be further upper bounded as follows:
%%
%\todo{$\rhoDiscounted$ or $\muDiscounted$ in the second line? -AMF}
%\begin{align}
%    \nonumber
%    &\norm{ \nabla_\theta J_\mu(\pi_\theta, \PKernelpi, \Qpi) - \nabla_\theta J_\mu(\pi_\theta, \PKernelpi, \Qhatpi) }_2 \\ \nonumber
%    &= \norm{ \frac{1}{1-\gamma} \EEX{X \sim \rhoDiscounted(.;\PKernelpi)}
%    {\EEX{A \sim \pi(.|X)} {\nabla_\theta \log \pi_\theta (A|X) \left( \Qpi(X,A) - \Qhatpi(X,A) \right)
%    }}
%    }_2\\  \nonumber
%    &\leq \frac{1}{1-\gamma} \EEX{X \sim \muDiscounted(.;\PKernelpi)}
%    {\EEX{A \sim \pi(.|X)} {\norm{\nabla_\theta \log \pi_\theta (A|X) \left( \Qpi(X,A) - \Qhatpi(X,A) \right)
%    }_2}} \\ \nonumber
%    & \leq \frac{1}{1-\gamma} \sqrt{\EEX{X \sim \muDiscounted(.;\PKernelpi)}{\EEX{A \sim \pi(.|X)}{\big|Q^\pi(X,A) - \hat{Q}^\pi(X,A)\big|^2}}}%
%    \sqrt{\EEX{X \sim \muDiscounted(.;\PKernelpi)}{\EEX{A \sim \pi(.|X)}{\norm{\nabla_\theta \log \pi_\theta (A|X)}^2_2}}} \\
%    & \leq 
%    \frac{\epsilon_q F_2}{1-\gamma},
%\end{align}
%where the second inequality is by the Cauchy-Schwarz inequality and the last inequality is from the assumptions.

Plugging the upper bounds~\eqref{eq:PAML-MBPGConvergence-InexactCritic-Theorem-Proof-epsStationarity-forModel-UpperBound},~\eqref{eq:PAML-MBPGConvergence-InexactCritic-Theorem-Proof-ModelError-Term}, and~\eqref{eq:PAML-MBPGConvergence-InexactCritic-Theorem-Proof-CriticError-Term} in~\eqref{eq:PAML-MBPGConvergence-InexactCritic-Theorem-Proof-ModelBasedDirectionalGradient-To-ModelFree} show that
\begin{align}
\label{eq:PAML-MBPGConvergence-Inexact-Theorem-Proof-AverageStationarity}
	\frac{1}{T} \sum_{t=1}^T
	\max_{ \norm{\delta}_2 \leq 1}
		\left|
			\delta^\top \nabla_\theta J_\mu(\pi_{\theta_t}, \PKernel^{\pi_{\theta_t}}, Q^{\pi_{\theta_t}})
		\right|
		\leq \eps_\text{model} + 
		4 \sqrt{ \frac{\Qmax \beta}{T} } +
		\frac{B}{1 - \gamma} \eps_\text{critic}.
%	\frac{1}{T} \sum_{t=1}^T 
\end{align}

We can now evoke Theorem~\ref{thm:PAML-PolicyError-at-Stationarity} for each $\pi_{\theta_t}$ and take a summation over both sides of the inequality.
Suppose that $\pi_{\theta_t}$ is $\eps_t$-stationary, i.e., 
$\delta^\top \nabla_\theta J_\mu(\pi_{\theta_t};\PKernel^{\pitheta_t}, Q^{\pitheta_t}) \leq \eps_t$ for any valid $\delta$, see~\eqref{eq:PAML-Stationarity}.
Also recall that $W = \sup_{\theta \in \Theta} \norm{w^*(\theta;\rhoDiscounted^\piBest)}_2$.
So we have
\begin{align*}
	\frac{1}{T} \sum_{t=1}^T J_\rho(\piBest) - J_\rho(\pi_{\theta_t} )
	& \leq
	\frac{1}{1 - \gamma}
	\frac{1}{T} \sum_{t=1}^T
	\left[
		\LossPolicyError(\theta_t; \rhoDiscounted^\piBest) +
		\norm{	\frac	{\mathrm{d}\rhoDiscounted^\piBest}
					{\mathrm{d} \mu }
			}_\infty
	(1 \vee \norm{w^*(\theta_t;\rhoDiscounted^\piBest)} ) \eps_t
	\right]
	\\
	& \leq
	\frac{1}{1 - \gamma}
	\left[
		\eps_{\text{PAE}} +
%		\sup_{\theta \in \Theta} \LossPolicyError(\theta; \rhoDiscounted^\piBest) +
		\norm{	\frac	{\mathrm{d}\rhoDiscounted^\piBest}
					{\mathrm{d} \mu }
			}_\infty
	(1 \vee W )
	\frac{1}{T} \sum_{t=1}^T \eps_t
	\right],
	\\
	& \leq
	\frac{1}{1 - \gamma}
	\left[
		\eps_{\text{PAE}} +
%		\sup_{\theta \in \Theta} \LossPolicyError(\theta; \rhoDiscounted^\piBest) +
		\norm{	\frac	{\mathrm{d}\rhoDiscounted^\piBest}
					{\mathrm{d} \mu }
			}_\infty
	(1 \vee W )
	\left(
		4 \sqrt{ \frac{\Qmax \beta}{T} } +
		\eps_\text{model} +
		\frac{B}{1 - \gamma} \eps_\text{critic}
	\right)
	\right],
\end{align*}
where we used~\eqref{eq:PAML-MBPGConvergence-Inexact-Theorem-Proof-AverageStationarity} in the last inequality.
This is the desired result.
\end{proof}

\todo{Add some discussion on the smoothness of critic. -AMF}
\todo{Some discussion about the proof? -AMF}

%%%%%%%%%%%%%%%%%%%%%%%%%%%%%%%%%%%%%%%%%%%%%%%
\subsection{Effect of Policy Change on the Loss Function}
\label{sec:PAML-Theory-PolicyChange}

\todo{This is a new section! Re-read and revise it. -AMF}

Recall from Section~\ref{sec:PAML-Algorithm} that since the loss function 
$c_\rho(\PKernel^{\pi_\theta},\PKernelhat^{\pi_\theta})$~\eqref{eq:PAML-DifferenceInGradient} is defined for a particular policy $\pi_\theta$, model $\PKernelhat^{\pi_\theta}$ should be updated based on the most recent $\pi_\theta$, because the policy $\pi_\theta$ gradually changes during the run of a PG algorithm.
An important practical question is how quickly the model expires after a policy update. Should the model be updated very frequently, or can we update it only occasionally?
We empirically study this question in Section~\ref{sec:PAML-Empirical} (see Figure~\ref{fig:delta-loss}).
In this section, we theoretically study this question in some detail.

Suppose that we start from policy a $\pi_{\theta}$, learn a model $\PKernelhat^{\pi_\theta}$ that minimizes the PAML's loss, and then update the policy to $\pi_{\theta'}$.
We would like to know whether we might use the distribution induced by following $\PKernelhat^{\pi_\theta}$ in order to compute the PG w.r.t. the new policy $\theta'$.

Let us introduce a new notation. 
Given two policies $\pi_{\theta_1}$ and $\pi_{\theta_2}$, we define
the PG of starting from initial state distribution $\rho$, following $\PKernel^{\pi_{\theta_1} }$ and evaluating the pointwise gradient \linebreak $\EEX{A \sim \pi_{\theta_2} }{ \nabla_\theta \log \pi_{\theta_2}(A|x) Q^{\pi_{\theta_2}}(x,A)}$ by 
\begin{align}
\label{eq:PAML-GradientOfPerformance-DifferentPolicyandModel}
\nonumber
	\nabla_\theta J(\pi_{\theta_2 }; \PKernel^{\pi_{\theta_1}} ) 
%	\frac{\partial{J(\pi_\theta)}}{\partial \theta}
	& = 
	\sum_{k \geq 0} \gamma^k 
		\int \drho(x)
		\int		
			\PKernel^{\pi_{\theta_1}}(\dx'|x;k)
			\sum_{a' \in \AA} \frac{\partial \pi_{\theta_2} (a'|x')}{\partial \theta} Q^{\pi_{\theta_2} } (x',a')
	\\
	& =
	\frac{1}{1 - \gamma}	
	\int \rhoDiscounted(\dx;\PKernel^{\pi_{\theta_1} })
	\sum_{a \in \AA} \pi_\theta(a|x) \frac{\partial \log \pi_{\theta_2} (a|x)}{\partial \theta} Q^{\pi_{\theta_2} } (x,a).
\end{align}

Note that the PG of $\pi_\theta$ w.r.t. the true model $\PKernel^{\pi}$
is $\nabla_\theta J(\pi_\theta) = \nabla_\theta J(\pi_{\theta}; \PKernel^{\pi_{\theta}} )$ (see ~\eqref{eq:PAML-GradientOfPerformance}) and 
the PG w.r.t. the learned model $\PKernelhat^{\pi_\theta}$ is
$\nabla_\theta \hat{J}(\pi_\theta) = \nabla_\theta J(\pi_{\theta}; \PKernelhat^{\pi_{\theta}} )$.
For both of these, the policy of the model and the policy of the integrand of the PG are the same.

\todo{minor: I found it confusing that the paragraph just above uses 1 and 2 and this one uses ' to distinguish two different policies. Notationally it could be simpler to use ' everywhere -RA}
On the other hand, the PG of $\pi_{\theta'}$ using the model learned at $\pi_\theta$, that is $\PKernelhat^{\pi_\theta}$, is 
$\nabla_\theta J(\pi_{\theta'}; \PKernel^{\pi_{\theta}} )$.
We would like to know how different $\nabla_\theta J(\pi_{\theta'}; \PKernel^{\pi_{\theta}} )$ is compared to the true PG of $\pi_{\theta'}$, which is $\nabla_\theta J(\pi_{\theta'}) = \nabla_\theta J(\pi_{\theta'}; \PKernel^{\pi_{\theta'}} )$.
If the difference is small, it entails that the model is still valid.

Before stating the result, recall that $\PKernelhat^{\pi_\theta}$ is the minimizer of the empirical version of the loss function~\eqref{eq:PAML-DifferenceInGradient}. Depending on how close we get to the minimizer, which is a function of the number of samples, the expressivity of the model space, the optimizer, etc., we might have some error. We assume that the error is $\eps_\text{model}$, i.e., 
\begin{align}
\label{eq:PAML-Theory-PolicyChange-ModelErrorAssumption}
	\norm{
	\nabla_\theta J(\pi_\theta; \PKernel^{\pi_\theta} ) - 
	\nabla_\theta J(\pi_\theta; \PKernelhat^{\pi_\theta} )
		}_2
	\leq
	\eps_\text{model}.
\end{align}

We are ready to state the main result of this section.

%%%%%%%%%%%%%%%%%%%%%%%%%%%%%%%%%%%%%%%%%%%%%%%
\begin{proposition}[Loss Change]
\label{prop:PAML-PolicyChange-to-LossObsoletion}
Consider a policy $\pi_\theta$ with the policy parameterization~\eqref{eq:PAML-PolicyParametrization}. Assume that $B = \sup_{(x,a) \in \XA} \norm{\phi(a|x)}_2 < \infty$, the action space is finite with $\actionnum$ elements, and
 the action-value functions are all $\Qmax$-bounded. Moreover, assume that the model error is bounded by $\eps_\text{model}$~\eqref{eq:PAML-Theory-PolicyChange-ModelErrorAssumption}.
We then have
\begin{align*}
	\norm{ 
		\nabla_\theta J(\pi_{\theta'}; \PKernel^{\pi_{\theta'}} ) -
		\nabla_\theta J(\pi_{\theta'}; \PKernelhat^{\pi_{\theta}} )
		}_2
	\leq
	\eps_\text{model} +
	c_1 \norm{\theta - \theta'}_2,
\end{align*}
with
\[
	c_1 = 	\frac{\Qmax B^2 \actionnum}{1 - \gamma}
	\left[
		12 + \frac{4 \gamma (1 + 2 \actionnum)}{1 - \gamma}
	\right].
\]
\end{proposition}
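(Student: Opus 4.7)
The plan is to decompose the target quantity by triangle inequality into three pieces, each of which can be bounded by either the assumed model error or a Lipschitz-in-$\theta$ argument:
\begin{align*}
\nabla_\theta J(\pi_{\theta'}; \PKernel^{\pi_{\theta'}} ) -
\nabla_\theta J(\pi_{\theta'}; \PKernelhat^{\pi_{\theta}} )
=\, & \underbrace{\bigl[\nabla_\theta J(\pi_{\theta'}; \PKernel^{\pi_{\theta'}} )-\nabla_\theta J(\pi_{\theta}; \PKernel^{\pi_{\theta}} )\bigr]}_{(\mathrm{I})}\\
& + \underbrace{\bigl[\nabla_\theta J(\pi_{\theta}; \PKernel^{\pi_{\theta}} )-\nabla_\theta J(\pi_{\theta}; \PKernelhat^{\pi_{\theta}} )\bigr]}_{(\mathrm{II})}\\
& + \underbrace{\bigl[\nabla_\theta J(\pi_{\theta}; \PKernelhat^{\pi_{\theta}} )-\nabla_\theta J(\pi_{\theta'}; \PKernelhat^{\pi_{\theta}} )\bigr]}_{(\mathrm{III})}.
\end{align*}
Term $(\mathrm{II})$ is bounded by $\eps_\text{model}$ directly from assumption~\eqref{eq:PAML-Theory-PolicyChange-ModelErrorAssumption}. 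So the job reduces to showing that $(\mathrm{I})$ and $(\mathrm{III})$ are each $O(\norm{\theta'-\theta}_2)$ with appropriate constants that add up to $c_1$.

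For $(\mathrm{I})$, I would appeal to the smoothness of the performance function $J(\pi_\theta)$, which is precisely $\beta$-smoothness of the map $\theta\mapsto V^{\pi_\theta}$ established in Lemma~\ref{lem:PAML-PolicySmoothness} in Appendix~\ref{sec:PAML-Theory-Auxiliary}. Specialized to the exponential family with $\beta_1=2B$ and $\beta_2=6B^2$ (Lemma~\ref{lem:PAML-ExponentialFamilyPolicy-Smoothness}), this gives
$\norm{(\mathrm{I})}_2 \le \beta\norm{\theta'-\theta}_2$ with $\beta = \Qmax\left[\tfrac{8\gamma B^2 \actionnum^2}{(1-\gamma)^2} + \tfrac{6 B^2 \actionnum}{1-\gamma}\right]$,
which matches one of the two pieces of $c_1$.

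For $(\mathrm{III})$, the future-state distribution is held fixed at $\rhohatDiscounted^{\pi_\theta}$, so only the integrand $\sum_{a}\pi_{\theta'}(a|x)\nabla_\theta\log\pi_{\theta'}(a|x)\,Q^{\pi_{\theta'}}(x,a)$ varies with $\theta'$. I would bound its Lipschitz constant in $\theta'$ by adding and subtracting intermediate terms so that each increment is of one of three kinds: (a) $|\pi_{\theta'}(a|x)-\pi_\theta(a|x)|$, controlled by $\beta_1=2B$ via Assumption~\ref{asm:PolicySmoothness}; (b) $\norm{\nabla_\theta\log\pi_{\theta'}(a|x)-\nabla_\theta\log\pi_\theta(a|x)}_2$, controlled using $\beta_2=6B^2$ and the boundedness of $\nabla_\theta\log\pi_\theta$ (Lemma~\ref{lem:PAML-ExponentialFamilyPolicy-Boundedness}); and (c) $|Q^{\pi_{\theta'}}(x,a)-Q^{\pi_\theta}(x,a)|$, controlled by the Lipschitzness of the action-value function in the policy parameter (obtainable from a simulation-lemma-style argument: a policy change of $\norm{\theta'-\theta}_2$ in total variation at every state produces a Q-change of order $\tfrac{\gamma \Qmax B}{1-\gamma}\norm{\theta'-\theta}_2$ after discounted propagation). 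Multiplying by $\actionnum$ for the finite action sum and by $B$ for the $\norm{\nabla_\theta\log\pi_{\theta'}}_2$ factor, and discounting by $\tfrac{1}{1-\gamma}$ for the summation over $\rhohatDiscounted$, yields a Lipschitz constant of the form $\tfrac{\Qmax B^2 \actionnum}{1-\gamma}[\,\mathrm{const}+\tfrac{\gamma}{1-\gamma}\mathrm{const}\,]$, whose combination with the $\beta$ from $(\mathrm{I})$ reproduces the stated $c_1$.

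The main obstacle is the careful bookkeeping in $(\mathrm{III})$: verifying the Lipschitzness of $\theta'\mapsto Q^{\pi_{\theta'}}$ in the sup norm (which requires unrolling one Bellman operator and using $\norm{\pi_{\theta'}(\cdot|x)-\pi_\theta(\cdot|x)}_1\le \actionnum\beta_1\norm{\theta'-\theta}_2$ together with discounting), and then combining the three increments in (a)--(c) with the correct constants so that the resulting upper bound matches the explicit form $12+\tfrac{4\gamma(1+2\actionnum)}{1-\gamma}$. This is routine but cumbersome, and is where most of the actual work lives; the structural plan, however, is the three-term telescoping above.
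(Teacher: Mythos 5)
Your proposal is correct and follows essentially the same route as the paper's proof: the identical three-term triangle-inequality decomposition, with term (I) handled by Lemma~\ref{lem:PAML-PolicySmoothness} specialized to the exponential family, term (II) by the model-error assumption, and term (III) by a Lipschitz bound on the integrand $f(x;\theta)=\sum_a\nabla_\theta\pi_\theta(a|x)Q^{\pi_\theta}(x,a)$. The only cosmetic difference is that the paper bounds term (III) via the multivariate mean value theorem applied to the Jacobian $\nabla_\theta f$ (product rule: Hessian of $\pi$ times $Q$ plus $\nabla_\theta\pi$ times $\nabla_\theta Q$, the latter bounded by $\tfrac{2\gamma B\Qmax}{1-\gamma}$ through exactly the unrolling you describe as a simulation-lemma argument), whereas you telescope the product factors discretely — the two computations are equivalent.
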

%%%%%%%%%%%%%%%%%%%%%%%%
\begin{proof}
Consider two policies $\pi_{\theta}$ and $\pi_{\theta'}$.
We have
\begin{align*}
	\norm{ 
		\nabla_\theta J(\pi_{\theta'}; \PKernel^{\pi_{\theta'}} ) -
		\nabla_\theta J(\pi_{\theta'}; \PKernelhat^{\pi_{\theta}} )
		}_2
	\leq
	&
	\norm{ 
		\nabla_\theta J(\pi_{\theta'}; \PKernel^{\pi_{\theta'}} ) -
		\nabla_\theta J(\pi_{\theta}; \PKernel^{\pi_{\theta}} )
		}_2
	+ {}
	\\	
	&
	\norm{ 
		\nabla_\theta J(\pi_{\theta}; \PKernel^{\pi_{\theta}} ) -
		\nabla_\theta J(\pi_{\theta}; \PKernelhat^{\pi_{\theta}} )
		}_2	
	+ {}
	\\	
	&
	\norm{ 
		\nabla_\theta J(\pi_{\theta}; \PKernelhat^{\pi_{\theta}} ) -
		\nabla_\theta J(\pi_{\theta'}; \PKernelhat^{\pi_{\theta}} )
		}_2.
\end{align*}

We consider each term in the RHS separately, and provide an upper bound for them.

The term 
$\smallnorm{ 
		\nabla_\theta J(\pi_{\theta'}; \PKernel^{\pi_{\theta'}} ) -
		\nabla_\theta J(\pi_{\theta}; \PKernel^{\pi_{\theta}} )
		}_2$
is the change in the true PG from $\pi_{\theta}$ to $\pi_{\theta'}$.
It can be written as
\[
	\norm{ \EEX{X \sim \rho}{ \nabla_\theta V^{\pi_{\theta'}}(X) - \nabla_\theta V^{\pi_{\theta}}(X) } }_2.
\]
By Lemma~\ref{lem:PAML-PolicySmoothness}, for any policy that satisfies Assumption~\ref{asm:PolicySmoothness}, we have that 
\[
	\norm{ \nabla_\theta V^{\pi_{\theta'}}(x) - \nabla_\theta V^{\pi_{\theta}}(x) }_2
	\leq
	\beta \norm{\theta' - \theta}_2,
\]
with
$
	\beta =
	\Qmax
	\left[
		\frac{2 \gamma \beta_1^2 \actionnum^2 }{(1-\gamma)^2} + 
		\frac{\beta_2 \actionnum }{1 - \gamma}
	\right]
$.
Lemma~\ref{lem:PAML-ExponentialFamilyPolicy-Smoothness} shows that for the exponential family, 
$	\beta_1 = 2B$ and $\beta_2 = 6B^2$.
Therefore, 
\begin{align}
\label{eq:PAML-PolicyChange-to-LossObsoletion-Proof-Term1}
	\norm{ 
		\nabla_\theta J(\pi_{\theta'}; \PKernel^{\pi_{\theta'}} ) -
		\nabla_\theta J(\pi_{\theta}; \PKernel^{\pi_{\theta}} )
		}_2
	\leq
	\Qmax B^2
	\left( \frac{8 \gamma \actionnum^2}{(1-\gamma)^2} +
		\frac{6\actionnum}{1 - \gamma}
	\right)
	\norm{\theta' - \theta}_2.
\end{align}

The term $	\smallnorm{ 
		\nabla_\theta J(\pi_{\theta}; \PKernel^{\pi_{\theta}} ) -
		\nabla_\theta J(\pi_{\theta}; \PKernelhat^{\pi_{\theta}} )
		}_2$ is the model error at $\pi_{\theta}$ and is upper bounded by $\eps_\text{model}$ by assumption.

To provide an upper bound for
$	\smallnorm{ 
		\nabla_\theta J(\pi_{\theta}; \PKernelhat^{\pi_{\theta}} ) -
		\nabla_\theta J(\pi_{\theta'}; \PKernelhat^{\pi_{\theta}} )
		}_2$,
let us first denote
\[
	f(x;\theta) = \EEX{A \sim \pi_{\theta} }{ \nabla_\theta \log \pi_{\theta}(A|x) Q^{\pi_\theta}(x,A)}.
\]
We then have
\begin{align}
\label{eq:PAML-PolicyChange-to-LossObsoletion-Proof-Term2-Intermediate}
\nonumber
	\norm{ 
		\nabla_\theta J(\pi_{\theta}; \PKernelhat^{\pi_{\theta}} ) -
		\nabla_\theta J(\pi_{\theta'}; \PKernelhat^{\pi_{\theta}} )
		}_2
	& = 
	\frac{1}{1 - \gamma}
	\norm{
		\int \rhoDiscounted(\dx; \PKernelhat^{\pi_\theta}) \left( f(x;\theta) - f(x;\theta') \right)
	}_2
	\\
	&
	\leq
	\frac{1}{1 - \gamma}
		\int \rhoDiscounted(\dx; \PKernelhat^{\pi_\theta}) \norm{ f(x;\theta) - f(x;\theta') }_2.
\end{align}

Lemma~\ref{lem:PAML-MultivariateMeanValueTheorem} shows that
\begin{align}
\label{eq:PAML-PolicyChange-to-LossObsoletion-Proof-Term2-Intermediate-2}
	\norm{ f(x;\theta) - f(x;\theta') }_2
	\leq
	\sup_{\theta} \norm{ \nabla_\theta f(x;\theta) }_2 \norm{\theta - \theta'}_2.
\end{align}
Note that $\nabla_\theta f(x;\theta)$ is a matrix.
To compute $\nabla_\theta f(x;\theta)$, we first note that
$f(x;\theta) = \sum_{a \in \AA} \pi_\theta(a) \nabla_\theta \log \pi_\theta(a|x) Q^{\pi_\theta}(x,a) = \sum_{a \in \AA} \nabla_\theta \pi_\theta( a|x) Q^{\pi_\theta}(x,a)$.
Therefore,
\begin{align}
\label{eq:PAML-PolicyChange-to-LossObsoletion-Proof-gradient-of-f}
\nonumber
	\nabla_\theta f(x;\theta) & =
	\nabla_\theta \sum_{a \in \Actions} \nabla_\theta \pi_\theta( a|x) Q^{\pi_\theta}(x,a)
	\\
	& =
	\sum_{a \in \Actions} \frac{\partial^2 \pi(a|x) }{\partial \theta^2} Q^{\pi_\theta}(x,a) +
	\nabla_\theta \pi(a|x) \nabla_\theta Q^{\pi_\theta}(x,a).
\end{align}

We need to upper bound the norm of each of these terms.

Lemma~\ref{lem:PAML-ExponentialFamilyPolicy-Smoothness} shows that
\begin{align}
\label{eq:PAML-PolicyChange-to-LossObsoletion-Proof-PolicyGradientBound}
	\norm{\nabla_\theta \pi_\theta}_2
	\leq 2B,
\end{align}
\begin{align}
\label{eq:PAML-PolicyChange-to-LossObsoletion-Proof-PolicyHessianBound}
	&
	\norm{ \frac{\partial^2 \pi(a|x) }{\partial \theta} }_2 \leq 6B^2,
\end{align}
in which the matrix norm is the $\ell_2$-induced norm. %, and \todo{}
%

%XXX
%Equation~\eqref{eq:PAML-Smoothness-Of-ExponentialFamilyPolicy-Proof-Hessian-Bound} in the proof of Lemma~\ref{lem:PAML-ExponentialFamilyPolicy-Smoothness} shows that for a policy in the exponential family
%with $\sup_{x,a} \norm{\phi(a|x)}_2 \leq B$, we have
%%
%\begin{align}
%\label{eq:PAML-PolicyChange-to-LossObsoletion-Proof-PolicyHessianBound}
%	\norm{ \frac{\partial^2 \pi(a|x) }{\partial \theta} }_2 \leq 6B^2.
%\end{align}
%
%
%XXXX

We use an argument similar to the proof of PG theorem (Theorem 1 by~\citealt{SuttonMcAleesterSinghMansour2000}) to get
\begin{align*}
	\nabla_\theta Q^{\pi_\theta}(x,a) & =
	\frac{\partial}{\partial \theta}
	\left[
		r(x,a) + \gamma \int \PKernel(\dy | x,a) V^{\pi_\theta}(y)
	\right]
	\\ 
	& 	=
	\gamma \int \PKernel(\dy | x,a) \frac{\partial V^{\pi_\theta}(y)}{\partial \theta}
	\\
	& =
	\gamma \int \PKernel(\dy | x,a) \int \sum_{k \geq 0} \gamma^k \PKernel(\mathrm{d} z | y; k) f(z;\theta),
\end{align*}
where we recursively expanded $\frac{\partial V^{\pi_\theta}(y)}{\partial \theta}$.
Therefore,
\begin{align*}
	\norm{ \nabla_\theta Q^{\pi_\theta}(x,a)}_2 
	\leq
	\frac{\gamma}{1 - \gamma} \sup_{x \in \XX} \norm{f(x;\theta)}_2.
\end{align*}
For the exponential family, we get
\begin{align*}
	\norm{f(x;\theta)}_2 & =
	\norm{  \sum_{a \in \Actions} \nabla_\theta \pi_\theta(a|x) Q^{\pi_\theta}(x,a) \da } 
	\\
	& =
	\norm{  \sum_{a \in \Actions} \pi_\theta(a|x) \left( \phi(a|x) - \EEX{\pi_\theta(\cdot|x)}{\phi(A|x)} \right) Q^{\pi_\theta}(x,a) }_2
	\leq
	2B \Qmax.
\end{align*}
As a result,
\begin{align}
\label{eq:PAML-PolicyChange-to-LossObsoletion-Proof-GradientofQ}
	\norm{ \nabla_\theta Q^{\pi_\theta}(x,a)}_2 
	\leq
	\frac{2 \gamma B \Qmax}{1 - \gamma}.
\end{align}

After plug-in~\eqref{eq:PAML-PolicyChange-to-LossObsoletion-Proof-PolicyHessianBound},~\eqref{eq:PAML-PolicyChange-to-LossObsoletion-Proof-PolicyGradientBound}, and~\eqref{eq:PAML-PolicyChange-to-LossObsoletion-Proof-GradientofQ}
in~\eqref{eq:PAML-PolicyChange-to-LossObsoletion-Proof-gradient-of-f},
we get 
\begin{align}
%\label{eq:PAML-PolicyChange-to-LossObsoletion-Proof-gradient-of-f}
\nonumber
	\norm{\nabla_\theta f(x;\theta)}_2 & \leq
	\sum_{a \in \Actions} \norm{ \frac{\partial^2 \pi(a|x) }{\partial \theta^2} Q^{\pi_\theta}(x,a) }_2 +
	\norm{ \nabla_\theta \pi(a|x) \nabla_\theta Q^{\pi_\theta}(x,a)}_2
	\\
\nonumber	
	&
	\leq
	\sum_{a \in \Actions} \Qmax \norm{ \frac{\partial^2 \pi(a|x) }{\partial \theta^2} }_2 +
	\norm{ \nabla_\theta \pi(a|x)}_2 \norm{\nabla_\theta Q^{\pi_\theta}(x,a)}_2	
	\\
	&
	\leq
	\sum_{a \in \Actions} \Qmax (6B^2)  +
	(2B) \left(\frac{2 \gamma B \Qmax}{1 - \gamma} \right)
	=
	\Qmax B^2 \left( 6 + \frac{4 \gamma}{1 - \gamma}\right) \actionnum.
\end{align}
This result together with~\eqref{eq:PAML-PolicyChange-to-LossObsoletion-Proof-Term2-Intermediate} and~\eqref{eq:PAML-PolicyChange-to-LossObsoletion-Proof-Term2-Intermediate-2} upper bound the third term as follows:
\begin{align}
\label{eq:PAML-PolicyChange-to-LossObsoletion-Proof-Term2}
	\norm{ 
		\nabla_\theta J(\pi_{\theta}; \PKernelhat^{\pi_{\theta}} ) -
		\nabla_\theta J(\pi_{\theta'}; \PKernelhat^{\pi_{\theta}} )
		}_2
	\leq
	\frac{\Qmax B^2}{1 - \gamma}
	\left( 6 + \frac{4 \gamma}{1 - \gamma}\right) \actionnum \norm{\theta - \theta'}_2.
\end{align}

The upper bounds~\eqref{eq:PAML-PolicyChange-to-LossObsoletion-Proof-Term1},~\eqref{eq:PAML-PolicyChange-to-LossObsoletion-Proof-Term2}, and the upper bound on the model error lead to
\begin{align*}
	\norm{ 
		\nabla_\theta J(\pi_{\theta'}; \PKernel^{\pi_{\theta'}} ) -
		\nabla_\theta J(\pi_{\theta'}; \PKernelhat^{\pi_{\theta}} )
		}_2
	\leq
	\eps_\text{model} + 
	\frac{\Qmax B^2 \actionnum}{1 - \gamma}
	\left[
		12 + \frac{4 \gamma (1 + 2 \actionnum)}{1 - \gamma}
	\right]
	\norm{\theta - \theta'}_2.
\end{align*} 
\end{proof}

\todo{Some explanation should go here. -AMF}

\else
    
\todo{Needs a careful reading. -AMF}

We theoretically study the convergence properties of a model-based PG (MBPG) method.
We study how the model error affects the convergence behaviour.
The result is generic for any MBPG method, but it further enlightens why PAML might be a good alternative to MLE.

%w.r.t the error between $\nabla_\theta J(\pi_\theta)$ obtained following the true model $\PKernelTrue$ and $\nabla_\theta \hat{J}(\pi_\theta)$ obtained following $\PKernelhat$, and provide justification for using our proposed loss as an alternative to MLE.
%%
%% One of our contributions is a PG error result that quantifies the error between $\nabla_\theta J(\pi_\theta)$ obtained following the true model $\PKernelTrue$ and $\nabla_\theta \hat{J}(\pi_\theta)$ obtained following $\PKernelhat$, and relates it to the error between the models.
%%
%%Until recently, there had not been much theoretical work on the convergence of the PG algorithm, beyond proving its convergence to a local optimum.\todo{Cite! Who proves convergence? Some guess: \cite{KondaTsitsiklis01}, \cite{Cao2005}, \cite{BaxterBartlett01}, \cite{SuttonMcAleesterSinghMansour2000}, \cite{CaoChen1997}. Verify! }

Our result extends~\citet{AgarwalKakadeLeeMahajar2019} from a model-free setting, where the gradients are computed according to the true dynamics $\PKernelpi$ of the policy, to the model-based setting where the PG is computed according to a learned model, incorporating the model error in the convergence result. 
We also remove the assumption of access to a perfect critic in our analysis, which we defer to the supplementary material due to space constraints.
Additionally, we introduce a new notion of policy approximation error, which is perhaps a better characterization of the approximation error of the policy space.
We would like to note that providing global convergence guarantees for PG methods and variants has only recently attracted attention~\citep{AgarwalKakadeLeeMahajar2019,ShaniEfroniMannor2020,BhandariRusso2019,LiuCaiYangWang2019}.
This section is a very brief summary of the theoretical results in the supplementary material, which not only include proofs, but also more discussion and intuition.

We analyze a projected PG with the assumption that the PGs are computed exactly. We consider a setup where the performance is evaluated according to a distribution $\rho \in \bar{\MM}(\XX)$, but the PG is computed according to a possibly different distribution $\mu \in \bar{\MM}(\XX)$. To be concrete, let us consider a policy space $\Pi = \cset{\pi_\theta}{\theta \in \Theta}$ with $\Theta$ being a convex subset of $\Real^d$ and $\ProjTheta$ be the projection operator onto $\Theta$. 
Let us denote the best policy in the policy class $\Pi$ according to the initial distribution $\rho$ by $\piBestrho$ (or simply $\piBest$, if it is clear from the context), i.e., $\piBest \leftarrow \argmax_{\pi \in \Pi} J_\rho(\pi).$
%
%We assume that the action space is finite, i.e., $\actionnum < \infty$. \todo{Move it to somewhere else. We don't need it for the first theorem, but we need it for the corollary as the constants depends on the number of actions.}
%
We define a function that we call \emph{Policy Approximation Error (PAE)}. Given a policy parameter $\theta$ and $w \in \Real^d$, and for a probability distribution $\nu \in \bar{\MM}(\XX)$, define
%
%\vspace{-0.15in}
%\begin{small}
\begin{align*}
	\LossPolicyError(\theta, w; \nu) \eqdef 
	\mathbb E_{X \sim \nu}
	\Big[
		\Big|
			\sum_{a \in \Actions} & \big( \piBest(a|X) - \pi_\theta(a|X) - 
			\\
			&
			w^\top \nabla_\theta \pi_\theta(a|X) \big) Q^{\pi_\theta}(X,a) 
		\Big|
	\Big].
%	\vspace{-0.05in}
\end{align*}
%\end{small}
This can be roughly interpreted as the error in approximating the improvement in the value from the current policy $\pi_\theta$ to the best policy, $\piBest$, in the class, i.e.,~$\sum_{a \in \Actions} ( \piBest(a|X) - \pi_\theta(a|X) ) Q^{\pi_\theta}(X,a)$, by a linear model 
$\sum_{a \in \Actions} w^\top \nabla_\theta \pi_\theta(a|X) Q^{\pi_\theta}(X,a) = 
w^\top \EEX{A \sim \pi_\theta(\cdot|X)}{\nabla_\theta \log \pi_\theta(a|X) Q^{\pi_\theta}(X,a) }$.
\todo{Maybe revise this interpretation. Is this clear enough? And this is ``roughly'' true because the improvement interpretation only holds when $\nu = \rhoDiscounted^\piBest$. -AMF}

This quantity can be compared to the Bellman Policy Error of~\citet{AgarwalKakadeLeeMahajar2019}, which is 
% $	L_\text{BPE}(\theta, w; \nu) \eqdef 
% 	\EEX{X \sim \nu}
% 		{
% 			\sum_{a \in \Actions} 
% 			\left|
% 			\argmax_{a \in \AA} Q^{\pi_\theta}(X,a) - \pi_\theta(a|X) - w^\top \nabla_\theta \pi_\theta(a|X)
% 			\right| 
% 		}$.
\begin{align*}
    L_\text{BPE}(\theta, w; \nu) \eqdef 
	\mathbb{E}_{X \sim \nu}
	\Big[
		\Big|
			&
			\sum_{a \in \Actions} 
			\big|
			\argmax_{a \in \AA} Q^{\pi_\theta}(X,a) - 
			\\ 
			&
			\pi_\theta(a|X) - w^\top \nabla_\theta \pi_\theta(a|X)
			\big| 
		\Big|
	\Big].
\end{align*}
There are two differences between these two notions of policy approximation errors.
The first is that $\LossPolicyError$ considers how well one can approximate the best policy in the policy class $\Pi$, instead of the greedy policy w.r.t.~the action-value function of the current policy as in $L_\text{BPE}$. Moreover, while $L_{\text{BPE}}$ ignores the value function and its interaction with the policy error, $\LossPolicyError$ explicitly considers it. For example, if the reward function is constant everywhere, the action-value function  $\Qpi$ for any policy would be constant too. In this case, $\LossPolicyError(\theta;\nu)$ is zero (simply choose $w = 0$ as the minimizer), but $L_\text{BPE}$ may not be.
We leave further study of these two policy approximation errors to a future work.
%it considers the interaction of the value function and the policy, as opposed to the error in only approximating the policy as in their result.
%
%We explain this in more detail in the supplementary material.

For any $\theta \in \Theta$, we can define the best $w^*(\theta) = w^*(\theta;\nu)$ that minimizes $\LossPolicyError(\theta, w; \nu)$ as
\begin{align}
\label{eq:PAML-Loss-PAE-wstar}
	w^*(\theta;\nu) \leftarrow \argmin_{w + \theta \in \Theta} \LossPolicyError (\theta, w; \nu).
\end{align}
We use $\LossPolicyError(\theta;\nu)$ to represent $\LossPolicyError(\theta, w^*(\theta); \nu)$. 
We may drop the distribution $\nu$ whenever it is clear from the context.
We consider a projected PG algorithm that uses the model $\PKernelhat^{\pi_{\theta_k}}$ and the exact value function $Q^{\pi_k}$ to compute the gradient, i.e.,
%
% {\small
% \begin{align}
% \label{eq:PAML-ProjectedPGUpdate-WithModel}
% 	\theta_{t+1} \leftarrow \ProjTheta \left [ \theta_t + \eta \nabla_\theta \hat{J}_\mu (\pi_{\theta_k}, \hat{Q}^{\pi_k}) \right ],
% \end{align}}
{\small
\begin{align}
\label{eq:PAML-ProjectedPGUpdate-WithModel}
	\theta_{t+1} \leftarrow \ProjTheta \left [ \theta_t + \eta \nabla_\theta J_\mu (\pi_{\theta_k}, \PKernelhat^{\pi_{\theta_k}}, Q^{\pi_k}) \right ],
\end{align}}
with a learning rate $\eta > 0$ to be specified. 
Refer to the supplementary materials for results on when an estimated, and thus inexact, model-free critic is used.

We require the following smoothness assumption on the policy space.
%
% \vspace{-0.05in}
\begin{assumption}[(Assumption 6.12 of~\citet{AgarwalKakadeLeeMahajar2019})]
\label{asm:PolicySmoothness}
Assume that there exist finite constants $\beta_1, \beta_2 \geq 0$ such that for all $\theta_1, \theta_2 \in \Theta$, and for all $(x,a) \in \XA$, we have
$| \pi_{\theta_1}(a|x) - \pi_{\theta_2}(a|x) | \leq \beta_1 \norm{\theta_1 - \theta_2}_2$ and
$\smallnorm{ \nabla_\theta \pi_{\theta_1}(a|x) - \nabla_\theta \pi_{\theta_1}(a|x)  }_2
	\leq \beta_2 \norm{\theta_1 - \theta_2}_2$.
\end{assumption}
% \vspace{-0.05in}
%
As an example, this assumption holds for an exponential family 
$	\pi_\theta(a | x) \propto 
\exp \left( \phi^\top(a|x) \theta \right)
$
with bounded features $\norm{\phi(a|x)}_2 \leq B$.
In that case,  $\beta_1 = 2B$ and $\beta_2 = 6B^2$.

We are now ready to state the convergence guarantee for the projected PG method for a general policy class.

\begin{theorem}
\label{thm:PAML-MBPG-Convergence-Compact-New}
Consider any initial distributions $\rho, \mu \in \bar{\MM}(\XX)$ and a policy space $\Pi$ parameterized by $\theta \in \Theta$ with $\Theta$ being a convex subset of $\Real^d$.
Assume that all policies $\pi_\theta \in \Pi$ satisfy Assumption~\ref{asm:PolicySmoothness} and $0 \leq \gamma < 1$.
Let $T$ be an integer number. Starting from a $\pi_{\theta_0} \in \Pi$, consider the sequence of policies $\pi_{\theta_1}, \dotsc, \pi_{\theta_T}$ generated by the projected model-based PG algorithm~\eqref{eq:PAML-ProjectedPGUpdate-WithModel} with step-size as defined in the Supplementary. 
%$\eta = \frac{1}{\beta}$.
%
Assume that for any policy $\pi_\theta \in \{ \pi_{\theta_0}, \dotsc, \pi_{\theta_{T-1}} \}$, there exist constants 
$\eps_{\text{PAE}}$ and $\eps_{\text{model}}$ 
such that
$\LossPolicyError(\theta; \rhoDiscounted^\piBest) \leq \eps_{\text{PAE}}$ (policy approximation error)
and
$\smallnorm{ \nabla_\theta J_\mu(\pi_\theta) - \nabla_\theta \hat{J}_\mu(\pi_\theta) }_2 \leq \eps_{\text{model}}$ (model error).
%
%
%
%Assume that for any policy $\pi_\theta \in \{ \pi_{\theta_0}, \dotsc, \pi_{\theta_{T-1}} \}$, there exists a constant $\eps_{\text{model}}$ such that the policy gradient according to the model $\PKernelhat^{\pi_\theta}$ satisfies
%%
%$
%	\smallnorm{ \nabla_\theta J_\mu(\pi_\theta) - \nabla_\theta \hat{J}_\mu(\pi_\theta) }_2 \leq \eps_{\text{model}}
%$.
%
We then have
%
%
%\begin{small}
%\vspace{-0.05in}
\begin{align*}
	& \EEX{t \sim \text{Unif}(1,\dotsc, T)}{
	J_\rho(\piBest) - J_\rho(\pi_{\theta_t})}
%	\min_{t = 0, \dotsc, T}
%	J_\rho(\piBest) - J_\rho(\pi_{\theta_t})
	\leq
	\\ &
    \mathcal{O}\left(
    \frac{\eps_{\text{PAE}}}{1 - \gamma} + \frac{\eps_\text{model}}{1-\gamma} + \frac{1}{(1 - \gamma)\sqrt{T}}
    \right).
\end{align*}
%\end{small}
%
%
\end{theorem}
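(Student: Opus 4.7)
The plan is to follow the same three-stage argument used in Theorem~\ref{thm:PAML-MBPG-Convergence} (the non-compact version): (i) establish smoothness of the model-based performance $\hat{J}_\mu(\pi_\theta)$ as a function of $\theta$, (ii) apply a standard projected-gradient-ascent guarantee for smooth nonconvex objectives to obtain an $\eps$-stationarity bound w.r.t.~the \emph{model}, and (iii) convert this into an $\eps$-stationarity bound w.r.t.~the \emph{true} dynamics and feed it into the performance-difference inequality of Theorem~\ref{thm:PAML-PolicyError-at-Stationarity}.

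First, under Assumption~\ref{asm:PolicySmoothness} together with the boundedness of $Q^{\pi_\theta}$ by $\Qmax$, one can show that $\hat{J}_\mu(\pi_\theta)$ is $\beta$-smooth in $\theta$ for an appropriate $\beta = O\bigl(\Qmax(\beta_1^2\actionnum^2/(1-\gamma)^2 + \beta_2\actionnum/(1-\gamma))\bigr)$; this is the analogue of~\eqref{eq:PAML-MBPG-Convergence-Theorem-beta}. With step-size $\eta = 1/\beta$, the gradient-mapping $G^\eta(\theta_t)$ of the projected update~\eqref{eq:PAML-ProjectedPGUpdate-WithModel} satisfies, by Lemma~\ref{lem:PAML-Beck-Expectation} (a variant of Theorem~10.15 of Beck), the bound $\EEX{t \sim \mathrm{Unif}(0,\dotsc,T-1)}{\norm{G^\eta(\theta_t)}_2} \leq \sqrt{4\Qmax\beta/T}$, since the suboptimality at $\theta_0$ is at most $2\Qmax/(1-\gamma)$ times $(1-\gamma)$. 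Combining this with Proposition~\ref{prop:PAML-SmallGradientMapping-to-epsStationarity} yields an averaged stationarity guarantee
\[
\frac{1}{T}\sum_{t=1}^T \max_{\norm{\delta}_2 \leq 1}\delta^\top \nabla_\theta \hat{J}_\mu(\pi_{\theta_t}) \;\leq\; 4\sqrt{\Qmax\beta/T}.
\]

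Second, I would use the triangle inequality together with the assumed model-error bound $\smallnorm{\nabla_\theta J_\mu(\pi_\theta) - \nabla_\theta \hat{J}_\mu(\pi_\theta)}_2 \leq \eps_{\text{model}}$ to transfer the stationarity guarantee from $\hat{J}_\mu$ to $J_\mu$, obtaining
\[
\frac{1}{T}\sum_{t=1}^T \max_{\norm{\delta}_2 \leq 1}\delta^\top \nabla_\theta J_\mu(\pi_{\theta_t}) \;\leq\; \eps_{\text{model}} + 4\sqrt{\Qmax\beta/T}.
\]
So each iterate $\pi_{\theta_t}$ is $\eps_t$-stationary for some $\eps_t$ whose average over $t$ is controlled by the right-hand side.

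Third, I would apply Theorem~\ref{thm:PAML-PolicyError-at-Stationarity} to each $\pi_{\theta_t}$, bounding $\LossPolicyError(\theta_t;\rhoDiscounted^\piBest) \leq \eps_{\text{PAE}}$ and absorbing the factor $\smallnorm{\mathrm{d}\rhoDiscounted^\piBest/\dmu}_\infty (1 \vee W)$ into the $\mathcal{O}(\cdot)$ constants. Averaging over $t$ and substituting the stationarity bound above gives the stated $\mathcal{O}(\eps_{\text{PAE}}/(1-\gamma) + \eps_{\text{model}}/(1-\gamma) + 1/((1-\gamma)\sqrt{T}))$ rate. The main obstacle is the second step: the projected gradient ascent is executed on $\hat{J}_\mu$, but the performance-difference lemma requires stationarity w.r.t.\ the true gradient $\nabla_\theta J_\mu$, and it is precisely at this juncture that the model-error assumption (which is what PAML's loss directly minimizes) is invoked. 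Everything else is essentially bookkeeping, hiding $\beta$, the distribution-mismatch coefficient, $W$, and $\Qmax$ inside the big-$\mathcal{O}$.
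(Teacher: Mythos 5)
Your proposal follows exactly the paper's own argument for the full version of this result (Theorem~\ref{thm:PAML-MBPG-Convergence}): smoothness of $\hat{J}_\mu$ via Lemma~\ref{lem:PAML-PolicySmoothness}, the projected-gradient stationarity bound from Lemma~\ref{lem:PAML-Beck-Expectation} and Proposition~\ref{prop:PAML-SmallGradientMapping-to-epsStationarity}, the triangle-inequality transfer from $\nabla_\theta \hat{J}_\mu$ to $\nabla_\theta J_\mu$ using $\eps_{\text{model}}$, and finally Theorem~\ref{thm:PAML-PolicyError-at-Stationarity} averaged over iterates. This is correct and matches the paper's proof in both structure and the key constants.
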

%\vspace{-0.05in}
%
%

This result shows the effect of the policy approximation error, the model error, and the number of iterations, and order notation is used to drop constants.
We observe that the error due to optimization decreases as $\mathcal{O}(\frac{1}{\sqrt{T}})$.
The policy approximation error, $\eps_{\text{PAE}}$, is similar to the function approximation term (or bias) in supervised learning, and depends on how expressive the policy space is. 
This term may not go to zero, which means that the projected PG method may not find the best policy in the class.
% What we know about the properties of Policy Approximation Error of this work or Bellman Policy Error of~\citet{AgarwalKakadeLeeMahajar2019} are rather limited at the moment, and is an interesting future research direction.

The model error, $\eps_\text{model}$, captures how well one can replace the PG computed according to the true dynamics $\PKernelpi$ with the learned dynamics $\PKernelpihat$. 
Its magnitude depends on how expressive the model class is, the number of samples used in minimizing the loss, etc.
Note that this error is the same as the population loss of PAML~\eqref{eq:PAML-DifferenceInGradient}. 
The PAML objective appears naturally as a factor in the convergence result of a generic MBPG method.

%
% The distribution mismatch between the discounted future-state distribution of following $\piBest$ with an initial state distribution of $\rho$ and the initial distribution $\mu$, used for the computation of PG shows itself in 
% the Radon-Nikodym derivative 
% $\smallnorm{	\frac	{\mathrm{d}\rhoDiscounted^\piBest}
% 					{\mathrm{d} \mu }
% 			}_\infty$.

%This result shows that if the PGs are estimated with enough accuracy, and the value function error is small, the policy obtained by an MBPG algorithm converges to the best policy in the class. 
%\todo{More discussions should go here! -AMF}

How does PAML compare to MLE as an objective for model learning? 
Recall that the MLE is the minimizer of the $\KL$-divergence between the empirical distribution of samples generated from $\PKernelpi$ and $\PKernelpihat$. 
On the other hand, the population version of PAML's loss \eqref{eq:PAML-DifferenceInGradient} is exactly the error in the PG estimates that we care about. 
% We theoretically analyze this difference in the supplementary materials. 
We theoretically analyze the error between $\nabla_\theta J(\pi_\theta)$ obtained following the true model $\PKernelTrue$ and $\nabla_\theta \hat{J}(\pi_\theta)$ obtained following $\PKernelhat$, and relate it to the error between the models.
We only briefly report this result here, and defer its detailed description to the supplementary material.
The result states that for an exponential family parametrization of a policy (see the supplementary for a general policy parameterization),
the PG error $\smallnorm{
		\nabla_\theta J(\pi_\theta) - 
		\nabla_\theta \hat{J}(\pi_\theta)
		}_2$
can be upper bounded by
\begin{align}
	\frac{\gamma \Qmax B_2}{(1-\gamma)^2}
	\times
	\begin{cases}
		\cPG(\rho, \nu; \pi_\theta) \norm{\Delta \PKernel^{\pi_\theta}}_{1,1(\nu)}, \\
		2 \norm{\Delta \PKernel^{\pi_\theta}}_{1,\infty},
	\end{cases}
	\label{PG-upperbound-concise}
\end{align}
where $B_2$ is the $\ell_2$-norm of features used in the definition of the policy, the value function is bounded by $Q_\text{max}$, 
$\norm{\Delta \PKernel^{\pi_\theta}}_{1,1(\nu)}$ and 
$\norm{\Delta \PKernel^{\pi_\theta}}_{1,\infty}$ are total variation-based norms of the model error
$\DeltaPKernelpi = \PKernelpi - \PKernelpihat$,
and
$
	\cPG(\rho, \nu; \pi_\theta) 
	\eqdef
	\smallnorm{
		\frac	{\mathrm{d} \rhoDiscounted^{\pi_\theta} }
			{\dnu	}
		}_\infty
$ is the supremum of the Radon-Nikodym derivative of distribution $\rhoDiscounted^{\pi_\theta}$ w.r.t. $\nu$.

As we show in the supplementary, the R.H.S. ~of \eqref{PG-upperbound-concise} can be further upper-bound by a function of the $\KL$-divergence between the distributions through an application of Pinsker's inequality.
This upper bound suggests why PAML might be a more suitable approach in learning a model. An MLE-based approach tries to minimize an upper bound of an upper bound for the quantity that we care about (PG error). This consecutive upper bounding might be quite loose. On the other hand, the population version of PAML's loss \eqref{eq:PAML-DifferenceInGradient} is exactly the error in the PG estimates that we care about.

A question that may arise is that although these two losses are different, are their minimizers the same? 
In Figures~\ref{fig:PAML-GMM-visualization} and~\ref{fig:PAML-GMM-contour} we show through a simple visualization that the minimizers of PAML and $\KL$ could indeed be different. 
We illustrate that the minimizers of PAML and $\KL$ are different when we seek to estimate the expectation of a function $f$, whose weight lies mostly on a specific part of the underlying distribution. 
\begin{figure}[t]
    \centering
    \begin{subfigure}[b]{1.0\linewidth}
     \centering
        \includegraphics[width = \linewidth]{plots/annotated_reverse_forward_kl.pdf}
% \ifSupp      \else   \vspace{-0.25in} \fi
        \caption{Visualization of minimizers for PAML and MLE. $\PKernelTrue$ is a Gaussian mixture model with 2 modes and the learned model $\PKernelhat$ is a single Gaussian. The loss minimized by PAML for this simple case is: $|\sum_x (\PKernelTrue - \PKernelhat)f(x)|^2$, where f is an arbitrary function.}
        \label{fig:PAML-GMM-visualization}%
    \end{subfigure} 
    % \vspace{0.15in}
    \begin{subfigure}[b]{0.5\textwidth}
        \centering
        \includegraphics[width = \linewidth]{plots/contours2}
        % \vspace{-0.25in} 
        \caption{Contours of the two loss surfaces for Figure \ref{fig:PAML-GMM-visualization}(iii). Note that the minimizers of PAML and $\KL$-divergence are at different points. (The losses were log-normalized for better visual contrast in this figure.)}
        \label{fig:PAML-GMM-contour}%
    \end{subfigure}
    % \label{fig:GMM}%
    % \vspace{-0.22in}
\end{figure}%

\if0
XXX

XXX

XXX
\todo{Revise! -AMF}
We theoretically study the convergence properties of a generic model-based PG (MBPG) method w.r.t the error between $\nabla_\theta J(\pi_\theta)$ obtained following the true model $\PKernelTrue$ and $\nabla_\theta \hat{J}(\pi_\theta)$ obtained following $\PKernelhat$, and provide justification for using our proposed loss as an alternative to MLE.
%
% One of our contributions is a PG error result that quantifies the error between $\nabla_\theta J(\pi_\theta)$ obtained following the true model $\PKernelTrue$ and $\nabla_\theta \hat{J}(\pi_\theta)$ obtained following $\PKernelhat$, and relates it to the error between the models.
%
%Until recently, there had not been much theoretical work on the convergence of the PG algorithm, beyond proving its convergence to a local optimum.\todo{Cite! Who proves convergence? Some guess: \cite{KondaTsitsiklis01}, \cite{Cao2005}, \cite{BaxterBartlett01}, \cite{SuttonMcAleesterSinghMansour2000}, \cite{CaoChen1997}. Verify! }

There has been a recent surge of interest in providing global convergence guarantees for PG methods and variants~\citep{AgarwalKakadeLeeMahajar2019,ShaniEfroniMannor2019,BhandariRusso2019,LiuCaiYangWang2019}. This section is based on the recent work by~\citet{AgarwalKakadeLeeMahajar2019}, who have provided convergence results for several variations of the PG method. Their result is for a model-free setting, where the gradients are computed according to the true dynamics $\PKernelpi$ of the policy. We modify their result to show the convergence of MBPG. In addition to this difference, we introduce a new notion of policy approximation error, which is perhaps a better characterization of the approximation error of the policy space.

\todo{Maybe if I include the MBPG result, I should make it as twofold goals of this section.}

Instead of extending their result to be suitable for the model-based setting, we provide a slightly, but crucially, different result for the convergence of a PG algorithm. In particular, we consider the same setting as in Section 6.2 (Projected Policy Gradient for Constrained Policy Classes) of~\citet{AgarwalKakadeLeeMahajar2019} and prove a result similar to their Theorem 6.11. We briefly mention that the main difference with their result is that our policy approximation error, to be defined shortly, considers {\bf 1)} how well one can approximate the best policy in the policy class $\Pi$, instead of the greedy policy w.r.t.~the action-value function of the current policy, in their result, and {\bf 2)} the interaction of the value function and the policy, as opposed to the error in only approximating the policy in their result.
We explain this in more detail in the supplementary material.
This result, in turn, can be used to prove a convergence guarantee, as in their Corollary 6.14.
Before continuing, we mention that we liberally use the groundwork provided by~\citet{AgarwalKakadeLeeMahajar2019}.

We analyze a projected PG with the assumption that the PGs are computed exactly. We consider a setup where the performance is evaluated according to a distribution $\rho \in \bar{\MM}(\XX)$, but the PG is computed according to a possibly different distribution $\mu \in \bar{\MM}(\XX)$. To be concrete, let us consider a policy space $\Pi = \cset{\pi_\theta}{\theta \in \Theta}$ with $\Theta$ being a convex subset of $\Real^d$ and $\ProjTheta$ be the projection operator onto $\Theta$. 
Let us denote the best policy in the policy class $\Pi$ according to the initial distribution $\rho$ by $\piBestrho$ (or simply $\piBest$, if it is clear from the context), i.e., $\piBest \leftarrow \argmax_{\pi \in \Pi} J_\rho(\pi).$
%
%We assume that the action space is finite, i.e., $\actionnum < \infty$. \todo{Move it to somewhere else. We don't need it for the first theorem, but we need it for the corollary as the constants depends on the number of actions.}
%
We define a function that we call it \emph{Policy Approximation Error (PAE)}. Given a policy parameter $\theta$ and $w \in \Real^d$, and for a probability distribution $\nu \in \bar{\MM}(\XX)$, define
%
% \vspace{-0.15in}
\begin{small}
\begin{align*}
	\LossPolicyError(\theta, w; \nu) \eqdef 
	\mathbb E_{X \sim \nu}
	\Big[
		\Big|
			\sum_{a \in \Actions} & \big( \piBest(a|X) - \pi_\theta(a|X) - 
% 			\\
% 			&
			w^\top \nabla_\theta \pi_\theta(a|X) \big) Q^{\pi_\theta}(X,a) 
		\Big|
	\Big].
\end{align*}
\end{small}
% \vspace{-0.15in}
This can be roughly interpreted as the error in approximating the improvement in the value from the current policy $\pi_\theta$ to the best policy in the class, $\piBest$, i.e.,~$\sum_{a \in \Actions} ( \piBest(a|X) - \pi_\theta(a|X) ) Q^{\pi_\theta}(X,a)$, by a linear model 
$\sum_{a \in \Actions} w^\top \nabla_\theta \pi_\theta(a|X) Q^{\pi_\theta}(X,a) = 
w^\top \EEX{A \sim \pi_\theta(\cdot|X)}{\nabla_\theta \log \pi_\theta(a|X) Q^{\pi_\theta}(X,a) }$.
\todo{Maybe revise this interpretation. Is this clear enough? And this is ``roughly'' true because the improvement interpretation only holds when $\nu = \rhoDiscounted^\piBest$. -AMF}

For any $\theta \in \Theta$, we can define the best $w^*(\theta) = w^*(\theta;\nu)$ that minimizes $\LossPolicyError(\theta, w; \nu)$ as
\begin{align}
\label{eq:PAML-Loss-PAE-wstar}
	w^*(\theta;\nu) \leftarrow \argmin_{w + \theta \in \Theta} \LossPolicyError (\theta, w; \nu).
\end{align}
We use $\LossPolicyError(\theta;\nu)$ to represent $\LossPolicyError(\theta, w^*(\theta); \nu)$. We may drop the distribution $\nu$ whenever it is clear from the context.
We consider a projected PG algorithm that uses the model $\PKernelhat^{\pi_{\theta_k}}$ and value function estimate $\hat{Q}^{\pi_k}$ to compute the gradient, i.e.,
\begin{align}
\small
\label{eq:PAML-ProjectedPGUpdate-WithModel}
	\theta_{t+1} \leftarrow \ProjTheta \left [ \theta_t + \eta \nabla_\theta \hat{J}_\mu (\pi_{\theta_k}, \hat{Q}^{\pi_k}) \right ],
\end{align}
with a learning rate $\eta > 0$ to be specified.

\todo{What is the norm $\smallnorm{\Qhatpi - \Qpi}_2 \leq \epsilon_q$? What is the underlying distribution? -AMF}

The following result provides the convergence guarantee for the projected PG method for a general policy class.

\begin{theorem}
\label{thm:PAML-MBPG-Convergence-Compact-New}
Consider any initial distributions $\rho, \mu \in \bar{\MM}(\XX)$ and a policy space $\Pi$ parameterized by $\theta \in \Theta$ with $\Theta$ being a convex subset of $\Real^d$.
Assume that all policies $\pi_\theta \in \Pi$ satisfy Assumption~\ref{asm:PolicySmoothness}.
Furthermore, suppose that the value function is bounded by $\Qmax$, the MDP has a finite number of actions $\actionnum$, and $0 \leq \gamma < 1$.
Let
\begin{align}
\label{eq:PAML-MBPG-Convergence-Theorem-beta}
	\beta =
	\Qmax
	\left[
		\frac{2 \gamma \beta_1^2 \actionnum^2 }{(1-\gamma)^2} + 
		\frac{\beta_2 \actionnum }{1 - \gamma}
	\right].
\end{align}
Let $T$ be an integer number. Starting from a $\pi_{\theta_0} \in \Pi$, consider the sequence of policies $\pi_{\theta_1}, \dotsc, \pi_{\theta_T}$ generated by the projected model-based PG algorithm~\eqref{eq:PAML-ProjectedPGUpdate-WithModel} with step-size $\eta = \frac{1}{\beta}$.
Let $W = \sup_{\theta \in \Theta} \norm{w^*(\theta;\rhoDiscounted^\piBest)}_2$, and assume that $W < \infty$.
Assume that for any policy $\pi_\theta \in \{ \pi_{\theta_0}, \dotsc, \pi_{\theta_{T-1}} \}$, there exists a constant $\eps_{\text{model}}$ such that the policy gradient according to the model $\PKernelhat^{\pi_\theta}$ satisfies
\begin{align*}
	\norm{ \nabla_\theta J_\mu(\pi_\theta) - \nabla_\theta \hat{J}_\mu(\pi_\theta) }_2 \leq \eps_{\text{model}}.
\end{align*}
We then have
\begin{align*}
	\EEX{t \sim \text{Unif}(1,\dotsc, T)}{
	J_\rho(\piBest) - J_\rho(\pi_{\theta_t})}
%	\min_{t = 0, \dotsc, T}
%	J_\rho(\piBest) - J_\rho(\pi_{\theta_t})
	\leq
	\frac{1}{1 - \gamma}
	\left[
		\sup_{\theta \in \Theta} \LossPolicyError(\theta; \rhoDiscounted^\piBest) +
		\norm{	\frac	{\mathrm{d}\rhoDiscounted^\piBest}
					{\mathrm{d} \mu }
			}_\infty
	(1 \vee W )
	\left(
		4 \sqrt{ \frac{\Qmax \beta}{T} } +
		\eps_\text{model}
	\right)
	\right].	
\end{align*}
\end{theorem}

XXX Romina's COMPACT VERSION XXX
\begin{theorem}
\label{thm:PAML-MBPG-Convergence-Compact}
Consider any initial distributions $\rho, \mu \in \bar{\MM}(\XX)$ and a policy space $\Pi$ parameterized by $\theta \in \Theta$ with $\Theta$ being a convex subset of $\Real^d$.
%Assume that all policies $\pi_\theta \in \Pi$ satisfy Assumption~\ref{asm:PolicySmoothness}. 
Assume that $\smallnorm{\nabla_\theta \log \pi_\theta}_2 \leq F_2$, and that there is an estimate $\Qhatpi$ of the value function such that $\smallnorm{\Qhatpi - \Qpi}_2 \leq \epsilon_q$.
In addition, assume that $\pi_\theta$ is $\beta_1$-smooth \citep{AgarwalKakadeLeeMahajar2019} and $\nabla_\theta \pi_\theta$ is $\beta_2$-smooth.
Furthermore, suppose that the value function is bounded by $\Qmax$, the MDP has a finite number of actions $\actionnum$, and $0 \leq \gamma < 1$.
Let
{\small
\begin{align}
\label{eq:PAML-MBPG-Convergence-Theorem-beta}
	\beta =
	\Qmax
	\left[
		\frac{2 \gamma \beta_1^2 \actionnum^2 }{(1-\gamma)^2} + 
		\frac{\beta_2 \actionnum }{1 - \gamma}
	\right].
\end{align}
}
Let $T$ be an integer number. Starting from a $\pi_{\theta_0} \in \Pi$, consider the sequence of policies $\pi_{\theta_1}, \dotsc, \pi_{\theta_T}$ generated by the projected model-based PG algorithm~\eqref{eq:PAML-ProjectedPGUpdate-WithModel} with step-size $\eta = \frac{1}{\beta}$.
Let $W = \sup_{\theta \in \Theta} \norm{w^*(\theta;\rhoDiscounted^\piBest)}_2$, and assume that $W < \infty$.
Assume that for any policy $\pi_\theta \in \{ \pi_{\theta_0}, \dotsc, \pi_{\theta_{T-1}} \}$, there exists a constant $\eps_{\text{model}}$ such that the policy gradient according to the model $\PKernelhat^{\pi_\theta}$ satisfies
$
	\smallnorm{ \nabla_\theta J_\mu(\pi_{\theta_t}, \Qhatpi) - \nabla_\theta \hat{J}_\mu(\pi_{\theta_t}, \Qhatpi) }_2 \leq \eps_{\text{model}}.
$
We then have
\small
\begin{align*}
	\EEX{t \sim \text{Unif}(0, \dotsc, T)}{
	J_\rho(\piBest) - J_\rho(\pi_{\theta_t})}
	\leq
	\frac{1}{1 - \gamma}
	\left[
		\sup_{\theta \in \Theta} \LossPolicyError(\theta; \rhoDiscounted^\piBest) +
		\norm{	\frac	{\mathrm{d}\rhoDiscounted^\piBest}
					{\mathrm{d} \mu }
			}_\infty
	(1 \vee W )
	\left(
		4 \sqrt{ \frac{\Qmax \beta}{T} } +
		\frac{\eps_q F_2}{1-\gamma}
		+
		\eps_\text{model}
	\right)
	\right].	
\end{align*}
\end{theorem}

This result shows that if the PGs are estimated with enough accuracy, and the value function error is small, the policy obtained by an MBPG algorithm converges to the best policy in the class. 
\todo{More discussions should go here! -AMF}

How does PAML compare to MLE as an objective for model learning? 
Recall that the MLE is the minimizer of the $\KL$-divergence between the empirical distribution of samples generated from $\PKernelpi$ and $\PKernelpihat$. 
On the other hand, the population version of PAML's loss \eqref{eq:PAML-DifferenceInGradient} is exactly the error in the PG estimates that we care about. We theoretically analyze this difference in the supplementary materials. A question that may arise is that although these two losses are different, are their minimizers the same? In Figures~\ref{fig:PAML-GMM-visualization} and~\ref{fig:PAML-GMM-contour} we show through a simple visualization that the minimizers of PAML and $\KL$ could indeed be different. We illustrate that the minimizers of PAML and $\KL$ are different when we seek to estimate the expectation of a function $f$, whose weight lies mostly on a specific part of the underlying distribution. 
\begin{figure}[t]
    \centering
    \begin{subfigure}[b]{0.55\linewidth}
     \centering
        \includegraphics[width = \linewidth]{plots/annotated_reverse_forward_kl.pdf}
% \ifSupp      \else   \vspace{-0.25in} \fi
        \caption{Visualization of minimizers for PAML and MLE. $\PKernelTrue$ is a Gaussian mixture model with 2 modes and the learned model $\PKernelhat$ is a single Gaussian. The loss minimized by PAML for this simple case is: $|\sum_x (\PKernelTrue - \PKernelhat)f(x)|^2$, where f is an arbitrary function.}
        \label{fig:PAML-GMM-visualization}%
    \end{subfigure} 
    \hspace{0.5em}
    % \vspace{0.15in}
    \begin{subfigure}[b]{0.42\textwidth}
        \centering
        \includegraphics[width = \linewidth]{plots/contours2}
        % \vspace{-0.25in} 
        \caption{Contours of the two loss surfaces for Figure \ref{fig:PAML-GMM-visualization}(iii). Note the locations of the minimizers for each. (The losses were log-normalized for better visual contrast in this figure.)}
        \label{fig:PAML-GMM-contour}%
    \end{subfigure}
    % \label{fig:GMM}%
    % \vspace{-0.22in}
\end{figure}%

\fi
\fi

%\section{Theoretical Analysis of PAML (Extended)}
%\input{PAML-Theory}

%\section{Theoretical Analysis of PAML (OLD)}
%\input{PAML-Theory-old}

%%%%%%%%%%%%%%%%%%%%%%%%%%%%%%%%%%%%%%%%%%%%%%%
\section{Empirical Studies}
\label{sec:PAML-Empirical}

We compare the performances of PAML and MLE in the framework of Algorithm~\ref{alg:PAML}. We first present an illustration of PAML and MLE for a finite-state MDP. \ifSupp We then discuss how the loss introduced in Section~\ref{sec:PAML-Algorithm} can be formulated for two PG-based planners, namely REINFORCE~\citep{Williams1992} and DDPG \citep{lillicrap2015continuous}. \else We then discuss how the loss introduced in Section~\ref{sec:PAML-Algorithm} can be formulated for PG-based planner DDPG \citep{lillicrap2015continuous}.  \fi Details for reproducing these results and additional experiments can be found in \ifSupp the Appendix \else the supplementary materials\fi.

We illustrate the difference between PAML and MLE on a finite 3-state MDP\ifSupp and 2-state MDP\else\fi.
In this setting, we can calculate exact PGs with no estimation error, and thus the exact PAML loss and $\KL$-divergence.
\ifSupp The details of the MDPs are provided in Appendix \ref{sec:PAML-Appendix}. \else \fi In these experiments, we use Projected Gradient Descent to update the model parameters and constrain their $L_2$ norm, in order to limit model capacity.
In \ifSupp Figures \ref{fig:finite_losses_comparison_3state} and \ref{fig:finite_losses_comparison_2state} \else Figure \ref{fig:finite_losses_comparison} \fi (Left two), we compare the PAML loss and $\KL$-divergence of models trained to minimize each for a fixed policy. We see that the PAML loss of a model trained to minimize PAML is (expectedly) much lower than that of a model trained to minimize $\KL$. Note that the PAML loss of the $\KL$ minimizer decreases as the constraint on model parameters is relaxed, whereas the PAML minimizer is much less dependent on model capacity.

We also evaluate the performances of policies learned using these models, in a process similar to Algorithm~\ref{alg:PAML}, but with exact values rather than sampled ones. Referring to  \ifSupp Figures \ref{fig:finite_losses_comparison_3state} (Right) and \ref{fig:finite_losses_comparison_2state} (Right) \else Figure  \ref{fig:finite_losses_comparison} (Right)\fi, as the norm of the model parameters becomes smaller, the performance of the $\KL$ agent drops much more than the PAML agent. However, when the constraint is relaxed (i.e. increased), the $\KL$ agent performs similarly to the PAML one. This example provides justification for the use of PAML: when the model space is constrained, such that it does not contain $\PKernelTrue$, PAML is able to learn a model that is more useful for planning. 
\ifSupp
\begin{figure}
    \centering
    \begin{subfigure}{0.55\linewidth}
        \centering
        \includegraphics[width=1.0\textwidth]{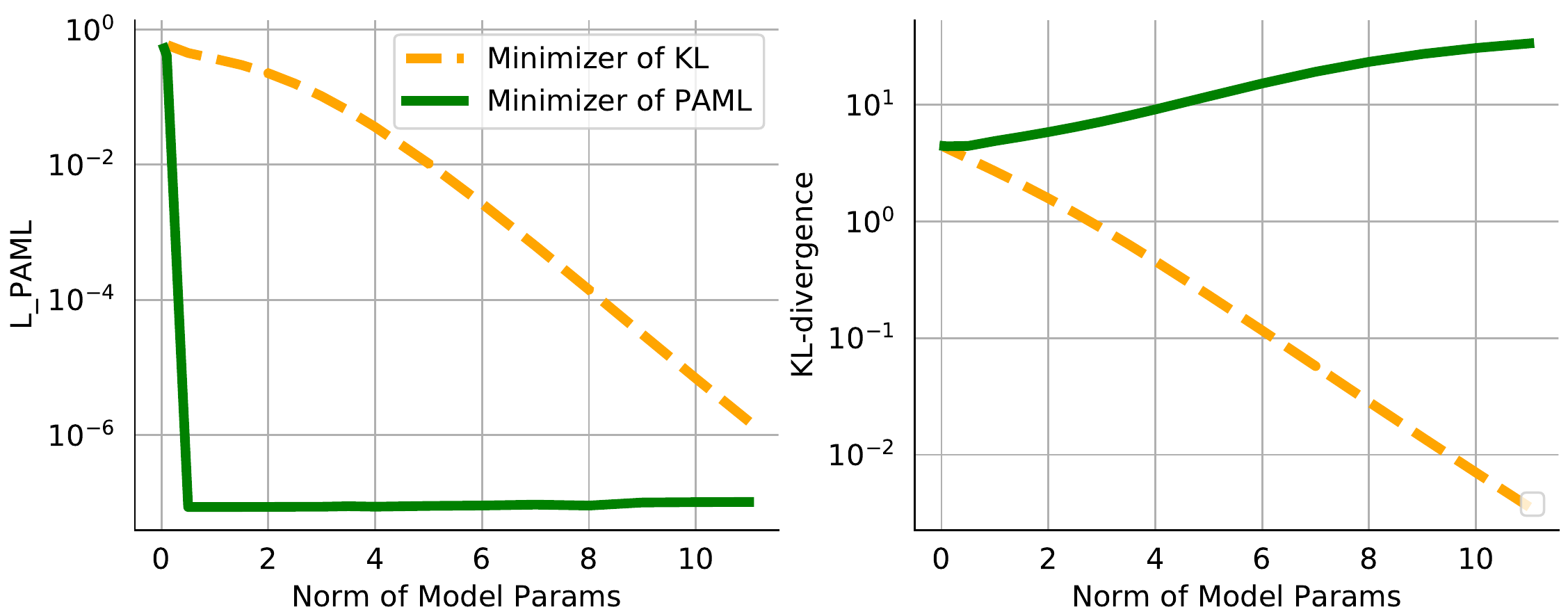}
    \end{subfigure}
    \begin{subfigure}{0.35\linewidth}
        \centering
        \includegraphics[width=1.0\textwidth]{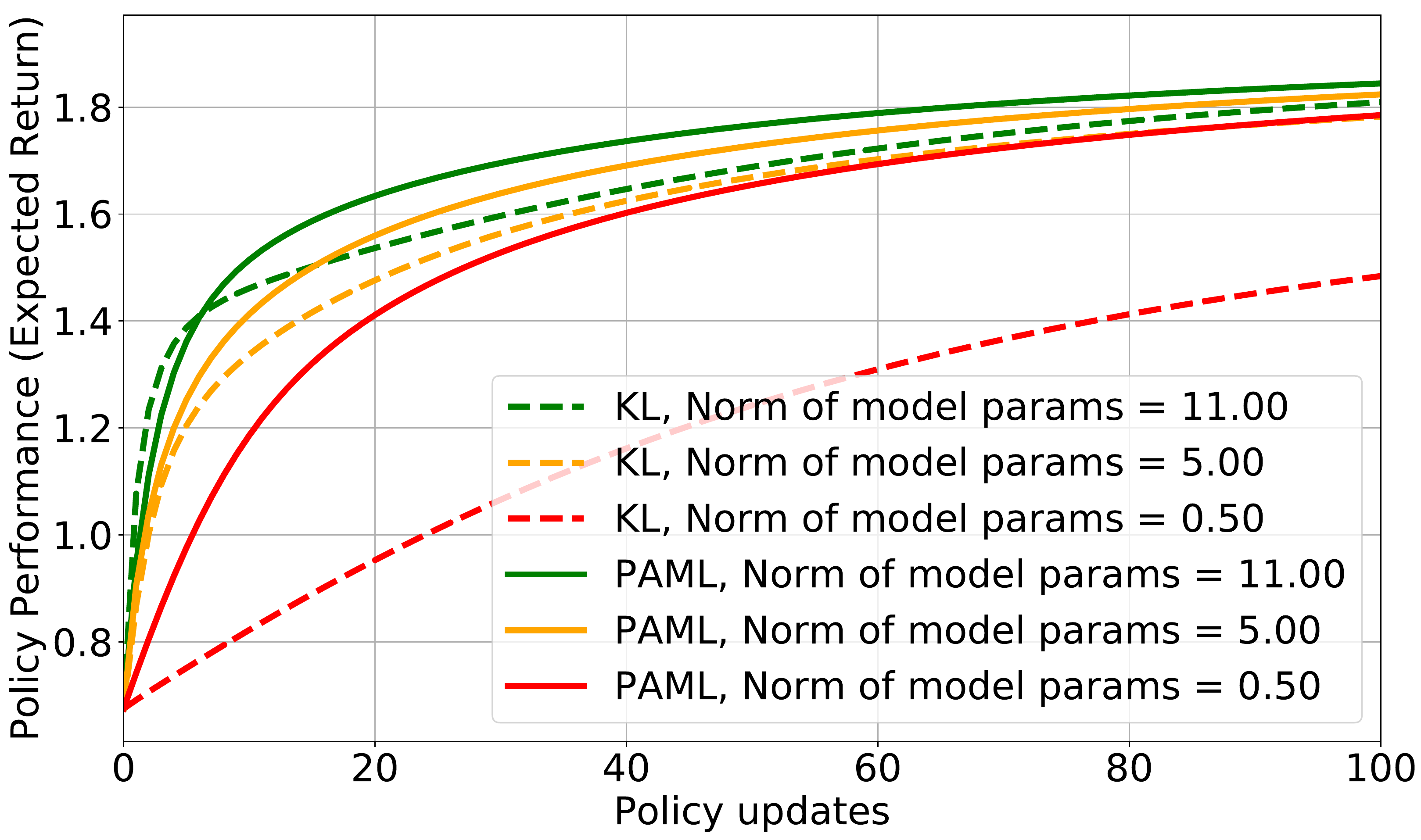}
    \end{subfigure}
\caption{Results for finite 3-state MDP defined in Appendix \ref{sec:PAML-Appendix}. (Top) Comparison of the minimizers of the PAML loss and the $\KL$-divergence as a function of the maximum allowable norm of model parameters. The true model's parameter norm would be close to $11.0$ as measured by minimizing the $\KL$ without constraints. (Bottom) Policy performance as a function of model loss and (maximum allowed) norm of model parameters. Note that there is no estimation error in this setting.}
\label{fig:finite_losses_comparison_3state}
\end{figure}
\begin{figure}
    \centering
    \begin{subfigure}{.49\linewidth}
        \centering
        \includegraphics[width=1.0\textwidth]{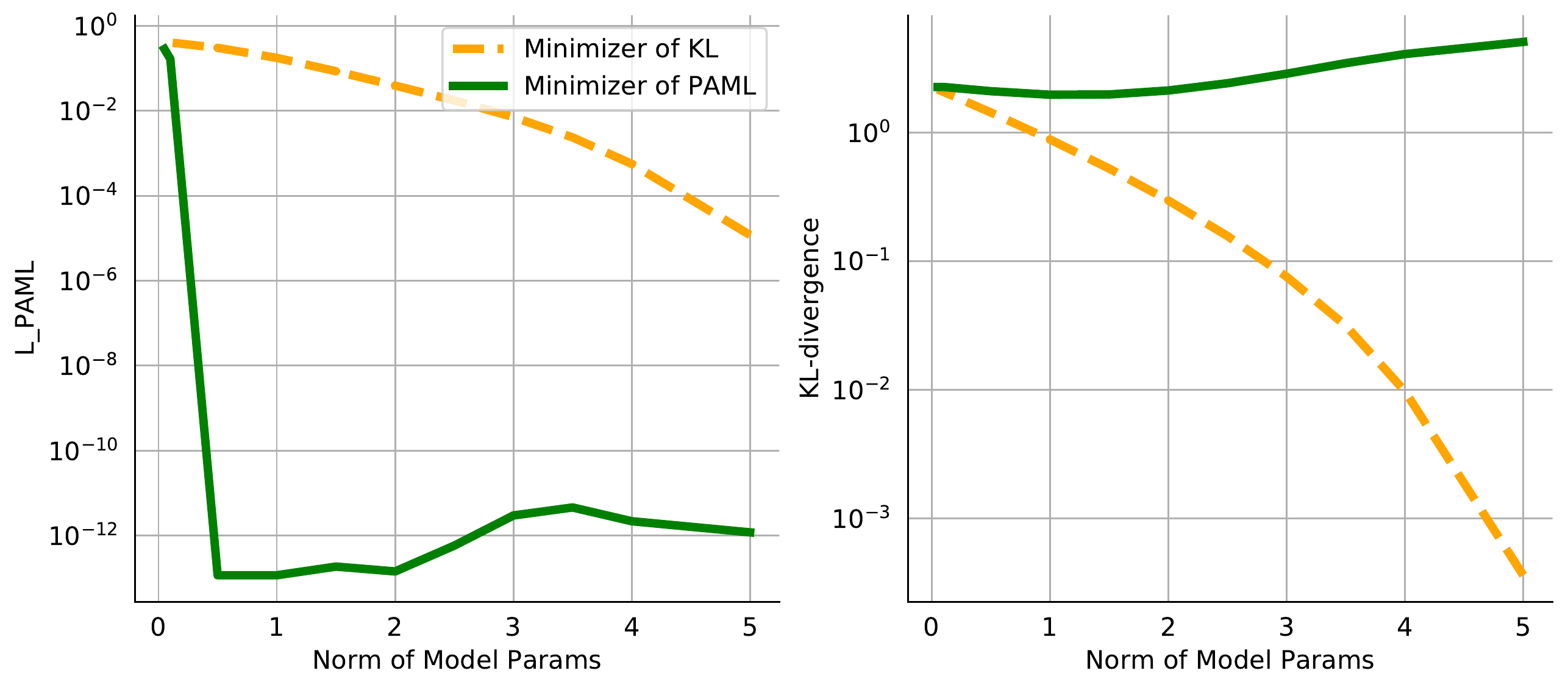}
    \end{subfigure}
    \begin{subfigure}{.49\linewidth}
        \centering
        \includegraphics[width=1.0\textwidth]{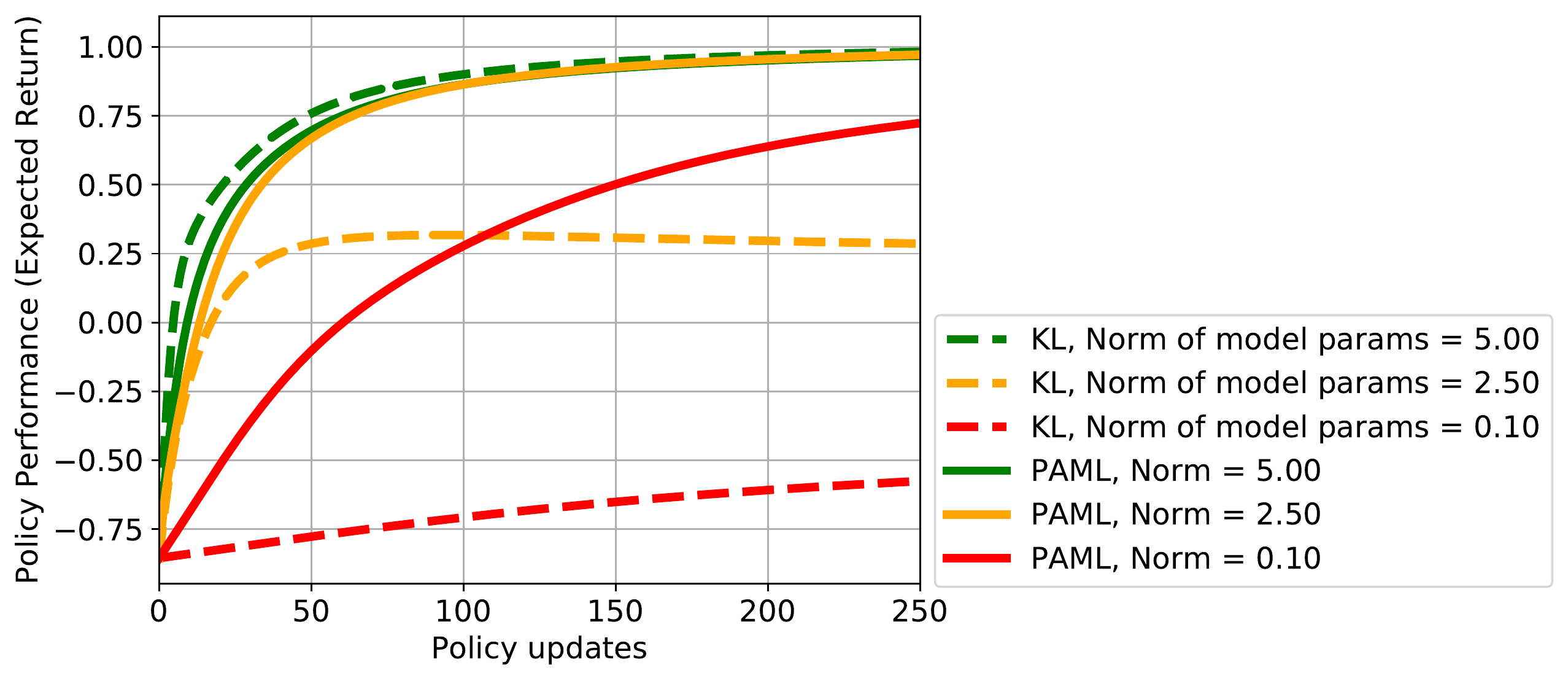}
    \end{subfigure}
\caption{Results for finite 2-state MDP defined in Appendix \ref{sec:PAML-Appendix}. (Top) Comparison of the minimizers of the PAML loss and the $\KL$-divergence as a function of the maximum allowable norm of model parameters. The true model's parameter norm would be close to $5.0$ as measured by minimizing the $\KL$ without constraints. (Bottom) Policy performance as a function of model loss and norm of model parameters. Note that there is no estimation error in this setting.}
\label{fig:finite_losses_comparison_2state}
\end{figure}
\else
\begin{figure}
    % \captionsetup{belowskip=-10pt}
    \centering
    \begin{subfigure}{0.9\linewidth}
        \centering
        \includegraphics[width=1.0\linewidth]{plots/3states_graph_losses_v_lambda}
    \end{subfigure}
    \begin{subfigure}{0.9\linewidth}
        \centering
        \includegraphics[width=1.0\linewidth]{plots/3states_graph_regularizers_PAML}
    \end{subfigure}
    % \vspace{-0.15in}
\caption{{\small(Top) Comparison of the minimizers of the PAML loss and the $\KL$-divergence as a function of the maximum allowable norm of model parameters. 
Note that the minimizer of PAML does not necessarily correspond to small $\KL$-divergence.
% The true model's parameter norm is close to $11.0$ as measured by minimizing the $\KL$ without constraints (i.e. not using PGD).
(Bottom) Policy performance as a function of model loss and maximum norm of model parameters. Note that there is no estimation error in this setting. Performance of the PAML agents degrades much less than the performance of the $\KL$ agent when the norm of model parameters is constrained.}}
\vspace{-0.22in}
\label{fig:finite_losses_comparison}
\end{figure}
\fi
%
% \ifSupp 

A question that arises is how updates on the policy affect model error since PAML is policy-aware. 
\ifSupp The Top Figures of  \ref{fig:delta-loss-finite} and \ref{fig:delta-loss-cheetah} show \else  The top of Figure \ref{fig:delta-loss-finite} shows \fi the change in $L_{\text{PAML}}$ as a result of a number of policy updates in an epoch (i.e. an epoch refers to each iteration $k$ in \ref{alg:PAML}), while keeping the model fixed in that epoch. 
\ifSupp For the HalfCheetah experiments (the setup for which is described later), timestep refers to step taken in the environment. \fi
\ifSupp The Bottom Figures of \ref{fig:delta-loss-finite} and \ref{fig:delta-loss-cheetah} show \else The bottom of Figure \ref{fig:delta-loss-finite} shows \fi the policy performance over the same epochs. 
We observe in the 3-State MDP experiments that the change in $L_\text{PAML}$ decreases as the policy performance improves. This is expected as the policy, and therefore model, converge. 
We also observe that for higher numbers of policy updates, the performance of the PAML agent does not always show consistent improvement over the $\KL$ agent, especially at the beginning of training. 
This is also expected as the PAML model is only accurate for policies similar to the policy it was trained on. 
\ifSupp We observe a similar trend for the HalfCheetah experiments. \else We observe a similar trend for the continuous control experiments in the supplementary. \fi
We see that the change in $L_\text{PAML}$ for more virtual episodes is higher.
This is expected as the gradients in this case cannot be exactly computed and so the policy is not necessarily converging to the optimal policy at each timestep, which can also be seen in the performance plots. 
Thus, the optimal number of policy updates should be tuned according to the dynamics. 
Another option is to use an objective closer to $\KL$ at the beginning of training and fine-tune with PAML as the policy improves. 
We leave exploration of this option to future work. 
% \else In the supplementary, we also study the question of how updates on the policy affect model error. Briefly, the number of policy updates is a hyperparameter that must be tuned. 
% \fi 

\ifSupp
\begin{figure}
    \centering
    \begin{subfigure}{0.485\linewidth}
        \centering
        \includegraphics[width=1.0\linewidth]{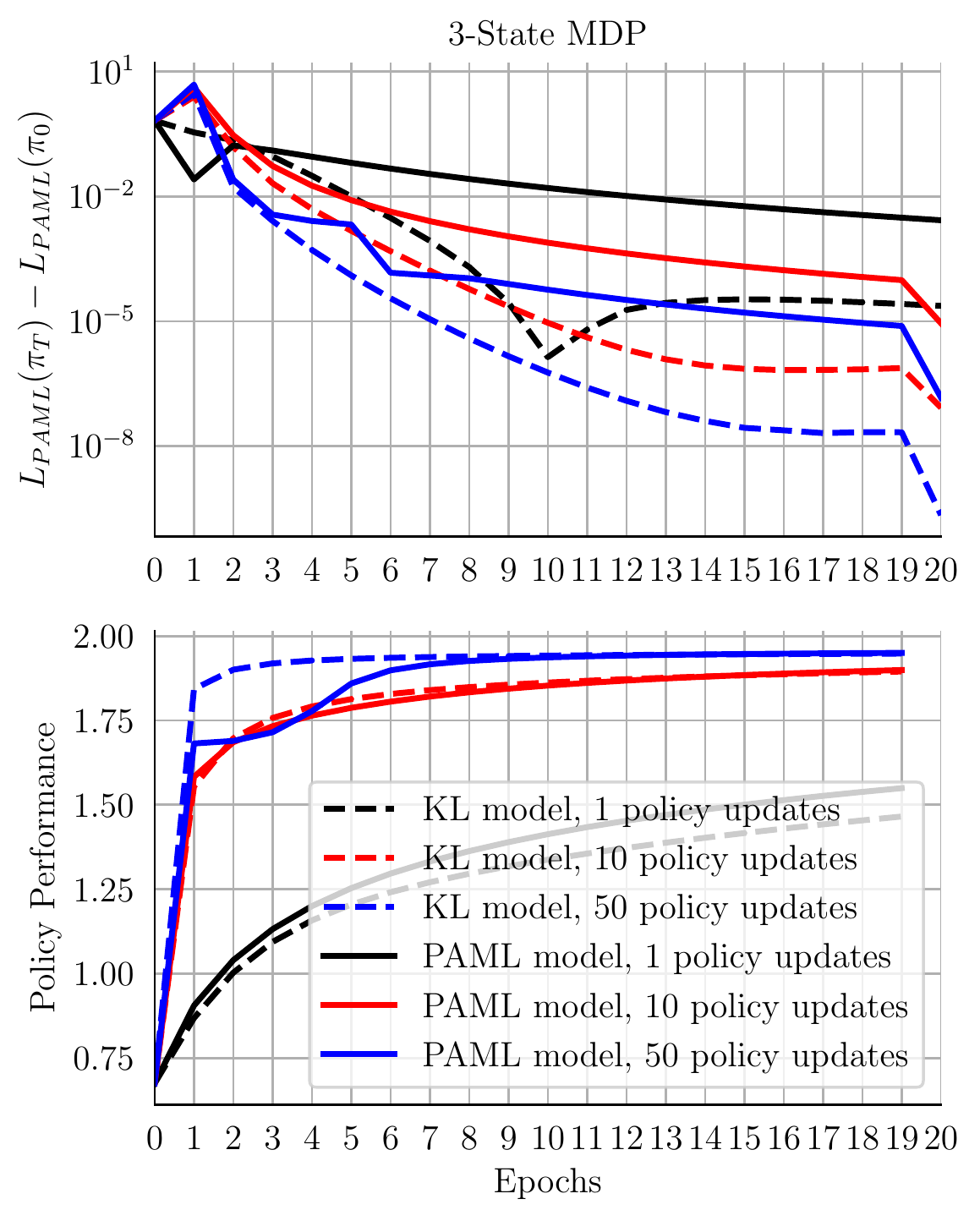}
        \caption{(Top) The model is updated for 200 steps after each set of policy updates (with learning rates described in Appendix \ref{sec:eng-details}). (Bottom) The lines for 1 policy update (black) correspond to the plots in Figure \ref{fig:finite_losses_comparison_3state}. Note that there is no source of randomness in these experiments.}
    \label{fig:delta-loss-finite}
    \end{subfigure}
    \hspace{0.5em}
    \begin{subfigure}{0.485\linewidth}
        \centering
        \includegraphics[width=1.0\linewidth]{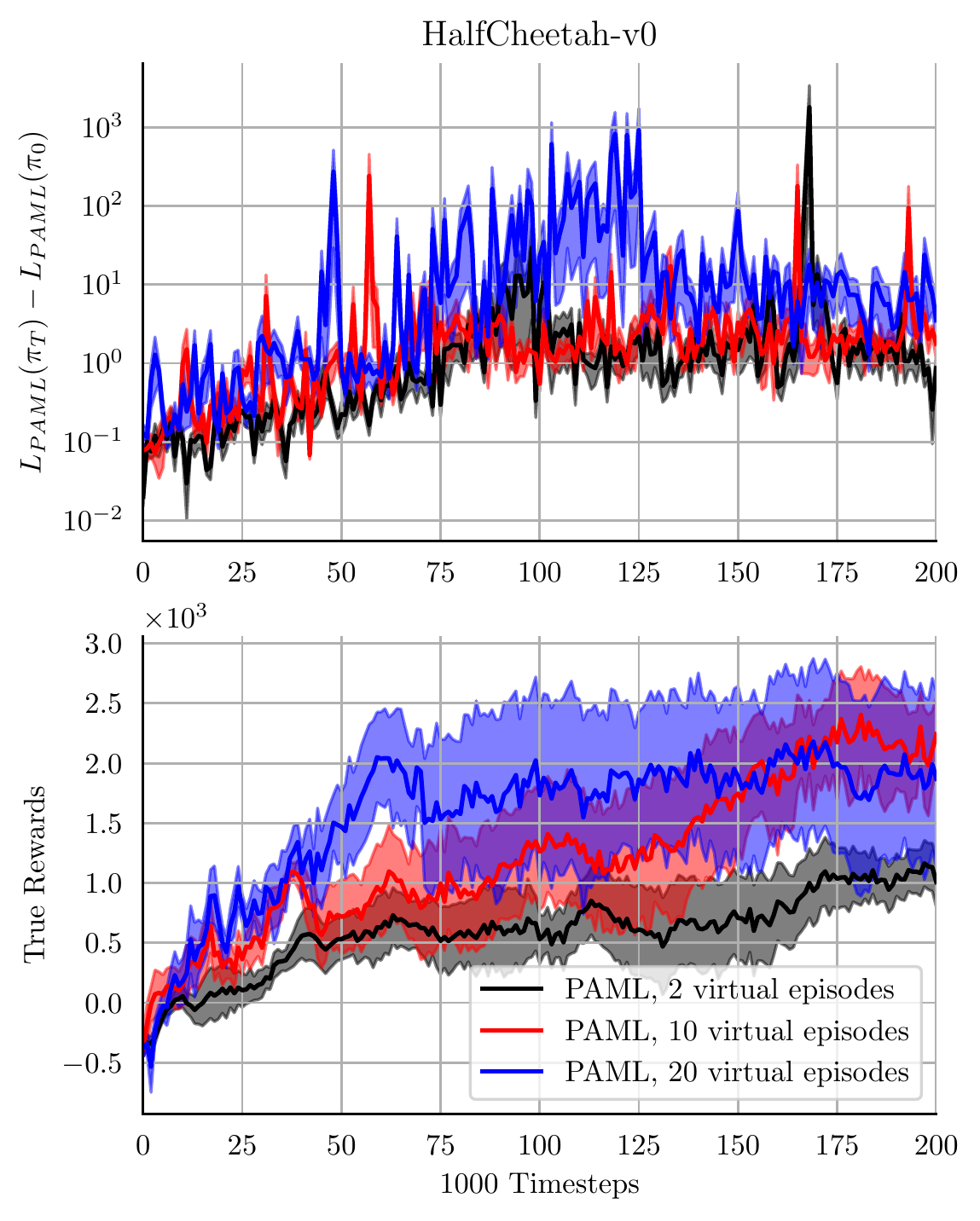}
        \caption{(Top) The number of policy updates in this environment is measured in terms of number of virtual episodes used during planning. (Bottom) The lines for 10 virtual episodes (black) correspond to the HalfCheetah plots in Figure \ref{fig:actor-critic-all-high-dim}. For both figures, the solid lines represent the mean of 5 runs and the shaded areas the standard error.} 
    \label{fig:delta-loss-cheetah}
    \end{subfigure}
\caption{(Top) Difference in PAML loss after a number of policy updates for a fixed model. Note that in this case $\pi_0$ would be the policy the model is trained for, and $\pi_T$ would be the policy with the most number of updates. (Bottom) The true policy performance corresponding to each timestep in the Top diagram.}
\label{fig:delta-loss}
\end{figure}
\else
\begin{figure}[tb]
    \centering
    \includegraphics[width=0.9\linewidth]{plots/Lpaml_vs_policy_updates}
    \caption{{\small(Top) Difference in PAML loss after a number of policy updates for a fixed model. Note that in this case $\pi_0$ would be the policy the model is trained for, and $\pi_T$ would be the policy with the most number of updates. 
    The model is updated for 200 steps after each set of policy updates (with learning rates in the supplementary). (Bottom) The true policy performance corresponding to each timestep in the Top diagram. The lines for 1 policy update (black) correspond to the plots in Figure \ref{fig:finite_losses_comparison}. Note that there is no source of randomness in these experiments.}}
    \vspace{-0.1in}
\label{fig:delta-loss-finite}
\end{figure}
\fi
We next test PAML on several continuous control environments.
% : Linear Quadratic Regulation (LQR), Pendulum, and HalfCheetah. The latter two are based on the OpenAI Gym framework~\citep{1606.01540}. %, and Cartpole-balance on the DeepMind Control Suite~\citep{deepmindcontrolsuite2018}. 
% The goal is to see if PAML performs better than MLE when the model class is constrained, or when the agent's observations contain extra dimensions irrelevant to the task.
% If the model class is not constrained, an MLE agent should perform well since it can represent the dynamics exactly. 
% Otherwise, the transition dynamics cannot be represented exactly, and we expect PAML to perform better since it uses the available model capacity to find the best direction of improvement.
\ifSupp We use the REINFORCE algorithm \citep{Williams1992} as well as the actor-critic Deep Deterministic Policy Gradient (DDPG) \citep{lillicrap2015continuous} as the planner for models learned with PAML and MLE. \else We use the actor-critic algorithm Deep Deterministic Policy Gradient (DDPG) \citep{lillicrap2015continuous} as the planner for models learned with PAML and MLE.\fi
We also evaluate the performance of the model-free method \ifSupp (REINFORCE or DDPG) \fi for reference.
\ifSupp\else Although this is a relatively old algorithm, we defer to it due to its simplicity and reasonably good performance on benchmark environments.\fi
Our goal with these experiments is not to show state-of-the-art results but rather to demonstrate the feasibility of PAML on high-dimensional problems, and show an example of how the loss in \eqref{eq:PAML-DifferenceInGradient} could be formulated. 

%%%%%%%%%%%%%%%%%%%%%%%%%%%% NOISE DESCRIPTIONS %%%%%%%%%%%%%%%%%
\ifSupp
To simulate the effect of having more dimensions in the observations than the underlying state, we concatenate to the state a vector of irrelevant or redundant information. 
For an environment that has underlying state $x_t$ at time $t$, where $x_t \in \mathbb{R}^d$, the agent's observation is given in one of the following ways\fi~\ifSupp:\else  (additional types in the supplementary):\fi
\ifSupp
\begin{enumerate}[itemsep=0mm]
     \item \textbf{Random irrelevant dimensions:} $(x_t,\eta) \in \mathbb{R}^{d+n},$ where $\eta \sim \mathcal{N}_n(0,1)$ \label{noise:rand}
     \item \textbf{Correlated irrelevant dimensions:} $(x_t,\eta_t) \in \mathbb{R}^{d+n},$ where $\eta_t = \eta_0^t \text{ and } \eta_0 \sim \text{Unif}(0,1)$. $n$ can be chosen by the user. We show results for a few cases.\label{noise:correlated}
     \item \textbf{Linear redundant dimensions:} $(x_t, W^T x_t) \in \mathbb{R}^{2d},$ where $W \sim \text{Unif}_{d\times d}(0,1)$ \label{noise:linear}
    \item \textbf{Non-linear redundant dimensions:} $(x_t, \cos(x_t), \sin(x_t))\in\mathbb{R}^{3d}$ \label{noise:nonlinear}
    \item \textbf{Non-linear and linear redundant dimensions:} $(x_t, \cos(x_t), \sin(x_t), W^T x_t) \in \mathbb{R}^{4d},$ where $W \sim \text{Unif}_{d\times d}(0,1)$ \label{noise:nonlinear-linear}
\end{enumerate}
\else
\vspace{-0.05in}
\begin{enumerate}[itemsep=0mm, wide, labelwidth=!, labelindent=0pt, partopsep=0pt,topsep=0pt,parsep=0pt]
     \item \textbf{Random irrel dims:} $(x_t,\eta) \in \mathbb{R}^{d+n},$ $\eta \sim \mathcal{N}_n(0,1)$ \label{noise:rand}
     \item \textbf{Correlated irrel dims:} $(x_t,\eta_t) \in \mathbb{R}^{d+n},$  where $\eta_t = \eta_0^t \text{ and } \eta_0 \sim \text{Unif}(0,1)$. $n$ can be chosen by the user. We show results for a few cases.\label{noise:correlated}
     \item \textbf{Linear redundant dims:} $(x_t, W^T x_t) \in \mathbb{R}^{2d},$ where $W \sim \text{Unif}_{d\times d}(0,1)$ \label{noise:linear}
    \item \textbf{Nonlinear and linear redundant dims:} $(x_t, \cos(x_t), \sin(x_t), W^T x_t) \in \mathbb{R}^{4d},$ $W$ as in type \ref{noise:linear}. \label{noise:nonlinear-linear}
    \vspace{-0.05in}
\end{enumerate}
\fi
In this way, the agent's observation vector is higher-dimensional than the underlying state, and it contains information that would not be useful for a model to learn. In the most general case, this may be replaced by the full-pixel observations, which contain more information than is necessary for solving the problem. 
To illustrate the differences between model learning methods, we choose to forgo evaluations over pixel inputs for the scope of this work. 
Although differentiating between useful state variables and irrelevant variables generated by concatenating noise may be overly simplistic (for example, a certain set of pixels could convey both useful and unnecessary information that the model may not know are unnecessary), it is an approximation that can highlight the weakness of purely predictive model learning. 
%%%%%%%%%%%%%%%%%%% NOISE DESCRIPTIONS END %%%%%%%%%%%%%%%%%
\ifSupp 
% It is also possible to use PAML with an actor-critic planner. We use the DDPG algorithm as the planner which uses a deterministic policy and explores using correlated noise~\citep{lillicrap2015continuous}, and leave other actor-critic experiments to future work.
\else The DDPG planner uses a deterministic policy and explores using correlated noise~\citep{lillicrap2015continuous}.
The PAML formulation we use for this planner corresponds to case~\eqref{eq:gradients-cases-a}: the model PG is calculated using the future-state distribution of $\PKernelpihat$, and the true PG using samples from $\PKernelpiTrue$. 
The critic for both the model PG and true PG is learned using samples collected from $\PKernelpiTrue$.
Although here we demonstrate formulating the PAML loss with an actor-critic algorithm as the planner, other PG-based algorithms can also be used.  \fi

\ifSupp
    To train the model using MLE, we minimize the squared $\ell_2$ distance between predicted and true next states for time-steps $1 \leq t \leq T$, where $T$ is the length of each trajectory. The point-wise loss for time-step $t$ and episode $1 \leq i \leq n$ would then be
    \begin{equation}
        c(X_{t:t+h},\hat{X}_{t:t+h};w) = \frac{1}{N} \sum_{i=1}^N \sum_{h=1}^H \norm{(\hat{X}^{(i)}_{t+h}-\hat{X}^{(i)}_{t+h-1}) - 
        (X^{(i)}_{t+h} - X^{(i)}_{t+h-1})}_2^2,
    \end{equation}
    where $\hat{X}$ are the states predicted by the model, i.e. $\hat{X}_{t+1} \sim \PKernelpihat(\cdot|X_t, A_t)$ and similarly $X$ are the states given by the true environment, $\PKernelTrue$.
    This is a multi-step prediction loss with horizon $H$.
    % This loss is the empirical negative-log likelihood, assuming that $\PKernelTrue(\cdot|x)$ is Gaussian with fixed variance.
    Our model in all experiments is deterministic and directly predicts $\Delta \hat{X} = \hat{X}^{(i)}_{t+h}-\hat{X}^{(i)}_{t+h-1}$. 
\else
    To train a model using MLE, we minimize the squared $\ell_2$ distance between predicted and true next states for all time-steps in a trajectory. 
    Our model in all experiments is deterministic and directly predicts the change in state from the previous to the current timestep.
    % $c(X_{t:t+h},\hat{X}_{t:t+h};\PKernelhat) = \sum_{h=1}^H \Vert  \Delta\hat{X}_h^{(i)} - \Delta X_h^{(i)} \Vert_2^2$, where $\Delta X_h^{(i)} = X_{t+h}^{(i)}- X_{t+h-1}^{(i)}$.
\fi
\ifSupp Moreover, for the REINFORCE experiments, we set $H$ to be the length of the trajectory, and for the DDPG ones we set it to $1$. \fi 

\ifSupp
Since PAML is planner-aware, the formulation of the loss changes depending on the planner used. To form the PAML loss compatible with REINFORCE as the planner, the model gradient is obtained according to case \eqref{eq:gradients-cases-b}. Namely, the model returns are calculated by unrolling $\PKernelpihat$ (and the true returns from data collected for every episode). The PG for the model is calculated on states from the real environment, which, since REINFORCE calculates full-episode returns, come only from the first states of each episode. Thus, the model and true PG's are calculated over the starting state distribution $\rho$, whereas the returns are calculated over $\PKernelpihat$ and $\PKernel^{* \pi}$ respectively.
\ifSupp In practice, we find that calculating the $\ell_2$ distance between the true PG and model PG separately for each starting state gives better results than first averaging the PGs over all starting states and then calculating the $\ell_2$ distances.\else\fi
During planning, we use the mean returns as a baseline for reducing the variance of the REINFORCE gradients.
We evaluate this formulation of the algorithm on a simple LQR problem, the details of which can be found in \ifSupp Appendix \ref{sec:PAML-Appendix}\else the supplementary materials\fi.
The extra dimensions used for these experiments were random noise, defined as type \ref{noise:rand} above.

The results for this formulation are shown in Figure~\ref{fig:reinforce} for the LQR problem with trajectories of 200 steps. The performance of agents trained with PAML, MLE, and REINFORCE (model-free) are shown over 200,000 steps. It can be seen that both model-based methods learn more slowly as irrelevant dimensions are added (the model-free method learns slowly for all cases). For no irrelevant dimensions, MLE learns faster than PAML. This is expected as in this case, an MLE model should be able to recover the underlying dynamics easily, making it a good model-learning strategy. However, as the number of irrelevant dimensions are increased, PAML shows better performance than MLE. This is encouraging as it shows that PAML is not as affected by irrelevant information.
\else \fi
% This somewhat corresponds to~\eqref{eq:gradients-cases-b}, where the action-value function (as opposed to the future-state distribution) is computed using $\PKernelpihat$, with the difference that we do not use samples beyond the first time-step.
%% Our final model-learning loss for this formulation was therefore:

% Furthermore, referring to the empirical loss \eqref{eq:empiricalloss}, $T, \gamma$ were set to $1$. $n$ and $m$ refer to number of starting states and number of trajectories from each starting state respectively. We found that $m = 1$ was sufficient for good generalization. 
% \begin{align}
% \nonumber
% c_n(\PKernel^{\pi_\theta},\PKernelhat^{\pi_\theta}) = 
% 	\frac{1}{n}
% 	\sum_{i=1}^n
% 	\Bigg\Vert
% 			&
% 			\nabla_\theta \log \pi_\theta (A^{(i)} |X^{(i)} ) \Qhat^{\pi_\theta} (X^{(i)},A^{(i)})
% 			-
% 			\\
% 			&
% 			\nabla_\theta \log \pi_\theta (\tilde{A}^{(i)} |X^{(i)} ) \Qhat^{\pi_\theta} (X^{(i)},\tilde{A}^{(i)})			
% 	\Bigg\Vert_2.
% \end{align}

\ifSupp
We now describe formulating PAML to use the DDPG algorithm as the planner. This algorithm uses a deterministic policy and explores using correlated noise~\citep{lillicrap2015continuous}. 
It is possible to use PAML with other actor-critic algorithms and we use DDPG due to its simplicity. 
We leave experiments with other actor-critic policies to future work.
% Since the PAML loss depends on the critic (see \eqref{eq:PAML-DifferenceInGradient}), we train the critic for extra steps without changing the policy. \ifSupp After every iteration of data collection from the environment, the critic is trained on true data by minimizing the mean-squared temporal difference (TD) error,
% \begin{equation*}
%     L(Q; Q') = \frac{1}{N}\sum_{i}(R_i + \gamma Q'(X_{i+1},\pi'(X_{i+1})) - Q(X_i, \pi(X_i))^2,
% \end{equation*}
% where $Q',\pi'$ are the target critic and actor, updated using soft updates as shown in \citep{lillicrap2015continuous}.
In our MBRL loop, after every iteration of data collection from the environment, the critic is trained on true data by minimizing the mean-squared temporal difference error, using a target critic and policy that are soft-updated as shown in \citet{lillicrap2015continuous}. 
\ifSupp In contrast to the REINFORCE formulation, we calculate the $\ell_2$-distance between the model PG and true PG averaged over all states, rather than separately for each starting state. \else\fi
% To control for the effect of a critic that has been trained for multiple steps, we perform the critic pre-training for the model-free and MLE experiments as well.
% 
% The empirical loss we used in practice was similar to the one above except that the outer summation was over $T$ and $n$ and the $\ell_2$ norm was taken outside the summation. 
% 
In addition, for this formulation, we present experiments for no extra dimensions added to the observations, and also for extra dimensions of types \ref{noise:correlated}, \ref{noise:linear}, \ifSupp \ref{noise:nonlinear} \else\fi and \ref{noise:nonlinear-linear} added.\else\fi

% We can see this effect more clearly if we compare the predictive loss \todo{reference MLE loss} for a model trained with this loss and a model trained with the PAML loss \eqref{eq:PAML-DifferenceInGradient}. In figure \todo{figure}, we show this loss for the Pendulum environment. The loss has been separated over the useful dimensions of the observation (the first 3 dimensions) and the noise dimensions that were manually added (the last 117 dimensions). As shown, the PAML model has lower loss over the useful dimensions than the noise dimensions, whereas MLE has attempted to minimize both losses causing the useful portion of the loss to be higher than would be optimal. 
\ifSupp The results for no added dimensions are shown in Figure~\ref{fig:actor-critic-all-high-dim}. For one of the environments we also show the effect of added noise dimensions in Figure~\ref{fig:actor-critic-pend}\else The results are shown in  Figure~\ref{fig:actor-critic-all-high-dim}\fi. 
In general, PAML performs similarly to MLE in these domains. It seems that the gains that were observed in the tabular domain do not transfer to these domains. This could be due to several factors. For example, it is not clear how to limit the capacity of neural networks as we did for the experiments in \ifSupp  Figures \ref{fig:finite_losses_comparison_3state} and \ref{fig:finite_losses_comparison_2state}\else Figure \ref{fig:finite_losses_comparison}\fi.
Another reason could be that for the planning horizon used (1), the MLE model performs sufficiently well to hide any differences between the models.
% We study these possibilities briefly in the next subsection.
% We use a model size that is smaller than what is typically used for MB solutions of this environment (e.g. see~\cite{kurutach2018model}, \cite{chua2018deep}, \cite{nagabandi2018neural}).
% This model size is chosen in order to constrain the model class. \ifSupp In addition, the planning horizon is longer than typical (see the supplementary materials for comparison of planning horizon performances).\else\fi 
% Note the poor performance of MLE in HalfCheetah even in the presence of no irrelevant dimensions.
% We observe that PAML generally performs better than MLE, especially when redundant extra dimensions are present.
% Although both models perform worse as irrelevant dimensions are added, PAML's performance does not degrade as much as MLE and is able to outperform MLE in this constrained model class.
\ifSupp\else Additional experiments are in the supplementary materials.\fi

\ifSupp
\begin{figure}[tb]
    \centering
    \includegraphics[width=0.48\textwidth]{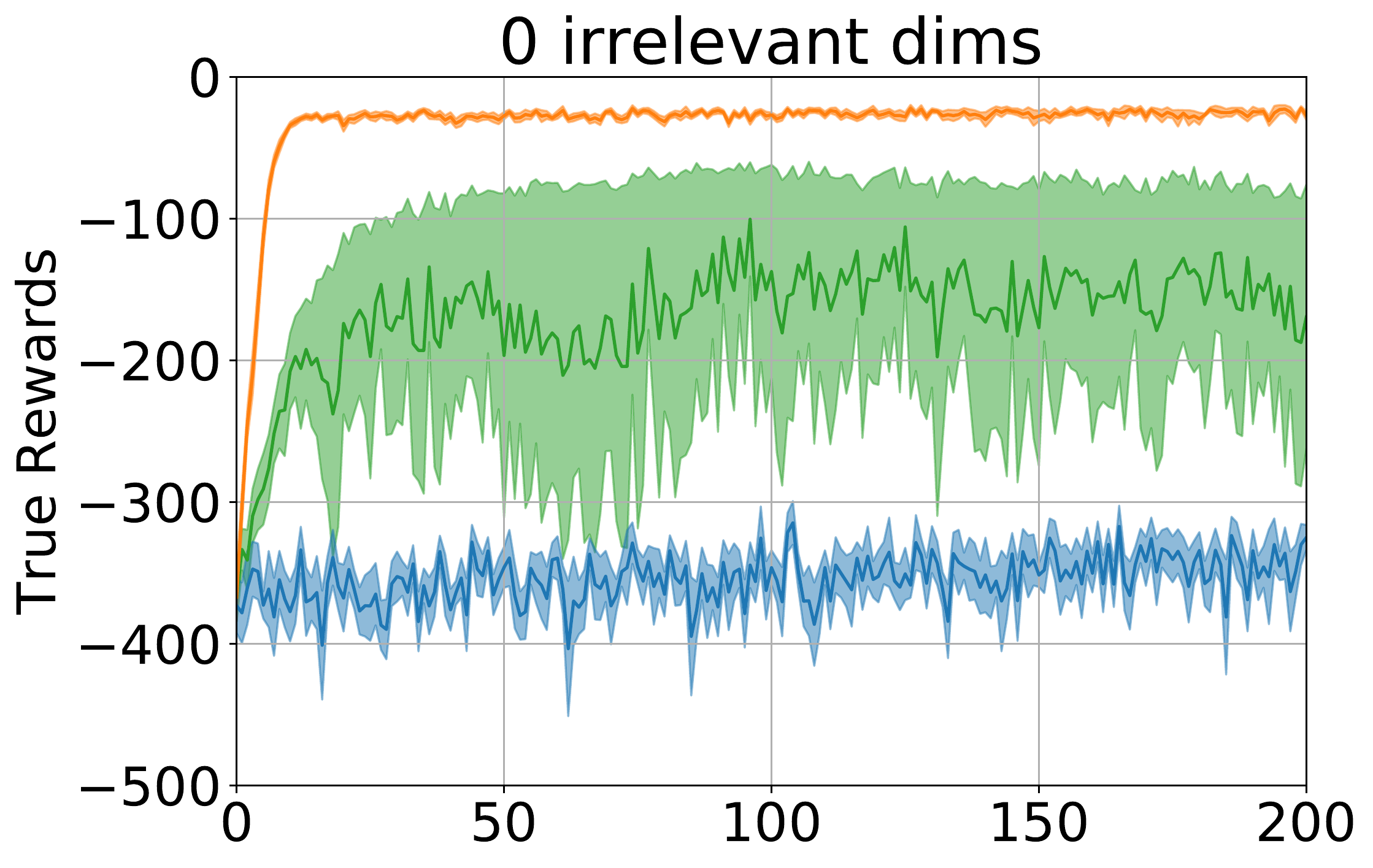}
    \includegraphics[width=0.45\textwidth]{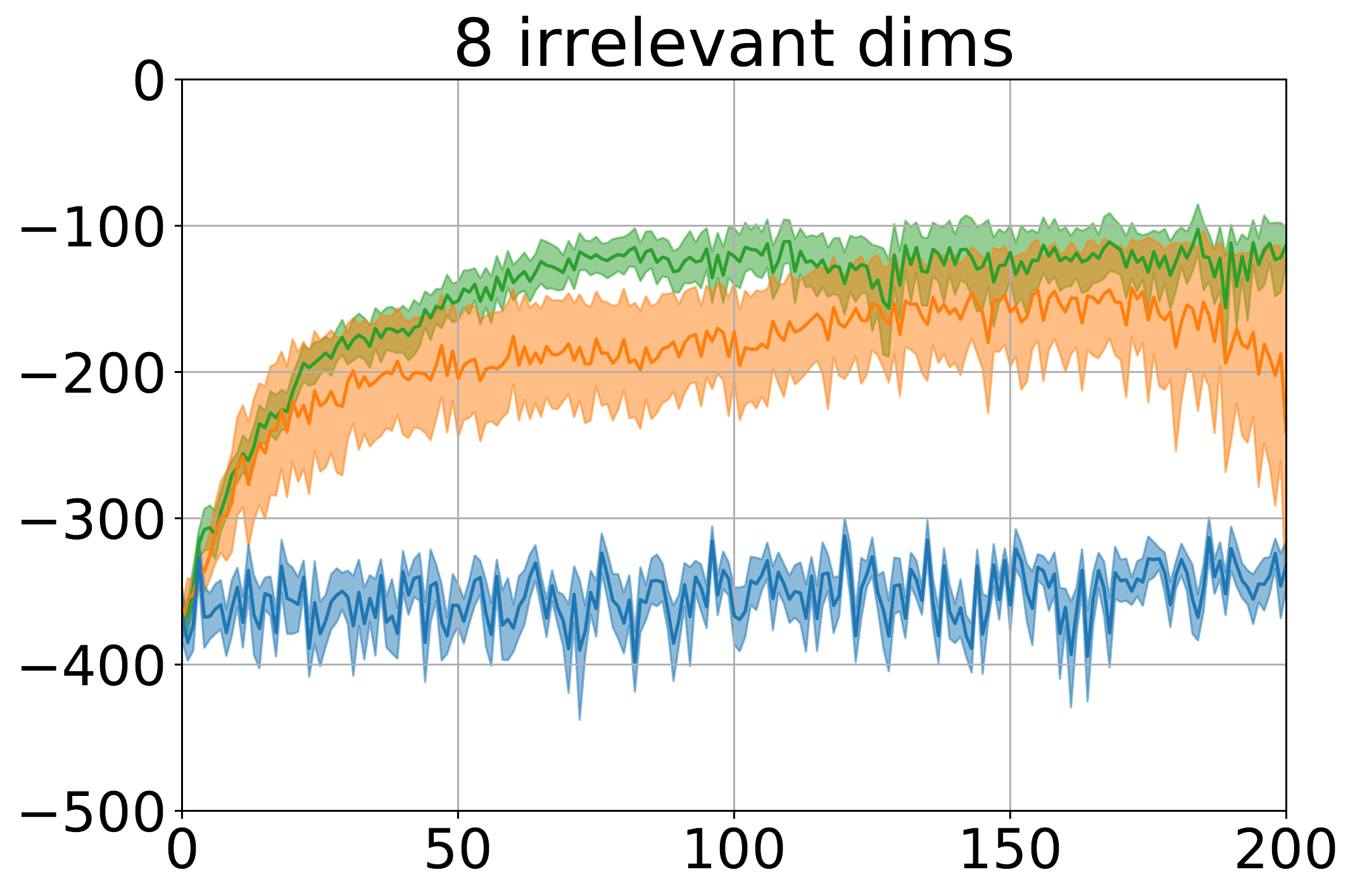} 
    \includegraphics[width=0.48\textwidth]{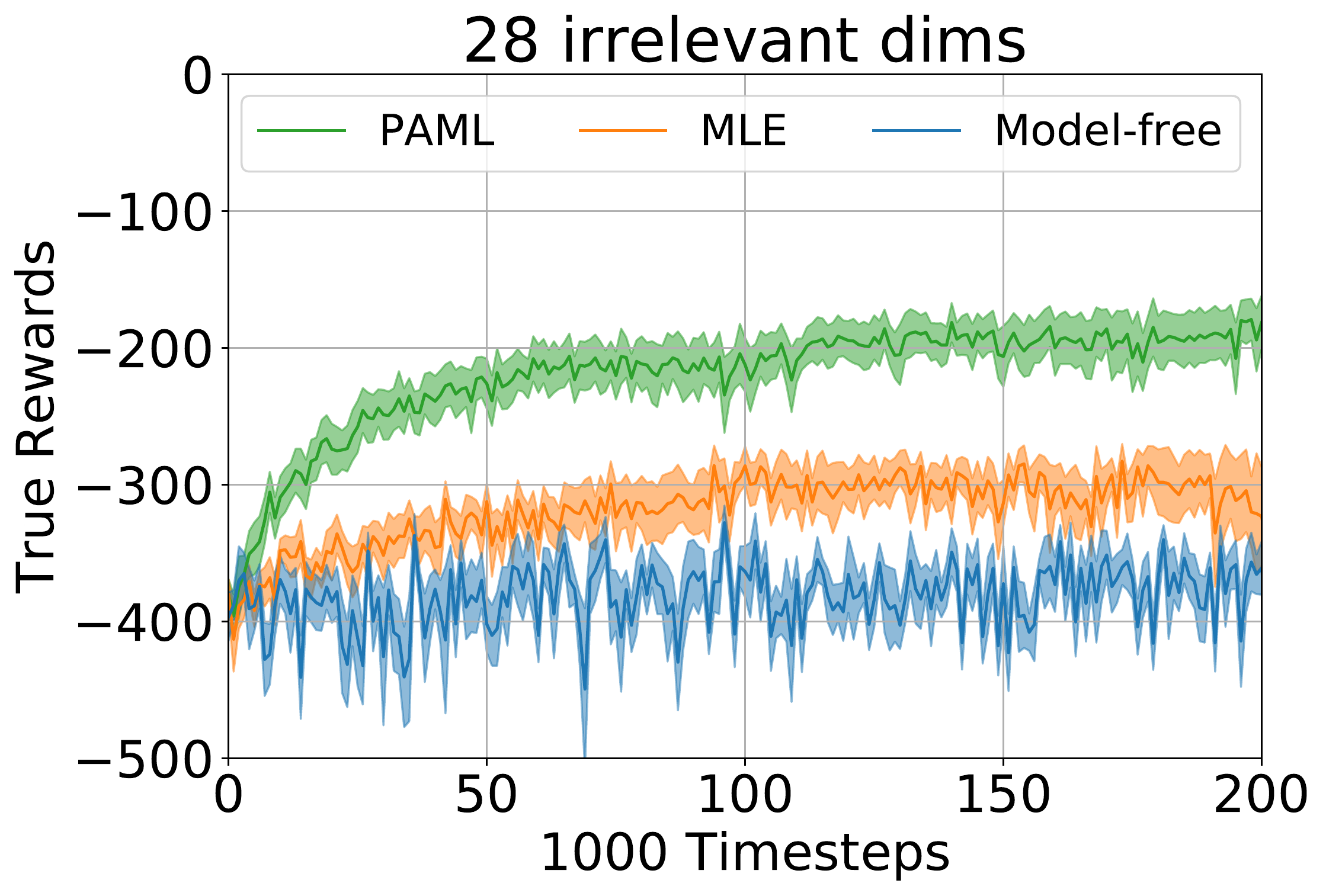}
    \includegraphics[width=0.45\textwidth]{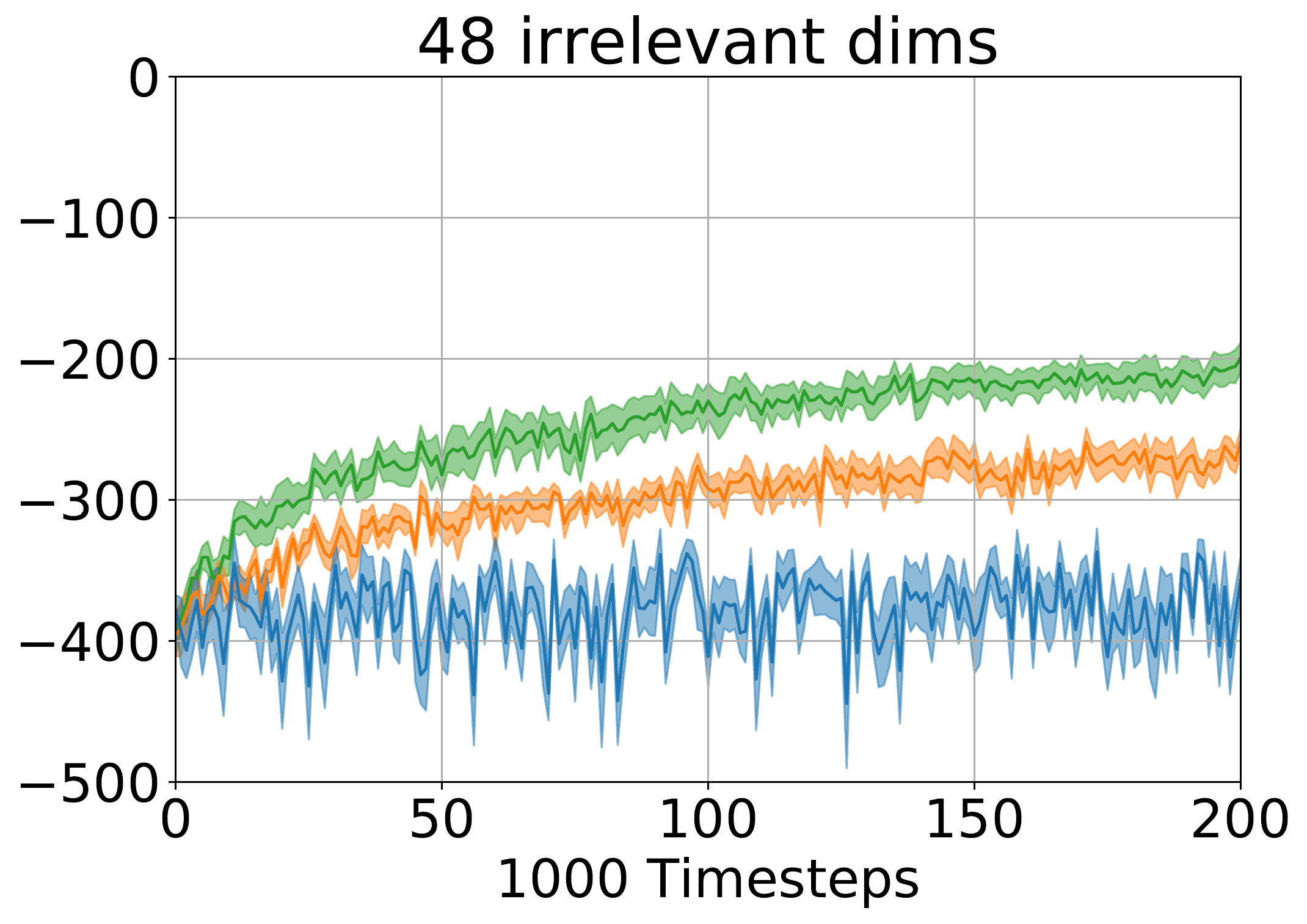}
    \caption{Performance of policies trained model-based with REINFORCE as the planner for different numbers of irrelevant dimensions added to the state observations. The solid lines indicate the mean of 10 runs and the shaded regions depict the standard error.}
    \label{fig:reinforce}
\end{figure}
\begin{figure}[tb]
    \centering
    \includegraphics[width=0.3\textwidth]{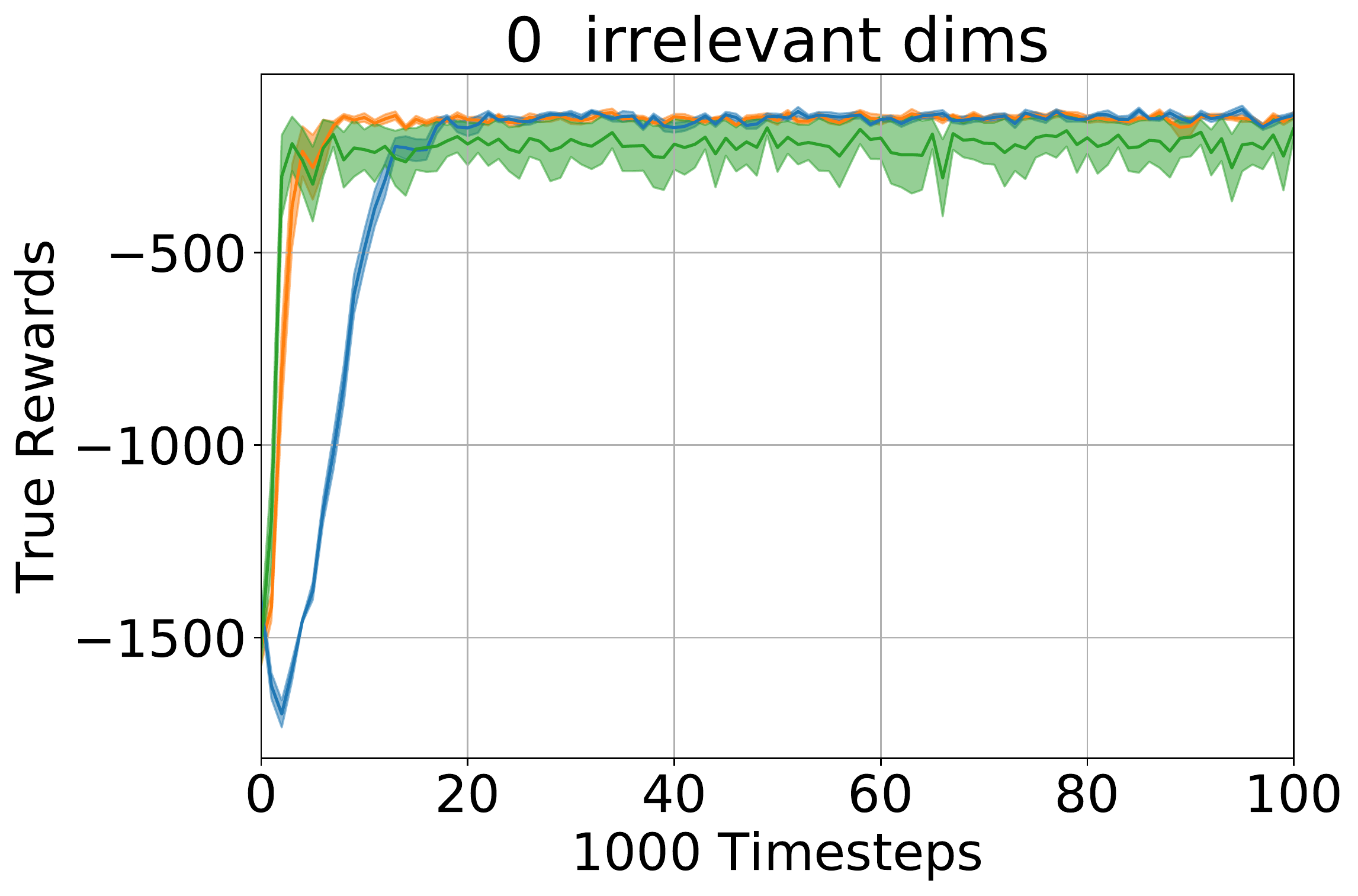}
    \includegraphics[width=0.3\textwidth]{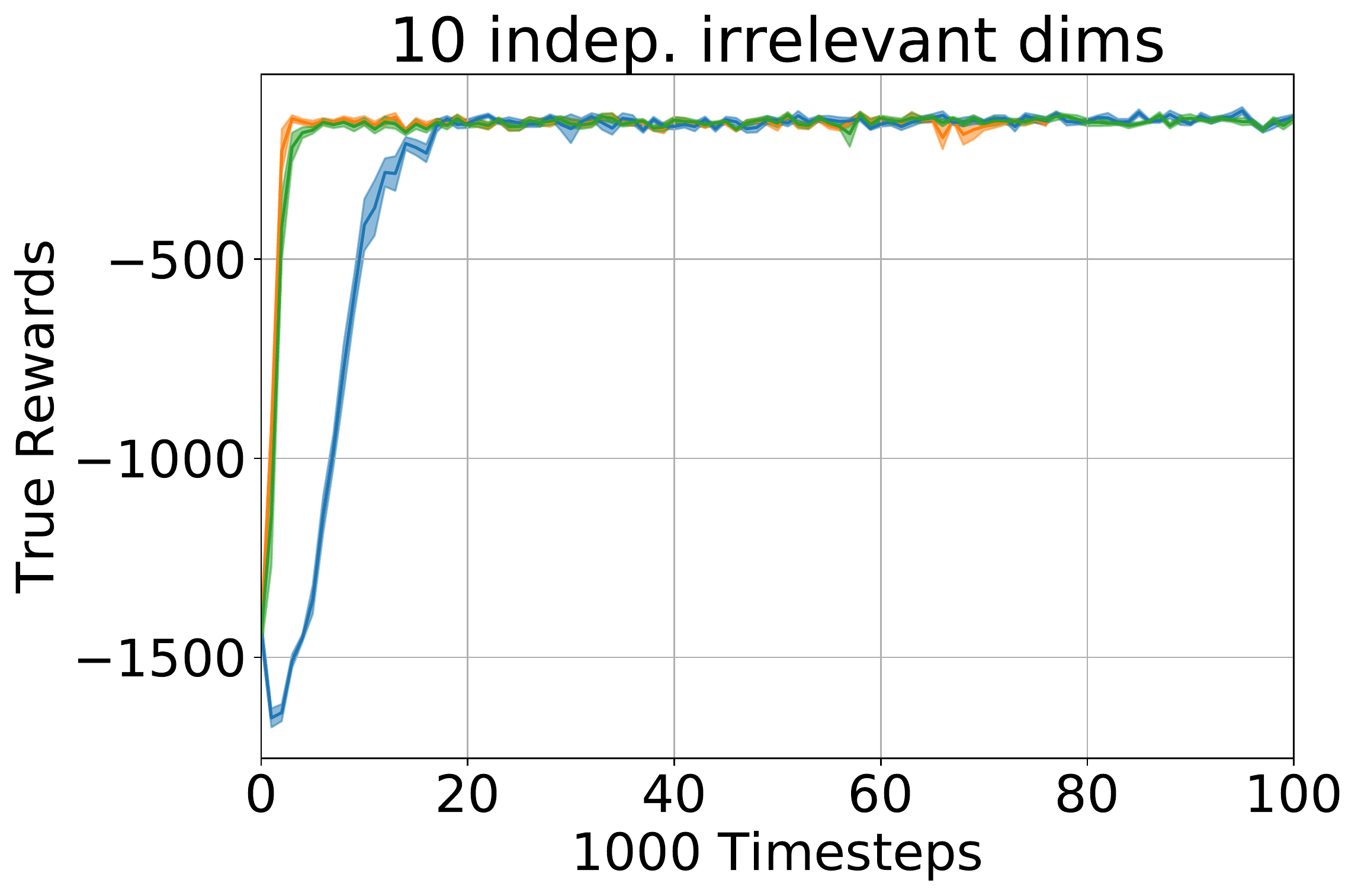}
    \includegraphics[width=0.3\textwidth]{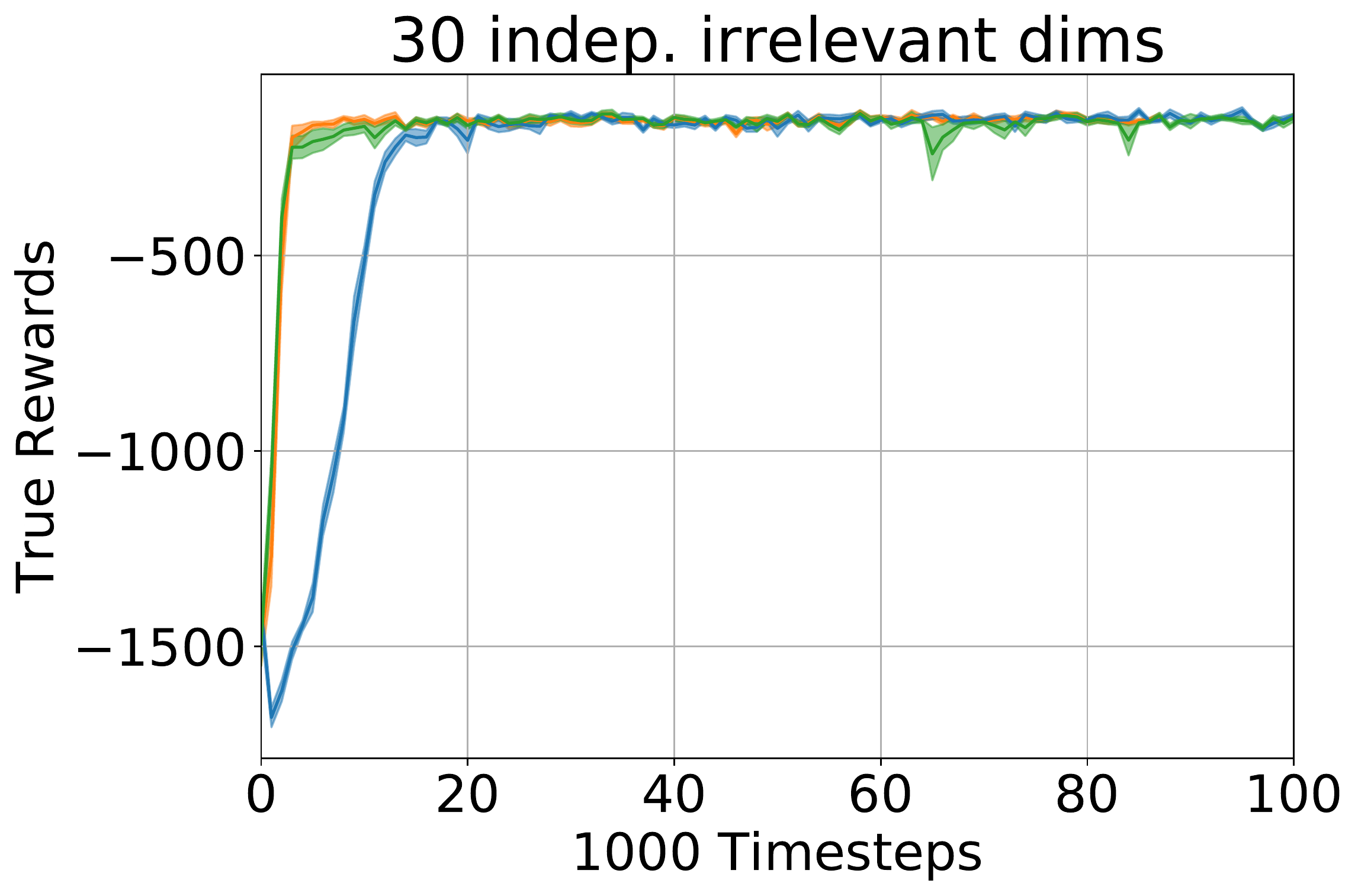}
    \includegraphics[width=0.3\textwidth]{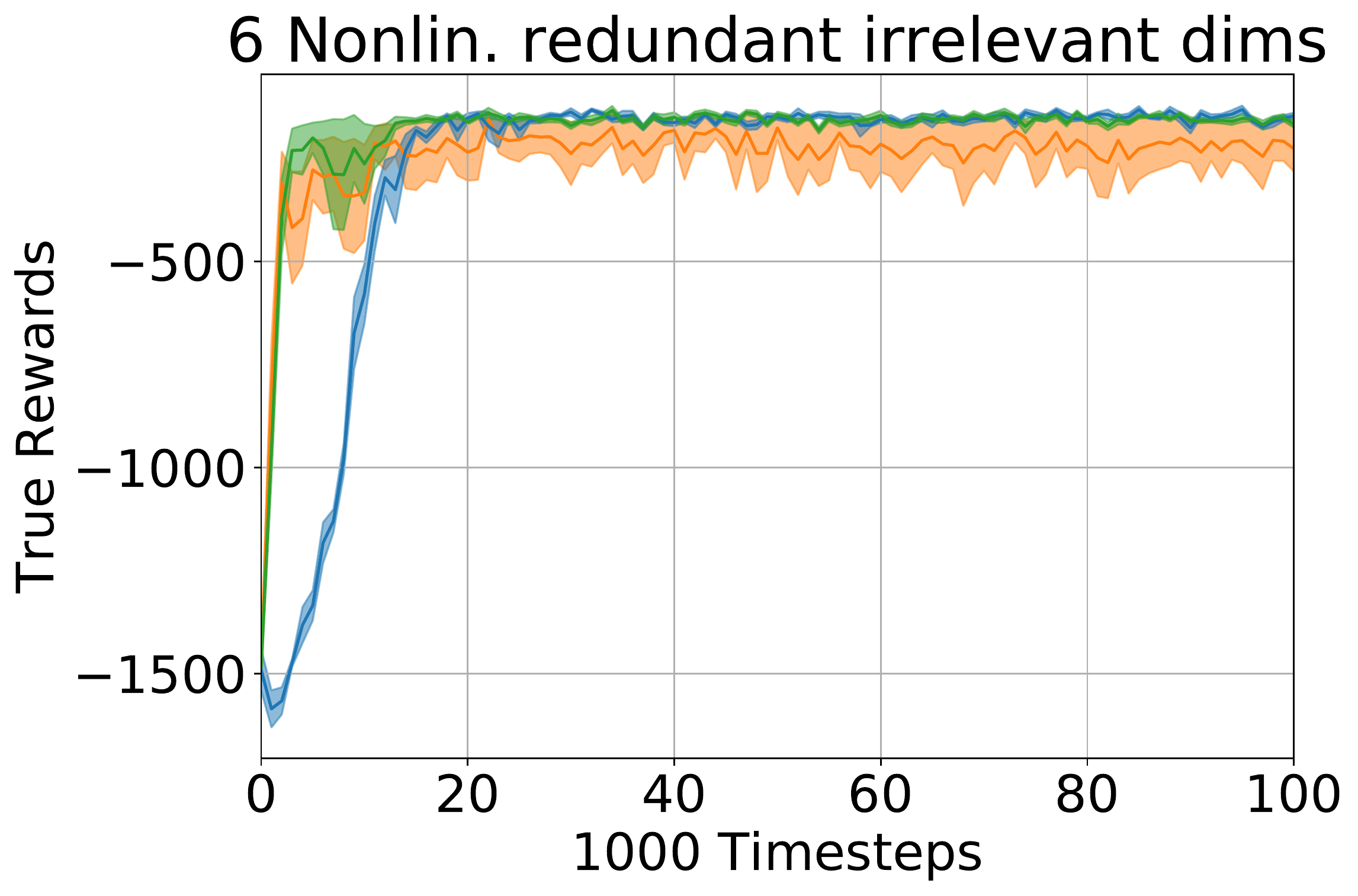}
    \includegraphics[width=0.32\textwidth]{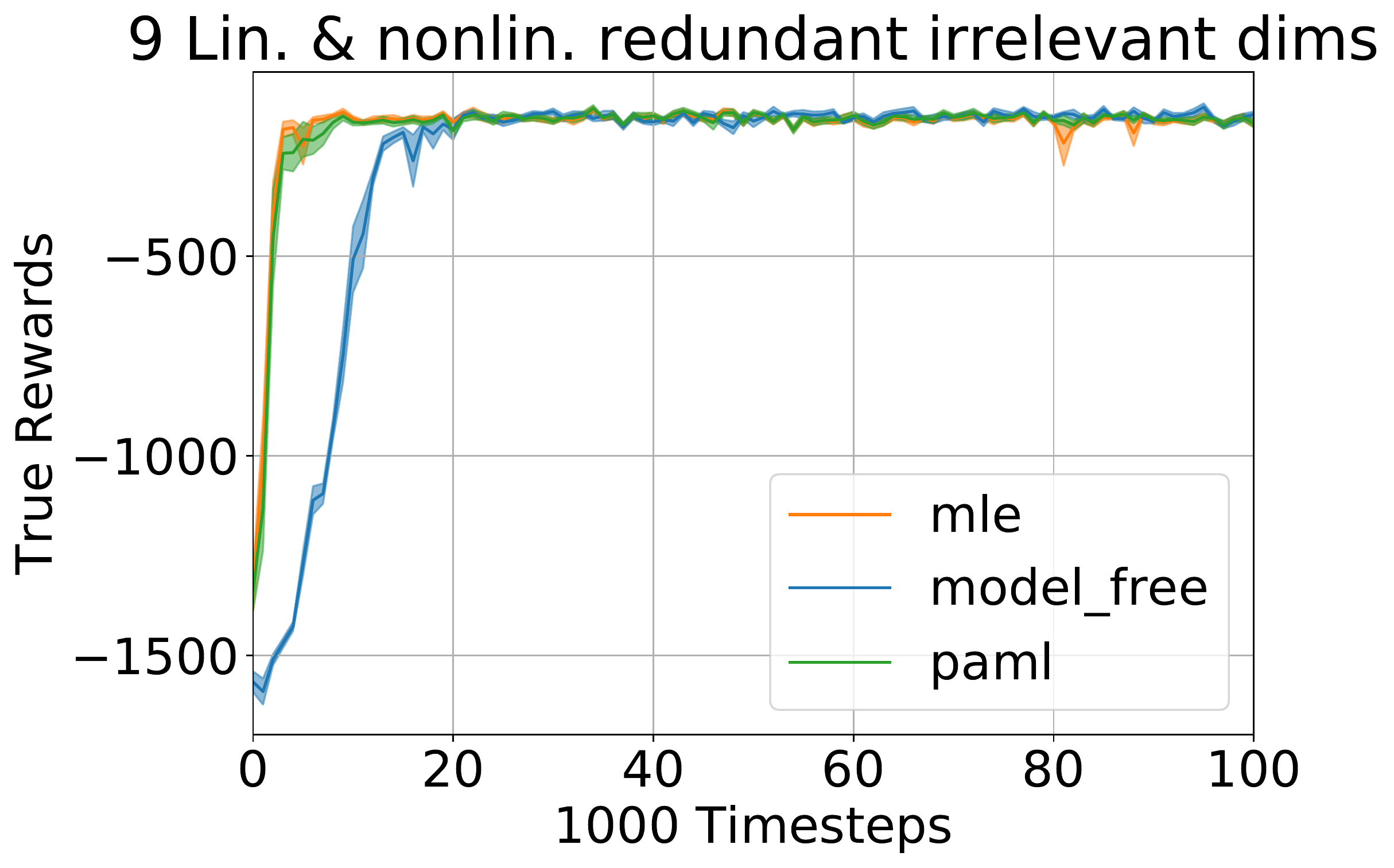}
    %XXX IN PROGRESS XXXX
    \caption{Performance of policies for different numbers of irrelevant and redundant dimensions added to the state observations for the locomotion problem Pendulum-v0, the details of which can be found in the \ifSupp Appendix \else supplementary materials. \fi}
    \label{fig:actor-critic-pend}
\end{figure}
% \begin{figure}[tb]
%     \centering
%     % \includegraphics[width=0.23\textwidth]{}
%     \includegraphics[width=0.3\textwidth]{}
%     % 
%     \includegraphics[width=0.3\textwidth]{}
%     \includegraphics[width=0.3\textwidth]{}
%     \includegraphics[width=0.3\textwidth]{}
%     % \includegraphics[width=0.25\textwidth]{}
%     \includegraphics[width=0.3\textwidth]{}
%     % \includegraphics[width=0.3\textwidth]{}
%     \caption{Performance of policies for different numbers of irrelevant and redundant dimensions added to the state observations for the locomotion problem HalfCheetah-v2, the details of which can be found in the \ifSupp Appendix. \else supplementary materials. \fi}
%     \label{fig:actor-critic-cheetah}
% \end{figure}
\begin{figure}
\centering
    \includegraphics[width=0.48\linewidth]{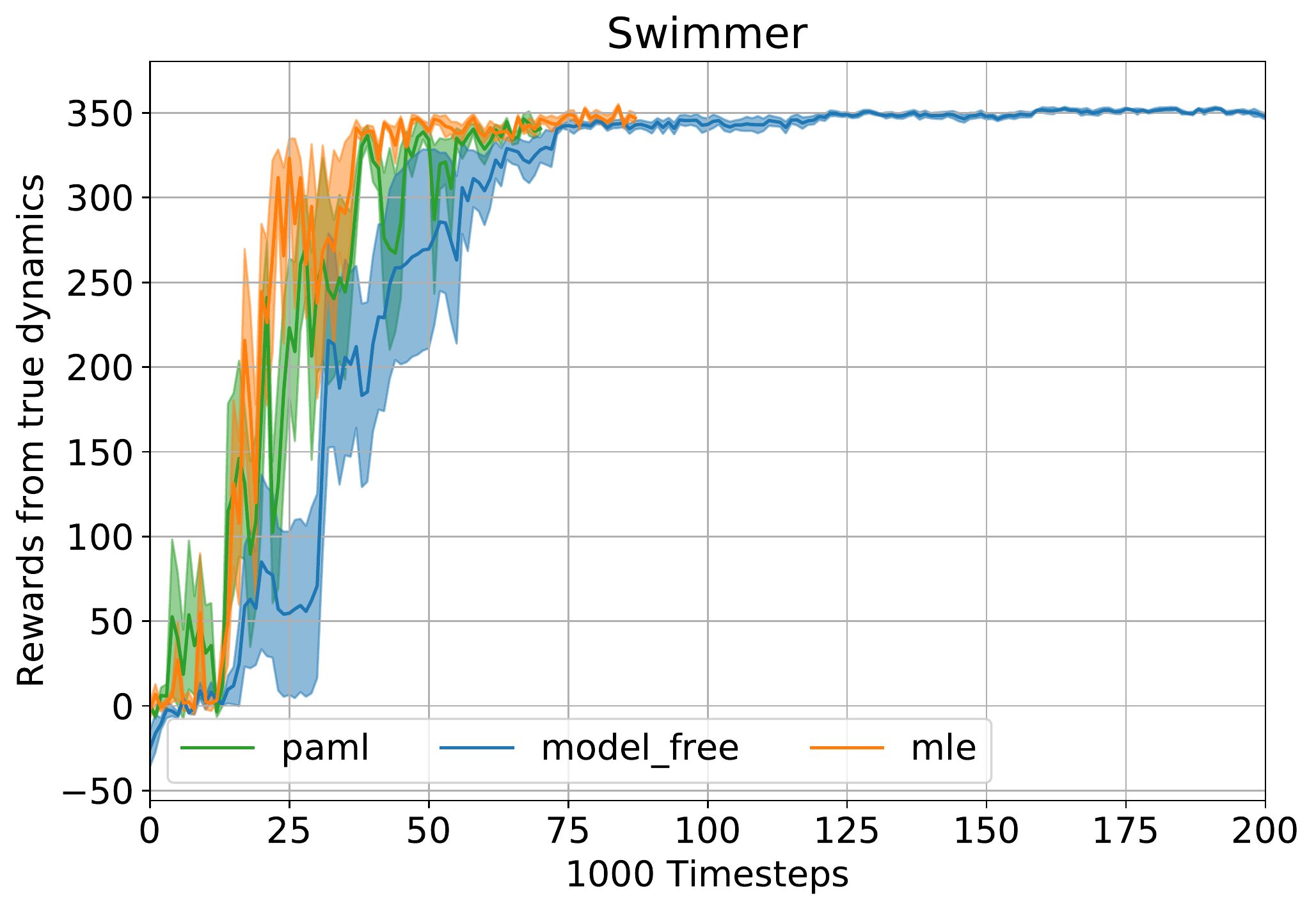}
    \includegraphics[width=0.48\linewidth]{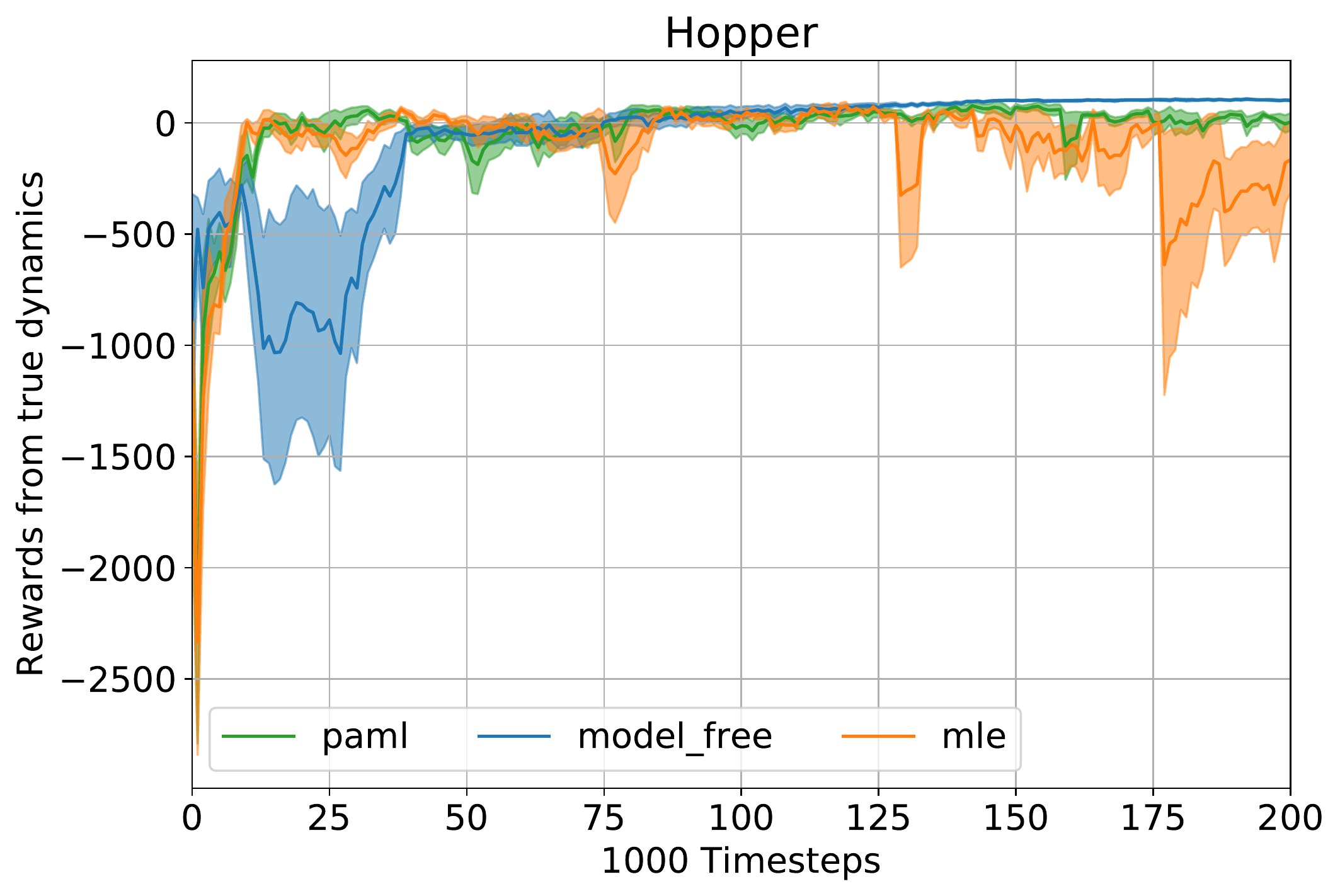}
    \includegraphics[width=0.48\linewidth]{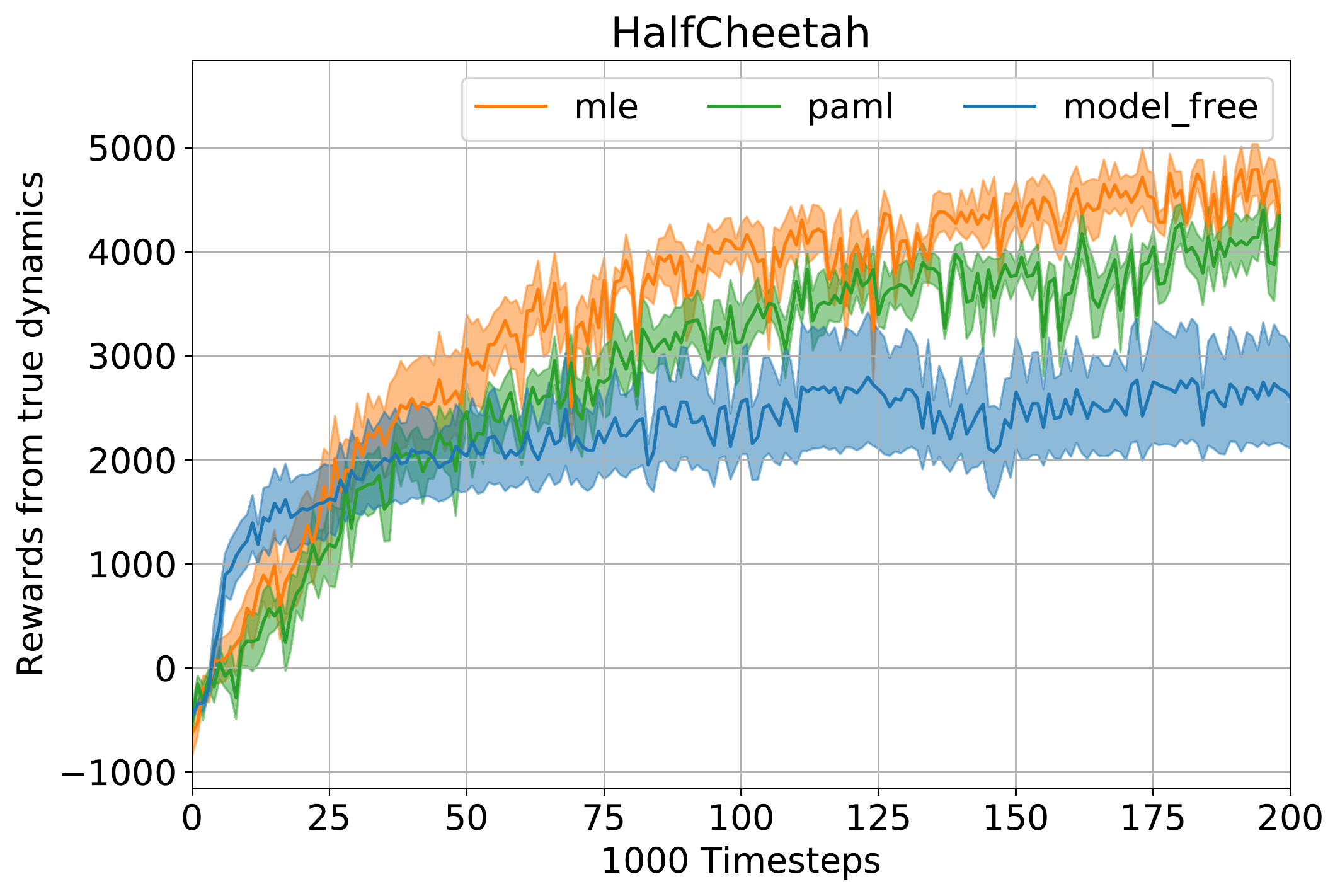}
    \includegraphics[width=0.48\linewidth]{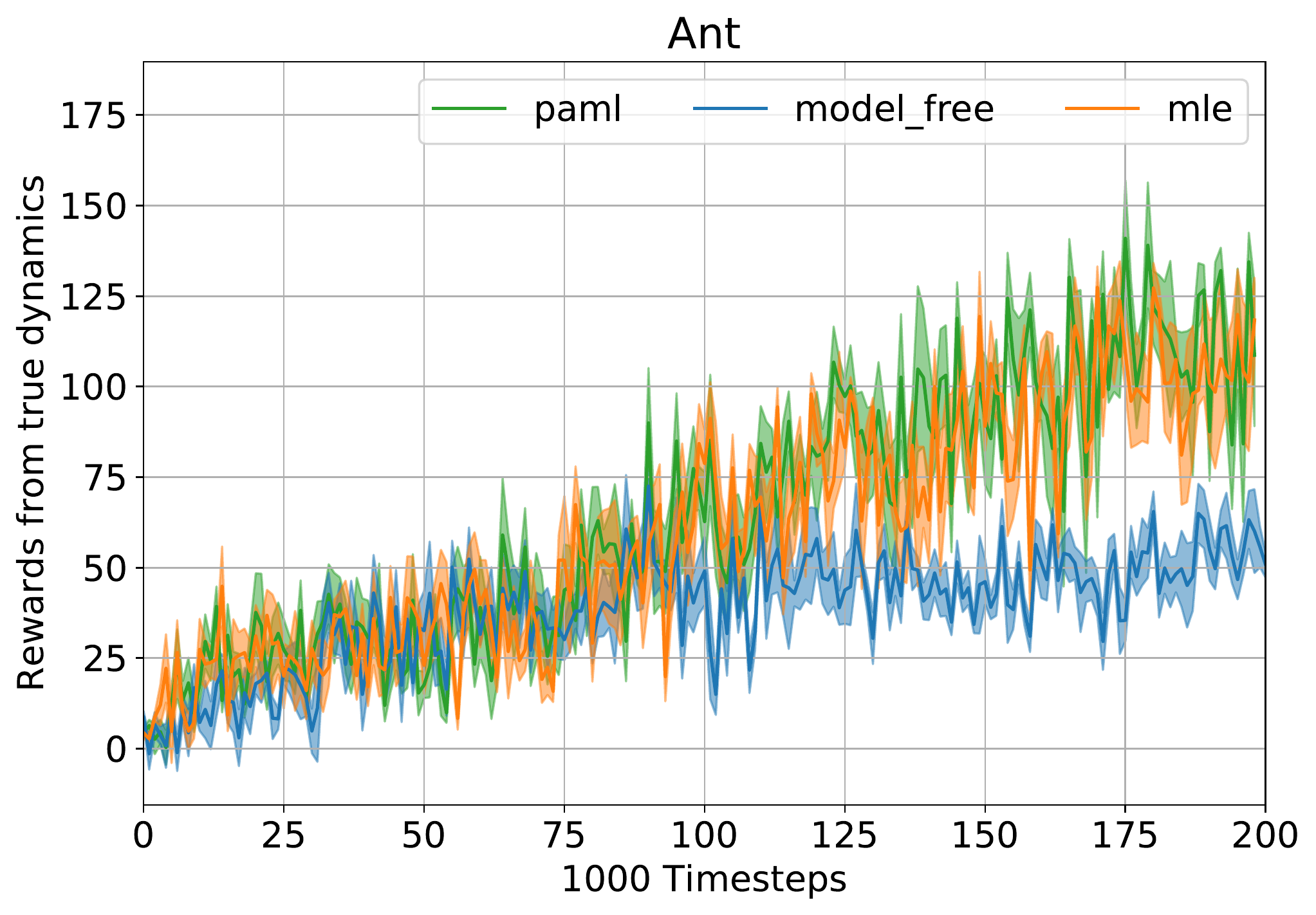}
    \caption{Comparison of policies trained using a model-free method (DDPG, \citealt{lillicrap2015continuous}), or by planning using a model learned by PAML, or MLE. Experimental details in the supplementary. Solid lines indicate mean of 5 runs and shaded regions the standard error.}
    \vspace{-0.1in}
\label{fig:actor-critic-all-high-dim}
\end{figure}
\else
\begin{figure}
\centering
    \includegraphics[width=0.48\linewidth]{plots/graph_actorcritic_MBRLSwimmer-v0_state9_hidden128_horizon2.pdf}
    \includegraphics[width=0.48\linewidth]{plots/graph_actorcritic_MBRLHopper-v0_state11_hidden512_horizon2.pdf}
    \includegraphics[width=0.48\linewidth]{plots/graph_actorcritic_HalfCheetah-v2_state17_hidden512_horizon2.pdf}
    \includegraphics[width=0.48\linewidth]{plots/graph_actorcritic_MBRLAnt-v0_state111_hidden1024_horizon2.pdf}
    \caption{Comparison of policies trained using a model-free method (DDPG, \citealt{lillicrap2015continuous}), or by planning using a model learned by PAML, or MLE. Experimental details in the supplementary. Solid lines indicate mean of 5 runs and shaded regions the standard error.}
    \vspace{-0.1in}
\label{fig:actor-critic-cheetah}
\end{figure}
\fi

% The goals of this section are to show that a) PAML does not perform worse than MLE in cases where MLE is able to learn a good model of the environment, and b) PAML performs better in situations where MLE is not able to learn a good model due to insufficient capacity of the model parameter-space for a given environment. 

%%%%%%%%%%%%%%%%%%%%%%%%%%%%%%%%%%%%%%%%%%%%%%%
\section{Discussion and Future Work}
\label{sec:PAML-Discussion}

We introduced Policy-Aware Model Learning, a decision-aware MBRL framework that incorporates the policy in the way the model is learned. PAML encourages the model to learn about aspects of the environment that are relevant to planning by a PG method, instead of trying to build an accurate predictive model.
% We derived a loss function that is based on using a PG-based planner.
% We provided a generic PG error guarantee based on the error between the models. 
We proved a convergence guarantee for a generic model-based PG algorithm, and introduced a new notion of policy approximation error.
We empirically evaluated PAML and compared it with MLE on some benchmark domains. 
%The current loss formulation of PAML relies on having an estimate of critic. 
% Studying the effect of error in critic on PAML or perhaps even defining a loss function that is critic-independent, if possible, are interesting future research directions.\todoNew{Haven't we already done that? -AMF}
A fruitful direction is deriving PAML loss for other PG methods, especially the state of the art ones.

\ifSupp
\appendix

%%%%%%%%%%%%%%%%%%%%%%%%%%%%%%%%%%%%%%%%%%%%%%%
%%%%%%%%%%%%%%%%%%%%%%%%%%%%%%%%%%%%%%%%%%%%%%%
%%%%%%%%%%%%%%%%%%%%%%%%%%%%%%%%%%%%%%%%%%%%%%%
\section{Theoretical Background and Proofs}
\label{sec:PAML-Theory-Appendix}

\todo{Some discussions here? Showing a road map of what they should expect. A few lines suffice. -AMF}
%%%%%%%%%%%%%%%%%%%%%%%%%%%%%%%%%%%%%%%%%%%%%%%
\subsection{Background Results}
\label{sec:PAML-Theory-Background}

We report some background results in this section. These results are quoted from elsewhere, with possibly minor modification, as shall be discussed.

%%%%%%%%%%%%%%%%%%%%%%%%%%%%%%%%%%%%%%%%%%%%%%%
\begin{lemma}[Lemma E.5 of~\citealt{AgarwalKakadeLeeMahajar2019}]
\label{lem:PAML-PolicySmoothness}
Suppose that Assumption~\ref{asm:PolicySmoothness} holds, the action space is finite with $\actionnum$ elements, and
 the action-value functions are all $\Qmax = \frac{\Rmax}{1 - \gamma}$-bounded.
For any $x \in \XX$, we then have
\begin{align*}
	\norm{ \nabla_\theta V^{\pi_{\theta_1}}(x) - \nabla_\theta V^{\pi_{\theta_2}}(x) }_2
	\leq
	\beta \norm{\theta_1 - \theta_2}_2,
\end{align*}
with
\begin{align*}
	\beta =
	\Qmax
	\left[
		\frac{2 \gamma \beta_1^2 \actionnum^2 }{(1-\gamma)^2} + 
		\frac{\beta_2 \actionnum }{1 - \gamma}
	\right].
\end{align*}
\end{lemma}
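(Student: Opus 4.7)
The plan is to adapt the original argument of Agarwal et al.\ (Lemma E.5) with the only substantive modification being that $\norm{Q^{\pi_\theta}}_\infty \le \Qmax$ replaces the $\frac{1}{1-\gamma}$ bound they obtain from $[0,1]$-valued rewards, which propagates multiplicatively into both summands of $\beta$. First I would recall the policy gradient identity
\[
	\nabla_\theta V^{\pi_\theta}(x) =
	\sum_{k\ge 0}\gamma^k \int \PKernelpi(\dy|x;k)\, g_\theta(y),
	\qquad
	g_\theta(y) \eqdef \sum_{a\in\AA} \nabla_\theta \pi_\theta(a|y)\, Q^{\pi_\theta}(y,a),
\]
and split $\nabla_\theta V^{\pi_{\theta_1}}(x) - \nabla_\theta V^{\pi_{\theta_2}}(x)$ into (i) a ``pointwise'' term that keeps the same transition kernel but varies $g_\theta$, and (ii) a ``distributional'' term that varies the kernel $\PKernel^{\pi_\theta}$. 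The triangle inequality then reduces everything to bounding three Lipschitz-type quantities in $\theta$: the per-action score $\nabla_\theta\pi_\theta(a|y)$, the critic $Q^{\pi_\theta}(y,a)$, and the $k$-step distribution $\PKernel^{\pi_\theta}(\cdot|x;k)$.

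For step (i), I would use Assumption~\ref{asm:PolicySmoothness} to write
\[
	\norm{g_{\theta_1}(y) - g_{\theta_2}(y)}_2
	\le \sum_a \norm{\nabla_\theta\pi_{\theta_1}(a|y) - \nabla_\theta\pi_{\theta_2}(a|y)}_2 \Qmax
	+ \sum_a \beta_1 \norm{\theta_1-\theta_2}_2 \,\abs{Q^{\pi_{\theta_1}}(y,a) - Q^{\pi_{\theta_2}}(y,a)}/\norm{\theta_1-\theta_2}_2,
\]
where the first piece contributes the $\beta_2\actionnum\Qmax$ factor, and the second piece requires a Lipschitz estimate on $Q^{\pi_\theta}$ itself (standard via the simulation lemma / performance difference), yielding a Lipschitz constant of order $\frac{\gamma\beta_1\actionnum\Qmax}{1-\gamma}$. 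After summing the geometric series in $\gamma^k$, (i) contributes the $\frac{\beta_2\actionnum\Qmax}{1-\gamma}$ term (and absorbable lower-order pieces into the other one).

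For step (ii), I would show that $\PKernel^{\pi_\theta}(\cdot|x;k)$ is Lipschitz in $\theta$ in total variation. Unrolling the telescoping identity $\PKernel^{\pi_{\theta_1}}(\cdot|x;k) - \PKernel^{\pi_{\theta_2}}(\cdot|x;k) = \sum_{j=0}^{k-1} \PKernel^{\pi_{\theta_1}}(\cdot|x;j)(\PKernel^{\pi_{\theta_1}} - \PKernel^{\pi_{\theta_2}})\PKernel^{\pi_{\theta_2}}(\cdot;k-1-j)$ and invoking $\norm{\PKernel^{\pi_{\theta_1}}(\cdot|y) - \PKernel^{\pi_{\theta_2}}(\cdot|y)}_1 \le \beta_1\actionnum\norm{\theta_1-\theta_2}_2$ (from Assumption~\ref{asm:PolicySmoothness} summed over actions), we get the TV bound $k\beta_1\actionnum\norm{\theta_1-\theta_2}_2$. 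Multiplying by $\gamma^k$ and $\norm{g_\theta}_\infty \le \beta_1\actionnum\Qmax$ and summing $\sum_{k\ge 0} k\gamma^k = \gamma/(1-\gamma)^2$ yields the $\frac{2\gamma\beta_1^2\actionnum^2\Qmax}{(1-\gamma)^2}$ contribution.

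The main obstacle is bookkeeping: obtaining sharp constants requires careful factoring of whether the $\actionnum$ factors arise from summing scores over actions or from the TV bound on $\PKernel^\pi$, and making sure the critic-Lipschitz sub-lemma is absorbed into the first summand of $\beta$ rather than producing an extra term. Since this is a direct adaptation of the cited result with only the $\Qmax$ rescaling, I would simply cite Lemma E.5 of~\citet{AgarwalKakadeLeeMahajar2019} for the structure of the argument and verify that the replacement $\tfrac{1}{1-\gamma}\mapsto \Qmax$ in every step yields exactly the stated $\beta$.
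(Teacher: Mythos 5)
Your bottom line is exactly what the paper does: it offers no self-contained proof of this lemma, only the observation that the statement is Lemma E.5 of Agarwal et al.\ with the value bound $\frac{1}{1-\gamma}$ (from $[0,1]$ rewards) replaced by $\Qmax$, which propagates through their Lemma E.2 to multiply both summands of $\beta$ by $(1-\gamma)\Qmax$. Where you differ is in the sketched standalone argument: Agarwal et al.\ (and hence the paper, implicitly) prove smoothness by bounding the directional \emph{second} derivative $\frac{\mathrm{d}^2}{\mathrm{d}\alpha^2} V^{\pi_{\theta+\alpha u}}(x)$ uniformly over unit directions $u$ (their Lemma E.2), whereas you work at first order, telescoping the PG representation $\nabla_\theta V^{\pi_\theta}(x)=\sum_k \gamma^k \int \PKernel^{\pi_\theta}(\dy|x;k)\,g_\theta(y)$ into an integrand-variation term and a kernel-variation term. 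Your route is arguably more transparent about where each piece of $\beta$ comes from, and your accounting checks out: the kernel-variation term and the critic-Lipschitz piece of the integrand term each contribute $\frac{\gamma\beta_1^2\actionnum^2\Qmax}{(1-\gamma)^2}$, which together produce the factor $2$ in the first summand, while the $\beta_2$-piece of the integrand term gives $\frac{\beta_2\actionnum\Qmax}{1-\gamma}$ — so "absorbable lower-order pieces" is doing real work there and should be stated as an explicit equal contribution rather than something absorbed. The Hessian route buys a cleaner reusable intermediate (Lemma E.2 applies to any smoothly parameterized kernel/reward pair, which the paper also reuses for Proposition~\ref{prop:PAML-Smoothness-of-Performance-with-Inexact-Critic}-type arguments); your route avoids second derivatives entirely but needs the simulation-lemma Lipschitz bound on $Q^{\pi_\theta}$ as an extra sub-lemma. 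Either way the constants agree, so the proposal is correct.
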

%%%%%%%%%%%%%%%%%%%%%%%%%%%%%%%%%%%%%%%%%%%%%%%
The difference of this result with the original Lemma E.5 of~\citet{AgarwalKakadeLeeMahajar2019} is that here we assume that the rewards are $\Rmax$-bounded, whereas their paper is based on the assumption that the reward is between $0$ and $1$. As such, their $\Qmax$ is $\frac{1}{1 - \gamma}$, and their $\beta$ is
$
		\frac{2 \gamma \beta_1^2 \actionnum^2 }{(1-\gamma)^3} + 
		\frac{\beta_2 \actionnum }{(1 - \gamma)^2}
$.

The change in the proof of Lemma E.5 stems from the change in Lemma E.2 of~\citet{AgarwalKakadeLeeMahajar2019}.
The upper bound in Lemma E.2 changes from
\[
	\max_{ \norm{u}_2 = 1, \theta + \alpha u \in \Theta}
	\left |
		\frac{\mathrm{d}^2 \tilde{V}(\alpha)}{\mathrm{d}\alpha^2} \Big \vert_{\alpha = 0}
	\right|
	\leq
	\frac{2\gamma C_1^2}{(1 - \gamma)^3} +
	\frac{C_2}{(1 - \gamma)^2}
\]
to
$
	\Qmax
	\left[
		\frac{2\gamma C_1^2}{(1 - \gamma)^2} +
		\frac{C_2}{1 - \gamma}
	\right]
$.

%%%%%%%%%%%%%%%%%%%%%%%%%%%%%%%%%%%%%%%%%%%%%%%
\begin{proposition}[Proposition D.1 of~\citet{AgarwalKakadeLeeMahajar2019}]
\label{prop:PAML-SmallGradientMapping-to-epsStationarity}
Let $\hat{J}_\mu(\pi_\theta) = J_\mu(\pi_\theta, \PKernelhat^{\pi}, \Qhat^{\pi_\theta} )$ be $\beta$-smooth in $\theta$. Define the gradient mapping as
\[
	G^\eta(\theta) = \frac{1}{\eta} \left( \ProjTheta \left [ \theta + \eta \nabla_\theta \hat{J}_\mu \right ] - \theta \right).
\]
Let $\theta' = \theta + \eta G^\eta$ for some $\eta > 0$. We have
\begin{align*}
	\max_{\theta + \delta \in \Theta, \norm{\delta}_2 \leq 1}
	\delta^\top \nabla_\theta \hat{J}_\mu(\pi_{\theta'})
	\leq
	(\eta \beta + 1) \norm{G^\eta(\theta)}_2.
\end{align*}
\end{proposition}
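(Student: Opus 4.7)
The plan is to decompose the quantity $\delta^\top \nabla_\theta \hat{J}_\mu(\pi_{\theta'})$ into a gradient-drift piece plus an optimality piece, and to control each separately. Specifically, I would write
\[
\delta^\top \nabla_\theta \hat{J}_\mu(\pi_{\theta'}) = \delta^\top \bigl[ \nabla_\theta \hat{J}_\mu(\pi_{\theta'}) - \nabla_\theta \hat{J}_\mu(\pi_\theta) \bigr] + \delta^\top \nabla_\theta \hat{J}_\mu(\pi_\theta),
\]
targeting a bound of $\eta \beta \smallnorm{G^\eta(\theta)}_2$ on the first summand (producing the $\eta \beta$ factor) and $\smallnorm{G^\eta(\theta)}_2$ on the second (producing the $+1$ factor).

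The drift term is immediate from the assumed $\beta$-smoothness of $\hat{J}_\mu$ in $\theta$: by Cauchy--Schwarz, $\smallnorm{\delta}_2 \leq 1$, and the identity $\theta' - \theta = \eta G^\eta(\theta)$ built into the gradient mapping, I would write
\[
\delta^\top \bigl[ \nabla_\theta \hat{J}_\mu(\pi_{\theta'}) - \nabla_\theta \hat{J}_\mu(\pi_\theta) \bigr] \leq \smallnorm{\delta}_2 \cdot \beta \smallnorm{\theta' - \theta}_2 \leq \eta \beta \smallnorm{G^\eta(\theta)}_2.
\]

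For the optimality term, I would invoke the first-order characterization of the Euclidean projection. Since $\theta' = \ProjTheta\bigl[ \theta + \eta \nabla_\theta \hat{J}_\mu(\pi_\theta) \bigr]$, for every $z \in \Theta$,
\[
\bigl\langle \theta + \eta \nabla_\theta \hat{J}_\mu(\pi_\theta) - \theta',\, z - \theta' \bigr\rangle \leq 0,
\]
or equivalently $\bigl\langle \nabla_\theta \hat{J}_\mu(\pi_\theta) - G^\eta(\theta),\, z - \theta' \bigr\rangle \leq 0$. Taking $z = \theta + \delta$ (which lies in $\Theta$ by assumption) and using $z - \theta' = \delta - \eta G^\eta(\theta)$, this rearranges to an upper bound for $\delta^\top \nabla_\theta \hat{J}_\mu(\pi_\theta)$ whose leading piece is $\delta^\top G^\eta(\theta)$, which Cauchy--Schwarz with $\smallnorm{\delta}_2 \leq 1$ controls by $\smallnorm{G^\eta(\theta)}_2$.

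The main obstacle will be disposing of the residual cross-term $\eta \bigl\langle \nabla_\theta \hat{J}_\mu(\pi_\theta) - G^\eta(\theta),\, G^\eta(\theta) \bigr\rangle$ that this rearrangement produces. My plan for handling it is to pair the projection inequality above with its specialization at $z = \theta$, which yields the complementary bound $\bigl\langle \nabla_\theta \hat{J}_\mu(\pi_\theta),\, G^\eta(\theta) \bigr\rangle \geq \smallnorm{G^\eta(\theta)}_2^2$, and to combine this with a slightly refined smoothness step (e.g.\ bounding $\delta^\top \nabla_\theta \hat{J}_\mu(\pi_{\theta'})$ via $\delta^\top G^\eta(\theta) + \delta^\top[\nabla_\theta \hat{J}_\mu(\pi_{\theta'}) - G^\eta(\theta)]$ instead of the naive split, so that the drift is measured directly against $G^\eta(\theta)$). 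Adding the two controlled pieces then yields exactly the claimed $(\eta \beta + 1) \smallnorm{G^\eta(\theta)}_2$ bound.
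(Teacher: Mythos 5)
Your decomposition into a drift term and an optimality term is the standard argument for this proposition, and your treatment of the drift term via $\beta$-smoothness is correct. The gap is in the optimality term, and it is not a reparable technicality: anchoring the admissible directions at $\theta$ (i.e.\ taking $z=\theta+\delta$ in the projection inequality) leaves the residual cross-term $\eta\,\langle \nabla_\theta \hat{J}_\mu(\pi_\theta) - G^\eta(\theta),\, G^\eta(\theta)\rangle$, and your own auxiliary inequality from $z=\theta$ shows precisely that this term is \emph{nonnegative} — it pushes the bound in the wrong direction, and it can be arbitrarily large. Concretely, take $\Theta=[0,1]$, $\hat{J}(\theta)=c\theta$ with $c$ large (linear, hence $\beta$-smooth for every $\beta\ge 0$), $\theta=1/2$, and $\eta c>1/2$. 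Then $\theta'=1$, $\norm{G^\eta(\theta)}_2=1/(2\eta)$, while
\[
\max_{\theta+\delta\in\Theta,\ |\delta|\le 1}\ \delta\,\hat{J}'(\theta') \;=\; \frac{c}{2},
\]
which exceeds $(\eta\beta+1)\norm{G^\eta(\theta)}_2=\beta/2+1/(2\eta)$ once $c$ is large. So with the constraint written as $\theta+\delta\in\Theta$ the claim is false, and no strategy for "disposing of" the cross-term can succeed.

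The resolution is that the admissible directions must be anchored at the updated point: in Agarwal et al.'s Proposition D.1 the maximum is taken over $\theta'+\delta\in\Theta$, and this is also how the bound is invoked downstream in this paper (the point at which the gradient is evaluated and the base point of the constraint coincide at $\theta_t$), so the $\theta+\delta\in\Theta$ in the statement here should be read as $\theta'+\delta\in\Theta$. With that reading, take $z=\theta'+\delta\in\Theta$ in the variational characterization of the projection; then $z-\theta'=\delta$, the cross-term vanishes identically, and you obtain $\delta^\top\nabla_\theta \hat{J}_\mu(\pi_\theta)\le \delta^\top G^\eta(\theta)\le \norm{G^\eta(\theta)}_2$ directly. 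Combining this with your unchanged smoothness step yields the claimed $(\eta\beta+1)\norm{G^\eta(\theta)}_2$. Note finally that the paper itself quotes this proposition from Agarwal et al.\ without proof, so there is no in-paper argument to compare against; the above is the intended (and essentially only) proof.
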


\todo{Add Theorem 10.15 of~\citet{Beck2017}.}

The following lemma is a multivariate form of the mean value theorem. It is not a new result, but for the sake of completeness, we report it here.\footnote{One can find its proof on Wikipedia page on the \href{https://en.wikipedia.org/w/index.php?title=Mean_value_theorem&oldid=916884313}{Mean value theorem}.}
This statement and proof is quoted from the extended version of~\citet{HuangFarahmandKitaniBagnell2015}.

%%%%%%%%%%%%%%%%%%%%%%%%%%%%%%%%%%%%%%%%%%%%%%%
\begin{lemma}
\label{lem:PAML-MultivariateMeanValueTheorem}
Let $f: \Real^m \ra \Real^m$ be a continuously differentiable function and $J: \Real^m \ra \Real^{m \times m}$ be its Jacobian matrix, that is $J_{ij} = \frac{\partial f_i(x)}{\partial x_j}$.
We then have for any $x, \Delta x \in \Real^m$, 
\begin{align*}
	\norm{f(x + \Delta x) - f(x)}_2 \leq
	\sup_{x'} \norm{J(x')}_2 \norm{\Delta x}_2, \\
	\norm{f(x + \Delta x) - f(x)}_1 \leq
	\sup_{x'} \norm{J(x')}_1 \norm{\Delta x}_1.
\end{align*}
\end{lemma}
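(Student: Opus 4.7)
The plan is to use a standard line-integral/fundamental theorem of calculus argument, lifted from the one-dimensional mean value theorem. First I would parameterize the line segment from $x$ to $x + \Delta x$ by $\gamma(t) = x + t \Delta x$ for $t \in [0,1]$, and define the composition $g: [0,1] \to \Real^m$ by $g(t) = f(\gamma(t))$. Since $f$ is continuously differentiable, the chain rule gives $g'(t) = J(\gamma(t)) \Delta x$, which is continuous in $t$.

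Next I would apply the fundamental theorem of calculus componentwise to write
\begin{equation*}
	f(x + \Delta x) - f(x) = g(1) - g(0) = \int_0^1 J(\gamma(t)) \Delta x \, \mathrm{d}t.
\end{equation*}
From here the bound follows by pushing the norm inside the integral (via the triangle inequality for Bochner-type vector integrals, which for a continuous integrand in $\Real^m$ is elementary) and then applying the operator-norm inequality $\norm{J(\gamma(t)) \Delta x}_p \leq \norm{J(\gamma(t))}_p \norm{\Delta x}_p$, valid for any induced $p$-norm:
\begin{equation*}
	\norm{f(x + \Delta x) - f(x)}_p \leq \int_0^1 \norm{J(\gamma(t))}_p \, \mathrm{d}t \cdot \norm{\Delta x}_p \leq \sup_{x'} \norm{J(x')}_p \cdot \norm{\Delta x}_p.
\end{equation*}
This argument is uniform in $p$, so setting $p=2$ and $p=1$ yields both claimed inequalities simultaneously.

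There is no real obstacle here; the only mild subtlety is justifying the exchange of norm and integral, but since $t \mapsto J(\gamma(t))\Delta x$ is a continuous $\Real^m$-valued function on a compact interval, it can be handled either by applying the one-dimensional triangle inequality component-by-component together with Cauchy--Schwarz (for $p=2$) or, more directly, by duality: for any unit vector $u$ in the dual norm, $u^\top \int_0^1 J(\gamma(t))\Delta x \, \mathrm{d}t = \int_0^1 u^\top J(\gamma(t))\Delta x \, \mathrm{d}t \leq \int_0^1 \norm{J(\gamma(t))\Delta x}_p \, \mathrm{d}t$, and then take the supremum over $u$.
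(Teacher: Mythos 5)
Your argument is correct and matches the paper's own proof essentially step for step: both parameterize the segment, apply the fundamental theorem of calculus componentwise to obtain $f(x+\Delta x)-f(x)=\int_0^1 J(x+t\Delta x)\,\Delta x\,\mathrm{d}t$, and then pass the norm through the integral and use the induced operator-norm inequality. Your extra remark justifying the exchange of norm and integral by duality is a welcome bit of added rigor that the paper leaves implicit.
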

An $l_1$ and $l_2$ matrix norms in this lemma are vector-induced norms on $\Real^m$, and have the property that for an $m \times m$ matrix $A$, 
$\norm{A}_2 = \sigma_\text{max}(A)$ and
$\norm{A}_1 = \max_j \sum_{i} |A_{ij}|$.

%%%%%%%%%%%%%%%%%%%%%%%%%%%%%%%%%%%%%%%%%%%%%%%
\begin{proof}
Consider a continuously differentiable function $g: \Real \ra \Real$. By the fundamental theorem of calculus,
$
	g(1) - g(0) =
	\int_{0}^{1} g'(t) \mathrm{d}t
$.
%
%
%$
%	g(u+ \Delta u) - g(u) =
%	\int_{u}^{u+\Delta u} g'(v) dv = 
%	\int_{0}^{1} g'(u + t \Delta u) dt \cdot \Delta u
%$.
For each component $f_i$ of $f$, define $g_i(u) = f_i(x + u \Delta x)$, so
$f_i(x + \Delta x) - f_i(x) = g_i(1) - g_i(0) = \int_{0}^1 g_i'(t) \mathrm{d}t
=
\int_{0}^1 
	\left[
	\sum_{j=1}^d \frac{ \partial f_i}{\partial x_j}(x + t \Delta x) . \Delta x_j
	\right] \mathrm{d} t.
$
For the vector-valued function $f$, we get
$
f(x + \Delta x) - f(x) = \int_{0}^1 J(x+t \Delta x) \Delta x \, \mathrm{d} t
$,
therefore,
\begin{align*}
	\norm{f(x + \Delta x) - f(x)}_2
	&
	=
	\norm{ \int_{0}^1 J(x+t \Delta x) \Delta x \mathrm{d} t }_2
	\leq
	\int_{0}^1 \norm{J(x+t \Delta x)}_2 \norm{\Delta x}_2 \mathrm{d} t
	\\
	&
	\leq
	\sup_{x'} \norm{J(x')}_2 \norm{\Delta x}_2 \int_{0}^1 \mathrm{d} t.
\end{align*}

The $l_1$-norm result is obtained using the $l_1$-norm instead of the $l_2$-norm in the last step.
\end{proof}

%%%%%%%%%%%%%%%%%%%%%%%%%%%%%%%%%%%%%%%%%%%%%%%
The following is a restatement of Theorem 10.15 part (c) of \citep{Beck2017}, with a slight modification to allow reporting convergence results with an expectation over iterates rather than a $\min$.
\todo{Or maybe make it a proposition? -AMF}

\todo{How is the step size $\eta$ chosen here?! -AMF}

\begin{lemma}[Slight modification of Theorem 10.15 of~\citet{Beck2017} -- Part (c)]
\label{lem:PAML-Beck-Expectation}
Let $F(x)$ be a $\beta$-smooth function for all $x \in \Real^d$, and $G^\eta$ the projected gradient mapping with step-size $\eta \in (\frac{\beta}{2}, \infty)$.
Let $\{x^n\}_{n \geq 0}$ be the sequence generated by the projected gradient method for minimizing $F(x)$, whose optimum point is at $F_\text{opt}$. We then have
\begin{equation*}
    \EEX{n \sim \text{Unif}(0, 1, \ldots, k) }{\smallnorm{G^\eta(x^n)}_2} \leq \sqrt{\frac{F(x^0) - F_{\text{opt}}}{M(k+1)}}.
\end{equation*}
with
\[
	M = \frac{\eta - \frac{\beta}{2}}{\eta^2}.
\]
\end{lemma}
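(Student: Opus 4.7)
The plan is to strengthen Beck's original Theorem 10.15(c), which yields a $\min$-over-iterations bound on $\smallnorm{G^\eta(x^n)}_2$, into an average-over-iterations (equivalently, uniform expectation) bound. The key observation is that Beck's proof already implicitly establishes a sum-of-squares inequality on $\smallnorm{G^\eta(x^n)}_2^2$; his $\min$-bound is obtained by the trivial step $\min \le \text{average}$, whereas we will instead invoke Jensen's inequality to pass from the average of squares to the average of norms.

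First I would recall the sufficient-descent inequality that lies at the heart of Beck's argument. Under the assumption $\eta > \beta/2$, the projected gradient iterates (with the convention $x^{n+1} = x^n - \eta^{-1} G^\eta(x^n)$, matching Beck's scaling of the gradient mapping) satisfy
\begin{equation*}
F(x^n) - F(x^{n+1}) \;\ge\; M \, \smallnorm{G^\eta(x^n)}_2^2, \qquad M = \frac{\eta - \beta/2}{\eta^2}.
\end{equation*}
This follows by combining the $\beta$-smoothness upper bound $F(x^{n+1}) \le F(x^n) + \langle \nabla F(x^n), x^{n+1} - x^n \rangle + \tfrac{\beta}{2}\smallnorm{x^{n+1} - x^n}_2^2$ with the first-order optimality characterization of the projected point, exactly as in Lemma 10.4 / Theorem 10.15 of \citet{Beck2017}.

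Next I would sum the descent inequality over $n = 0, 1, \ldots, k$, telescope, and use $F(x^{k+1}) \ge F_{\text{opt}}$ to obtain
\begin{equation*}
M \sum_{n=0}^{k} \smallnorm{G^\eta(x^n)}_2^2 \;\le\; F(x^0) - F(x^{k+1}) \;\le\; F(x^0) - F_{\text{opt}},
\end{equation*}
and dividing by $k+1$ gives a bound on $\EEX{n \sim \text{Unif}(0,\ldots,k)}{\smallnorm{G^\eta(x^n)}_2^2}$. Finally I would apply Jensen's inequality for the concave map $\sqrt{\cdot}$ to convert this into a bound on the expected norm:
\begin{equation*}
\EEX{n \sim \text{Unif}(0,\ldots,k)}{\smallnorm{G^\eta(x^n)}_2} \;\le\; \sqrt{\EEX{n \sim \text{Unif}(0,\ldots,k)}{\smallnorm{G^\eta(x^n)}_2^2}} \;\le\; \sqrt{\frac{F(x^0) - F_{\text{opt}}}{M(k+1)}}.
\end{equation*}

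There is no real obstacle in the argument — the only ``modification'' relative to Beck's statement is to keep the full telescoped sum in play rather than discarding all but the smallest term, and then to use Jensen in place of $\min \le \text{average}$. The main point worth flagging in the write-up is simply that the expectation bound we need for Theorems~\ref{thm:PAML-MBPG-Convergence} and \ref{thm:PAML-MBPG-Convergence-InexactCritic} (where we average performance over a uniformly random iterate) is not strictly weaker than Beck's $\min$-bound, so some extra work (the Jensen step) is genuinely required; however, that extra step is one line.
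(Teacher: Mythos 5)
Your proposal is correct and follows essentially the same route as the paper's proof: the sufficient-decrease inequality $F(x^n) - F(x^{n+1}) \geq M \smallnorm{G^\eta(x^n)}_2^2$ from Beck, telescoping over $n = 0, \dotsc, k$ with $F(x^{k+1}) \geq F_{\text{opt}}$, and Jensen's inequality to pass from the mean of squared norms to the square of the mean norm. The only cosmetic difference is which of Beck's numbered results you cite for the descent step (the paper invokes his Lemma 10.14 directly rather than re-deriving it from smoothness and projection optimality), which does not affect the argument.
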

%%%%%%%%%%%%%%%%%%%%%
\begin{proof}
By Lemma 10.14 of \citep{Beck2017} we have that 
\begin{equation}
\label{eq:background-inequality}
	F(x^n) - F(x^{n+1}) \geq M\left\Vert G^\eta(x^n) \right\Vert^2. 
\end{equation}
We sum \eqref{eq:background-inequality} over $n = 0, 1, \dotsc, k$ to obtain
\begin{equation*}
    F(x^0) - F(x^{k+1}) \geq M \sum_{n=0}^k \norm{G^\eta(x^n)}^2 \geq M(k+1) \EEX{n \sim \text{Unif}(0, 1, \ldots, k) }{\norm{G_d(x^n)}^2_2}
\end{equation*}
By Jensen's inequality, we have that
\[
	\EEX{n \sim \text{Unif}(0, 1, \dotsc, k) }{\smallnorm{G^\eta(x^n)}_2^2} \geq (\EEX{n \sim \text{Unif}(0, 1, \ldots, k) }{\norm{G^\eta(x^n)}}_2)^2.
\]
Using this and the fact that $F(x^{k+1}) \geq F_{\text{opt}},$ we obtain the desired result.
\end{proof}

%%%%%%%%%%%%%%%%%%%%%%%%%%%%%%%%%%%%%%%%%%%%%%%
\subsection{Auxiliary Results}
\label{sec:PAML-Theory-Auxiliary}

\todo{A description of what result we have should be written here. -AMF}

%%%%%%%%%%%%%%%%%%%%%%%%%%%%%%%%%%%%%%%%%%%%%%%
\begin{lemma}[Exponential Policy -- Boundedness]
\label{lem:PAML-ExponentialFamilyPolicy-Boundedness}
Consider the policy parametrization~\eqref{eq:PAML-PolicyParametrization}. 
% and define the function $f(x;\theta) = \EEX{A \sim \pi_\theta(\cdot|x)}{\nabla_\theta \log \pi_\theta(A|x) Q^{\pi_\theta}(x,A)}$.
%
% Assume that $\norm{Q^{\pi_\theta}}_\infty \leq \Qmax$.
For $p \in \{2, \infty\}$, let $B_p = \sup_{(x,a) \in \XA} \norm{\phi(a|x)}_p$, and assume that $B_p < \infty$.
It holds that for any $x \in \XX$,
\begin{align*}
	\int \norm{\nabla_\pi \pi_\theta(a|x)}_2 \da =
	\EEX{A \sim \pi_\theta(\cdot|x) }
	{\norm{\nabla_\theta \log \pi_\theta(A|x)}_p} \leq
	\begin{cases}
		B_2,	& p = 2 \\
		2 B_\infty. & p = \infty
	\end{cases}
\end{align*}
\end{lemma}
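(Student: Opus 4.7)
The plan is to first compute the score function $\nabla_\theta \log \pi_\theta(a|x)$ explicitly for the exponential family. Writing $\log \pi_\theta(a|x) = \phi^\top(a|x)\theta - \log Z(\theta,x)$ with $Z(\theta,x) = \int \exp(\phi^\top(a'|x)\theta)\,\da'$, a direct computation of $\nabla_\theta \log Z$ yields the well-known identity
\begin{align*}
\nabla_\theta \log \pi_\theta(a|x) = \phi(a|x) - \EEX{A' \sim \pi_\theta(\cdot|x)}{\phi(A'|x)}.
\end{align*}
The identity $\int \norm{\nabla_\theta \pi_\theta(a|x)}_2\,\da = \EEX{A \sim \pi_\theta(\cdot|x)}{\norm{\nabla_\theta \log \pi_\theta(A|x)}_2}$ then follows from the log-derivative identity $\nabla_\theta \pi_\theta(a|x) = \pi_\theta(a|x)\,\nabla_\theta \log \pi_\theta(a|x)$ together with the nonnegativity of $\pi_\theta(a|x)$.

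For the $p = 2$ bound, I would apply Jensen's inequality to move the norm inside a square root, giving $\EE{\norm{\nabla_\theta \log \pi_\theta(A|x)}_2} \leq \sqrt{\EE{\norm{\phi(A|x) - \EE{\phi(A'|x)}}_2^2}}$. The quantity under the square root equals the trace of the covariance matrix of $\phi(A|x)$ under $\pi_\theta(\cdot|x)$, and by the standard identity $\Var(Y) = \EE{\norm{Y}_2^2} - \norm{\EE{Y}}_2^2 \leq \EE{\norm{Y}_2^2} \leq B_2^2$, this gives the desired bound of $B_2$.

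For the $p = \infty$ bound, I would simply use the triangle inequality pointwise: $\norm{\phi(a|x) - \EE{\phi(A'|x)}}_\infty \leq \norm{\phi(a|x)}_\infty + \norm{\EE{\phi(A'|x)}}_\infty$, where the second term can be bounded using Jensen's inequality by $\EE{\norm{\phi(A'|x)}_\infty} \leq B_\infty$. Combining with the assumption $\norm{\phi(a|x)}_\infty \leq B_\infty$ and taking expectation over $A \sim \pi_\theta(\cdot|x)$ yields the $2B_\infty$ bound.

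There is no significant obstacle; the proof is a routine exercise in exponential-family calculus combined with Jensen's inequality and the triangle inequality. The only minor subtlety is noting that the identity relating $\int \norm{\nabla_\theta \pi_\theta(a|x)}_2\,\da$ to an expectation under $\pi_\theta$ specifically uses the $p=2$ norm of the score, so that identity strictly speaking holds in the $p=2$ case (the $p=\infty$ statement is a separate expectation bound).
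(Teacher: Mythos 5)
Your proposal is correct and follows essentially the same route as the paper: the explicit score identity $\nabla_\theta \log \pi_\theta(a|x) = \phi(a|x) - \EEX{A\sim\pi_\theta(\cdot|x)}{\phi(A|x)}$, Jensen's inequality plus the variance/second-moment bound for $p=2$ (the paper writes the trace-of-covariance computation coordinate-wise as a sum of variances, which is the same calculation), and the triangle inequality with Jensen for $p=\infty$. Nothing is missing.
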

%%%%%%%%%%%%%%%%%%%%%%
\begin{proof}
For the policy parameterization~\eqref{eq:PAML-PolicyParametrization}, one can show that
\begin{align*}
	\nabla_\theta \log \pi_\theta(a|x) = \phi(a|x) - \EEX{A \sim \pi_\theta(\cdot|x) }{\phi(A|x)}
	= \phi(a|x) - \bar{\phi}(x),
\end{align*}
with $\bar{\phi}(x) = \bar{\phi}_{\pi_\theta}(x) = \EEX{A \sim \pi_\theta(\cdot|x) }{\phi(A|x)}$ being the expected value of the feature $\phi(A|x)$ w.r.t. $\pi_\theta(\cdot|x)$.

Let us focus on $p = 2$.
\ifconsiderlater
\todo{A bit shorter, but slightly more complicated proof.}
Note that we have
$\EEX{A \sim \pi_\theta(\cdot|x)} { \norm{ \phi(A|x) - \bar{\phi}(x)  }_2 }
	\leq
	\sqrt{ \EEX{A \sim \pi_\theta(\cdot|x)} { \smallnorm{\phi(A|x) - \bar{\phi}(x) }_2^2 } }$.
We study the expected squared norm (and omit the explicit dependence on $\pi_\theta$ in the expectation and covariance symbols):
\begin{align*}
	\EE { \norm{\phi(A|x) - \bar{\phi}(x) }_2^2 }
	& =
	\EE{ (\phi(A|x) - \bar{\phi}(x) ) (\phi(A|x) - \bar{\phi}(x) )^\top }
	\\
	& =
	\Tr \left[ \Cov{\phi(A|x)} \right]
	\\	
	& =
	\Tr \left[ \EE{\phi(A|x) \phi^\top(A|x)}
	-
	\EE{\phi(A|x)} \EE{\phi(A|x)}^\top \right]
	\\
	& \leq
	\Tr \left[ \EE{\phi(A|x) \phi^\top(A|x)} \right]
	\\
	& =
	\EE{ \Tr\left[ \phi(A|x) \phi^\top(A|x) \right] }
	= \EE{  \norm{\phi(A|x)}_2^2 }
	\leq B_2^2,
\end{align*}
where the first inequality is because $\EE{\phi(A|x)} \EE{\phi(A|x)}^\top$ is positive semi-definite, and its trace is non-negative. 
\fi
\todo{This might be a more elementary proof. Which one to keep? (Months later --- 2020 June) I am inclined to keep the first one, as it is shorter and more elegant. -AMF}
We have
\begin{align*}
	\EEX{A \sim \pi_\theta(\cdot|x)} { \norm{ \phi(A|x) - \bar{\phi}(x)  }_2 }
	& \leq
	\sqrt{ \EEX{A \sim \pi_\theta(\cdot|x)} { \norm{\phi(A|x) - \bar{\phi}(x) }_2^2 } }
	\\
	& =
	\sqrt{
	\EEX{A \sim \pi_\theta(\cdot|x)} {\sum_{i=1}^d \left| \phi_i(A|x) - \bar{\phi}_i(x) \right|^2}
	} 
	\\
	& =
	\sqrt{
	\sum_{i=1}^d \VarX{A \sim \pi_\theta(\cdot|x)}{\phi_i(A|x)}
	}
	\\
	& \leq
	\sqrt{
	\sum_{i=1}^d \EEX{A \sim \pi_\theta(\cdot|x)}{\phi_i^2(A|x)}
	}
	\\
	& =
	\sqrt{
	\EEX{A \sim \pi_\theta(\cdot|x)}{\sum_{i=1}^d \phi_i^2(A|x)}
	}
	=
	\sqrt{
	\EEX{A \sim \pi_\theta(\cdot|x)}{ \norm{\phi(A|x)}_2^2}
	}
	\leq B_2.
\end{align*}
That is,
\begin{align}
\label{eq:PAML-Exponential-Policy-Boundedness-Proof-Norm-of-pi-Bound-l2}
	\EEX{A \sim \pi_\theta(\cdot|x)} {\norm{\nabla_\theta \log \pi_\theta(a|x)}_2 }
	\leq B_2.
\end{align}

For $p = \infty$, we have
\begin{align*}
	\EEX{A \sim \pi_\theta(\cdot|x)} { \norm{\phi(A|x) - \bar{\phi}(x) }_\infty }
	\leq
	\EEX{A \sim \pi_\theta(\cdot|x)} { \norm{\phi(A|x)}_\infty + 	\norm{\bar{\phi}(x)}_\infty }
	\leq
	B_\infty + \norm{\bar{\phi}(x)}_\infty.
\end{align*}
We also have
$\norm{\bar{\phi}(x)}_\infty = 
\norm{\EEX{A \sim \pi_\theta(\cdot|x)}{\phi(A|x)}}_\infty
\leq
\EEX{A \sim \pi_\theta(\cdot|x)}{\norm{\phi(A|x)}_\infty } \leq B_\infty
$.
Therefore, 
% These together with~\eqref{eq:PAML-PolicyGradientError-Proof-Generic-Norm-of-f-Bound} show that
%
\begin{align}
\label{eq:lem:PAML-Exponential-Policy-Boundedness-Proof-Norm-of-pi-Bound-linf}
	\EEX{A \sim \pi_\theta(\cdot|x)} 
	{\norm{\nabla_\theta \log \pi_\theta(a|x)}_\infty}
	\leq 2 B_\infty.
\end{align}
\end{proof}
%%%%%%%%%%%%%%%%%%%%%%%%%%%%%%%%%%%%%%%%%%%%%%%

%%%%%%%%%%%%%%%%%%%%%%%%%%%%%%%%%%%%%%%%%%%%%%%
\begin{lemma}[Exponential Policy -- Smoothness]
\label{lem:PAML-ExponentialFamilyPolicy-Smoothness}
Consider a policy $\pi_\theta$ with the policy parameterization~\eqref{eq:PAML-PolicyParametrization} and a discrete action space $\Actions$. Assume that
$B_2 = \sup_{(x,a) \in \XA} \norm{\phi(a|x)}_2 < \infty$.
For any $\theta \in \Theta$ and for any $(x,a) \in \XA$, we have that
\begin{align*}
	&
	\norm{\nabla_\theta \pi_\theta(a|x) }_2 \leq 2 B_2,
	\\
	&
	\norm{\frac{\partial^2 \pi_\theta(a|x)}{\partial \theta^2}}_2 \leq 6 B_2^2,
\end{align*}
in which the matrix norm is the $\ell_2$-induced norm.
For any $\theta_1, \theta_2 \in \Theta$ and for any $(x,a) \in \XA$, we also have
\begin{align*}
	&
	\left| \pi_{\theta_2}(a|x) - \pi_{\theta_1}(a|x) \right| \leq 2B_2 \norm{\theta_2 - \theta_1}_2,
	\\
	&
	\norm{ \nabla_\theta \pi_{\theta_1}(a|x) - \nabla_\theta \pi_{\theta_1}(a|x)  }_2
	\leq
	6B_2^2 \norm{\theta_2 - \theta_1}_2.
\end{align*}
\todo{Verify the constants after finish typing the proofs.}
\end{lemma}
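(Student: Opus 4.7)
The plan is to compute the first and second derivatives of $\pi_\theta(a|x)$ in closed form using the standard log-derivative identity for the exponential family, bound each using the assumption $\|\phi(a|x)\|_2 \leq B_2$, and then obtain the two Lipschitz statements as immediate corollaries via the (multivariate) mean value theorem already recorded as Lemma~\ref{lem:PAML-MultivariateMeanValueTheorem}.

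First I would derive $\nabla_\theta \pi_\theta(a|x) = \pi_\theta(a|x)(\phi(a|x) - \bar{\phi}_{\pi_\theta}(x))$, where $\bar{\phi}_{\pi_\theta}(x) = \mathbb{E}_{A \sim \pi_\theta(\cdot|x)}[\phi(A|x)]$, which comes from differentiating the log-partition function. Since $\pi_\theta(a|x) \leq 1$ and $\|\bar{\phi}_{\pi_\theta}(x)\|_2 \leq \mathbb{E}_{A \sim \pi_\theta}[\|\phi(A|x)\|_2] \leq B_2$ by Jensen's inequality, the triangle inequality gives $\|\nabla_\theta \pi_\theta(a|x)\|_2 \leq \|\phi(a|x)\|_2 + \|\bar{\phi}_{\pi_\theta}(x)\|_2 \leq 2B_2$, establishing the first bound.

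Next, differentiating once more (applying the product rule to $\pi_\theta(a|x)(\phi(a|x) - \bar{\phi}_{\pi_\theta}(x))^\top$) and using that $\frac{\partial \bar{\phi}_{\pi_\theta}(x)}{\partial \theta^\top} = \mathrm{Cov}_{A \sim \pi_\theta(\cdot|x)}[\phi(A|x)]$, I would arrive at
\begin{align*}
\frac{\partial^2 \pi_\theta(a|x)}{\partial \theta \, \partial \theta^\top} = \pi_\theta(a|x)\Bigl[ (\phi(a|x)-\bar{\phi}_{\pi_\theta}(x))(\phi(a|x)-\bar{\phi}_{\pi_\theta}(x))^\top - \mathrm{Cov}_{\pi_\theta}[\phi(A|x)] \Bigr].
\end{align*}
The spectral norm of the outer product is $\|\phi(a|x)-\bar{\phi}_{\pi_\theta}(x)\|_2^2 \leq (2B_2)^2 = 4B_2^2$, and the spectral norm of the covariance is at most $\|\mathbb{E}[\phi\phi^\top]\|_2 + \|\bar{\phi}\bar{\phi}^\top\|_2 \leq 2B_2^2$, so the triangle inequality and $\pi_\theta(a|x) \leq 1$ yield the Hessian bound $6B_2^2$.

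Finally, the two Lipschitz statements follow directly: for the scalar inequality on $|\pi_{\theta_2}(a|x) - \pi_{\theta_1}(a|x)|$ I would apply the mean value theorem to the real-valued map $\theta \mapsto \pi_\theta(a|x)$ with gradient bound from the first part, and for the vector inequality on $\|\nabla_\theta \pi_{\theta_2}(a|x) - \nabla_\theta \pi_{\theta_1}(a|x)\|_2$ I would apply Lemma~\ref{lem:PAML-MultivariateMeanValueTheorem} to the vector-valued map $\theta \mapsto \nabla_\theta \pi_\theta(a|x)$ with Jacobian bound from the Hessian estimate just derived. No step is really hard here; the only care needed is in the bookkeeping for the Hessian computation, where the derivative of the mean feature $\bar{\phi}_{\pi_\theta}(x)$ must be recognized as the Fisher/covariance term rather than treated as a constant.
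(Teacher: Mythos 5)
Your proposal matches the paper's proof essentially step for step: the closed-form gradient $\pi_\theta(\phi-\bar\phi)$, the Hessian $\pi_\theta[(\phi-\bar\phi)(\phi-\bar\phi)^\top - \mathrm{Cov}_{\pi_\theta}(\phi)]$ with the same $4B_2^2+2B_2^2$ accounting, the use of $\pi_\theta(a|x)\leq 1$ (valid since the action space is discrete), and the scalar and multivariate mean value theorems for the two Lipschitz claims. The argument is correct and no further comparison is needed.
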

%%%%%%%%%%%%%%%%%%%%%%
\begin{proof}
We use Taylor series expansion of $\pi_\theta$ and the mean value theorem in order to find the Lipschitz and smoothness constants. We start by computing the gradient and the Hessian of the policy. We can fix $(x,a) \in \XA$ in the rest.
For the policy
\[
	\pi_\theta =
	\pi_\theta(a | x) = 
		\frac	{\exp \left( \phi^\top(a|x) \theta \right)}
			{\sum_{a' \in \Actions} \exp \left( \phi^\top(a'|x) \theta \right)},
\]
%
%$\pi_\theta = \pi_\theta(a | x) = 	\frac	{\exp \left( \phi^\top(a|x) \theta \right)}
%			{\int \exp \left( \phi^\top(a'|x) \theta \right) \da'}$,
the gradient is
\begin{align}
\nonumber
\label{eq:PAML-Smoothness-Of-ExponentialFamilyPolicy-Proof-Gradient}
	\frac{\partial \pi_\theta}{\partial \theta} & = 
	\pi_\theta(a|x) \left( \phi(a|x) - \EEX{\pi_\theta(\cdot|x)}{\phi(A|x)} \right)\\
	& =
	\pi_\theta(a|x) \left( \phi(a|x) - \bar{\phi}_\theta \right)
	= 
	\pi_\theta(a|x) \Delta \phi_\theta,
\end{align}
where we use $\bar{\phi}_\theta = \bar{\phi}_\theta(x) = \EEX{\pi_\theta(\cdot|x)}{\phi(A|x)}$ and $\Delta \phi_\theta = \phi(a|x) - \bar{\phi}_\theta$ as more compact notations.

Likewise, the Hessian $H_\theta$ of $\pi_\theta$ is
\begin{align*}
	H_\theta = \frac{\partial^2 \pi_\theta(a|x)}{\partial \theta^2} =
	\frac{\partial \pi_\theta}{\partial \theta} \Delta \phi_\theta^\top +
	\pi_\theta \frac{\partial \Delta \phi_\theta^\top}{\partial \theta}
	=
	\pi_\theta \left[
				\Delta \phi_\theta \Delta \phi_\theta^\top +
				\frac{\partial \Delta \phi_\theta^\top}{\partial \theta}	
			\right].
\end{align*}

We compute $\frac{\partial \Delta \phi_\theta^\top}{\partial \theta}$ as follows:
\begin{align*}
	\frac{\partial \Delta \phi_\theta^\top}{\partial \theta} =
	-\frac{\partial  \EEX{\pi_\theta(\cdot|x)}{\phi(A|x)^\top} }{\partial \theta}
	& =
	- \int 	\frac{\partial \pi_\theta}{\partial \theta} \phi(a)^\top \da
	=
	- \int  \pi_\theta(a|x) \left( \phi(a|x) - \bar{\phi}_\theta \right) \phi(a)^\top \da	
	\\
	& =
	\bar{\phi}_\theta(x) \bar{\phi}_\theta^\top(x) - 
	\EE{\phi(A|x) \phi(A|x)^\top}
	= -\CovX{\pi_\theta(\cdot|x)}{\phi(A|x)}.
\end{align*}

Therefore,
\begin{align}
\nonumber
\label{eq:PAML-Smoothness-Of-ExponentialFamilyPolicy-Proof-Hessian}
	H_\theta & =
	\pi_\theta \left[
				\Delta \phi_\theta \Delta \phi_\theta^\top +
				\bar{\phi}_\theta \bar{\phi}_\theta^\top - 
				\EEX{\pi_\theta}{\phi(A|x) \phi(A|x)^\top }
			\right]
	\\
	& =
	\pi_\theta \left[
				\Delta \phi_\theta \Delta \phi_\theta^\top -
				\CovX{\pi_\theta(\cdot|x)}{\phi(A|x)}
			\right].
\end{align}

Consider two points $\theta_1, \theta_2 \in \Theta$. By the mean value theorem, there exists a $\theta'$ on the line segment connecting $\theta_1$ and $\theta_2$ (that is, $\theta' = (1 - \lambda) \theta_1 + \lambda \theta_2$ with $\lambda \in (0,1)$) such that
\[
	\pi_{\theta_2}(a|x) - \pi_{\theta_1}(a|x)
	=
	\nabla_\theta \pi_{\theta'}^\top(a|x) \Big \vert_{\theta_1 \leq \theta' \leq \theta_2} (\theta_2 - \theta_1).
\]
Therefore,
\[
	\left| \pi_{\theta_2}(a|x) - \pi_{\theta_1}(a|x) \right|
	\leq
	\sup_{\theta'} \norm{ 	\nabla_\theta \pi_{\theta'}(a|x) }_2
	\norm{\theta_2 - \theta_1}_2.
\]
By~\eqref{eq:PAML-Smoothness-Of-ExponentialFamilyPolicy-Proof-Gradient}, we have that for any $\theta'$,
\todo{This is only true when $\pi_{\theta'}(a|x) \leq 1$, which is the case when this is the probability of an action, and not the density of an action. So we have to be careful when we move to continuous action spaces. -AMF}
\begin{align}
\label{eq:PAML-Smoothness-Of-ExponentialFamilyPolicy-GradBound}
\nonumber
	\norm{ 	\nabla_\theta \pi_{\theta'}(a|x) }_2 
	& \leq
	\pi_{\theta'}(a|x) \norm{\phi(a|x) - \bar{\phi}_{\theta'}(x)}_2 \leq 
	\pi_{\theta'}(a|x) \left ( \norm{\phi(a|x)}_2 + \EE{ \norm{\phi_{\theta'}(A|x)}_2} \right)
	\\ &
	\leq
	2 B \pi_{\theta'}(a|x) \leq 2B.
\end{align}
Here we used that $\pi_{\theta'}(a|x)$ is a probability of an action, so its value is not larger than $1$.
%XXX
%This shows that
%\begin{align*}
%	\sum_{a \in \Actions} { \norm{ 	\nabla_\theta \pi_{\theta'}(a|x) }_2 } \leq 2B.
%\end{align*}
%%
%Moreover, we also have $\norm{ 	\nabla_\theta \pi_{\theta'}(a|x) }_2 \leq 2B$ for finite action space, in which $\pi_{\theta'}(a|x)$ is a probability of choosing action (and not a density).
%we get
%\[
%	\norm{ 	\nabla_\theta \pi_{\theta'}(a|x) }_2 
%	\leq
%	\pi_{\theta'}(a|x) \norm{\phi(a|x) - \bar{\phi}_{\theta'}(x)}_2 \leq 
%	\pi_{\theta'}(a|x) \left ( \norm{\phi(a|x)}_2 + \EE{ \norm{\phi_{\theta'}(A|x)}_2} \right)
%	\leq
%	2 B.
%\]
This shows that 
\[
	\left| \pi_{\theta_2}(a|x) - \pi_{\theta_1}(a|x) \right| \leq 2B \norm{\theta_2 - \theta_1}_2.
\]

Lemma~\ref{lem:PAML-MultivariateMeanValueTheorem} in Appendix~\ref{sec:PAML-Theory-Auxiliary}, which can be thought of as the vector-valued version of the mean value theorem (though it is only an inequality), shows that for any $\theta_1, \theta_2$, we have
\begin{align*}
	\norm{ \nabla_\theta \pi_{\theta_1}(a|x) - \nabla_\theta \pi_{\theta_1}(a|x)  }_2
	\leq
	\sup_{\theta'} \norm{H_{\theta'}}_2 \norm{\theta_1 - \theta_2}_2,
\end{align*}
where $\norm{H_{\theta'}}_2$ is the $\ell_2$-induced matrix norm.
From~\eqref{eq:PAML-Smoothness-Of-ExponentialFamilyPolicy-Proof-Hessian}, we have that for any $\theta$, including $\theta'$,
\begin{align}
\label{eq:PAML-Smoothness-Of-ExponentialFamilyPolicy-Proof-Hessian-Bound}
\nonumber
	\norm{H_\theta}_2
	& \leq
	\pi_\theta \left[
				\norm{ \Delta \phi_\theta \Delta \phi_\theta^\top}_2 +
				\norm{ \bar{\phi}_\theta \bar{\phi}_\theta^\top}_2 + 
				\norm{ \EEX{\pi_\theta}{\phi(A|x) \phi(A|x)^\top } }_2
			\right]
	\\
	&		
	\leq
	\pi_\theta(a|x) \left[
				\norm{ \Delta \phi_\theta}_2^2 +
				\norm{ \bar{\phi}_\theta}_2^2 + 
				\EEX{\pi_\theta}{\norm{ \phi(A|x)}_2^2 }
			\right]
	\leq 6 B^2,
\end{align}
where we used the fact that for a vector $u \in \Real^d$, the $\ell_2$-induced matrix norm of $u u^\top$ is $\norm{u}_2^2$, in addition to the the convexity of norm along with the Jensen inequality.
This shows that 
\begin{align*}
	\norm{ \nabla_\theta \pi_{\theta_1}(a|x) - \nabla_\theta \pi_{\theta_1}(a|x)  }_2
	\leq
	6B^2 \norm{\theta_2 - \theta_1}_2.
\end{align*}
\end{proof}

This result shows that the exponential policy class with an $\ell_2$-bounded features satisfies Assumption~\ref{asm:PolicySmoothness} (Section~\ref{sec:PAML-Theory-PG-Convergence}) with $\beta_1 = 2B_2$ and $\beta_2 = 6B_2^2$.

We remark that the only step of this proof that we used the discreteness of the action space is~\eqref{eq:PAML-Smoothness-Of-ExponentialFamilyPolicy-GradBound}. Other steps would be valid without such a requirement. 
When we have a continuous action space, $\pi_\theta(a|x)$ is not a probability of an action, but is its density. The density is not bounded by $1$. To extend this result for such a space, we need to upper bound the density. This extension is a topic of future work. \todo{Do we really want to do it? What else needs to be changed? -AMF}

%%%%%%%%%%%%%%%%%%%%%%%%%%%%%%%%%%%%%%%%%%%%%%%
%%%%%%%%%%%%%%%%%%%%%%%%%%%%%%%%%%%%%%%%%%%%%%%
%%%%%%%%%%%%%%%%%%%%%%%%%%%%%%%%%%%%%%%%%%%%%%%
%XXX $	\nabla_\theta J_\mu(\pi_\theta, \PKernelpi, \Qhatpi) =$ XXX
%In the following result, $\text{Vol}(\AA)$ refers to the volume of the action space. For a continuous action space, it is the Lebesgue measure of the action set; for a finite action space, it is the number of actions $\actionnum$.
%\todo{Do I need to move it somewhere else?! -AMF}

\todo{Discussion! What is this result. -AMF}

\begin{proposition}
\label{prop:PAML-Smoothness-of-Performance-with-Inexact-Critic}
Consider any distribution $\mu \in \bar{\MM}(\XX)$ and the space of exponential policies~\eqref{eq:PAML-PolicyParametrization} parameterized by $\theta \in \Theta$ with a finite action space $\actionnum < \infty$.
Assume that $B = \sup_{(x,a) \in \XA} \norm{\phi(a|x)}_2 < \infty$.
Suppose that the inexact critic $\Qhat^{\pi_\theta}$ satisfies Assumption~\ref{asm:Critic-Smoothness} for any $\theta \in \Theta$, and is $\Qmax$-bounded.
Furthermore, assume that the discount factor $0 \leq \gamma < 1$.
The performance~\eqref{eq:PAML-GradientOfPerformance-InexactCritic} is $\beta$-smooth, i.e., for any $\theta_1, \theta_2 \in \Theta$, it satisfies
\begin{align}
	\norm{ 
		\nabla_\theta J_\mu(\pi_{\theta_1}, \PKernelhat^{\pi_1}, \Qhat^{\pi_1}) -
		\nabla_\theta J_\mu(\pi_{\theta_2}, \PKernelhat^{\pi_2}, \Qhat^{\pi_2})
		}_2
		\leq
		\beta \norm{\theta_1 - \theta_2}_2,
\end{align}
with
\begin{align*}
	\beta = 
	\frac{B}{1 - \gamma}
		\left[
			\sqrt{2 \actionnum} L + \frac{\gamma B \Qmax}{1 - \gamma}
		\right].
\end{align*}
\end{proposition}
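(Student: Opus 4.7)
The plan is to establish Lipschitz continuity of $g(\theta) := \nabla_\theta J_\mu(\pi_\theta, \PKernelhat^{\pi_\theta}, \Qhat^{\pi_\theta})$ in $\theta$. Using~\eqref{eq:PAML-GradientOfPerformance-InexactCritic} and the identity $\pi_\theta \nabla_\theta \log \pi_\theta = \nabla_\theta \pi_\theta$, I rewrite $g(\theta) = (1-\gamma)^{-1} \int \muhatDiscounted^{\pi_\theta}(\mathrm{d}x)\, h_\theta(x)$ with integrand $h_\theta(x) := \sum_{a \in \Actions} \nabla_\theta \pi_\theta(a|x)\, \Qhat^{\pi_\theta}(x,a)$. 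Adding and subtracting the hybrid $(1-\gamma)^{-1}\int \muhatDiscounted^{\pi_{\theta_1}}(\mathrm{d}x)\, h_{\theta_2}(x)$ decomposes $g(\theta_1) - g(\theta_2) = T_\mu + T_h$, where $T_\mu$ isolates the change in the discounted future-state distribution and $T_h$ isolates the change in the integrand; these two terms are designed to produce the two summands inside the bracket of the stated $\beta$.

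For $T_\mu$, I would invoke Lemma~\ref{lem:PAML-ExpectedDiscountedError} with $f = h_{\theta_2}$ and transition kernels $\PKernelhat^{\pi_{\theta_1}}, \PKernelhat^{\pi_{\theta_2}}$. Two inputs are needed: (i)~the uniform bound $\|h_{\theta_2}\|_{2,\infty} \le B\Qmax$, which follows from $|\Qhat| \le \Qmax$ and Lemma~\ref{lem:PAML-ExponentialFamilyPolicy-Boundedness}; and (ii)~a bound on $\|\PKernelhat^{\pi_{\theta_1}} - \PKernelhat^{\pi_{\theta_2}}\|_{1,\infty}$. For (ii), since $\PKernelhat^{\pi_\theta}(\cdot|x) = \sum_a \pi_\theta(a|x)\PKernelhat(\cdot|x,a)$, the fundamental theorem of calculus along $[\theta_2, \theta_1]$ combined with $\sum_a \|\nabla_\theta \pi_\theta(a|x)\|_2 \le B$ (again Lemma~\ref{lem:PAML-ExponentialFamilyPolicy-Boundedness}) yields $\|\PKernelhat^{\pi_{\theta_1}}(\cdot|x) - \PKernelhat^{\pi_{\theta_2}}(\cdot|x)\|_1 \le B\|\theta_1-\theta_2\|_2$ uniformly in $x$. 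Collecting the three factors with the outer $(1-\gamma)^{-1}$ recovers the second summand $\frac{\gamma B^2 \Qmax}{(1-\gamma)^2}\|\theta_1-\theta_2\|_2$.

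For $T_h$, I would bound $\|h_{\theta_1}(x) - h_{\theta_2}(x)\|_2$ pointwise in $x$, further splitting it by adding and subtracting $\sum_a \nabla_\theta \pi_{\theta_1}(a|x)\,\Qhat^{\pi_{\theta_2}}(x,a)$ into a critic-change piece and a policy-gradient-change piece. The critic-change piece is handled by Cauchy--Schwarz, using $\sum_a \|\nabla_\theta \pi_{\theta_1}(a|x)\|_2^2 \le B^2$ (via the exponential-family identity $\nabla_\theta \pi_\theta(a|x) = \pi_\theta(a|x)(\phi(a|x) - \bar\phi_\theta(x))$, the inequality $\pi_\theta^2 \le \pi_\theta$, and a trace-of-covariance bound) together with Assumption~\ref{asm:Critic-Smoothness} rewritten via~\eqref{eq:PAML-Norm-Q-UniformOverActions} as $\sqrt{\sum_a |\Qhat^{\pi_{\theta_1}}(x,a) - \Qhat^{\pi_{\theta_2}}(x,a)|^2} \le \sqrt{\actionnum}\,L\|\theta_1 - \theta_2\|_2$.

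The main obstacle is the policy-gradient-change piece $\sum_a[\nabla_\theta \pi_{\theta_1}(a|x) - \nabla_\theta \pi_{\theta_2}(a|x)]\Qhat^{\pi_{\theta_2}}(x,a)$: a naive application of the pointwise smoothness $\|\nabla_\theta \pi_{\theta_1}(a|x) - \nabla_\theta \pi_{\theta_2}(a|x)\|_2 \le 6B^2\|\theta_1 - \theta_2\|_2$ from Lemma~\ref{lem:PAML-ExponentialFamilyPolicy-Smoothness} would introduce an extra $O(\actionnum B^2 \Qmax / (1-\gamma))$ term absent from the stated $\beta$. The fix is to exploit $\sum_a [\nabla_\theta \pi_{\theta_1}(a|x) - \nabla_\theta \pi_{\theta_2}(a|x)] = 0$, allowing replacement of $\Qhat^{\pi_{\theta_2}}(x,a)$ by the advantage $\Qhat^{\pi_{\theta_2}}(x,a) - \hat V^{\pi_{\theta_2}}(x)$, and then either merging this contribution with the critic-change piece via a joint Cauchy--Schwarz bound (using $\sqrt{u^2+v^2}\le\sqrt{2}\max(u,v)$, which is the natural source of the $\sqrt{2}$ inside the stated $\sqrt{2\actionnum}L$ factor), or alternatively re-expressing $T_h$ as $\int_0^1 \partial_\theta h_{\theta(t)}(x)^\top(\theta_1 - \theta_2)\,\mathrm{d}t$ and bounding the Jacobian of $h_\theta$ uniformly via the exponential-family covariance identities. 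Verifying that one of these unified bounds crystallizes into precisely the $B\sqrt{2\actionnum}L/(1-\gamma)$ constant is the delicate remaining step.
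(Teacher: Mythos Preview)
Your decomposition into $T_\mu$ and $T_h$ and your treatment of $T_\mu$ match the paper exactly (the paper calls these terms (B) and (A), respectively, and handles (B) via Lemma~\ref{lem:PAML-ExpectedDiscountedError} with precisely the two inputs you list).

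Where you diverge is in $T_h$. The paper opens its bound for (A) with the \emph{equality}
\[
\norm{\text{(A)}}_2 = \norm{\int \muDiscounted(\mathrm{d}x;\PKernelhat^{\pi_{\theta_1}}) \sum_{a \in \Actions} \nabla_\theta \pi_{\theta_1}(a|x)\left(\Qhat^{\pi_{\theta_1}}(x,a) - \Qhat^{\pi_{\theta_2}}(x,a)\right)}_2,
\]
i.e., it identifies $h_{\theta_1}(x) - h_{\theta_2}(x)$ with the critic-change piece alone. This is exactly the step you flag as problematic: the policy-gradient-change piece $\sum_a [\nabla_\theta \pi_{\theta_1}(a|x) - \nabla_\theta \pi_{\theta_2}(a|x)]\Qhat^{\pi_{\theta_2}}(x,a)$ is simply absent from the paper's calculation. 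The paper then bounds the critic-change piece by Cauchy--Schwarz, using $\sum_a \|\nabla_\theta \pi_{\theta_1}(a|x)\|_2^2 \le 2B^2$ (obtained as $\max_{a'}\|\nabla_\theta\pi_{\theta_1}(a'|x)\|_2 \cdot \sum_a \|\nabla_\theta\pi_{\theta_1}(a|x)\|_2 \le 2B\cdot B$ from Lemmas~\ref{lem:PAML-ExponentialFamilyPolicy-Boundedness} and~\ref{lem:PAML-ExponentialFamilyPolicy-Smoothness}, which is looser than your $B^2$) together with Assumption~\ref{asm:Critic-Smoothness}, arriving at $\sqrt{2\actionnum}\,BL\|\theta_1-\theta_2\|_2$. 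That is the source of the $\sqrt{2}$ in the stated $\beta$---not a merging of two sub-terms as you conjecture.

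In short: your argument is more careful than the paper's on this point, and the ``main obstacle'' you isolate is a genuine omission in the paper's treatment of term (A). Your proposed fixes (centering $\Qhat$ via $\sum_a \nabla_\theta\pi_\theta(a|x)=0$, or bounding the Jacobian of $h_\theta$) would give a valid smoothness constant, but one carrying an additional $O(\actionnum B^2\Qmax/(1-\gamma))$-type contribution; you should not expect to recover the stated $\beta$ exactly, since the paper obtains it only after dropping the policy-gradient-change term.
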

%%%%%%%%%%%%%%%%%%%%%%%
\begin{proof}
Let $f(x;\theta) = \sum_{a \in \Actions} \nabla_\theta \pi_\theta(a|x) \Qhat^{\pi_\theta}(x,a) =
\sum_{a \in \Actions} \pi_\theta(a|x) \nabla_\theta \log \pi_\theta(a|x) \Qhat^{\pi_\theta}(x,a)
$. 
For any policy $\pi_\theta$, we have (cf.~\eqref{eq:PAML-GradientOfPerformance-InexactCritic})
\[
	\nabla_\theta J_\mu(\pi_{\theta}, \PKernelhat^{\pi_\theta}, \Qhat^{\pi_\theta}) = 
	\frac{1}{1 - \gamma}
	\EEX{X \sim \muDiscounted(\cdot;\PKernel^{\pi_\theta})}{f(X;\theta)}.
\]

We decompose the difference between the $(1-\gamma)$-scaled PGs at $\theta_1, \theta_2$ into two parts as
\begin{align}
\label{eq:PAML-Smoothness-of-Performance-with-Inexact-Critic-Proof-BasicDecomposition}
\nonumber
	&
	(1 - \gamma)
	\left(
	\nabla_\theta J_\mu(\pi_{\theta_1}, \PKernelhat^{\pi_{\theta_1}}, \Qhat^{\pi_{\theta_1}}) -
	\nabla_\theta J_\mu(\pi_{\theta_2}, \PKernelhat^{\pi_{\theta_2}}, \Qhat^{\pi_{\theta_2}})
	\right)
	=
	\\
	&
	\underbrace{
		\EEX{\muDiscounted(\cdot;\PKernelhat^{\pi_{\theta_1}})}{f(X;\theta_1) - f(X;\theta_2)}
	}_{\eqdef \text{(A)} }
	+
	\underbrace{
	\EEX{X \sim \muDiscounted(\cdot;\PKernelhat^{\pi_{\theta_1}})}{f(X;\theta_2)} -
	\EEX{X \sim \muDiscounted(\cdot;\PKernelhat^{\pi_{\theta_2}})}{f(X;\theta_2)}
	}_{\eqdef \text{(B)} } %\label{eq:PAML-Smoothness-Inexact-Critic-TermB} }
\end{align}

We upper bound the $\ell_2$-norm of terms (A) and (B).

For term (A), we first benefit from the convexity of the norm to apply the Jensen's inequality, and then use the Cauchy-Schwarz inequality to get
\begin{align*}
	\norm{ \text{(A)} }_2 & =
	\norm{
			\int \muDiscounted(\dx;\PKernelhat^{\pi_{\theta_1}})
				\sum_{a \in \Actions} \nabla_\theta \pi_{\theta_1} (a|x)
				\left(
					\Qhat^{\pi_{\theta_1}}(x,a) - \Qhat^{\pi_{\theta_2}}(x,a)
				\right)
		}_2
	\\
	& \leq
	\int \sum_{a \in \Actions}  \muDiscounted(\dx;\PKernelhat^{\pi_{\theta_1}}) 
		\norm{\nabla_\theta \pi_{\theta_1} (a|x)}_2
				\left|
					\Qhat^{\pi_{\theta_1}}(x,a) - \Qhat^{\pi_{\theta_2}}(x,a)
				\right|
		\\
	& \leq
	\sqrt{
		\int \sum_{a \in \Actions}  \muDiscounted(\dx;\PKernelhat^{\pi_{\theta_1}})
		\norm{\nabla_\theta \pi_{\theta_1}(a|x)}_2^2
		}
	\; \cdot
	\sqrt{
		\int \sum_{a \in \Actions}  \muDiscounted(\dx;\PKernelhat^{\pi_{\theta_1}})
		\left|
			\Qhat^{\pi_{\theta_1}}(x,a) - \Qhat^{\pi_{\theta_2}}(x,a)
		\right|^2
		}.
\end{align*}

As $\norm{\phi(a|x)}_2$ is bounded by $B$ by assumption, we can evoke Lemmas~\ref{lem:PAML-ExponentialFamilyPolicy-Boundedness} and~\ref{lem:PAML-ExponentialFamilyPolicy-Smoothness} to get that for any $x \in \XX$, 
\[
	\sum_{a \in \Actions} \norm{\nabla_\theta \pi_{\theta_1}(a|x)}_2^2 \leq
	\max_{a' \in \Actions} \norm{\nabla_\theta \pi_{\theta_1}(a'|x)}_2 
	\sum_{a \in \Actions} \norm{\nabla_\theta \pi_{\theta_1}(a|x)}_2
	\leq (2B) B = 2B^2.
\]
So the first term is bounded by $\sqrt{2} B$.
By the $L$-smoothness assumption of the inexact critic, 
\[
	\sqrt{
	\sum_{a \in \Actions}
		\left|
			\Qhat^{\pi_{\theta_1}}(x,a) - \Qhat^{\pi_{\theta_2}}(x,a)
		\right|^2
	}
	=
	\sqrt{\actionnum}
	\norm{\Qhat^{\pi_{\theta_1}}(x,\cdot) - \Qhat^{\pi_{\theta_2}}(x,\cdot)}_2
%	=
%	\sqrt{ \frac{1}{\actionnum} \sum_{a \in \Actions} |\Qhat^{\pi_{\theta_1}}(x,\cdot) - \Qhat^{\pi_{\theta_2}}(x,\cdot)|^2} 
	\leq L \sqrt{\actionnum} \norm{\theta_1 - \theta_2}_2,
\]
for any $x$. Therefore, the second term in the RHS is bounded by $L \sqrt{ \actionnum } \norm{\theta_1 - \theta_2}_2$.
Therefore,
\begin{align}
\label{eq:PAML-Smoothness-of-Performance-with-Inexact-Critic-Proof-TermA-UpperBound}
	\norm{ \text{(A)} }_2 \leq
	\sqrt{2 \actionnum} B L \norm{\theta_1 - \theta_2}.
\end{align}

Let us turn to term (B). Lemma~\ref{lem:PAML-ExpectedDiscountedError} shows that
\begin{align}
\label{eq:PAML-Smoothness-of-Performance-with-Inexact-Critic-Proof-TermB-First-Step}
	\norm{(B)}_2 \leq
	\frac{\gamma}{1 - \gamma}
	\norm{f(\cdot;\theta_2)}_{2, \infty}
	\norm{\Delta \PKernelhat^{\pi_{\theta_1} \ra \pi_{\theta_2} } }_{1,\infty},
\end{align}
where
\begin{align}
\label{eq:PAML-Smoothness-of-Performance-with-Inexact-Critic-Proof-TermB-ModelError-Bound-Step1}
	\norm{\Delta \PKernelhat^{\pi_{\theta_1} \ra \pi_{\theta_2} } }_{1,\infty} =
	\sup_{x \in \XX}
	\norm{
			\sum_{a \in \Actions} \PKernelhat(\cdot|x,a) \left(\pi_{\theta_1}(a|x) - \pi_{\theta_2}(a|x) \right)
		}_1.
\end{align}

We relate $\pi_{\theta_1}(a|x) - \pi_{\theta_2}(a|x)$ to the difference between $\theta_1$ and $\theta_2$ as follows.
For any $x$, we have that
\begin{align*}
	\pi_{\theta_1}(a|x) - \pi_{\theta_2}(a|x) = 
	\nabla_\theta \pi_{\theta'} (a|x)^\top (\theta_2 - \theta_1)
\end{align*}
for a $\theta'$ on the line segment between $\theta_1$ and $\theta_2$, i.e., $\theta' = \lambda \theta_1 + (1 - \lambda) \theta_2$ for a $\lambda \in (0,1)$.
We use this inequality along with the definition of TV~\eqref{eq:PAML-TV-Distance} and Lemma~\ref{lem:PAML-ExponentialFamilyPolicy-Boundedness} to upper bound the term inside the norm of the RHS of~\eqref{eq:PAML-Smoothness-of-Performance-with-Inexact-Critic-Proof-TermB-ModelError-Bound-Step1} as
\begin{align*}
	\norm{
			\sum_{a \in \Actions} \PKernelhat(\cdot|x,a) \left(\pi_{\theta_1}(a|x) - \pi_{\theta_2}(a|x) \right)
		}_1
	& =
	\sup_{ \norm{g}_\infty \leq 1}
	\left|
	\int \sum_{a \in \Actions}
		\PKernelhat(\dy|x,a) \left(\pi_{\theta_1}(a|x) - \pi_{\theta_2}(a|x) \right) g(y)
	\right|
	\\
	&
	\leq
	\sum_{a \in \Actions} \left|\pi_{\theta_1}(a|x) - \pi_{\theta_2}(a|x) \right|
	\sup_{ \norm{g}_\infty \leq 1} \left| \int \PKernelhat(\dy|x,a) g(y) \right|
	\\
	&
	=
	\sum_{a \in \Actions} \left|\pi_{\theta_1}(a|x) - \pi_{\theta_2}(a|x) \right|
	\\
	&
	=
	\sum_{a \in \Actions} \left| \nabla_\theta \pi_{\theta'} (a|x)^\top (\theta_2 - \theta_1) \right|
	\\
	& \leq
	\norm{\theta_2 - \theta_1}_2
	\sum_{a \in \Actions} \norm{ \nabla_\theta \pi_{\theta'} (a|x) }_2
	\leq B \norm{\theta_2 - \theta_1}_2.
\end{align*}
As this holds for any $x \in \XX$, it shows that~\eqref{eq:PAML-Smoothness-of-Performance-with-Inexact-Critic-Proof-TermB-ModelError-Bound-Step1} can be upper bounded by
\begin{align}
\label{eq:PAML-Smoothness-of-Performance-with-Inexact-Critic-Proof-TermB-ModelError-Bound-Step2}
	\norm{\Delta \PKernelhat^{\pi_{\theta_1} \ra \pi_{\theta_2} } }_{1,\infty}
	\leq
	B \norm{\theta_2 - \theta_1}_2.
\end{align}

To upper bound $\norm{f(\cdot;\theta_2)}_{2, \infty} = \sup_{x \in \XX}  \norm{f(x;\theta_2)}_2$~\eqref{eq:PAML-Norm-f}, we use Lemma~\ref{lem:PAML-ExponentialFamilyPolicy-Boundedness} again to get that
\begin{align}
\nonumber
\label{eq:PAML-Smoothness-of-Performance-with-Inexact-Critic-Proof-TermB-f-bound}
	\norm{f(x;\theta_2)}_2 & =
	\norm{ \sum_{a \in \Actions} \nabla_\theta \pi_\theta(a|x) \Qhat^{\pi_\theta}(x,a) }_2
	\leq
	\sum_{a \in \Actions} \norm{ \nabla_\theta \pi_\theta(a|x) \Qhat^{\pi_\theta}(x,a)}_2
	\\ &
	\leq
	\Qmax \sum_{a \in \Actions} \norm{ \nabla_\theta \pi_\theta(a|x)}_2
	\leq
	\Qmax B.
\end{align}
This holds uniformly over $x$, so it provides an upper bound on $\norm{f(\cdot;\theta_2)}_{2, \infty}$ too.

Plugging~\eqref{eq:PAML-Smoothness-of-Performance-with-Inexact-Critic-Proof-TermB-ModelError-Bound-Step2} and~\eqref{eq:PAML-Smoothness-of-Performance-with-Inexact-Critic-Proof-TermB-f-bound} into~\eqref{eq:PAML-Smoothness-of-Performance-with-Inexact-Critic-Proof-TermB-First-Step} leads to
\begin{align}
\label{eq:PAML-Smoothness-of-Performance-with-Inexact-Critic-Proof-TermB-Final}
	\norm{(B)}_2 \leq
	\frac{\gamma B^2 \Qmax }{1 - \gamma} \norm{\theta_2 - \theta_1}_2.
\end{align}
This latest result along with~\eqref{eq:PAML-Smoothness-of-Performance-with-Inexact-Critic-Proof-TermA-UpperBound} shows that we can upper bound the $\ell_2$-norm of the difference in PG~\eqref{eq:PAML-Smoothness-of-Performance-with-Inexact-Critic-Proof-BasicDecomposition} as
\begin{align*}
	\norm{
		\nabla_\theta J_\mu(\pi_{\theta_1}, \PKernelhat^{\pi_{\theta_1}}, \Qhat^{\pi_{\theta_1}}) -
		\nabla_\theta J_\mu(\pi_{\theta_2}, \PKernelhat^{\pi_{\theta_2}}, \Qhat^{\pi_{\theta_2}})
		}_2
	\leq
	\frac{B}{1 - \gamma}
	\left[
		\sqrt{2 \actionnum} L + \frac{\gamma B \Qmax}{1 - \gamma}
	\right]
	\norm{\theta_1 - \theta_2}_2.
\end{align*}
This concludes the proof.
\end{proof}

%%%%%%%%%%%%%%%%%%%%%%%%%%%%%%%%%%%%%%%%%%%%%%%
\section{Further Detail on VAML and Comparison with PAML}
\label{sec:PAML-VAML}

\todo{Make sure I summarize what is happening here earlier in the paper, so the reader pays attention. -AMF}

This section provides more detail on Value-Aware Model Learning (VAML) and Iterative VAML (IterVAML)~\citep{FarahmandVAMLEWRL2016,FarahmandVAML2017,Farahmand2018}, and complements the discussion in Section~\ref{sec:PAML-Background-VAML}. For more detail and the results on the properties of VAML and IterVAML, refer to the original papers.

Recall that VAML attempts to find $\PKernelhat$ such that applying the Bellman operator $\ToptP{\PKernelhat}$ according to the model $\PKernelhat$ on a value function $Q$ has a similar effect as applying the true Bellman operator $\ToptP{\PKernelTrue}$ on the same function, i.e.,
\[
	\ToptP{\PKernelhat} Q \approx \ToptP{\PKernelTrue} Q.
\]
This ensures that one can replace the true dynamics with the model without (much) affecting the internal mechanism of a Bellman operator-based $\AlgPlan$.
This goal can be realized by defining the loss function as follows: 
Assuming that $V$ (or $Q$) is known,
the pointwise loss between $\PKernelhat$ and $\PKernelTrue$ is
\begin{align}
\label{eq:PAML-VAML-cPhatPx}
%\nonumber
	c(\PKernelhat,\PKernelTrue;V)(x,a)  = 
	\left|
		\ip{\PKernelTrue(\cdot|x,a) - \PKernelhat(\cdot|x,a) }{V}
	\right|
	=
	\left|
		\int 
		\left[ \PKernelTrue(\mathrm{d} x' |x,a) - \PKernelhat(\mathrm{d} x'|x,a) \right] V(x')
	\right|,
\end{align}
in which we substituted $\max_a Q(\cdot,a)$ in the definition of the Bellman optimality operator~\eqref{eq:PAML-BellmanOperator} with $V$ to simplify the presentation.
% In the rest of the paper, we sometimes use $\PKernel_z(\cdot)$ with $z = (x,a) \in \ZZ = \XA$ to refer to the probability distribution $\PKernel(\cdot|x,a)$, so $\PKernel_z V = \int \PKernel(\dx'|x,a) V(x')$.

By taking the expectation over state-action space according to the probability distribution $\nu \in \bar{\MM}(\XA)$, which can be the same distribution as the data generating one, VAML defines the expected loss function
\begin{align}
\label{eq:PAMP-VAML-cPhatP}
	c_{2,\nu}^2(\PKernelhat,\PKernelTrue;V) = 
	\int
	\dnu(x,a)
	\left|
		\int 
		\left[ \PKernelTrue(\mathrm{d} x' |x,a) - \PKernelhat(\mathrm{d} x'|x,a) \right] V(x')
	\right|^2.
\end{align}

As the value function $V$ is unknown, we cannot readily minimize this loss function, or its empirical version.
Handling this unknown $V$ differentiates the original formulation introduced by~\citet{FarahmandVAML2017}
with the iterative one~\citep{Farahmand2018}.
Briefly speaking, the original formulation of VAML considers that $\AlgPlan$ represents the value function within a known function space $\FF$, and it then tries to find a model that no matter what value function $V \in \FF$ is selected by the planner, the loss function~\eqref{eq:PAMP-VAML-cPhatP} is still small. This leads to a robust formulation of the loss in the form of
\begin{align}
\label{eq:PAML-VAML-cPhatPsupV}
%\nonumber
%	& 
	c_{2,\nu}^2(\PKernelhat,\PKernelTrue) = 
	\int
	\dnu(x,a)
	\sup_{V \in \FF} 
	\left|
		\int 
		\left[ \PKernelTrue(\mathrm{d} x' |x,a) - \PKernelhat(\mathrm{d} x'|x,a) \right] V(x')
	\right|^2.
\end{align}

Even though taking the supremum over $\FF$ makes this loss function conservative compared to~\eqref{eq:PAMP-VAML-cPhatP}, where the value function $V$ is assumed to be known, it is still a tighter objective to minimize than the $\KL$ divergence.
Consider a fixed $z = (x,a)$, and notice that we have
\begin{align}
\label{eq:PAML-VAML-ErrorUpperBound}
	\sup_{V \in \FF} | \langle \PKernelTrue(\cdot|x,a) - \PKernelhat(\cdot|x,a), V \rangle |
	\leq
	\norm{ \PKernelTrue_z - \PKernelhat_z }_1
	\sup_{V \in \FF} \norm{V}_\infty
	\leq
	\sqrt{2 \KL(\PKernelTrue_z || \PKernelhat_z ) }
	\sup_{V \in \FF} \norm{V}_\infty,
\end{align}
where we used Pinsker's inequality in the second inequality.
MLE is the minimizer of the $\KL$-divergence based on the empirical distribution (i.e., data), so these upper bounds suggest that if we find a good MLE (with a small $\KL$-divergence), we also have a model that has a small total variation error too. This in turn implies the accuracy of the Bellman operator according to the learned model.

Nonetheless, these sequences of upper bounding might be quite loose.
As an extreme, but instructive, example, consider that the value function space consisting of bounded constant functions ($\FF = \cset{x \mapsto c}{|c| < \infty}$). For this function space, $\sup_{V \in \FF} | \langle \PKernelTrue(\cdot|x,a) - \PKernelhat(\cdot|x,a), V \rangle |$ is always zero, irrespective of the the total variation and the $\KL$-divergence of two distributions. MLE does not explicitly benefit from these interaction of the value function and the model.
For more detail and discussion, refer to~\citet{FarahmandVAML2017}.

\todo{Romina: Do we need to talk about iterVAML at all?}
The Iterative VAML (IterVAML) formulation of~\citet{Farahmand2018} exploits some extra knowledge about how $\AlgPlan$ works.
Instead of only assuming that $\AlgPlan$ uses the Bellman optimality operator without assuming any extra knowledge about its inner working (as the original formulation does), IterVAML considers that $\AlgPlan$ is in fact an (Approximate) Value Iteration algorithm.
Recall that (the exact) value iteration (VI) is an iterative procedure that performs
\begin{align*}
	Q_{k+1} \leftarrow \ToptP{\PKernelTrue} Q_k \eqdef r +  \gamma \PKernelTrue V_k,
	\qquad k = 0, 1, \dotsc.
\end{align*}
IterVAML benefits from the fact that if we have a model $\PKernelhat$ such that
\[
	\PKernelhat V_k \approx \PKernelTrue V_k,
\]
the true dynamics $\PKernelTrue$ can be replaced by the learned dynamics $\PKernelhat$ without much affecting the working of VI.
%The model $\PKernelhat$ only requires to be accurate in computing the expectation of the value function $V_k$.
IterVAML learns a new model at each iteration, based on data sampled from $\PKernelTrue$ and the current approximation of the value function $V_k$. The learned model can then be used to perform one iteration of (A)VI.

\todo{These are new paragraphs. Should be revised/verified before publishing. -AMF}

It might be instructive to briefly compare the objective of VAML and PAML.
VAML tries to minimize the error between the Bellman operator w.r.t. the model $\PKernelhat$ and the Bellman operator w.r.t. the true transition model $\PKernelTrue$.
PAML, on the other hand, focuses on minimizing the error between the PG computed according to the model vs. the true transition model.
Furthermore,~\eqref{eq:PAML-VAML-ErrorUpperBound}, Theorem~\ref{thm:PAML-PolicyGradientError}, and~\eqref{eq:PAML-GradientError-KL} show that the TV distance and the $\KL$ divergence provide an upper bound on the loss function of both VAML and PAML.

We may come up with a helpful perspective about these objectives by interpreting them as integral probability metrics (IPM)~\citep{Muller1997}.
Recall that given two probability distributions $\mu_1, \mu_2 \in \bar{\MM}(\XX)$ defined over the set $\XX$ and a space of functions $\GG: \XX \ra \Real$, the IPM distance is defined as
	\begin{align*}
		d_\GG (\mu_1, \mu_2) =
		\sup_{g \in \GG }
			\left|
				\int g(x) \left( \dmu_1(x) - \dmu_2(x) \right)
			\right|.
	\end{align*}
This distance is the maximal difference in expectation of a function $g$ according to $\mu_1$ and $\mu_2$ when the test function $g$ is allowed to be any function in $\GG$.
The TV distance is an IPM with $\GG$ being the set of bounded measurable function, cf.~\eqref{eq:PAML-TV-Distance}.
This set is quite large.
The original formulation of VAML limits the test functions to the space of value functions $\FF$, see~\eqref{eq:PAML-VAML-ErrorUpperBound}.
If we choose $\GG$ to be the space of $1$-Lipschitz functions, we obtain $1$-Wasserstein distance. Therefore, if the space of value functions $\FF$ is the space of Lipschitz functions, VAML minimizes the Wasserstein distance between the true dynamics and the model, as observed by~\citet{AsadiCaterMisraLittman2018}. But the space of value functions often has more structure and regularities than being Lipschitz (e.g., its functions have some kind of higher-order smoothness properties), in which case VAML's loss becomes smaller than Wasserstein distance.
IterVAML further constrains the space of test function by choosing a particular test function $V_k$ at each iteration, that is $\GG = \{ V_k \}$.
The test function for PAML is a single function too, different from IterVAML's, and is defined as
$g(x) = \EEX{A \sim \pi_\theta(\cdot|x)}{\nabla_\theta \log \pi_\theta(A|x) Q^{\pi_\theta}(x,A)}$.
The compared distributions, however, are not $\PKernelTrue$ and $\PKernelhat$, but their discounted future-state distribution $\rhoDiscounted(\cdot; {\PKernelTrue}^{\pi_\theta})$ and $\rhoDiscounted(\cdot; \PKernelhat^{\pi_\theta})$, cf.~\eqref{eq:PAML-J-and-Jhat}.

\todo{Can we compare PAML and VAML's objective for particular choice of policy parameterization and value function space?
For example, if $\nabla \log \pi = \phi$ and we use compatible value functions, we are matching the FIM in the form of $\EE{\phi^\top \phi}$. If we use the same features $\phi$ for VAML, we have to match $\phi$. What's the relation between them?
And then we can also think of exponential family and MaxEnt interpretation. Something to develop/clean up later. -AMF}

%%%%%%%%%%%%%%%%%%%%%%%%%%%%%%%%%%%%%%%%%%%%%%%

\section{Experimental Details}
\label{sec:PAML-Appendix}

\subsection{Environments}
    \begin{enumerate}
        \item \textbf{Finite 2-state MDP.}
        We will use the following convention for this and the 3-state MDP defined next: 
        \begin{align*}
            r(x_i,a_j) &= \boldsymbol{r}[i \times |\AA| + j]\\
            P(x_k|x_i,a_j) &= \boldsymbol{P}[i \times |\AA|+ j][k],
        \end{align*}
        where $\boldsymbol{P}, \boldsymbol{r}$ are given below.
        $$|\AA| = 2, |\XX| = 2, \gamma = 0.9$$
        $$\boldsymbol{P} = [[0.7, 0.3],[0.2, 0.8],[0.99, 0.01],[0.99,0.01]]$$
        $$\boldsymbol{r} = [-0.45, -0.1,0.5,0.5]$$
        \item \textbf{Finite 3-state MDP.}
        Following the convention above:
        $$|\AA| = 2, |\XX| = 3, \gamma = 0.9 $$
        \begin{align*}
        \boldsymbol{P} = &[[[0.6, 0.399999, 0.000001], [0.1, 0.8, 0.1], [0.899999, 0.000001, 0.1]], \\
        &[[0.98, 0.01, 0.01], [0.2, 0.000001, 0.799999], [0.000001, 0.3, 0.699999]]]
        \end{align*}
        $$\boldsymbol{r} = [[0.1, -0.15],
                   [0.1, 0.8],
                   [-0.2, -0.1]
                   ]$$
        \item \textbf{LQR} is defined as follows:
            \begin{equation}
                x' = Ax + Bu; \quad
                r(x,u) = x^Tx + u^Tu; \quad
                u \sim \pi(a|x),
            \end{equation}
            where $x,u \in \mathbb{R}^2$ and $A$ is designed so that the system would be stable over time (i.e. if $u = 0, x \xrightarrow{} 0$). Specifically,
            $$A = 
            \begin{bmatrix}
                0.9 & 0.4\\
                -0.4 & 0.9
            \end{bmatrix}.$$
            The trajectory length is 201 steps (200 actions taken).
        \item \textbf{Pendulum} (OpenAI Gym) state dimensions: 3, action dimensions: 1, trajectory length: 201
        % \item Cartpole-balance (DeepMind control suite) state dimensions: 5 (this is the observation dimensions as written in the tech report but we used this to represent the state \cite{deepmindcontrolsuite2018}), action dimensions: 1 (continuous), trajectory length: 201
        \item \textbf{HalfCheetah} (OpenAI Gym) state dimensions: 17, action dimensions: 6, trajectory length: 1001
        \item \textbf{Ant} (Modified version of OpenAI Gym) state dimensions: 111, action dimensions: 8, trajectory length: 101
        \item \textbf{Swimmer} (Modified version of OpenAI Gym) state dimensions: 9, action dimensions: 2, trajectory length: 1001.
        We used a version of this environment that was modified according to \cite{wang2019benchmarking}, so that it would be solvable with DDPG.
        \item \textbf{Hopper} (Modified version of OpenAI Gym) state dimensions: 11, action dimensions: 3, trajectory length: 101
    \end{enumerate}
\subsection{Engineering details}\label{sec:eng-details}
        For the finite-state MDP experiments, the policy is parameterized by $|\XX| \times |\AA|$ parameters with Softmax applied row-wise (over actions for each state). The model is parameterized by $|\AA| \times |\XX| \times |\XX|$ parameters with Softmax applied to the last dimension (over states for each state, action pair). Calculation of policy gradients is done by solving for the exact value function and taking gradients with respect to the policy parameters using backpropagation. Hyperparameters for performance plots are as follows, model learning rate : 0.001, policy learning rate: 0.1, training iterations for model (per outer loop iteration): 200, training iterations for policy (per outer loop iteration): 1. Since there are no sources of randomness in these experiments (all gradients were calculated exactly), only 1 run is done per experiment. As well, all parameters for the models and policies were initialized at 0. 
        
        For the REINFORCE experiments, the policy is a 2-layer neural network (NN) with hidden size 64 and Rectified Linear Unit (ReLU) activations for the hidden layer. The second layer is separated for predicting the mean and log standard deviation (std) of a Gaussian policy (i.e. the first layer is shared for the mean and std but the second layer has separate weights for each). The output layer activations are Tanh for the mean and Softplus for the log std. The policy is trained with the Adam optimizer \citep{kingma2014adam} with learning rate 0.0001. 
       
        For the actor-critic formulation, the critic network is a 2-layer NN with hidden size 64 and ReLU activations for the hidden layer. The output layer has no nonlinearities. The policy is a 2-layer NN with hidden size 30 and ReLU activations for the hidden layer. The output layer has Tanh activations.
        Both the policy and actor are trained with the Adam optimizer with learning rates 0.0001 and 0.001, respectively. The soft update parameter~\citep{lillicrap2015continuous} for the target networks is 0.001. 

        The model architectures are as follows:
        \begin{itemize}
            \item LQR: Linear connection from input to output with no nonlinearities.
            \item Pendulum: two linear layers with hidden size 2. This is a linear model with a bottleneck as the states of the Pendulum environment are represented with 3 dimensions.
            \item For all other environments: 3-layer NN with ReLU activations. The output layer has no nonlinearities. The hidden sizes are provided in Table \ref{table:hidden-sizes}
            \begin{table}
            \begin{center}
                \begin{tabular}{ | m{5em} | m{5em}|} 
                \hline
                Env & Hidden size\\
                \hline
                Ant & 1024 \\ 
                \hline
                HalfCheetah & 512\\
                \hline
                Swimmer & 128\\ 
                \hline
                Hopper & 512\\ 
                \hline
                \end{tabular}
         \caption{\label{table:hidden-sizes} Hidden sizes for the model NN}
            \end{center}
            \end{table}
        \end{itemize}
    
    For all the approximate experiments, the number of replications, unless stated otherwise in the captions, is 10 and shaded areas indicate standard error. The parameters of the networks in these cases were initialized using Xavier initialization \citep{glorot2010understanding}.
    The models in all experiments are optimized using stochastic gradient descent with no momentum and initial learning rates given in Table \ref{table:eng-details}. The learning rates are reduced by an order of magnitude according to a schedule also given in Table \ref{table:eng-details}. For each iteration of Alg. \ref{alg:PAML}, the model is used to generate virtual samples for planning. Recalling that the model requires a starting state and action in order to predict the next state, the starting states used in our experiments are as follows: a fraction come directly from the replay buffer, of the rest, half are sampled from the starting state distribution of the environment (i.e. $\rho$), and half are uniformly randomly sampled from a neighbourhood of states from the replay buffer. For additional implementation details refer to the code.
    
    % from ICML feedback corrections
    The hyperparameters are chosen to have optimal performance for each of the methods separately. We find the optimal hyperparameters for 0 irrelevant dimensions and use those hyperparameters for the rest of the experiments in order to be fair to all methods. 
    % If we consider the performances of other model-based methods in a work such as \citep{wang2019benchmarking}, we see that the performance of our MLE implementation with the optimal hidden size of 512 (see Figure \ref{fig:cheetah-horizon-modelsize}), which is the hidden size used in other MLE-based methods for this environment, is comparable to several other MBRL methods (see Figure 2 of \citep{wang2019benchmarking}), even though our implementation is a basic one of this method, and not state-of-the-art. In addition, our DDPG implementation performs similarly to existing baselines (e.g. Figure 5 of \cite{fujimoto2018addressing}) on HalfCheetah.
    Comparing to the performances of other model-based methods in works such as \citep{wang2019benchmarking} and \cite{fujimoto2018addressing}), we have reasonable confidence that our implementations of the MLE and model-free baselines have representative performances for these methods.
    
    Regarding the model learning rate in the stochastic experiments, the search for the PAML and MLE experiments was done over all the orders of magnitude from 1e-8 to 1e-2, with the optimal value being chosen for each method separately. For the policy and critic, we used the learning rates found to be optimal for model-free DDPG, as stated in the original paper [Lillicrap, 2015]. 
    
    For the exact experiments, the search for model learning rates was over 0.1, 0.01, 1e-3, 1e-4 for both MLE and PAML. In the end, it was decided to increase the number of update steps for the model (numbers tried were 1, 50, and 200) and keep the learning rate relatively small to ensure convergence in each iteration (see Table \ref{table:eng-details}).

\begin{table}
\begin{tabularx}{0.9\textwidth}{ 
    | >{\raggedright\arraybackslash}X 
    | >{\centering\arraybackslash}X 
    | >{\centering\arraybackslash}X 
    | >{\raggedleft\arraybackslash}X | }
    \hline
     & LQR (REINFORCE) & Pendulum-v0(DDPG) & HalfCheetah-v0(DDPG) \\
    \hline
    Initial model learning rate & MLE: 1e-5, PAML: 1e-4 & MLE: 1e-4, PAML: 1e-3 & MLE: 1e-5, PAML: 1e-3 \\
    \hline
    LR schedule & [500,1200,1800] (training steps) & \multicolumn{2}{c|}{\parbox{0.5\textwidth}{MLE: [10,1000,20000000], PAML: [10,500,1000] (iterations of Alg. \ref{alg:PAML})}} \\
    \hline
    Number of transitions from real environment per iteration of Alg. \ref{alg:PAML} & MLE: 1000, PAML: 1000 & MLE: 1000, PAML: 1000 & MLE: 200, PAML: 200 \\
    \hline
    Number of virtual samples per iteration of Alg. \ref{alg:PAML} & MLE: 2000 (episodes), PAML: 2000 episodes for irrelevant dimensions, 500 episodes for no irrelevant dimensions & MLE: 500, PAML: 500 & MLE: 20, PAML: 20 \\
    \hline
    Planning horizon & MLE: 200, PAML: 200 & MLE: 10, PAML: 10 & MLE: 10, PAML: 10 \\
    \hline
    Fraction of planning data coming from replay buffer & MLE: 1.0, PAML: 1.0 & MLE: 0.25, PAML: 0.25 & MLE: 0.5, PAML: 0.5 \\
    \hline
\end{tabularx}
\caption{\label{table:eng-details} Experimental details for the results shown in this work.}
\end{table}

\fi % \ifSupp
%\subsubsection*{Acknowledgements}
%AMF acknowledges the funding from the Canada CIFAR AI Chairs program. 
%%Use the unnumbered third level heading for the acknowledgements.  All
%%acknowledgements go at the end of the paper.
\clearpage

% \section{Broader Impact}
% \label{sec:Broader-Impact}
% Although this work does not have any immediate-term
% foreseeable societal consequences, the long-term hope is that it will aid in the development of safer and more efficient learning systems. For example, in an industrial or robotic application, model capacity is limited and this work aims to learn a model more efficiently so that more of the exploration of the agent can be done in a simulated environment (i.e. on the model). This work is a step towards that goal. On the other hand, it is important that this method be tested and tuned before deployment in any real-life scenario. Our experiments are only on simple dynamical systems and robotic simulators, and do not represent what may happen in a much more complex scenario. 
\bibliography{references,MyBib}
\bibliographystyle{plainnat}

\end{document}